\DeclareMathAlphabet{\mathscr}{LS1}{stixscr}{m}{n}
\crefname{enumi}{item}{items}
\crefname{equation}{}{}
\newcommand{\N}{\mathbb{N}}
\newcommand{\R}{\mathbb{R}}
\renewcommand{\P}{\mathbb{P}}
\newcommand{\E}{\mathbb{E}}
\newcommand{\Var}{\mathrm{Var}}
\newcommand{\ind}{\mathbbm{1}}
\newcommand*\diff{\mathop{}\!\mathrm{d}}
\newcommand{\is}{\curvearrowleft}
\newcommand{\smallsum}{\textstyle \sum}
\newcommand{\smallprod}{\textstyle \prod}
\newcommand{\smallbigcap}{\textstyle \bigcap}
\newcommand{\smallbigcup}{\textstyle \bigcup}
\newcommand{\Exists}{\exists\,}
\newcommand{\Forall}{\forall\,}
\newcommand{\mc}[1]{\mathcal{#1}}
\newcommand{\mf}[1]{\mathfrak{#1}}
\newcommand{\mb}[1]{\mathbf{#1}}
\newcommand{\ms}[1]{\mathscr{#1}}
\newcommand{\maxone}[1]{\cfadd{def:clipping_functions}\lceil #1 \rceil}
\newcommand{\minone}[1]{\cfadd{def:clipping_functions}\lfloor #1 \rfloor}
\newcommand{\const}{\mf C}
\newcommand{\nablam}{\nabla_{\!-}}
\newcommand{\lrmat}{M}
\newcommand{\grad}{g}
\newcommand{\network}{\cfadd{def:ANN}\mathbf{N}}
\newcommand{\multdim}[2]{\cfadd{def:multidimensional_version}\mathfrak{M}_{#1,#2}}
\newcommand{\act}{\sigma}
\newcommand{\deriv}{\mathscr{D}}
\newcommand{\bp}{\kappa}
\newcommand{\nbp}{L}
\newcommand{\slope}{\mathfrak{s}}
\newcommand{\yinter}{\mathfrak{t}}
\newcommand{\lr}{\gamma}
\newcommand{\cmin}{\mathfrak{c}}
\newcommand{\cmax}{\mathfrak{C}}
\newcommand{\cout}{\mathcal{C}}
\newcommand{\cvar}{\mathscr{v}}
\newcommand{\Cvar}{\mathfrak{v}}
\newcommand{\constvarwidth}{\ms c}
\newcommand{\cderiv}{\mathfrak{l}}
\newcommand{\functionANN}[1]{\cfadd{def:realization_ANN}\mathcal{R}_{#1}}
\newcommand{\scalprod}[2]{\cfadd{def:scalar_product_norm} \langle #1, #2 \rangle}
\newcommand{\bscalprod}[2]{\cfadd{def:scalar_product_norm} \big \langle #1, #2 \big \rangle}
\newcommand{\bbscalprod}[2]{\cfadd{def:scalar_product_norm} \Big \langle #1, #2 \Big \rangle}
\newcommand{\eucnorm}[1]{\cfadd{def:scalar_product_norm} \lVert #1 \rVert}
\newcommand{\beucnorm}[1]{\cfadd{def:scalar_product_norm} \big \lVert #1 \big \rVert}
\newcommand{\bbbeucnorm}[1]{\cfadd{def:scalar_product_norm} \bigg \lVert #1 \bigg \rVert}
\newcommand{\lambdamin}{\cfadd{def:lambdamin}\lambda_{\min}}
\newcommand{\subexp}[2]{\cfadd{def:subexponential}$ ( #1, #2 ) $-subexponential}
\newcommand{\gram}{\cfadd{def:gram_matrix}Gram}
\newcommand{\opnorm}{\@ifstar\@opnorms\@opnorm}
\newcommand{\@opnorms}[1]{%
  \left|\mkern-1.5mu\left|\mkern-1.5mu\left|
   #1
  \right|\mkern-1.5mu\right|\mkern-1.5mu\right|
}
\newcommand{\@opnorm}[2][]{%
  \mathopen{#1|\mkern-1.5mu#1|\mkern-1.5mu#1|}
  #2
  \mathclose{#1|\mkern-1.5mu#1|\mkern-1.5mu#1|}
}
\newcommand{\specnorm}[1]{\cfadd{def:spectral_norm} \opnorm{#1}}
\newcommand{\gramG}{\mc G}
\newcommand{\gramGinf}{\mb G}
\newcommand{\lambdazero}{\boldsymbol{\lambda}}
\newcommand{\width}{\mf d}
\newcommand{\indim}{d}
\newcommand{\outdim}{D}
\newcommand{\ndata}{m}
\newcommand{\indata}{\ms x}
\newcommand{\outdata}{\ms y}
\newcommand{\findata}{\ms X}
\newcommand{\foutdata}{\ms Y}	
\newcommand{\netvalue}{f}
\newcommand{\fnetvalue}{F}
\newcommand{\iteration}{n}
\newcommand{\Iteration}{N}
\newcommand{\poe}{\varepsilon}
\newcommand{\loss}{\mc E}
\newcommand{\abs}[1]{\lvert #1 \rvert}
\newcommand{\babs}[1]{\bigl\lvert #1 \bigr\rvert}
\newcommand{\bbabs}[1]{\Bigl\lvert #1 \Bigr\rvert}
\newcommand{\bbbabs}[1]{\biggl\lvert #1 \biggr\rvert}
\newcommand{\qandq}{\quad\text{and}\quad}
\newcommand{\qqandqq}{\qquad\text{and}\qquad}
\NewDocumentCommand{\abbr}{m m O{#1} m m O{#4} m}{
	\expandafter\newcommand\csname#3\endcsname[1][]{
		\seq_if_in:NnTF \g_abbrs {#1} {
			\prop_get:NnN \g_abbr_counts {#1} \l_abbr_count_tl
			\prop_gput:Nnx \g_abbr_counts {#1} {\int_eval:n {\l_abbr_count_tl + 1}}
			\hyperref[#1]{#7}
		} {
			\seq_gput_left:Nn \g_abbrs {#1}
			\prop_gput:Nnn \g_abbr_counts {#1} {1}
			\expandafter\gdef\csname#1@def\endcsname{#2}
			\phantomsection\label{#1}
			\str_if_eq:nnTF{##1}{}{\emph{#2}}{##1}~(\hyperref[#1]{#7})
		}
	}
	\expandafter\newcommand\csname#6\endcsname[1][]{
		\seq_if_in:NnTF \g_abbrs {#1} {
			\prop_get:NnN \g_abbr_counts {#1} \l_abbr_count_tl
			\prop_gput:Nnx \g_abbr_counts {#1} {\int_eval:n {\l_abbr_count_tl + 1}}
			\hyperref[#1]{#4}
		} {
			\expandafter\gdef\csname#1@def\endcsname{#5}
			\seq_gput_left:Nn \g_abbrs {#1}
			\prop_gput:Nnn \g_abbr_counts {#1} {1}
			\phantomsection\label{#1}
			\str_if_eq:nnTF{##1}{}{\emph{#5}}{##1}~(\hyperref[#1]{#4})
		}
	}
}
\theoremstyle{plain}
\newtheorem{theorem}{Theorem}[section]
\newtheorem{lemma}[theorem]{Lemma}
\newtheorem{cor}[theorem]{Corollary}
\newtheorem{proposition}[theorem]{Proposition}
\newtheorem{setting}[theorem]{Setting}
\theoremstyle{definition}
\newtheorem{definition} [theorem]{Definition}
\NewDocumentEnvironment{cproof}{m}
{\begin{proof}[Proof of \cref{#1}]}%
{\noindent The proof of \cref{#1} is thus complete.
\end{proof}}
\NewDocumentEnvironment{cproof2}{m}
{\begin{proof}[Proof of \cref{#1}]}%
{\noindent This completes the proof of \cref{#1}.
\end{proof}}
\NewDocumentCommand{\cfadd}{ m }
{
  \seq_if_in:NnF \g_cflist_loaded { #1 } {
    \seq_if_in:NnF \g_cflist_pending { #1 } {
      \seq_gput_right:Nn \g_cflist_pending { #1 }
    }
  }
}
\NewDocumentCommand{\cfload}{ o }
{
  \seq_if_empty:NTF \g_cflist_pending {\unskip} {
    (cf.\ \cref{\seq_use:Nn \g_cflist_pending {,}})\IfValueTF{#1}{#1~}{\unskip}
    \seq_gconcat:NNN \g_cflist_loaded \g_cflist_loaded \g_cflist_pending
    \seq_gclear:N \g_cflist_pending
  }
}
\NewDocumentCommand{\cfclear} {} {
  \seq_gclear:N \g_cflist_loaded
  \seq_gclear:N \g_cflist_pending
}
\NewDocumentCommand{\cfout}{ o }
{
  \seq_if_empty:NTF \g_cflist_pending {\unskip} {
    (cf.\ \cref{\seq_use:Nn \g_cflist_pending {,}})\IfValueTF{#1}{#1~}{\unskip}
    \seq_gclear:N \g_cflist_pending
  }
}
\NewDocumentCommand{\ifnocf} { m } {
  \seq_if_empty:NT \g_cflist_pending { #1 }
}
\NewDocumentCommand{\cfconsiderloaded}{ m }{

  \seq_gput_right:Nn \g_cflist_loaded {#1}

}
\NewDocumentCommand{\nobs}{}{
  \bool_if:nTF { \g_noteobserve } {
    \bool_gset_false:N \g_noteobserve
    note~
  } {
    \bool_gset_true:N \g_noteobserve
    observe~
  }
}
\NewDocumentCommand{\Nobs}{}{
  \bool_if:nTF { \g_noteobserve } {
    \bool_gset_false:N \g_noteobserve
    Note~
  } {
    \bool_gset_true:N \g_noteobserve
    Observe~
  }
}
\NewDocumentCommand{\hence}{}{
  \bool_if:nTF { \g_hencetherefore } {
    \bool_gset_false:N \g_hencetherefore
    hence~
  } {
    \bool_gset_true:N \g_hencetherefore
    therefore~
  }
}
\NewDocumentCommand{\Hence}{}{
  \bool_if:nTF { \g_hencetherefore } {
    \bool_gset_false:N \g_hencetherefore
    Hence,~we~obtain~
  } {
    \bool_gset_true:N \g_hencetherefore
    Therefore,~we~obtain~
  }
}
\NewDocumentCommand{\Moreover}{ o o }{
  \IfValueT{#1}{
    \str_case:nn {#1} {
      {Furthermore} {\int_set:Nn {\g_furthermore} {0}}
      {Moreover} {\int_set:Nn {\g_furthermore} {1}}
      {In~addition} {\int_set:Nn {\g_furthermore} {2}}
      {note} {\bool_gset_true:N \g_noteobserve}
      {observe} {\bool_gset_false:N \g_noteobserve}
    }
    \IfValueT{#2}{
      \str_case:nn {#2} {
        {Furthermore} {\int_set:Nn {\g_furthermore} {0}}
        {Moreover} {\int_set:Nn {\g_furthermore} {1}}
        {In~addition} {\int_set:Nn {\g_furthermore} {2}}
        {note} {\bool_gset_true:N \g_noteobserve}
        {observe} {\bool_gset_false:N \g_noteobserve}
      }
    }
  }
  \int_case:nn { \int_mod:nn {\g_furthermore} {3} } {
    { 0 } { Furthermore,~\nobs that}
    { 1 } { Moreover,~\nobs that}
    { 2 } { In~addition,~\nobs that}
  }
  \int_incr:N \g_furthermore
  \IfValueF{#1}{~}
}
\NewDocumentCommand{\prove}{ o }{
   \IfValueTF{#1}{
     \seq_clear:N \l_mytmps
     \seq_map_inline:Nn \g_prove_mru {
       \str_if_eq:nnTF {##1} {ensure} {
         \str_set:Nn \l_temps {n}
       } {
         \str_set:Nx \l_temps {\str_head_ignore_spaces:n {##1}}
       }
       \str_if_in:nVTF {#1} \l_temps {
         \seq_put_right:Nn \l_mytmps {##1}
       } { }
     }
     \seq_get_right:NN \l_mytmps \g_wordtmp
   } {
     \seq_get_right:NN \g_prove_mru \g_wordtmp
   }
   \tl_use:N \g_wordtmp
   \seq_gput_left:NV \g_prove_mru \g_wordtmp
   \seq_gremove_duplicates:N \g_prove_mru
   \IfValueF{#1}{~}
}
\NewDocumentCommand{\proves}{ o }{
   \IfValueTF{#1}{
     \seq_clear:N \l_mytmps
     \seq_map_inline:Nn \g_prove_mru {
       \str_if_eq:nnTF {##1} {ensure} {
         \str_set:Nn \l_temps {n}
       } {
         \str_set:Nx \l_temps {\str_head_ignore_spaces:n {##1}}
       }
       \str_if_in:nVTF {#1} \l_temps {
         \seq_put_right:Nn \l_mytmps {##1}
       } { }
     }
     \seq_get_right:NN \l_mytmps \g_wordtmp
   } {
     \seq_get_right:NN \g_prove_mru \g_wordtmp
   }
   \str_set:NV \l_tmpa_str \g_wordtmp
   \prop_get:NVN \l__verbs \l_tmpa_str \l_tmpa_tl
   \tl_use:N \l_tmpa_tl
   \seq_gput_left:NV \g_prove_mru \g_wordtmp
   \seq_gremove_duplicates:N \g_prove_mru
   \IfValueF{#1}{~}
}
\NewDocumentEnvironment{flexmath}{ m o }{
  \str_if_eq:noTF {a} {#1} {
    \begin{equation}
    \IfValueT{#2}{\label{eq:\loc.#2}}
    \begin{aligned}
  } {
    \catcode`&=9
    \renewcommand{\\}{}
    \str_if_eq:noTF {d} {#1} {
      \begin{equation}
      \IfValueT{#2}{\label{eq:\loc.#2}}
    } {
      \begin{math}
    }
  }
}{
  \str_if_eq:noTF {i} {#1} {
    \end{math}
    \catcode`&=4
  } {
    \str_if_eq:noTF {d} {#1} {
      \end{equation}
    } {
      \end{aligned}
      \end{equation}
    }
  }
}
\begin{document}

\title{Convergence rates for gradient descent in the \\ training of overparameterized artificial neural \\ networks with piecewise affine activation}

\author{Arnulf Jentzen$^{1,2}$ and Timo Kr\"oger$^{3}$\bigskip\\
\small{$^1$ School of Data Science and School of Artificial Intelligence,} \vspace{-0.1cm}\\
\small{The Chinese University of Hong Kong, Shenzhen (CUHK-Shenzhen),}\vspace{-0.1cm}\\
\small{Shenzhen, China; e-mail: \texttt{ajentzen}\textcircled{\texttt{a}}\texttt{cuhk.edu.cn}}\smallskip\\
\small{$^2$ Applied Mathematics: Institute for Analysis and Numerics,}\vspace{-0.1cm}\\
\small{Faculty of Mathematics and Computer Science, University of M\"unster,}\vspace{-0.1cm}\\
\small{M\"unster, Germany; e-mail: \texttt{ajentzen}\textcircled{\texttt{a}}\texttt{uni-muenster.de}}\smallskip\\
\small{$^3$ Applied Mathematics: Institute for Analysis and Numerics,}\vspace{-0.1cm}\\
\small{Faculty of Mathematics and Computer Science, University of M\"unster,}\vspace{-0.1cm}\\
\small{M\"unster, Germany; e-mail: \texttt{timo.kroeger}\textcircled{\texttt{a}}\texttt{uni-muenster.de}}}

\date{May 18, 2026}

\maketitle

\begin{abstract}
In recent years, artificial neural networks have developed into a powerful tool for addressing a multitude of problems for which classical solution approaches reach their limits. However, it is still unclear why gradient descent optimization algorithms with random initialization, such as the well-known batch gradient descent, are able to achieve zero training loss in many situations, even though the objective function is non-convex and non-smooth. One of the most promising approaches to solving this issue in the field of supervised learning is the analysis of gradient descent optimization in the so-called overparameterized regime. In this article, we provide a further contribution to this area of research by considering overparameterized fully connected shallow artificial neural networks with piecewise affine activation, such as the rectified linear unit activation. Specifically, given that the activation function is not affine and the training input data are pairwise distinct, we show that, with high probability, the mean squared error of such a randomly initialized artificial neural network optimized via batch gradient descent converges to zero at a linear convergence rate as long as the width of the artificial neural network is sufficiently large and the learning rate is sufficiently small.
\end{abstract}

\begin{center}
\emph{Keywords:}
artificial neural network, ANN, gradient descent, GD, overparameterization, \\
empirical risk minimization, optimization, random initialization, Gram matrix
\end{center}

\tableofcontents

\section{Introduction}
\label{sec:introduction}

Many problems, such as recognizing faces, handwritten text, or natural language, as well as performing various activities, such as driving a car, seem very simple to humans but pose a major challenge to computers. A fundamental reason for this lies in the immense variability of real-world data, combined with the fact that computers, unlike humans, are not able to intuitively recognize complex patterns in it due to their structurally precise data processing. As a result, it is very difficult to manually specify a program or provide an explicit function which attaches to the input -- be it an image, an audio file, a game situation, or a traffic situation of an autonomous driving car -- a desired output or guidance. Nevertheless, in recent years, \ANNs\ have become a powerful tool in dealing with such problems. Although many \ANN\ training algorithms have demonstrated great success in practice, the reasons for this are generally not known, as no mathematically rigorous analysis exists for most algorithms. Since these algorithms primarily rely on \GD, we refer to Ruder \cite{ruder2017overview} for an overview of commonly used \GD\ variations for \ANN\ optimization.

There are several promising attempts in the scientific literature that intend to mathematically analyze \GD\ optimization algorithms in the training of \ANNs. In particular, there are various convergence results for \GD\ optimization algorithms in the training of \ANNs\ that assume convexity of the objective functions under consideration (cf., e.g., \cite{JMLR:v18:14-546, NIPS2013_7fe1f8ab, NIPS2011_40008b9a} and the references mentioned therein), there are general abstract convergence results for \GD\ optimization algorithms that do not assume convexity of the objective functions under consideration (cf., e.g., \cite{JMLR:v25:21-1423, doi:10.1142/S021902572150020X, MR4810685, JMLR:v21:19-636, JENTZEN2023127907, 8930994, doi:10.1137/22M1514283} and the references mentioned therein), there are divergence results and lower bounds for \GD\ optimization algorithms in the training of \ANNs\ (cf., e.g., \cite{CHERIDITO2021101540, JENTZEN2020101438, MR4188518} and the references mentioned therein), there are mathematical analyses regarding the parameter initialization in the training of \ANNs\ with \GD\ optimization algorithms (cf., e.g., \cite{NEURIPS2018_13f9896d, NEURIPS2018_d81f9c1b, MR4188518, Shin_2020} and the references mentioned therein), and there are convergence results for \GD\ optimization algorithms in the training of \ANNs\ for constant target functions (cf.\ \cite{CHERIDITO2022101646}).

One of the most promising approaches in the field of supervised learning is the analysis of \GD\ optimization in the so-called overparameterized regime. In this regime, the number of parameters defining the \ANN\ far exceeds the number of available training samples. Under this condition, the convergence of the \GD\ optimization method and some of its variants has been proved in several settings, even though the objective function is non-convex and non-smooth (cf., e.g., \cite{NEURIPS2019_62dad6e2, pmlr-v97-allen-zhu19a, NEURIPS2018_a1afc58c, pmlr-v97-du19c, du2019gradienta, EMaWu2020, zou2018stochastic} and the references mentioned therein). In this overparameterized regime, further progress was made in the analysis of the popular \SGD\ optimization algorithm regarding the influence of the architecture of an \ANN\ on the so-called gradient confusion, which directly affects convergence speed (cf.\ Sankararaman et al.~\cite{pmlr-v119-sankararaman20a}), regarding the characterization of stability properties of global minima (cf.\ Wu et al.~\cite{NEURIPS2018_6651526b}), and regarding multi-class classification with additional considerations for the generalization error (cf.\ Li \& Liang \cite{NEURIPS2018_54fe976b}). Another promising approach in the context of overparameterization is the use of kernel methods. In particular, it was proved that the realization of an \ANN\ during \GD\ optimization follows the kernel gradient of the functional cost with respect to the neural tangent kernel (cf., e.g., Jacot et al.~\cite{NEURIPS2018_5a4be1fa}).

In this work, we provide a further contribution by extending Du et al.~\cite{du2019gradienta} in the following way. We consider \ANNs\ with piecewise affine activation functions, with multidimensional output, and with biases on the hidden layer and output layer and initialize all weights with a normal distribution -- as usual in practice -- with mean $ 0 $ and equal variance within each layer. In addition, we relax the requirements on the training data and consider slightly more general optimization algorithms. This approach contrasts with other articles in this area of research that extend Du et al.~\cite{du2019gradienta}, for example, by considering \ANNs\ with multiple layers without biases and smooth activation functions instead of piecewise affine functions (cf.\ Du et al.~\cite{pmlr-v97-du19c}) or by considering \ANNs\ with multiple layers in the context of \SGD\ (cf.\ Zou et al.~\cite{zou2018stochastic}). To illustrate the findings of this article in a special case, we now present \cref{thm:introduction}, and we refer to \cref{subsec:quantitative_probabilistic_error_analysis} below for the more general convergence results which we develop in this article. Below \cref{thm:introduction} we also provide some explanations regarding the mathematical objects that are introduced within \cref{thm:introduction}.

\begin{samepage}
\cfclear
\begin{theorem}
  \label{thm:introduction}
  Let $ \indim, \outdim \in \N $,
  let $ \mc P \colon \N \to \N $ satisfy for all $ \width \in \N $ that
  $ \mc P ( \width ) = \width \indim + \width + \outdim \width + \outdim $,
  let $ \act \in C( \R, \R ) $ be piecewise affine$\mspace{1.5mu}$\footnote{
    \Nobs that for every $ f \in C( \R, \R ) $ it holds that $ f $ is piecewise affine (piecewise linear) if and only if
    there exists a finite $ S \subseteq \R $ such that $ f|_{\R \backslash S} \in C^{\infty}(\R,\R) $ and
    $ \sup_{ v \in \R \backslash S } \abs{ f''(v) } = 0 $.}$\mspace{-1.5mu}$,
  assume that $ \act $ is not affine,
  for every $ \width \in \N $, $ \theta = ( \theta_1, \ldots, \theta_{\mc P(\width)} ) \in \R^{\mc P(\width)} $
  let $ \mc N_{\theta}^{\width} = ( \mc N_{\theta}^{\width,1}, \ldots, \mc N_{\theta}^{\width,\outdim} ) \colon \R^{\indim} \to \R^{\outdim} $
  satisfy for all $ i \in \{ 1, 2, \ldots, \outdim \} $, $ v = (v_1, \ldots, v_{\indim}) \in \R^{\indim} $ that
  \begin{equation}\label{thm:introduction:realization}
    \mc N_{\theta}^{\width,i} (v) = \theta_{ \width \indim + \width + \outdim \width + i }
      + \smallsum\limits_{j=1}^{\width} \theta_{ \width \indim + \width + (i-1) \width + j } \,
      \act \Bigl( \theta_{ \width \indim + j } + \smallsum\limits_{k=1}^{\indim} \theta_{ (j-1) \indim + k } v_{k} \Bigr),
  \end{equation}
  let $ \const \in (0,\infty) $, for every $ \ndata \in \N $ let
  $ \indata_1^{\ndata}, \indata_2^{\ndata}, \ldots, \indata_{\ndata}^{\ndata} \in [{\const}^{-1}, \const]^{\indim} $,
  $ \outdata_1^{\ndata}, \outdata_2^{\ndata}, \ldots, \outdata_{\ndata}^{\ndata} \in [-\const,\const]^{\outdim} $ satisfy
  $ \# \{ \indata_1^{\ndata}, \indata_2^{\ndata}, \ldots, \indata_{\ndata}^{\ndata} \} = \ndata $,
  for every $ \width, \ndata \in \N $ let $ \loss^{\width, \ndata} \colon \R^{ \mc P ( \width ) } \to \R $
  satisfy$\mspace{1.5mu}$\footnote{
    \Nobs that for every $ p \in \N $, $ u = ( u_1, \ldots, u_p ) $, $ v = ( v_1, \ldots, v_p ) \in \R^p $
    it holds that $ \scalprod{u}{v} = \sum_{i=1}^p u_i v_i $ and
    $ \eucnorm{u} = \bigl( \sum_{i=1}^p \abs{ u_i }^2 \bigr)^{\nicefrac{1}{2}} $.}$\mspace{1.5mu}$
  for all $ \theta \in \R^{ \mc P ( \width ) } $ that
  \begin{equation}\label{thm:introduction:loss}
    \loss^{\width, \ndata} (\theta)
    = \frac{1}{\ndata} \biggl[ \smallsum\limits_{i=1}^{\ndata} \eucnorm{ \mc N_{\theta}^{\width} (\indata_i^{\ndata}) - \outdata_i^{\ndata} }^2 \biggr],
  \end{equation}
  for every $ \width \in \N $ let $ \lrmat^{\width} \in \R^{ \mc P(\width) \times \mc P(\width) } $ satisfy
  $ \lrmat^{\width} = ( \ind_{ \{ j \} \backslash ( \width \indim + \width, \mc P( \width ) - \outdim ] } (i) )_{
    (i,j) \in \{ 1, 2, \ldots, \mc P( \width ) \}^2 } $,
  let $ (\Omega, \mc F, \P) $ be a probability space,
  for every $ \width, \ndata \in \N $, $ \lr \in \R $ let
  $ \Theta^{\width, \ndata, \lr} = ( \Theta_1^{\width, \ndata, \lr}, \ldots, \Theta_{ \mc P( \width ) }^{\width, \ndata, \lr} )
    \colon \N_0 \times \Omega \to \R^{ \mc P( \width ) } $
  be a stochastic process which satisfies$\mspace{1.5mu}$\footnote{
    \Nobs that for every $ p \in \N $, $ f \in C( \R^p, \R ) $, $ u, v, e_1, e_2, \ldots, e_p \in \R^p $ with
    $ e_1 = ( 1, 0, \ldots, 0 ) $, $ e_2 = ( 0, 1, 0, \ldots, 0 )$, $ \dots $, $ e_p = ( 0, \ldots, 0, 1 ) $, and
    $ \limsup_{\varepsilon \nearrow 0} \eucnorm{ v - \sum_{k=1}^p ( f( u + \varepsilon e_k ) - f(u) ) \varepsilon^{-1} e_k } = 0 $
    it holds that $ ( \nablam f )( u ) = v $ (left gradient).}$\mspace{-1.5mu}$
  for all $ \iteration \in \N $ that
  $ \smallsum_{k=1}^{\width} \abs{ \Theta_{ \width \indim + k }^{\width, \ndata, \lr} (0) }
    + \smallsum_{i=1}^{\outdim} \abs{ \Theta_{ \width \indim + \width + \outdim \width + i }^{\width, \ndata, \lr} (0) } = 0 $ and
  \begin{equation}\label{thm:introduction:theta}
    \Theta^{\width, \ndata, \lr} ( \iteration ) = \Theta^{\width, \ndata, \lr} ( \iteration - 1 )
    - \lr \lrmat^{\width} \bigl( \nablam \loss^{\width, \ndata} \bigr)
    \bigl( \Theta^{\width, \ndata, \lr} ( \iteration - 1 ) \bigr),
  \end{equation}
  assume for all $ \width, \ndata \in \N $, $ \lr \in \R $ that
  $ \sqrt{\nicefrac{\indim}{2}} \mspace{2mu} \Theta_1^{\width, \ndata, \lr} (0), \allowbreak
    \sqrt{\nicefrac{\indim}{2}} \mspace{2mu} \Theta_2^{\width, \ndata, \lr} (0), \allowbreak \ldots, \allowbreak
    \sqrt{\nicefrac{\indim}{2}} \mspace{2mu} \Theta_{\width \indim}^{\width, \ndata, \lr} (0), \allowbreak
    \sqrt{\nicefrac{\width}{2}} \mspace{2mu} \Theta_{\width \indim + \width + 1}^{\width, \ndata, \lr} (0), \allowbreak
    \sqrt{\nicefrac{\width}{2}} \mspace{2mu} \Theta_{\width \indim + \width + 2 }^{\width, \ndata, \lr} (0), \allowbreak \ldots, \allowbreak
    \sqrt{\nicefrac{\width}{2}} \mspace{2mu} \Theta_{\width \indim + \width + \outdim \width}^{\width, \ndata, \lr} (0) $
  are independent and standard normal,
  let $ Z \colon \Omega \to \R^{\indim} $ be standard normal, and
  for every $ \ndata \in \N $ let $ \lambda_{\ndata} \in \R $ satisfy
  \begin{equation}\label{thm:introduction:lambda}
    \lambda_{\ndata} = \inf\limits_{ z = ( z_1, \ldots, z_{\ndata} ) \in \R^{\ndata}, \, \eucnorm{ z } = 1 }
      \biggl( \smallsum\limits_{i,j=1}^{\ndata}
      \bigl( 2 + 2 \scalprod{ \indata_i^{\ndata} }{ \indata_j^{\ndata} } \bigr) \,
      \E \biggl[ \smallprod\limits_{ k \in \{ i, j \} } z_k
        \bigl( \nablam \act \bigr) \bigl( \sqrt{\nicefrac{2}{\indim}} \mspace{2mu} \scalprod{ Z }{ \indata_k^{\ndata} } \bigr) \biggr] \biggr).
  \end{equation}
  Then
  \begin{enumerate}[(i)]
    \item \label{thm_introduction:item1} it holds for all $ \ndata \in \N $ that $ \lambda_{\ndata} > 0 $ and
    \item \label{thm_introduction:item2} there exists $ \Lambda \in (0, 1) $ such that for all $ \ndata \in \N $, $ \poe \in (0, 1) $,
      $ \lr \in (0, \Lambda \poe^2 \ndata^{-1} \min \{ \lambda_{\ndata}, ( \lambda_{\ndata} )^{-1} \} ) $,
      $ \width \in \N \cap [ \Lambda^{-1} \poe^{-4} \ndata^7 \max\{ ( \lambda_{\ndata} )^{-1}, ( \lambda_{\ndata} )^{-5} \}, \infty) $
      it holds that
  \begin{equation}
  \label{eq:thm_introduction}
    \P \Bigl( \Forall \iteration \in \N_0 \colon \loss^{\width, \ndata} ( \Theta^{\width,\ndata,\lr} (\iteration) )
      \le \bigl( 1 - \tfrac{ \lr \lambda_{\ndata} }{ \ndata } \bigr)^{\iteration} \loss^{\width, \ndata}
        ( \Theta^{\width,\ndata,\lr} (0) ) \Bigr)
    \ge 1 - \varepsilon.
  \end{equation}
  \end{enumerate}
\end{theorem}
\end{samepage}

\cref{thm:introduction} is an immediate consequence of \cref{cor:vectorized} combined with \cref{lem:convolution} and \cref{lem:left_derivative} in \cref{sec:error_analysis} below. \cref{cor:vectorized} follows from \cref{cor:assumption_pairwise_distinct_const} which, in turn, builds on a series of intermediate results which are all based on \cref{thm:main_theorem}. In the following, we add some comments and explanations regarding the mathematical objects which appear in \cref{thm:introduction}.

The natural numbers $ \indim, \outdim \in \N = \{ 1, 2, 3, \ldots \} $ in \cref{thm:introduction} above specify the dimensions of the input data and output data of the considered training set. The function $ \mc P \colon \N \to \N $ specifies the number of parameters defining a shallow \ANN\ with $ \width \in \N $ neurons on the hidden layer and must therefore satisfy for all $ \width \in \N $ that
\begin{equation}
  \mc P( \width ) = \width \indim + \width + \outdim \width + \outdim.
\end{equation}
Specifically, in \cref{thm:introduction} above we employ fully connected \ANNs\ with one hidden layer and a piecewise affine function $ \act \in C( \R, \R ) $ that is not affine as the activation function in front of the hidden layer, such as the popular \ReLU\ activation function
\begin{equation}
  \R \ni v \mapsto \max\{ v, 0 \} \in \R.
\end{equation}
\Moreover for every $ \width \in \N $, $ \theta \in \R^{\mc P(\width)} $ we have that the function $ \mc N_{\theta}^{\width} \colon \R^{\indim} \to \R^{\outdim} $ in \cref{thm:introduction:realization} describes the realization of a fully connected \ANN\ with $ \indim $ neurons on the input layer, $ \width $ neurons on the hidden layer, $ \outdim $ neurons on the output layer, and $ \act $ as the activation function in front of the hidden layer. The vector $ \theta \in \R^{\mc P(\width)} $ stores the real parameters (weights and biases) for the specific \ANN\ being considered.

\Moreover the natural number $ \ndata \in \N $ specifies the number of input-output data pairs used in the training of the considered \ANN. The corresponding vectors $ \indata_1^{\ndata}, \indata_2^{\ndata}, \ldots, \indata_{\ndata}^{\ndata} \in [\const^{-1},\const]^{\indim} $ constitute the input data used for the training, while the vectors $ \outdata_1^{\ndata}, \outdata_2^{\ndata}, \ldots, \outdata_{\ndata}^{\ndata} \in [-\const,\const]^{\outdim} $ constitute the output data used for the training, where the real number $ \const \in (0,\infty) $ specifies a uniform bound on the training set.
Utilizing these training data, for every $ \width, \ndata \in \N $ we have that the function $ \loss^{\width,\ndata} \colon \R^{\mc P(\width)} \to \R $ in \cref{thm:introduction:loss} measures the \MSE\ between the estimated values $ \mc N_{\theta}^{\width} (\indata_1^{\ndata}), \mc N_{\theta}^{\width} (\indata_2^{\ndata}), \ldots, \mc N_{\theta}^{\width} (\indata_{\ndata}^{\ndata}) $ and the target values $ \outdata_1^{\ndata}, \outdata_2^{\ndata}, \ldots, \outdata_{\ndata}^{\ndata} $.

Next \nobs that the piecewise affine activation function $ \act \in C( \R, \R ) $ is not differentiable at the kinks, and as a consequence, the associated risk function is not everywhere differentiable. To overcome this obstacle, we consider the \GD\ optimization process using the left gradient of the risk function $ \nablam \loss^{\width,\ndata} $ instead of the gradient of the risk function, which does not exist everywhere. This process coincides with the \AD\ algorithm (cf.~Baydin et al.~\cite{JMLR:v18:17-468}) used in practice, which treats $ \act $ as differentiable with derivative given by the left derivative of $ \act $.

\Nobs that for every $ \width, \ndata \in \N $, $ \lr \in \R $ we have that the stochastic process $ \Theta^{\width, \ndata, \lr} \colon \N_0 \times \Omega \to \R^{\mc P( \width )} $ is associated with the \GD\ process of an \ANN\ with $ \width $ neurons on the hidden layer, which is randomly initialized and optimized with \GD\ in \cref{thm:introduction:theta} using $ \ndata $ training data pairs and the constant learning rate $ \lr $. To be more precise, the parameters of the considered \ANN\ in \cref{thm:introduction} are initialized using the Kaiming initialization method (cf.~He et al.~\cite{7410480}), which is a standard initialization method for fully connected \ANNs\ with \ReLU\ activation. All parameters are initialized independently, the weights from the input layer to the hidden layer are normally distributed with mean $ 0 $ and variance $ \nicefrac{2}{\indim} $, the weights from the hidden layer to the output layer are normally distributed with mean $ 0 $ and variance $ \nicefrac{2}{\width} $, and all biases are set to $ 0 $. After that, all parameters except for the weights from the hidden layer to the output layer are optimized with \GD\ using the learning rate $ \lr $. Finally, \nobs that for every $ \ndata \in \N $ the real number $ \lambda_{\ndata} \in \R $ in \cref{thm:introduction:lambda} corresponds to the smallest eigenvalue of an associated Gram matrix (cf.~\cref{lem:lambdazero_positive_onedim} below).

Under these conditions, \cref{thm:introduction} above states that \ref{thm_introduction:item1} for every number of training data pairs $ \ndata \in \N $ it holds that $ \lambda_{\ndata} > 0 $ and that \ref{thm_introduction:item2} there exists a real number $ \Lambda \in (0,1) $ such that for every number of training data pairs $ \ndata \in \N $, every arbitrarily small error probability $ \varepsilon \in (0,1) $, and every \ANN\ with $ \width \in \N $ neurons on the hidden layer, initialized and trained as above with learning rate $ \lr \in (0,\infty) $, satisfying
\begin{equation}\label{introduction:width:lr}
  \lr \le \frac{ \Lambda \poe^2 \min \{ \lambda_{\ndata}, ( \lambda_{\ndata} )^{-1} \} }{ \ndata }
  \qqandqq
  \width \ge \frac{ \ndata^7 }{ \Lambda \poe^{4} \min\{ \lambda_{\ndata} , ( \lambda_{\ndata} )^{5} \} }
\end{equation}
it holds with probability at least $ 1 - \varepsilon $ that in every iteration step $ \iteration \in \N_0 $ the associated empirical risk is bounded by the product of the initial empirical risk and the factor $ ( 1 - \tfrac{\lr \lambda_{\ndata}}{\ndata} )^{\iteration} $. Assuming \cref{introduction:width:lr}, this implies in particular that with probability at least $ 1 - \varepsilon $ the associated empirical risk converges to zero at a linear convergence rate of $ 1 - \tfrac{\lr \lambda_{\ndata}}{\ndata} \in (0,1) $, that is,
\begin{equation}\label{introduction:zero}
  \lim\nolimits_{ \iteration \to \infty } \loss^{\width, \ndata} ( \Theta^{\width,\ndata,\lr} (\iteration) ) = 0.
\end{equation}
For simplicity of presentation, in \cref{thm:introduction} we only assert the existence of $ \Lambda $ and give very rough bounds on the learning rate $ \lr $ and network width $ \width $. However, in the more detailed results in \cref{sec:error_analysis} below, we show the dependency of $ \Lambda $ on the specific activation function, the output dimension, the bounds on the training data, and the initialization method (cf.~\cref{cor:assumption_pairwise_distinct_const} below) and we also give much tighter bounds on $ \lr $ and $ \width $ (cf.~\cref{cor:assumption_pairwise_distinct} below). 

\Nobs that in the more general result in \cref{cor:vectorized} below, in contrast to \cref{thm:introduction}, we do not require positivity of the components of the training input data but only the weaker condition that the norms of the training input data are bounded away from zero. Additionally, in \cref{cor:vectorized} we cover slightly more general initialization methods and, instead of considering the left gradient of the associated risk function, we approximate the piecewise affine activation function $ \act $ with smooth functions, apply the \GD\ method to the resulting smooth risk functions, and then take the limit, which corresponds to more general \AD\ algorithms.

Lastly, we briefly discuss some technical details regarding \cref{thm:introduction}. As we demonstrate in \cref{cor:assumption_pairwise_distinct}, the lower bound on the number of neurons on the hidden layer $ \width $ in \cref{introduction:width:lr} can be relaxed in the sense that it is sufficient for \cref{thm_introduction:item2} in \cref{thm:introduction} to hold if for an arbitrarily small $ \delta \in (0,1) $ we have that
\begin{equation}
  \width \ge \biggl[ \frac{ \ndata^6 }{ \delta \Lambda \poe^{3} \min\{ \lambda_{\ndata} , ( \lambda_{\ndata} )^{4} \} } \biggr]^{1+\delta}.
\end{equation}
This improves the bound in \cref{introduction:width:lr} with regard to the exponents of $ \ndata $, $ \varepsilon $, and $ \lambda_{\ndata} $. \Moreover the assumption in \cref{thm:introduction} that $ \act $ is not affine is necessary since in the case that $ \act $ is affine for every $ \width \in \N $, $ \theta \in \R^{\mc P(\width)} $ the associated realization function $ \mc N_{\theta}^{\width} \colon \R^{\indim} \to \R^{\outdim} $ is also affine.
Combining this with the fact that the subspace
$ \{ f \colon \R^{\indim} \to \R^{\outdim} \colon \text{$f$ is affine} \} \subseteq C( \R^{\indim}, \R^{\outdim} ) $ is closed under the maximum norm on compact sets shows for all $ \width, \ndata \in \N $ that in the case that $ \act $ is affine and there is no affine relationship between the training input data $ \indata_1^{\ndata}, \indata_2^{\ndata}, \ldots, \indata_{\ndata}^{\ndata} $ and the training output data $ \outdata_1^{\ndata}, \outdata_2^{\ndata}, \ldots, \outdata_{\ndata}^{\ndata} $ there exists an $ \varepsilon \in (0,\infty) $ such that
\begin{equation}
  \inf\nolimits_{\theta \in \R^{\mc P(\width)}} \loss^{\width,\ndata} (\theta) > \varepsilon,
\end{equation}
which precludes the convergence of the associated empirical risk to zero established in \cref{introduction:zero}
(cf.~also the universal approximation properties of \ANNs, e.g., in~\cite{HORNIK1991251,HORNIK1989359,LESHNO1993861}).
\Moreover the assumption that the training input data are pairwise distinct is also necessary, since if there exist $ \ndata \in \N $, $ i, j \in \{ 1, 2, \ldots, \ndata \} $ with $ i \neq j $, $ \indata_i^{\ndata} = \indata_j^{\ndata} $, and $ \outdata_i^{\ndata} \neq \outdata_j^{\ndata} $ we have for all $ \width \in \N $ that
\begin{equation}
  \inf\nolimits_{\theta \in \R^{\mc P(\width)}} \loss^{\width,\ndata} (\theta)
  \ge \tfrac{1}{2 \ndata} \eucnorm{ \outdata_i^{\ndata} - \outdata_j^{\ndata} }^2 > 0,
\end{equation}
which again precludes the convergence established in \cref{introduction:zero}.

The remainder of this article is organized as follows. In \cref{sec:mathematical_framework}, we recall a common approach to mathematically describe \ANNs\ and we introduce the setting of \GD\ processes that we often use in our article. In \cref{sec:analysis_of_eigenvalues_of_stochastic_gram_matrices}, we analyze some of the mathematical objects that appear in this setting. In particular, in \cref{sec:analysis_of_eigenvalues_of_stochastic_gram_matrices} we deal with Gram matrices and their eigenvalues, which are essential for the proof of the main result. In \cref{sec:error_analysis}, we combine the results from \cref{sec:analysis_of_eigenvalues_of_stochastic_gram_matrices} to establish error analyses for the considered \GD\ processes.
The arguments in this work are partially inspired by existing techniques in the overparameterized regime, in particular by the arguments in Du et al.~\cite{du2019gradienta}.

\section{Mathematical framework for gradient descent (GD) optimization algorithms}
\label{sec:mathematical_framework}

This section is devoted to the mathematical description of \ANNs\ (cf.~\cref{subsec:mathematical_description_of_ANNs} below), the introduction of the mathematical framework for \GD\ optimization algorithms (cf.~\cref{subsec:mathematical_description_of_gd_processes} below), and the analysis of the smallest eigenvalues of specific deterministic Gram matrices (cf.~\cref{subsec:positive_definiteness_of_deterministic_gram_matrices} below). For the understanding of our main setting, which we often impose in this article, \cref{setting:gradient_descent} below, we show in \cref{lem:gradient_descent_motivation} that the specific algorithm used in \cref{setting:gradient_descent} arises directly from the gradients of the considered risk functions.

In \cref{lem:graminf} and \cref{lem:gram_matrix} we show that the considered matrices are indeed Gram matrices and explain the connection between the deterministic Gram matrices $ \gramGinf $ and the stochastic Gram matrices $ \gramG( \iteration ) $, $ \iteration \in \N_0 $.
In \cref{lem:lambdazero_positive_onedim} we prove in the case of one-dimensional training output data that the deterministic Gram matrices have only positive eigenvalues if the considered training input data are pairwise distinct and the piecewise affine activation function $ \act $ is not affine. In our proof of \cref{lem:lambdazero_positive_onedim} we use a well-known property of Gram matrices, which we establish in \cref{lem:properties_gram_matrix}, and an isolation result on hyperplanes, which we establish in \cref{lem:hyperplane_isolation}.
Lastly, in the elementary result in \cref{lem:eigenvalues_block_matrix} we show that the general case of multidimensional training output data can be reduced to the one-dimensional case to establish in \cref{lem:lambdazero_positive} that, under the same conditions as above, all eigenvalues of the deterministic Gram matrices are positive. Only for completeness we include the proofs of \cref{lem:properties_gram_matrix} and \cref{lem:eigenvalues_block_matrix} in this section.

\subsection{Mathematical description of artificial neural networks (ANNs)}
\label{subsec:mathematical_description_of_ANNs}

\cfclear
\begin{definition}[Standard scalar product and norm]
  \label{def:scalar_product_norm}
  For every $ \indim \in \N $, $ u = ( u_1, \ldots, u_{\indim} ) $, $ v = ( v_1, \ldots, v_{\indim} ) \in \R^{\indim} $ we denote by
  $ \scalprod{ u }{ v } \in \R $ and $ \eucnorm{ v } \in \R $ the real numbers which satisfy that
  $ \scalprod{ u }{ v } = \sum_{i=1}^{\indim} u_i v_i $ and
  $ \eucnorm{ v } = ( \sum_{i=1}^{\indim} \abs{ v_i }^2 )^{\nicefrac{1}{2}} $.
\end{definition}

\cfclear
\begin{definition}[\ANNs]
  \label{def:ANN}
  For every $ \indim, \width, \outdim \in \N $ we denote by $ \network_{ \indim, \width, \outdim } $ the set given by
  $ \network_{ \indim, \width, \outdim } = ( ( \R^{ \width \times \indim } \times \R^{\width} ) \times (\R^{\outdim \times \width} \times \R^{\outdim} ) ) $.
\end{definition}

\cfclear
\begin{definition}[Multidimensional version]
  \label{def:multidimensional_version}
  For every $ \indim \in \N $ and every function $ \act \colon \R \to \R $ we denote by
  $ \multdim{\act}{\indim} \colon \R^{\indim} \to \R^{\indim} $ the function which satisfies for all
  $ v = ( v_1, \ldots, v_{\indim} ) \in \R^{\indim} $ that
  \begin{equation}
    \multdim{\act}{\indim}(v) = ( \act(v_1), \ldots, \act(v_{\indim}) ).
  \end{equation}
\end{definition}

\cfclear
\cfconsiderloaded{def:realization_ANN}
\begin{definition}[Realization associated with an \ANN]
  \label{def:realization_ANN}
  For every function $ \act \colon \R \to \R $ and
  every $ \indim, \width, \outdim \in \N $, $ \Phi = ((W, B), (\mf W, \mf B)) \in \network_{ \indim, \width, \outdim } $ we denote by
  $ \functionANN{\act}(\Phi) = ( \functionANN{\act}^1 (\Phi), \ldots, \functionANN{\act}^{\outdim} (\Phi) ) \colon \R^{\indim} \to \R^{\outdim} $
  the function which satisfies for all $ v \in \R^{\indim} $ that
  \begin{equation}
    \bigl( \functionANN{\act}(\Phi) \bigr) (v) = \mf W \bigl[ \multdim{\act}{\width} ( W v + B) \bigr] + \mf B
  \end{equation}
  \cfload.
\end{definition}

\subsection{Gradients of the considered risk functions}
\label{subsec:gradients_of_risk_functions}

\cfclear
\begin{lemma}
  \label{lem:gradient_descent_motivation}
  Let $ \nbp \in \N $, $ \bp_0, \bp_1, \dots, \bp_{\nbp+1} \in [-\infty,\infty] $,
  $ \slope_1, \slope_2, \dots, \slope_{\nbp+1}, \yinter_1, \yinter_2, \dots, \yinter_{\nbp+1} \in \R $ satisfy
  $ -\infty = \bp_0 < \bp_1 < \dots < \bp_{\nbp+1} = \infty $,
  let $ \deriv \colon \R \to \R $ and $ \act \in C( \R, \R ) $ satisfy for all
  $ i \in \{ 1, 2, \dots, \nbp+1 \} $, $ v \in ( \bp_{i-1}, \bp_i ) $
  that
  \begin{equation}
    \act( v ) = \slope_i v + \yinter_i
    \qqandqq
    \deriv( v ) = \slope_i,
  \end{equation}
  let $ \indim, \width, \outdim, \ndata \in \N $,
  $ \indata = ( ( \indata_i^p )_{ p \in \{ 1, 2, \ldots, \indim \} } )_{ i \in \{ 1, 2, \ldots, \ndata \} } \in ( \R^{\indim} )^{\ndata} $,
  $ \outdata = ( ( \outdata_i^p )_{ p \in \{ 1, 2, \ldots, \outdim \} } )_{ i \in \{ 1, 2, \ldots, \ndata \} } \in ( \R^{\outdim} )^{\ndata} $,
  $ W = ( W_1, \ldots, W_{\width} ) = (W_{i,j})_{(i,j) \in \{ 1, 2, \ldots, \width \} \times
    \{ 1, 2, \ldots, \indim \}} \in \R^{\width \times \indim} $,
  $ B = ( B_1, \ldots, B_{\width} ) \in \R^{\width} $,
  $ \mf W = ( \mf W_{i,j} )_{ (i,j) \in \{ 1, 2, \ldots, \outdim \} \times \{ 1, 2, \ldots, \width \} } \in \R^{\outdim \times \width} $,
  $ \mf B = ( \mf B_1, \ldots, \mf B_{\outdim} ) \in \R^{\outdim} $
  satisfy for all $ k \in \{ 1, 2, \dots, \width \} $, $ i \in \{ 1, 2, \dots, \ndata \} $ that
  $ \scalprod{ W_k }{ \indata_i } + B_k \not\in \{ \bp_1, \bp_2, \ldots, \bp_{\nbp} \} $,
  let $ \Phi \in \network_{ \indim, \width, \outdim } $ satisfy $ \Phi = ((W,B),(\mf W, \mf B)) $,
  let $ \loss \colon \network_{ \indim, \width, \outdim } \to \R $ satisfy for all
  $ \Psi \in \network_{ \indim, \width, \outdim } $ that
  \begin{equation}\label{eq:gradient_descent_motivation_E}
    \loss (\Psi) = \tfrac{1}{\ndata} \smallsum\limits_{i=1}^{\ndata} \eucnorm{ ( \functionANN{\act} (\Psi)) (\indata_i) - \outdata_i }^2,
  \end{equation}
  and let
  $ j \in \{ 1, 2, \ldots, \indim \} $, $ k \in \{ 1, 2, \ldots, \width \} $, $ \ell \in \{ 1, 2, \ldots, \outdim \} $
  \cfload.
  Then
  \begin{enumerate}[label=(\roman{*})]
    \item \label{item:gradient_descent_motivation_1} it holds that
      $ \loss $ is differentiable at $ \Phi $,
    \item \label{item:gradient_descent_motivation_2} it holds that
      $ \frac{\partial \loss (\Phi)}{\partial W_{k,j}}
        = \tfrac{2}{\ndata} \smallsum\limits_{i=1}^{\ndata} \smallsum\limits_{p=1}^{\outdim}
        \bigl( ( \functionANN{\act}^p (\Phi) )(\indata_i) - \outdata_i^p \bigr)
        \mf W_{p,k} \deriv \bigl( \scalprod{W_k}{\indata_i} + B_k \bigr) \indata_i^j, $
    \item \label{item:gradient_descent_motivation_3} it holds that
      $ \frac{\partial \loss (\Phi)}{\partial B_k}
        = \tfrac{2}{\ndata} \smallsum\limits_{i=1}^{\ndata} \smallsum\limits_{p=1}^{\outdim}
        \bigl( ( \functionANN{\act}^p (\Phi) )(\indata_i) - \outdata_i^p \bigr)
        \mf W_{p,k} \deriv \bigl( \scalprod{W_k}{\indata_i} + B_k \bigr), $
      and
    \item \label{item:gradient_descent_motivation_4} it holds that
      $ \frac{\partial \loss (\Phi)}{\partial \mf B_{\ell}}
      = \frac{2}{\ndata} \smallsum\limits_{i=1}^{\ndata}
      \bigl( ( \functionANN{\act}^{\ell} (\Phi) )(\indata_i) - \outdata_i^{\ell} \bigr) $.
  \end{enumerate}
\end{lemma}

\begin{cproof}{lem:gradient_descent_motivation}
  First, \nobs that \cref{eq:gradient_descent_motivation_E} \proves that
  \begin{equation}
  \label{eq:error_phi}
  \begin{split}
    \loss (\Phi)
    &= \tfrac{1}{\ndata} \smallsum\limits_{i=1}^{\ndata} \beucnorm{ (\functionANN{\act} (\Phi)) (\indata_i) - \outdata_i }^2
    = \tfrac{1}{\ndata} \smallsum\limits_{i=1}^{\ndata} \smallsum\limits_{p=1}^{\outdim}
      \babs{ (\functionANN{\act}^p (\Phi)) (\indata_i) - \outdata_i^p }^2 \\
    &= \tfrac{1}{\ndata} \smallsum\limits_{i=1}^{\ndata} \smallsum\limits_{p=1}^{\outdim}
      \biggl( \smallsum\limits_{q=1}^{\width}
      \mf W_{p,q} \, \act( \scalprod{ W_q }{ \indata_i } + B_q ) + \mf B_p - \outdata_i^p \biggr)^2 \\
    &= \tfrac{1}{\ndata} \smallsum\limits_{i=1}^{\ndata} \smallsum\limits_{p=1}^{\outdim}
      \biggl( \smallsum\limits_{q=1}^{\width} \mf W_{p,q} \,
      \act \biggl( \smallsum\limits_{r=1}^{\indim} W_{q,r} \indata_i^r + B_q \biggr) + \mf B_p - \outdata_i^p \biggr)^2.
  \end{split}
  \end{equation}
  This, the fact that for all $ v \in \R \backslash \{ \bp_1, \bp_2, \dots, \bp_{\nbp} \} $ it holds that
  $ \R \ni u \mapsto \act ( u ) \in \R $ is differentiable at $ v $,
  the assumption that for all $ q \in \{ 1, 2, \ldots, \width \} $, $ i \in \{ 1, 2, \ldots, \ndata \} $
  it holds that $ \scalprod{ W_q }{ \indata_i} + B_q \in \R \backslash \{ \bp_1, \bp_2, \ldots, \bp_{\nbp} \} $,
  and the fact that sums and compositions of differentiable functions are differentiable
  \prove \cref{item:gradient_descent_motivation_1}.
  \Moreover \cref{eq:error_phi} and the fact that for all $ v \in \R \backslash \{ \bp_1, \bp_2, \ldots, \bp_{\nbp} \} $
  it holds that $ \frac{\partial \act ( v )}{\partial v} = \deriv( v ) $ \prove that it holds that
  \begin{equation}
  \begin{split}
    \frac{\partial \loss (\Phi)}{\partial W_{k,j}}
    &= \tfrac{2}{\ndata} \smallsum\limits_{i=1}^{\ndata} \smallsum\limits_{p=1}^{\outdim}
      \bigl( ( \functionANN{\act}^p (\Phi)) (\indata_i) - \outdata_i^p \bigr)
      \displaystyle\frac{\partial}{\partial W_{k,j}} \biggl( \smallsum\limits_{q=1}^{\width} \mf W_{p,q} \,
      \act \biggl( \smallsum\limits_{r=1}^{\indim} W_{q,r} \indata_i^r + B_q \biggr) + \mf B_p - \outdata_i^p \biggr) \\
    &= \tfrac{2}{\ndata} \smallsum\limits_{i=1}^{\ndata} \smallsum\limits_{p=1}^{\outdim}
      \bigl( ( \functionANN{\act}^p (\Phi)) (\indata_i) - \outdata_i^p \bigr)
      \mf W_{p,k} \deriv \bigl( \scalprod{W_k}{\indata_i} + B_k \bigr) \indata_i^j,
  \end{split}
  \end{equation}
  \begin{equation}
  \begin{split}
    \frac{\partial \loss (\Phi)}{\partial B_k}
    &= \tfrac{2}{\ndata} \smallsum\limits_{i=1}^{\ndata} \smallsum\limits_{p=1}^{\outdim}
      \bigl( ( \functionANN{\act}^p (\Phi)) (\indata_i) - \outdata_i^p \bigr)
      \displaystyle\frac{\partial}{\partial B_k} \biggl( \smallsum\limits_{q=1}^{\width} \mf W_{p,q} \,
      \act \biggl( \smallsum\limits_{r=1}^{\indim} W_{q,r} \indata_i^r + B_q \biggr) + \mf B_p - \outdata_i^p \biggr) \\
    &= \tfrac{2}{\ndata} \smallsum\limits_{i=1}^{\ndata} \smallsum\limits_{p=1}^{\outdim}
      \bigl( ( \functionANN{\act}^p (\Phi)) (\indata_i) - \outdata_i^p \bigr)
      \mf W_{p,k} \deriv \bigl( \scalprod{W_k}{\indata_i} + B_k \bigr),
  \end{split}
  \end{equation}
  and
  \begin{equation}
  \begin{split}
    \frac{\partial \loss (\Phi)}{\partial \mf B_{\ell}}
    &= \tfrac{2}{\ndata} \smallsum\limits_{i=1}^{\ndata} \smallsum\limits_{p=1}^{\outdim}
      \bigl( ( \functionANN{\act}^p (\Phi)) (\indata_i) - \outdata_i^p \bigr)
      \displaystyle\frac{\partial}{\partial \mf B_{\ell}} \biggl( \smallsum\limits_{q=1}^{\width} \mf W_{p,q} \,
      \act \biggl( \smallsum\limits_{r=1}^{\indim} W_{q,r} \indata_i^r + B_q \biggr) + \mf B_p - \outdata_i^p \biggr) \\
    &= \tfrac{2}{\ndata} \smallsum\limits_{i=1}^{\ndata}
      \bigl( ( \functionANN{\act}^{\ell} (\Phi)) (\indata_i) - \outdata_i^{\ell} \bigr).
  \end{split}
  \end{equation}
  This \proves items \ref{item:gradient_descent_motivation_2}, \ref{item:gradient_descent_motivation_3}, and
  \ref{item:gradient_descent_motivation_4}.
\end{cproof}

\subsection{Mathematical description of GD processes}
\label{subsec:mathematical_description_of_gd_processes}

\cfclear
\begin{definition}[Smallest eigenvalue]
  \label{def:lambdamin}
  For every $ n \in \N $, $ A \in \R^{ n \times n } $ we denote by $ \lambdamin( A ) \in [-\infty,\infty] $
  the extended real number which satisfies that
  \begin{equation}
    \lambdamin ( A )
    = \min \bigl( \bigl\{ \lambda \in \R \colon
      [ \Exists v \in \R^{n} \backslash \{ 0 \} \colon A v = \lambda v ] \bigr\} \cup \{ \infty \} \bigr).
  \end{equation}
\end{definition}

\cfclear
\begin{setting}
  \label{setting:gradient_descent}
  Let $ \nbp \in \N $, $ \bp_0, \bp_1, \dots, \bp_{\nbp+1} \in [-\infty,\infty] $,
  $ \slope = ( \slope_1, \dots, \slope_{\nbp+1} ) \in \R^{\nbp+1} \backslash \{ 0 \} $,
  $ \yinter = ( \yinter_1, \dots, \yinter_{\nbp+1} ) \in \R^{\nbp+1} $,
  $ \cderiv \in (0,\infty) $
  satisfy $ -\infty = \bp_0 < \bp_1 < \dots < \bp_{\nbp+1} = \infty $,
  let $ \deriv \colon \R \to \R $ and $ \act \in C( \R, \R ) $ satisfy for all
  $ i \in \{ 1, 2, \dots, \nbp+1 \} $, $ v \in ( \bp_{i-1}, \bp_i ) $
  that
  \begin{equation}
    \act( v ) = \slope_i v + \yinter_i, \qquad
    \deriv( v ) = \slope_i, \qqandqq
    \sup\nolimits_{ u \in \R } \abs{ \deriv( u ) } \le \cderiv,
  \end{equation}
  for every $ R \in (0,\infty) $ let $ \mc U_R \subseteq \R $ satisfy $ \mc U_R = \smallbigcup_{i=1}^{\nbp+1} [ \bp_i - R, \bp_i + R ] $,
  let $ \indim, \width, \outdim, \ndata \in \N $, $ \lr, \cmin, \cmax, \cout, \cvar, \Cvar \in (0,\infty) $,
  $ \indata = ( ( \indata_i^p )_{ p \in \{ 1, 2, \ldots, \indim \} } )_{ i \in \{ 1, 2, \ldots, \ndata \} } \in ( \R^{\indim} )^{\ndata} $,
  $ \outdata = ( ( \outdata_i^p )_{ p \in \{ 1, 2, \ldots, \outdim \} } )_{ i \in \{ 1, 2, \ldots, \ndata \} } \in ( \R^{\outdim} )^{\ndata} $
  satisfy for all $ i \in \{ 1, 2, \ldots, \ndata \} $ that
  $ \cmin \le \eucnorm{ \indata_i } \le \cmax $ and $ \eucnorm{ \outdata_i } \le \cout $,
  let $ (\Omega, \mc F, \P) $ be a probability space, let
  $ W = ( W_1, \ldots, W_{\width} ) = ( W_{i,j} )_{ (i,j) \in \{ 1, 2, \ldots, \width \} \times \{ 1, 2, \ldots, \indim \} }
    \colon \N_0 \times \Omega \to \R^{\width \times \indim} $,
  $ B = ( B_1, \ldots, B_{\width} ) \colon \N_0 \times \Omega \to \R^{\width} $,
  $ \mf W = ( \mf W_1, \ldots, \mf W_{\outdim} ) = ( \mf W_{i,j} )_{ (i,j) \in \{ 1, 2, \ldots, \outdim \} \times \{ 1, 2, \ldots, \width \} }
    \colon \Omega \to \R^{\outdim \times \width} $, and
  $ \mf B = ( \mf B_1, \ldots, \mf B_{\outdim} ) \colon \N_0 \times \Omega \to \R^{\outdim} $
  be stochastic processes,
  let $ \Phi \colon \N_0 \times \Omega \to \network_{ \indim, \width, \outdim } $ and
  $ \netvalue = ( ( \netvalue_i^p )_{ p \in \{ 1, 2, \ldots, \outdim \} } )_{ i \in \{ 1, 2, \ldots, \ndata \} }
    \colon \N_0 \times \Omega \to ( \R^{\outdim} )^{\ndata} $ 
  satisfy for all $ i \in \{ 1, 2, \ldots, \ndata \} $, $ \iteration \in \N_0 $, $ \omega \in \Omega $ that
  \begin{equation}
    \Phi (\iteration,\omega) = \bigl( (W(\iteration,\omega), B(\iteration,\omega)), (\mf W (\omega), \mf B(\iteration,\omega)) \bigr)
    \qqandqq
    \netvalue_i (\iteration,\omega) = \bigl( \functionANN{\act} ( \Phi(\iteration,\omega) ) \bigr) (\indata_i),
  \end{equation}
  assume that
  $ \sqrt{\nicefrac{1}{\cvar}} W_1(0), \allowbreak
    \sqrt{\nicefrac{1}{\cvar}} W_2(0), \allowbreak \ldots, \allowbreak
    \sqrt{\nicefrac{1}{\cvar}} W_{\width}(0), \allowbreak
    \sqrt{\nicefrac{1}{\Cvar}} \mf W_1, \allowbreak
    \sqrt{\nicefrac{1}{\Cvar}} \mf W_2, \allowbreak \ldots, \allowbreak
    \sqrt{\nicefrac{1}{\Cvar}} \mf W_{\outdim} $
  are independent and standard normal,
  assume for all $ j \in \{ 1, 2, \ldots, \indim \} $, $ k \in \{ 1, 2, \ldots, \width \} $, $ \ell \in \{ 1, 2, \ldots, \outdim \} $,
  $ \iteration \in \N_0 $, $ \omega \in \Omega $ that
  \begin{equation}
  \label{eq:setting_W}
    W_{k,j}(\iteration+1,\omega) = W_{k,j}(\iteration,\omega) - \frac{2 \lr}{\ndata} \biggl(
      \smallsum\limits_{i=1}^{\ndata} \smallsum\limits_{p=1}^{\outdim} ( \netvalue_i^p (\iteration,\omega) - \outdata_i^p ) \mf W_{p,k}(\omega)
      \deriv \bigl( \scalprod{W_k(\iteration,\omega)}{\indata_i} + B_k(\iteration,\omega) \bigr) \, \indata_i^j \biggr),
  \end{equation}
  \begin{equation}
  \label{eq:setting_B}
    B_k(\iteration+1,\omega) = B_k(\iteration,\omega) - \frac{2 \lr}{\ndata} \biggl(
      \smallsum\limits_{i=1}^{\ndata} \smallsum\limits_{p=1}^{\outdim} ( \netvalue_i^p (\iteration,\omega) - \outdata_i^p ) \mf W_{p,k}(\omega)
      \deriv \bigl( \scalprod{W_k(\iteration,\omega)}{\indata_i} + B_k(\iteration,\omega) \bigr) \biggr),
  \end{equation}
  \begin{equation}
  \label{eq:setting_mf_B}
    \mf B_{\ell} (\iteration+1,\omega) = \mf B_{\ell} (\iteration,\omega) - \frac{2 \lr}{\ndata} \biggl(
      \smallsum\limits_{i=1}^{\ndata} ( \netvalue_i^{\ell} (\iteration,\omega) - \outdata_i^{\ell} ) \biggr),
  \end{equation}
  and $ \eucnorm{ B(0,\omega) } = \eucnorm{ \mf B(0,\omega) } = 0 $,
  let $ \fnetvalue = ( \fnetvalue_i )_{ i \in \{ 1, 2, \ldots, \outdim \ndata \} } \colon \N_0 \times \Omega \to \R^{\outdim \ndata} $
  and $ \gramG = ( \gramG_{i,j} )_{ (i,j) \in \{ 1, 2, \ldots, \outdim \ndata \}^2 }
    \colon \N_0 \times \Omega \to \R^{(\outdim \ndata) \times (\outdim \ndata)} $
  satisfy for all $ i, j \in \{ 1, 2, \ldots, \ndata \} $, $ p,q \in \{ 1, 2, \ldots, \outdim \} $,
  $ \iteration \in \N_0 $, $ \omega \in \Omega $ that
  $ \fnetvalue_{ (i-1) \outdim + p } ( \iteration, \omega ) = \netvalue_i^p ( \iteration, \omega ) $
  and
  \begin{equation}\label{eq:setting_gramG}
  \begin{split}
    \gramG_{ (i-1) \outdim + p, (j-1) \outdim + q } (\iteration,\omega)
    &= ( 1 + \scalprod{\indata_i}{\indata_j} ) \smallsum\limits_{k=1}^{\width}
      \mf W_{p,k} (\omega) \mf W_{q,k} (\omega)
      \deriv \bigl( \scalprod{W_k(\iteration,\omega)}{\indata_i} + B_k(\iteration,\omega) \bigr) \\
    &\quad \times \deriv \bigl( \scalprod{W_k(\iteration,\omega)}{\indata_j} + B_k(\iteration,\omega) \bigr),
  \end{split}
  \end{equation}
  and let
  $ \gramGinf = ( \gramGinf_{i,j} )_{ (i,j) \in \{ 1, 2, \ldots, \outdim \ndata \}^2 } \in \R^{(\outdim \ndata) \times (\outdim \ndata)} $,
  $ \findata = ( \findata_i )_{ i \in \{ 1, 2, \ldots, \indim \ndata \} } \in \R^{\indim \ndata} $,
  $ \foutdata = ( \foutdata_i )_{ i \in \{ 1, 2, \ldots, \outdim \ndata \} } \in \R^{\outdim \ndata} $,
  $ \lambdazero \in \R $
  satisfy for all $ i, j \in \{ 1, 2, \ldots, \ndata \} $, $ p, q \in \{ 1, 2, \ldots, \outdim \}$, $ \ell \in \{ 1, 2, \ldots, \indim \}$
  that
  \begin{equation}
    \gramGinf_{ (i-1) \outdim + p, (j-1) \outdim + q }
    = \Cvar \width \bigl( 1 + \scalprod{\indata_i}{\indata_j} \bigr)
      \E \bigl[ \deriv \bigl( \scalprod{W_1(0)}{\indata_i} \bigr) \deriv \bigl( \scalprod{W_1(0)}{\indata_j} \bigr) \bigr] \ind_{ \{ p \} } (q),
  \end{equation}
  $ \findata_{ (i-1) \indim + \ell } = \indata_i^{\ell} $,
  $ \foutdata_{ (i-1) \outdim + p } = \outdata_i^p $, and
  $ \lambdazero = \lambdamin( \tfrac{1}{\Cvar \width} \gramGinf ) $
  \cfload.
\end{setting}

\subsection{Connection between deterministic and stochastic Gram matrices}
\label{subsec:connection_between_gram_matrices}

\begin{definition}[Gram matrix]
  \label{def:gram_matrix}
  Let $ n \in \N $, $ G = ( G_{i,j} )_{ (i,j) \in \{ 1, 2, \ldots, n \}^2 } \in \R^{n \times n} $.
  Then we say that $ G $ is a \gram\ matrix if and only if there exist a vector space $ V $ over $ \R $,
  an inner product space $ \varphi \colon V \times V \to \R $ on $ V $, and $ v_1, v_2, \ldots, v_n \in V $
  such that for all $ i, j \in \{ 1, 2, \ldots, n \}$ it holds that
  \begin{equation}
    G_{i,j} = \varphi( v_i, v_j ).
  \end{equation}
\end{definition}

\cfclear
\begin{lemma}
  \label{lem:graminf}
  Assume \cref{setting:gradient_descent}.
  Then it holds for all $ i, j \in \{ 1, 2, \ldots, \outdim \ndata \} $ that
  $ \E \big[ \gramG_{i,j} (0) \big] = \gramGinf_{i,j} $.
\end{lemma}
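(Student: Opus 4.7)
The plan is to unfold the definition of $\gramG_{i,j}(0)$, use the assumption $B(0) = 0$ to eliminate the bias from the indicator, and then exploit the independence and distributional assumptions on $\mc W_k$ and $W_k(0)$ to compute the expectation term-by-term. Finally, the statement for $\gramHinf$ follows immediately from the statement for $\gramGinf$ by multiplying with $\scalprod{x_i}{x_j}$.

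More concretely, first I would observe that the assumption $\eucnorm{B(0,\omega)} = 0$ implies $B_k(0,\omega) = 0$ for every $k \in \{1,2,\ldots,\width\}$ and every $\omega \in \Omega$, so that for all $i,j \in \{1,2,\ldots,\data\}$ we have
\begin{equation}
\gramG_{i,j}(0,\omega) = \smallsum\limits_{k=1}^{\width} \abs{ \mc W_k(\omega) }^2
  \ind_{ \{ \scalprod{W_k(0)}{x_i} \ge 0, \, \scalprod{W_k(0)}{x_j} \ge 0 \} } (\omega).
\end{equation}
Next I would take expectations term-by-term and use that, by the independence assumption, $\mc W_k$ is independent of $W_k(0)$ for every $k$, so that the expectation of the product factors. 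Since $\sqrt{\width}\,\mc W_k$ is standard normal, $\E[\abs{\mc W_k}^2] = \width^{-1}$. Moreover, since $W_1(0), W_2(0), \ldots, W_{\width}(0)$ are i.i.d.\ standard normal in $\R^d$, the probability $\P(\scalprod{W_k(0)}{x_i} \ge 0, \scalprod{W_k(0)}{x_j} \ge 0)$ is the same for every $k$ and equals $\gramGinf_{i,j}$ by definition. Summing the $\width$ identical terms then yields
\begin{equation}
\E[\gramG_{i,j}(0)] = \smallsum\limits_{k=1}^{\width} \tfrac{1}{\width} \gramGinf_{i,j} = \gramGinf_{i,j}.
\end{equation}

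Finally, the identity $\gramH_{i,j}(0,\omega) = \scalprod{x_i}{x_j}\gramG_{i,j}(0,\omega)$ combined with linearity of expectation gives $\E[\gramH_{i,j}(0)] = \scalprod{x_i}{x_j}\E[\gramG_{i,j}(0)] = \scalprod{x_i}{x_j}\gramGinf_{i,j} = \gramHinf_{i,j}$, which completes the proof.

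There is no substantive obstacle here: the lemma is essentially a definitional unpacking, and the only points requiring care are (a) verifying that the bias vanishes at initialization so the stochastic indicator matches the one defining $\gramGinf$, and (b) correctly factoring the expectation via the independence of $\mc W_k$ and $W_k(0)$ together with the normalization $\E[\abs{\mc W_k}^2] = \width^{-1}$.
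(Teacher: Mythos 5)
Your proof is correct and follows the same route as the paper's: eliminate the bias using $\eucnorm{B(0)}=0$, factor the expectation using independence of $\mc W_k$ and $W_k(0)$, use $\E[\abs{\mc W_k}^2]=\width^{-1}$ and the i.i.d.\ property of $W_1(0),\ldots,W_\width(0)$ to identify each summand as $\width^{-1}\gramGinf_{i,j}$, and then deduce the $\gramHinf$ identity by linearity.
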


\begin{cproof}{lem:graminf}
  \Nobs that the assumption that
  $ \sqrt{\nicefrac{1}{\Cvar}} \mf W_1, \allowbreak
    \sqrt{\nicefrac{1}{\Cvar}} \mf W_2, \allowbreak \ldots, \allowbreak
    \sqrt{\nicefrac{1}{\Cvar}} \mf W_{\outdim} $
  are independent and standard normal
  \proves that for all $ k \in \{ 1, 2, \ldots, \width \} $, $ p, q \in \{ 1, 2, \ldots, \outdim \}$ with $ p \neq q $ it holds that
  \begin{equation}\label{lem:graminf:zero}
    \E[ \mf W_{p,k} \mf W_{q,k} ]
    = \E[ \mf W_{p,k} ] \E [ \mf W_{q,k} ]
    = 0.
  \end{equation}
  \Moreover the fact that for all $ k \in \{ 1, 2, \ldots, \width \}$, $ p \in \{ 1, 2, \ldots, \outdim \} $ it holds that
  $ \mf W_{p,k} $ is a centered normal random variable with $ \Var[ \mf W_{p,k} ] = \Cvar $
  \proves that for all $ k \in \{ 1, 2, \ldots, \width \} $, $ p \in \{ 1, 2, \ldots, \outdim \} $ it holds that
  \begin{equation}\label{lem:graminf:var}
    \E \bigl[ \abs{ \mf W_{p,k} }^2 \bigr] = \Cvar.
  \end{equation}
  \Moreover that the assumption that
  $ \sqrt{\nicefrac{1}{\cvar}} W_1(0), \allowbreak
    \sqrt{\nicefrac{1}{\cvar}} W_2(0), \allowbreak \ldots, \allowbreak
    \sqrt{\nicefrac{1}{\cvar}} W_{\width}(0), \allowbreak
    \sqrt{\nicefrac{1}{\Cvar}} \mf W_1, \allowbreak
    \sqrt{\nicefrac{1}{\Cvar}} \mf W_2, \allowbreak \ldots, \allowbreak
    \sqrt{\nicefrac{1}{\Cvar}} \mf W_{\outdim} $
  are independent \proves that for all
  $ i, j \in \{ 1, 2, \ldots, \ndata \} $, $ k \in \{ 1, 2, \ldots, \width \} $, $ p, q \in \{ 1, 2, \ldots, \outdim \}$ it holds that
  \begin{equation}
    \mf W_{p,k} \mf W_{q,k} 
    \qqandqq
    \deriv( \scalprod{W_k(0)}{\indata_i} ) \deriv( \scalprod{W_k(0)}{\indata_j} )
  \end{equation}
  are independent.
  Combining this with \cref{eq:setting_gramG}, \cref{lem:graminf:zero}, \cref{lem:graminf:var},
  the assumption that $ \eucnorm{ B(0) } = 0 $, and
  the assumption that
  $ \sqrt{\nicefrac{1}{\cvar}} W_1(0), \allowbreak
    \sqrt{\nicefrac{1}{\cvar}} W_2(0), \allowbreak \ldots, \allowbreak
    \sqrt{\nicefrac{1}{\cvar}} W_{\width}(0) $
  are identically distributed
  \proves that for all $ i, j \in \{ 1, 2, \ldots, \ndata \} $, $ p, q \in \{ 1, 2, \ldots, \outdim \} $ it holds that
  \begin{equation}
  \begin{split}
    &\E \bigl[ \gramG_{ (i-1) \outdim + p, (j-1) \outdim + q } (0) \bigr] \\
    &= \E \! \biggl[ \bigl( 1 + \scalprod{\indata_i}{\indata_j} \bigr) \smallsum\limits_{k=1}^{\width}
      \mf W_{p,k} \mf W_{q,k}
       \deriv \bigl( \scalprod{W_k(0)}{\indata_i} + B_k(0) \bigr)
       \deriv \bigl( \scalprod{W_k(0)}{\indata_j} + B_k(0) \bigr) \biggr] \\
    &= \bigl( 1 + \scalprod{\indata_i}{\indata_j} \bigr) \smallsum\limits_{k=1}^{\width} \E \bigl[
       \mf W_{p,k} \mf W_{q,k}
       \deriv \bigl( \scalprod{W_k(0)}{\indata_i} \bigr)
       \deriv \bigl( \scalprod{W_k(0)}{\indata_j} \bigr) \bigr] \\
    &= \bigl( 1 + \scalprod{\indata_i}{\indata_j} \bigr) \smallsum\limits_{k=1}^{\width} \E \bigl[ \mf W_{p,k} \mf W_{q,k} \bigr]
      \E \bigl[ \deriv \bigl( \scalprod{W_k(0)}{\indata_i} \bigr) \deriv \bigl( \scalprod{W_k(0)}{\indata_j} \bigr) \bigr] \\
    &= \bigl( 1 + \scalprod{\indata_i}{\indata_j} \bigr) \smallsum\limits_{k=1}^{\width} \Cvar \ind_{ \{ p \} } (q)
      \E \bigl[ \deriv \bigl( \scalprod{W_1(0)}{\indata_i} \bigr) \deriv \bigl( \scalprod{W_1(0)}{\indata_j} \bigr) \bigr] \\
    &= \Cvar \width \bigl( 1 + \scalprod{\indata_i}{\indata_j} \bigr) \E \bigl[
      \deriv \bigl( \scalprod{W_1(0)}{\indata_i} \bigr) \deriv \bigl( \scalprod{W_1(0)}{\indata_j} \bigr) \bigr] \ind_{ \{ p \} } (q)
    = \gramGinf_{ (i-1) \outdim + p, (j-1) \outdim + q }
  \end{split}
  \end{equation}
  \cfload.
  \Hence that for all $ i, j \in \{ 1, 2, \ldots, \outdim \ndata \} $ it holds that
  $ \E[ \gramG_{i,j} (0) ] = \gramGinf_{i,j} $.
\end{cproof}

\cfclear
\begin{lemma}
  \label{lem:gram_matrix}
  Assume \cref{setting:gradient_descent}.
  Then it holds for all $ \iteration \in \N_0 $, $ \omega \in \Omega $ that
  $ \gramG( \iteration, \omega ) $ and $ \gramGinf $ are \gram\ matrices
  \cfout.
\end{lemma}

\begin{cproof}{lem:gram_matrix}
  Throughout this proof
  let $ V $ satisfy $ V = L^2( \Omega, \mc F, \P; \R^{(\indim+1) \width} ) $,
  let $ \varphi \colon V \times V \to \R $ satisfy for all $ X, Y \in V $ that
  \begin{equation}
    \varphi( X, Y ) = \int_{\Omega} \bscalprod{ X(\omega) }{ Y(\omega) } \, \P( \diff \omega ),
  \end{equation}
  let $ z = ( ( z_i^{\ell} )_{ \ell \in \{ 1, 2, \ldots \indim+1 \} } )_{ i \in \{ 1, 2, \ldots \ndata \} } \in ( \R^{\indim+1} )^{\ndata} $
  satisfy for all $ i \in \{ 1, 2, \ldots, \ndata \} $ that $ z_i = ( \indata_i, 1 ) $ and
  let $ v_i = ( v_i^k )_{ k \in \{ 1, 2, \ldots, (\indim+1) \width \} } \colon \N_0 \times \Omega \to \R^{(\indim+1) \width} $,
  $ i \in \{ 1, 2, \ldots, \outdim \ndata \} $,
  satisfy for all $ i \in \{ 1, 2, \ldots, \ndata \} $, $ p \in \{ 1, 2, \ldots, \outdim \} $,
  $ k \in \{ 1, 2, \ldots, \width \} $, $ \ell \in \{ 1, 2, \ldots, \indim+1 \} $, $ \iteration \in \N_0 $, $ \omega \in \Omega $ that
  \begin{equation}\label{lem:gram_matrix:v}
    v_{ (i-1) \outdim + p }^{ (k-1) (\indim+1) + \ell } ( \iteration, \omega )
    = \mf W_{p,k} (\omega) \deriv \bigl( \scalprod{W_k(\iteration,\omega)}{\indata_i} + B_k(\iteration,\omega) \bigr) z_i^{\ell}
  \end{equation}
  \cfload.
  \Nobs that \cref{lem:gram_matrix:v} \proves that
  for all $ i, j \in \{ 1, 2, \ldots, \ndata \} $, $ p, q \in \{ 1, 2, \ldots, \outdim \} $,
  $ \iteration \in \N_0 $, $ \omega \in \Omega $ it holds that
  \begin{equation}\label{lem:gram_matrix:gramG}
  \begin{split}
    &\gramG_{ (i-1) \outdim + p, (j-1) \outdim + q } (\iteration,\omega) \\
    &= ( 1 + \scalprod{\indata_i}{\indata_j} ) \smallsum\limits_{k=1}^{\width}
      \mf W_{p,k} (\omega) \mf W_{q,k} (\omega)
      \deriv \bigl( \scalprod{W_k(\iteration,\omega)}{\indata_i} + B_k(\iteration,\omega) \bigr)
      \deriv \bigl( \scalprod{W_k(\iteration,\omega)}{\indata_j} + B_k(\iteration,\omega) \bigr) \\
    &= \smallsum\limits_{\ell=1}^{\indim+1} z_i^{\ell} z_j^{\ell} \smallsum\limits_{k=1}^{\width}
      \mf W_{p,k} (\omega) \mf W_{q,k} (\omega)
      \deriv \bigl( \scalprod{W_k(\iteration,\omega)}{\indata_i} + B_k(\iteration,\omega) \bigr)
      \deriv \bigl( \scalprod{W_k(\iteration,\omega)}{\indata_j} + B_k(\iteration,\omega) \bigr) \\
    &= \smallsum\limits_{k=1}^{\width} \smallsum\limits_{\ell=1}^{\indim+1}
      \Bigl( \mf W_{p,k} (\omega) \deriv \bigl( \scalprod{W_k(\iteration,\omega)}{\indata_i} + B_k(\iteration,\omega) \bigr) z_i^{\ell} \Bigr)
      \Bigl( \mf W_{q,k} (\omega) \deriv \bigl( \scalprod{W_k(\iteration,\omega)}{\indata_j} + B_k(\iteration,\omega) \bigr) z_j^{\ell} \Bigr)\\
    &= \smallsum\limits_{k=1}^{\width} \smallsum\limits_{\ell=1}^{\indim+1}
      v_{ (i-1) \outdim + p }^{ (k-1) (\indim+1) + \ell } ( \iteration, \omega )
      v_{ (j-1) \outdim + q }^{ (k-1) (\indim+1) + \ell } ( \iteration, \omega )
    = \bscalprod{ v_{ (i-1) \outdim + p } ( \iteration, \omega ) }{ v_{ (j-1) \outdim + q } ( \iteration, \omega ) }.
  \end{split}
  \end{equation}
  \Hence that for all $ i, j \in \{ 1, 2, \ldots, \outdim \ndata \} $, $ \iteration \in \N_0 $, $ \omega \in \Omega $ it holds that
  \begin{equation}\label{lem:gram_matrix:gramG_scalprod}
    \gramG_{i,j} ( \iteration, \omega )
    = \bscalprod{ v_i ( \iteration, \omega ) }{ v_j ( \iteration, \omega ) }.
  \end{equation}
  This \proves that for all $ \iteration \in \N_0 $, $ \omega \in \Omega $ it holds that
  $ \gramG( \iteration, \omega ) $ is a \gram\ matrix
  \cfload.
  \Moreover \cref{lem:gram_matrix:v},
  the fact that for all $ p \in \{ 1, 2, \ldots, \outdim \} $, $ k \in \{ 1, 2, \ldots, \width \} $ it holds that
  $ \mf W_{p,k} $ is a normal random variable, and
  the fact that $ \deriv $ is bounded
  \prove that for all $ i \in \{ 1, 2, \ldots, \outdim \ndata \} $ it holds that
  \begin{equation}\label{lem:gram_matrix:v_L2}
    v_i (0) \in V.
  \end{equation}
  \Moreover \cref{lem:graminf} and \cref{lem:gram_matrix:gramG_scalprod} \prove that
  for all $ i, j \in \{ 1, 2, \ldots, \outdim \ndata \} $ it holds that
  \begin{equation}
    \gramGinf_{i,j}
    = \E \bigl[ \gramG_{i,j} (0) \bigr]
    = \E \bigl[ \bscalprod{ v_i (0) }{ v_j (0) } \bigr]
    = \int_{\Omega} \bscalprod{ v_i (0,\omega) }{ v_j (0,\omega) } \, \P( \diff \omega )
    = \varphi \bigl( v_i (0), v_j (0) \bigr).
  \end{equation}
  Combining this, \cref{lem:gram_matrix:v_L2}, and
  the fact that $ \varphi $ is an inner product on $ V $
  \proves that $ \gramGinf $ is a \gram\ matrix.
\end{cproof}

\subsection{Positive definiteness of deterministic Gram matrices for pairwise distinct data}
\label{subsec:positive_definiteness_of_deterministic_gram_matrices}

\cfclear
\begin{lemma}
  \label{lem:properties_gram_matrix}
  Let $ V $ be a vector space over $ \R $, let $ \varphi \colon V \times V \to \R $ be an inner product on $ V $,
  let $ n \in \N $, $ v_1, v_2, \ldots, v_n \in V $, and let
  $ G = ( G_{i,j} )_{ (i,j) \in \{ 1, 2, \ldots, n \}^2 }\in \R^{n \times n} $ satisfy for all
  $ i, j \in \{1, 2, \ldots, n \} $ that $ G_{i,j} = \varphi( v_i, v_j ) $. Then
  \begin{enumerate}[label=(\roman{*})]
    \item \label{item:prop1} it holds that $ G $ is symmetric and positive semidefinite and
    \item \label{item:prop2} it holds that $ G $ is positive definite if and only if $ v_1, v_2, \ldots, v_n $ are linearly independent.
  \end{enumerate}
\end{lemma}

\begin{cproof}{lem:properties_gram_matrix}
  First, \nobs that the assumption that $ \varphi $ is symmetric \proves that $ G $ is symmetric.
  \Moreover the assumption that for all $ i, j \in \{ 1, 2, \ldots, n \} $ it holds that $ G_{i,j} = \varphi( v_i, v_j ) $ and
  the assumption that $ \varphi $ is an inner product on $ V $
  \prove that for all $ z = ( z_1, \ldots, z_n ) \in \R^n $ it holds that
  \begin{equation}
    \label{eq:properties_gram_matrix}
    \scalprod{ z }{ Gz }
    = \smallsum\limits_{i=1}^n z_i \smallsum\limits_{j=1}^n G_{i,j} z_j
    = \smallsum\limits_{i=1}^n \smallsum\limits_{j=1}^n z_i z_j \varphi( v_i, v_j )
    = \smallsum\limits_{i=1}^n \smallsum\limits_{j=1}^n \varphi( z_i v_i, z_j v_j )
    = \varphi \biggl( \smallsum\limits_{i=1}^n z_i v_i, \smallsum\limits_{j=1}^n z_j v_j \biggr)
    \ge 0
  \end{equation}
  \cfload.
  This \proves \cref{item:prop1}.
  \Moreover \cref{eq:properties_gram_matrix} and the assumption that $ \varphi $
  is positive definite \prove that it holds that
  \begin{equation}
  \begin{split}
    \Bigl[ G \text{ is positive definite } \Bigr]
    &\Leftrightarrow \Bigl[ \Forall z = ( z_1, \ldots, z_n ) \in \R^n \backslash \{ 0 \} \colon
      \scalprod{z}{Gz} > 0 \Bigr] \\
    &\Leftrightarrow \biggl[ \Forall z = ( z_1, \ldots, z_n ) \in \R^n \backslash \{ 0 \} \colon
      \varphi \biggl( \smallsum\limits_{i=1}^n z_i v_i, \smallsum\limits_{i=1}^n z_i v_i \biggr) > 0 \biggr] \\
    &\Leftrightarrow \Bigl[ \Forall z = ( z_1, \ldots, z_n ) \in \R^n \backslash \{ 0 \} \colon
      \smallsum\limits_{i=1}^n z_i v_i \neq 0 \Bigr] \\
    &\Leftrightarrow \Bigl[ v_1, v_2, \ldots, v_n \text{ are linearly independent} \Bigr].
  \end{split}
  \end{equation}
  This \proves \cref{item:prop2}.
\end{cproof}

\cfclear
\begin{lemma}
  \label{lem:hyperplane_isolation}
  Let $ \indim, \ndata \in \N $, $ z_1, z_2, \ldots, z_{\ndata} \in \R^{\indim} \backslash \{ 0 \} $
  satisfy for all $ \lambda \in \R $, $ i, j \in \{ 1, 2, \ldots, \ndata \}$ with $ i \neq j $ that $ z_i \neq \lambda z_j $,
  let $ \nbp \in \N $, $ \bp_1, \bp_2, \ldots, \bp_{\nbp} \in \R $ satisfy $ \# \{ \bp_1, \bp_2, \ldots, \bp_{\nbp} \} = \nbp $, and
  let $ H_{i,j} \subseteq \R^{\indim} $, $ i \in \{1, 2, \ldots, \ndata \} $, $ j \in \{ 1, 2, \ldots, \nbp \} $,
  satisfy for all $ i \in \{1, 2, \ldots, \ndata \} $, $ j \in \{ 1, 2, \ldots, \nbp \} $
  that $ H_{i,j} = \{ w \in \R^{\indim} \colon \scalprod{w}{z_i} = \bp_j \} $.
  \cfload.
  Then there exist
  $ ( p_{i,j} )_{ (i,j) \in \{ 1, 2, \ldots, \ndata \} \times \{ 1, 2, \ldots, \nbp \} } \in ( \R^{\indim} )^{\ndata \times \nbp} $,
  $ \varepsilon \in (0,\infty) $
  such that for all $ i \in \{ 1, 2, \ldots, \ndata \} $, $ j \in \{ 1, 2, \ldots, \nbp \} $ it holds that
  \begin{equation}\label{eq:lem:hyperplane_isolation}
    p_{i,j} \in H_{i,j}
    \qqandqq
    \bigl\{ w \in \R^{\indim} \colon \eucnorm{ w - p_{i,j} } \le \varepsilon \bigr\} \cap
      \bigl( \smallbigcup_{ (k,\ell) \in ( \{ 1, 2, \ldots, \ndata \} \times \{ 1, 2, \ldots, \nbp \} ) \backslash \{ (i,j) \} }
      H_{k,\ell} \bigr) = \emptyset.
  \end{equation}
\end{lemma}

\begin{cproof}{lem:hyperplane_isolation}
  Throughout this proof
  let $ G_i \subseteq \R^{\indim} $, $ i \in \{ 1, 2, \ldots, \ndata \} $, satisfy for all $ i \in \{ 1, 2, \ldots, \ndata \} $ that
  $ G_i = \{ w \in \R^{\indim} \colon \scalprod{w}{z_i} = 0 \} $,
  let $ A^{i,j} = ( A^{i,j}_1, A^{i,j}_2 ) \in \R^{ 2 \times \indim } $, $ (i,j) \in \{ 1, 2, \ldots, \ndata \}^2 $, satisfy for all
  $ i, j \in \{ 1, 2, \ldots, \ndata \} $ that
  \begin{equation}\label{eq:lem:hyperplane_isolation_A}
    A^{i,j}_1 = z_i
    \qqandqq
    A^{i,j}_2 = z_j
  \end{equation}
  and let $ \mc S = \{ 1, 2, \ldots, \ndata \} \times \{ 1, 2, \ldots, \nbp \} $.
  \Nobs that the fact that for all $ i \in \{ 1, 2, \ldots, \ndata \}$ it holds that $ z_i \neq 0 $
  \proves that for every $ (i,j) \in \mc S $ there exists $ q_{i,j} \in \R^{\indim} $ which satisfies that
  \begin{equation}\label{eq:lem:hyperplane_isolation_q}
    \scalprod{ q_{i,j} }{ z_i } = \bp_j.
  \end{equation}
  Let $ K_{i,j} \subseteq \mc S $, $ (i,j) \in \mc S $, satisfy for all $ (i,j) \in \mc S $ that
  \begin{equation}
    K_{i,j} = \bigl\{ (k,\ell) \in \mc S \colon \scalprod{ q_{i,j} }{ z_k } = \bp_{\ell} \bigr\}.
  \end{equation}
  \Nobs that the fact that for all $ i \in \{ 1, 2, \ldots, \ndata \} $ it holds that
  $ \R^{\indim} \ni w \mapsto \scalprod{ w }{ z_i } \in \R $ is continuous
  \proves that there exists $ \delta \in (0,\infty) $ which satisfies
  for all $ (i,j) \in \mc S $, $ (k,\ell) \in \mc S \backslash K_{i,j} $,
  $ w \in \{ v \in \R^{\indim} \colon \eucnorm{ w - q_{i,j} } \le \delta \} $ that
  \begin{equation}\label{eq:lem:hyperplane_isolation_delta}
    \abs{ \scalprod{ w }{ z_k } - \bp_{\ell} } > 0.
  \end{equation}   
  In the following we distinguish between the case $ \indim = 1 $ and the case $ \indim > 1 $.
  We first prove \cref{eq:lem:hyperplane_isolation} in the case
  \begin{equation}\label{eq:lem:hyperplane_isolation_case_1}
    \indim = 1.
  \end{equation}
  \Nobs that \cref{eq:lem:hyperplane_isolation_case_1},
  the assumption that for all $ i \in \{ 1, 2, \ldots, \ndata \} $ it holds that $ z_i \neq 0 $, and
  the assumption that for all $ \lambda \in \R $, $ i, j \in \{ 1, 2, \ldots, \ndata \}$ with $ i \neq j $
  it holds that $ z_i \neq \lambda z_j $
  \prove that $ \ndata = 1 $.
  The assumption that $ \# \{ \bp_1, \bp_2, \ldots, \bp_{\nbp} \} = \nbp $
  \hence \proves that for all $ (i,j) \in \mc S $ it holds that $ K_{i,j} = \{ (i,j) \} $.
  Combining this with \cref{eq:lem:hyperplane_isolation_q} and \cref{eq:lem:hyperplane_isolation_delta} \proves that
  for all $ (i,j) \in \mc S $ it holds that
  \begin{equation}
    q_{i,j} \in H_{i,j}
    \qqandqq
    \bigl\{ w \in \R^{\indim} \colon \eucnorm{ w - q_{i,j} } \le \delta \bigr\} \cap
      \bigl( \smallbigcup_{ (k,\ell) \in \mc S \backslash \{ (i,j) \} }
      H_{k,\ell} \bigr) = \emptyset.
  \end{equation}
  This \proves \cref{eq:lem:hyperplane_isolation} in the case $ \indim = 1 $.
  In the next step we prove \cref{eq:lem:hyperplane_isolation} in the case
  \begin{equation}\label{eq:lem:hyperplane_isolation_case_2}
    \indim > 1.
  \end{equation}
  Let $ \mu \colon \mc B( \R^{\indim-1} ) \to [0,\infty] $ be the Lebesgue measure.
  \Nobs that the rank-nullity theorem and the fact that for all $ i \in \{ 1, 2, \ldots, \ndata \} $ it holds that $ z_i \neq 0 $
  \prove that for all $ i \in \{ 1, 2, \ldots, \ndata \} $ it holds that
  \begin{equation}
    \dim_{\R}( G_i )
    = \dim_{\R}( \{ w \in \R^{\indim} \colon \scalprod{ w }{ z_i } = 0 \} )
    = \indim - \dim_{\R}( \{ u \in \R \colon [ \Exists w \in \R^{\indim} \colon \scalprod{ w }{ z_i } = u ] \} )
    = \indim - 1.
  \end{equation}
  \Hence that there exist $ \varphi_i \colon G_i \to \R^{\indim-1} $, $ i \in \{ 1, 2, \ldots, \ndata \} $, which satisfy
  for all $ i \in \{ 1, 2, \ldots, \ndata \} $, $ v, w  \in G_i $, $ \lambda \in \R $ that
  \begin{equation}\label{eq:lem:hyperplane_isolation_phi}
    \varphi_i ( \lambda w ) = \lambda \varphi_i( w ), \qquad
    \varphi_i ( v + w ) = \varphi_i ( v ) + \varphi_i ( w ), \qquad
    \ker( \varphi_i ) = \{ 0 \}, \qqandqq
    \varphi_i( G_i ) = \R^{\indim-1}.
  \end{equation}
  Combining this, the fact that for all $ i, j \in \{ 1, 2, \ldots, \ndata \} $ it holds that
  $ G_i \cap G_j = \{ w \in \R^{\indim} \colon \scalprod{w}{z_i} = 0 \} \cap \{ w \in \R^{\indim} \colon \scalprod{w}{z_j} = 0 \}
    = \ker( A^{i,j} ) $,
  the assumption that for all $ \lambda \in \R $, $ i,j \in \{ 1, 2, \ldots, \ndata \} $ with $ i \neq j $
  it holds that $ z_i \neq \lambda z_j $, and the rank-nullity theorem \proves that
  for all $ i, j \in \{ 1, 2, \ldots, \ndata \} $ with $ i \neq j $ it holds that
  \begin{equation}
    \dim_{\R} \bigl( \varphi_i( G_i \cap G_j ) \bigr) = \dim_{\R} ( G_i \cap G_j ) = \dim_{\R} ( \ker( A^{i,j} ) ) = \indim - 2.
  \end{equation}
  \Hence that for all $ i, j \in \{ 1, 2, \ldots, \ndata \} $ with $ i \neq j $ it holds that
  $ \mu \bigl( \varphi_i(G_i \cap G_j)\bigr) = 0 $.
  This \proves that for all $ i \in \{ 1, 2, \ldots, \ndata \} $ it holds that
  \begin{equation}\label{eq:lem:hyperplane_isolation_lebesque_zero}
  \begin{split}
    0 &\le \mu \bigl( \varphi_i \bigl( G_i \cap \smallbigcup_{ j \in \{ 1, 2, \ldots, \ndata \}, j \neq i } G_j \bigr) \bigr)
    = \mu \bigl( \smallbigcup_{ j \in \{ 1, 2, \ldots, \ndata \}, j \neq i } \varphi_i ( G_i \cap G_j ) \bigr) \\
    &\le \smallsum_{j \in \{ 1, 2, \ldots, \ndata \}, j \neq i } \mu ( \varphi_i ( G_i \cap G_j ) ) = 0.
  \end{split}
  \end{equation}
  \Moreover \cref{eq:lem:hyperplane_isolation_phi} \proves that for all $ i \in \{ 1, 2, \ldots, \ndata \} $
  it holds that $ \mu ( \varphi_i ( G_i ) ) = \mu( \R^{\indim-1} ) = \infty $.
  Combining this with \cref{eq:lem:hyperplane_isolation_lebesque_zero} \proves that for all $ i \in \{ 1, 2, \ldots, \ndata \} $
  it holds that $ G_i \not\subseteq \smallbigcup_{ j \in \{ 1, 2, \ldots, \ndata \} \backslash \{ i \} } G_j $.
  \Hence that there exist $ u_1, u_2, \ldots, u_{\ndata} \in \R^{\indim} \backslash \{ 0 \} $ which satisfy
  for all $ i, j \in \{ 1, 2, \ldots, \ndata \} $ with $ i \neq j $ that
  \begin{equation}\label{eq:lem:hyperplane_isolation_u}
    \scalprod{u_i}{z_i} = 0, \qquad
    \abs{ \scalprod{u_i}{z_j} } > 0, \qqandqq
    \eucnorm{u_i} \le \nicefrac{\delta}{2}.
  \end{equation}
  \Nobs that \cref{eq:lem:hyperplane_isolation_u} and
  the fact that for all $ i \in \{ 1, 2, \ldots, \ndata \} $ it holds that
  $ \R^{\indim} \ni w \mapsto \scalprod{ w }{ z_i } \in \R $ is continuous
  \prove that there exists $ \varepsilon \in (0,\nicefrac{\delta}{2}] $ which satisfies
  for all $ i, j \in \{ 1, 2, \ldots, \ndata \} $, $ w \in \{ v \in \R^{\indim} \colon \eucnorm{ v - u_i } \le \varepsilon \} $
  with $ i \neq j $ that
  \begin{equation}\label{eq:lem:hyperplane_isolation_eps}
    \abs{ \scalprod{ w }{ z_j } } > 0.
  \end{equation}
  \Nobs that
  the assumption that $ \# \{ \bp_1, \bp_2, \ldots, \bp_{\nbp} \} = \nbp $
  \proves that for all $ (i,j) \in \mc S $, $ (k,\ell) \in K_{i,j} \backslash \{ (i,j) \} $ it holds that $ i \neq k $.
  Combining this, \cref{eq:lem:hyperplane_isolation_q}, and \cref{eq:lem:hyperplane_isolation_eps} \proves that for all
  $ (i,j) \in \mc S $, $ (k,\ell) \in K_{i,j} \backslash \{ (i,j) \} $,
  $ w \in \{ v \in \R^{\indim} \colon \eucnorm{ v - ( q_{i,j} + u_i ) } \le \varepsilon \} $ it holds that
  \begin{equation}\label{eq:lem:hyperplane_isolation_local}
    \abs{ \scalprod{ w }{ z_k } - \bp_{\ell} }
    = \abs{ \scalprod{ w }{ z_k } - \scalprod{ q_{i,j} }{ z_k } }
    = \abs{ \scalprod{ w - q_{i,j} }{ z_k } }
    > 0.
  \end{equation}
  \Moreover \cref{eq:lem:hyperplane_isolation_eps} \proves that for all $ (i,j) \in \mc S $,
  $ w \in \{ v \in \R^{\indim} \colon \eucnorm{ v - ( q_{i,j} + u_i ) } \le \varepsilon \} $ it holds that
  \begin{equation}
    \eucnorm{ w - q_{i,j} }
    = \eucnorm{ w - ( q_{i,j} + u_i ) - u_i }
    \le \eucnorm{ w - ( q_{i,j} + u_i ) } + \eucnorm{ u_i }
    \le \varepsilon + \nicefrac{\delta}{2}
    \le \delta.
  \end{equation}
  Combining this, \cref{eq:lem:hyperplane_isolation_delta}, and \cref{eq:lem:hyperplane_isolation_local}
  \proves that for all $ (i,j) \in \mc S $ it holds that
  \begin{equation}\label{eq:lem:hyperplane_isolation_ball}
    \bigl\{ w \in \R^{\indim} \colon \eucnorm{ w - ( q_{i,j} + u_i ) } \le \varepsilon \bigr\} \cap
      \bigl( \smallbigcup_{ (k,\ell) \in \mc S \backslash \{ (i,j) \} } H_{k,\ell} \bigr) = \emptyset.
  \end{equation}
  \Moreover \cref{eq:lem:hyperplane_isolation_q} and \cref{eq:lem:hyperplane_isolation_u}
  \prove that for all $ (i,j) \in \mc S $ it holds that
  \begin{equation}
    \scalprod{ q_{i,j} + u_i }{ z_i }
    = \scalprod{ q_{i,j} }{ z_i } + \scalprod{ u_i }{ z_i }
    = \bp_j.
  \end{equation}
  This and \cref{eq:lem:hyperplane_isolation_ball} \prove \cref{eq:lem:hyperplane_isolation} in the case $ \indim > 1 $.
\end{cproof}

\cfclear
\begin{lemma}
  \label{lem:lambdazero_positive_onedim}
  Let $ \indim, \ndata, \nbp \in \N $,
  $ \indata_1, \indata_2, \ldots, \indata_{\ndata} \in \R^{\indim} \backslash \{ 0 \} $,
  $ \bp_0, \bp_1, \ldots, \bp_{\nbp+1} \in [-\infty,\infty] $, $ \slope_1, \slope_2, \ldots, \allowbreak \slope_{\nbp+1} \in \R $ satisfy
  \begin{equation}
    \# \{ \indata_1, \indata_2, \ldots, \indata_{\ndata} \} = \ndata, \qquad
    - \infty = \bp_0 < \bp_1 < \dots < \bp_{\nbp+1} = \infty, \qqandqq
    \# \{ \slope_1, \slope_2, \ldots, \slope_{\nbp+1} \} \ge 2,
  \end{equation}
  let $ \deriv \colon \R \to \R $ satisfy for all $ i \in \{ 1, 2, \ldots, \nbp+1 \} $, $ v \in ( \bp_{i-1}, \bp_i ) $ that
  $ \deriv( v ) = \slope_i $,
  let $ ( \Omega, \mc F, \P ) $ be a probability space,
  let $ Z \colon \Omega \to \R^{\indim} $ be a random variable, and
  let $ G = ( G_{i,j} )_{ (i,j) \in \{ 1, 2, \ldots, \ndata \}^2 } \in \R^{\ndata \times \ndata} $
  satisfy for all $ i, j \in \{ 1, 2, \ldots, \ndata \} $ that
  $ G_{i,j} = ( 1 + \scalprod{\indata_i}{\indata_j} ) \E[ \deriv( \scalprod{ Z }{ \indata_i } ) \deriv( \scalprod{ Z }{ \indata_j } ) ] $
  \cfload.
  Then $ \lambdamin( G ) > 0 $
  \cfout.
\end{lemma}

\begin{cproof}{lem:lambdazero_positive_onedim}
  Throughout this proof
  let $ z_1, z_2, \ldots, z_{\ndata} \in \R^{\indim+1} $ satisfy for all $ i \in \{ 1, 2, \ldots, \ndata \} $ that
  $ z_i = ( \indata_i, 1 ) $,
  let $ P \colon \R^{\indim+1} \to \R^{\indim} $ satisfy for all $ w = ( w_1, \ldots, w_{\indim+1} ) \in \R^{\indim+1} $ that
  $ P(w) = ( w_1, \ldots, w_{\indim} ) $,
  let $ \mu \colon \mc B( \R^{\indim+1} ) \to [0,\infty] $ satisfy for all $ A \in \mc B( \R^{\indim+1} ) $ that
  $ \mu( A ) = \P( Z \in P(A) ) $,
  let $ H_{i,j} \subseteq \R^{\indim} $, $ i \in \{ 1, 2, \ldots, \ndata \} $, $ j \in \{ 1, 2, \ldots, \nbp \}$, satisfy for all
  $ i \in \{ 1, 2, \ldots, \ndata \} $, $ j \in \{ 1, 2, \ldots, \nbp \} $ that
  \begin{equation}
    H_{i,j} = \{ w \in \R^{\indim+1} \colon \scalprod{ w }{ z_i } = \bp_j \},
  \end{equation}
  let $ V $ satisfy
  $ V = L^2( \R^{\indim+1}, \mc B (\R^{\indim+1}), \mu ; \R^{\indim+1} ) $,
  let $ \varphi \colon V \times V \to \R $
  satisfy for all $ p, q \in V $ that
  \begin{equation}
  \label{eq:positive_definite_phi}
    \varphi( p, q ) = \int_{\R^{\indim+1}} \scalprod{p(w)}{q(w)} \, \mu( \diff  w )
  \end{equation}
  let $ g_v \colon \R^{\indim+1} \to \R^{\indim+1} $, $ v \in \R^{\indim+1} $, satisfy for all $ v, w \in \R^{\indim+1} $ that
  \begin{equation}
  \label{eq:positive_definite_g}
    g_v (w) = \deriv ( \scalprod{ w }{ v } ) \, v,
  \end{equation}
  and let $ \alpha_1, \alpha_2, \ldots, \alpha_{\ndata} \in \R $ satisfy
  $ \sum_{j=1}^{\ndata} \alpha_j g_{z_j} = 0 $.
  First, \nobs that $ \varphi $ is an inner product on $ V $.  
  \Moreover for all $ v \in \R^{\indim} $ it holds that $ g_v \in V $.
  Next \nobs that \cref{eq:positive_definite_phi} and \cref{eq:positive_definite_g}
  \prove that for all $ i, j \in \{ 1, 2, \ldots, \ndata \} $ it holds that
  \begin{equation}
  \begin{split}
    G_{i,j}
    &= ( 1 + \scalprod{\indata_i}{\indata_j} ) \E \bigl[ \deriv( \scalprod{Z}{\indata_i} ) \deriv( \scalprod{Z}{\indata_j} ) \bigr]
    = \scalprod{z_i}{z_j} \int_{\R^{\indim+1}} \deriv( \scalprod{w}{z_i} ) \deriv( \scalprod{w}{z_j} ) \, \mu ( \diff  w ) \\
    &= \int_{\R^{\indim+1}} \scalprod{z_i}{z_j} \deriv( \scalprod{w}{z_i} ) \deriv( \scalprod{w}{z_j} ) \, \mu ( \diff w )
    = \int_{\R^{\indim+1}} \scalprod{ \deriv( \scalprod{w}{z_i} ) z_i  }{ \deriv( \scalprod{w}{z_j} ) z_j } \, \mu ( \diff w ) \\
    &= \int_{\R^{\indim+1}} \scalprod{ g_{z_i} (w) }{ g_{z_j} (w) } \, \mu ( \diff w )
    = \varphi( g_{z_i}, g_{z_j} ).
  \end{split}
  \end{equation}
  In the next step, we show that $ g_{z_1}, g_{z_2}, \ldots, g_{z_{\ndata}} $ are linearly independent.
  \Nobs that the assumption that $ \# \{ \indata_1, \indata_2, \ldots, \indata_{\ndata} \} = \ndata $ \proves that for all
  $ \lambda \in \R $, $ i, j \in \{ 1, 2, \ldots, \ndata \} $ with $ i \neq j $ it holds that
  \begin{equation}\label{eq:positive_definite_linearly_independent}
    z_i \neq \lambda z_j
  \end{equation}
  \Moreover the assumption that $ \# \{ \slope_1, \slope_2, \ldots, \slope_{\nbp+1} \} \ge 2 $
  \proves that there exists $ \ell \in \{ 1, 2, \ldots, \nbp \} $ which satisfies
  \begin{equation}\label{eq:positive_definite_ell}
    \abs{ \slope_{\ell} - \slope_{\ell+1} } > 0.
  \end{equation}
  Combining \cref{eq:positive_definite_linearly_independent}, \cref{eq:positive_definite_ell}, and \cref{lem:hyperplane_isolation}
  \proves that there exist $ p_1, p_2, \ldots, p_{\ndata} \in \R^{\indim} $, $ \varepsilon \in (0,\infty) $
  which satisfy for all $ i \in \{ 1, 2, \ldots, \ndata \}$ that
  \begin{equation}\label{eq:positive_definite_p}
    p_i \in H_{i,\ell}
    \qqandqq
    \bigl\{ w \in \R^{\indim} \colon \eucnorm{ w - p_i } \le \varepsilon \bigr\} \cap
      \bigl( \smallbigcup_{ (k,r) \in ( \{ 1, 2, \ldots, \ndata \} \times \{ 1, 2, \ldots, \nbp \} ) \backslash \{ (i,\ell) \} }
      H_{k,r} \bigr) = \emptyset.
  \end{equation}
  \Nobs that \cref{eq:positive_definite_p} \proves that
  for all $ i, j \in \{ 1, 2, \ldots, \ndata \} $, $ w \in \R^{\indim+1} $ with $ i \neq j $ and $ \eucnorm{ w - p_i } \le \varepsilon $
  it holds that
  \begin{equation}
  \label{eq:positive_definite_g_const}
    g_{z_j}(w)
    = \deriv ( \scalprod{w}{z_j} ) z_j
    = \deriv ( \scalprod{p_i}{z_j} ) z_j
    = g_{z_j}(p_i).
  \end{equation}
  Now for every $ i \in \{ 1, 2, \ldots, \ndata \}$ let $ A_i \subseteq \R^{\indim} $ and $ B_i \subseteq \R^{\indim} $ satisfy
  \begin{equation}
  \label{eq:positive_definite_A_B}
    A_i = \{ w \in \R^{\indim+1} \colon \eucnorm{ w - p_i } \le \varepsilon, \scalprod{w}{z_i} < \bp_{\ell} \}
    \qqandqq
    B_i = \{ w \in \R^{\indim} \colon \eucnorm{ w - p_i } \le \varepsilon, \scalprod{w}{z_i} > \bp_{\ell} \},
  \end{equation}
  \Nobs that \cref{eq:positive_definite_g_const} and \cref{eq:positive_definite_A_B} \prove that for all
  $ i, j \in \{ 1, 2, \ldots, \ndata \} $ with $ i \neq j $ it holds that
  \begin{equation}
  \label{eq:positive_definite_1}
  \begin{split}
    &\frac{1}{\mu ( A_i )} \int_{ A_i } g_{z_j} (w) \, \mu ( \diff w )
      - \frac{1}{\mu ( B_i )} \int_{ B_i } g_{z_j} (w) \, \mu ( \diff w ) \\
    &= \frac{1}{\mu ( A_i )} \int_{ A_i } g_{z_j} (p_i) \, \mu ( \diff w )
      - \frac{1}{\mu ( B_i \bigr)} \int_{ B_i } g_{z_j} (p_i) \, \mu ( \diff w ) \\
    &= g_{z_j} (p_i) - g_{z_j} (p_i)
    = 0.
  \end{split}
  \end{equation}
  \Moreover \cref{eq:positive_definite_p} and \cref{eq:positive_definite_A_B} \prove that
  for all $ i \in \{ 1, 2, \ldots, \ndata \} $ it holds that
  \begin{equation}
  \label{eq:positive_definite_2}
  \begin{split}
    &\frac{1}{\mu ( A_i )} \int_{ A_i } g_{z_i} (w) \, \mu ( \diff w )
      - \frac{1}{\mu ( B_i ) } \int_{ B_i } g_{z_i} (w) \, \mu ( \diff w ) \\
    &= \frac{1}{\mu ( A_i )} \int_{ A_i } \slope_{\ell} z_i \, \mu ( \diff w )
      - \frac{1}{\mu ( B_i )} \int_{ B_i } \slope_{\ell+1} z_i \, \mu ( \diff w )
    = ( \slope_{\ell} - \slope_{\ell+1} ) z_i.
  \end{split}
  \end{equation}
  Combining this, \cref{eq:positive_definite_1}, and
  the assumption that $ \sum_{j=1}^{\ndata} \alpha_j g_{z_j} = 0 $ 
  \proves that for all $ i \in \{ 1, 2, \ldots, \ndata \} $ it holds that
  \begin{equation}
  \begin{split}
    0
    &= \frac{1}{\mu ( A_i ) } \int_{ A_i } \smallsum\limits_{j=1}^{\ndata}
      \alpha_j g_{z_j} (w) \, \mu ( \diff w ) \displaystyle - \frac{1}{\mu ( B_i ) } \int_{ B_i }
      \smallsum\limits_{j=1}^{\ndata} \alpha_j g_{z_j} (w) \, \mu ( \diff w ) \\
    &= \smallsum\limits_{j=1}^{\ndata} \displaystyle \alpha_j \biggl( \frac{1}{\mu ( A_i ) } \int_{ A_i }
      g_{z_j} (w) \, \mu ( \diff w ) - \frac{1}{\mu ( B_i ) } \int_{ B_i } g_{z_j} (w) \, \mu ( \diff w ) \biggr)
    = \alpha_i ( \slope_{\ell} - \slope_{\ell+1} ) z_i.
  \end{split}
  \end{equation}
  This, \cref{eq:positive_definite_ell}, and
  the fact that for all $ i \in \{ 1, 2, \ldots, \ndata \} $ it holds that $ z_i \neq 0 $ \prove that
  for all $ i \in \{ 1, 2, \ldots, \ndata \} $ it holds that $ \alpha_i = 0 $.
  \Hence that $ g_{z_1}, g_{z_2}, \ldots, g_{z_{\ndata}} $ are linearly independent.
  Item \ref{item:prop2} in \cref{lem:properties_gram_matrix} (applied with
  $ V \is V $, $ \varphi \is \varphi $, $ n \is \ndata $,
  $ (v_i)_{ i \in \{ 1, 2, \ldots, n \} } \is (g_{z_i})_{ i \in \{ 1, 2, \ldots, \ndata \} } $, $ G \is G $
  in the notation of \cref{lem:properties_gram_matrix})
  \hence \proves that $ G $ is positive definite. 
  This \proves that $ \lambdamin( G ) > 0 $ \cfload.
\end{cproof}

\cfclear
\begin{lemma}
  \label{lem:eigenvalues_block_matrix}
  Let $ \outdim, \ndata \in \N $,
  $ A = ( A_{i,j} )_{ (i,j) \in \{ 1, 2, \ldots, \outdim \ndata \}^2 } \in \R^{(\outdim \ndata) \times (\outdim \ndata)} $,
  $ B = ( B_{i,j} )_{ (i,j) \in \{ 1, 2, \ldots, \ndata \}^2 } \in \R^{\ndata \times \ndata} $
  satisfy for all $ i, j \in \{ 1, 2, \ldots, \ndata \} $, $ p, q \in \{ 1, 2, \ldots, \outdim \} $ that
  \begin{equation}
    A_{ (i-1) \outdim + p, (j-1) \outdim + q } = B_{i,j} \ind_{ \{ p \} } (q).
  \end{equation}
  Then
  $ \{ \lambda \in \R \colon [ \Exists v \in \R^{(\outdim \ndata) \times (\outdim \ndata)} \backslash \{ 0 \}
    \colon A v = \lambda v ] \}
    = \{ \lambda \in \R \colon [ \Exists v \in \R^{\ndata \times \ndata} \backslash \{ 0 \} \colon B v = \lambda v ] \} $.
\end{lemma}

\begin{cproof}{lem:eigenvalues_block_matrix}
  Throughout this proof for every $ n \in \N $ let $ I_n \in \R^{n \times n} $ be the identity matrix in $ \R^{n \times n} $ and
  let $ K = ( K_{i,j} )_{ (i,j) \in \{ 1, 2, \ldots, \outdim \ndata \}^2 } \in \R^{(\outdim \ndata) \times (\outdim \ndata)} $
  satisfy for all $ p, q \in \{ 1, 2, \ldots, \outdim \} $, $ i, j \in \{ 1, 2, \ldots, \ndata \} $ that
  \begin{equation}\label{lem:eigenvalues_block_matrix:K}
    K_{ (p-1) \ndata + i, (q-1) \ndata + j }
    = B_{i,j} \ind_{ \{ p \} } (q).
  \end{equation}
  \Nobs that \cref{lem:eigenvalues_block_matrix:K} \proves that $ K $ is a diagonal block matrix
  consisting of $ \outdim $ blocks on the diagonal, each given by the matrix $ B $.
  Combining this and
  the fact that for all $ \lambda \in \R $ it holds that
  $ A - \lambda I_{\outdim \ndata} $ is derived from $ K - \lambda I_{\outdim \ndata} $
  by an even number of row and column interchanges
  \proves that for all $ \lambda \in \R $ it holds that
  \begin{equation}
    \det \bigl( A - \lambda I_{\outdim \ndata} \bigr)
    = \det \bigl( K - \lambda I_{\outdim \ndata} \bigr)
    = \prod_{p=1}^{\outdim} \det \bigl( B - \lambda I_{\ndata} \bigr)
    = \Bigl( \det \bigl( B - \lambda I_{\ndata} \bigr) \Bigr)^{\outdim}.
  \end{equation}
  This \proves that for all $ \lambda \in \R $ it holds that
  \begin{equation}
    \Bigl[ \det \bigl( A - \lambda I_{\outdim \ndata} \bigr) = 0 \Bigr]
    \Leftrightarrow
    \Bigl[ \det \bigl( B - \lambda I_{\ndata} \bigr) = 0 \Bigr].
  \end{equation}
  \Hence that
  $ \{ \lambda \in \R \colon [ \Exists v \in \R^{(\outdim \ndata) \times (\outdim \ndata)} \backslash \{ 0 \}
    \colon A v = \lambda v ] \}
    = \{ \lambda \in \R \colon [ \Exists v \in \R^{\ndata \times \ndata} \backslash \{ 0 \} \colon B v = \lambda v ] \} $.
\end{cproof}

\begin{lemma}
  \label{lem:lambdazero_positive}
  Assume \cref{setting:gradient_descent},
  assume $ \# \{ \indata_1, \indata_2, \ldots, \indata_{\ndata} \} = \ndata $, and
  assume $ \# \{ \slope_1, \slope_2, \ldots, \slope_{\nbp+1} \} \ge 2 $.
  Then $ \lambdazero > 0 $.
\end{lemma}

\begin{cproof}{lem:lambdazero_positive}
  Throughout this proof let
  $ G = ( G_{i,j} )_{ (i,j) \in \{ 1, 2, \ldots, \ndata \} } \in \R^{ \ndata \times \ndata } $
  satisfy for all $ i, j \in \{ 1, 2, \ldots, \ndata \} $ that
  \begin{equation}
    G_{i,j} = \bigl( 1 + \scalprod{ \indata_i }{ \indata_j } \bigr)
      \E \bigl[ \deriv \bigl( \scalprod{ W_1(0) }{ \indata_i } \bigr) \deriv \bigl( \scalprod{ W_1(0) }{ \indata_j } \bigr) \bigr].
  \end{equation}
  \Nobs that \cref{lem:lambdazero_positive_onedim} (applied with
  $ \indim \is \indim $, $ \ndata \is \ndata $, $ \nbp \is \nbp $,
  $ ( \indata_i )_{ i \in \{ 1, 2, \ldots, \ndata \} } \is ( \indata_i )_{ i \in \{ 1, 2, \ldots, \ndata \} } $,
  $ ( \bp_i )_{ i \in \{ 0, 1, \ldots, \nbp+1 \} } \is ( \bp_i )_{ i \in \{ 0, 1, \ldots, \nbp+1 \} } $,
  $ ( \slope_i )_{ i \in \{ 1, 2, \ldots, \nbp+1 \} } \is ( \slope_i )_{ i \in \{ 1, 2, \ldots, \nbp+1 \} } $,
  $ \deriv \is \deriv $, $ ( \Omega, \mc F, \P ) \is ( \Omega, \mc F, \P ) $, $ Z \is W_1(0) $, $ G \is G $
  in the notation of \cref{lem:lambdazero_positive_onedim}),
  the assumption that $ \# \{ \indata_1, \indata_2, \ldots, \indata_{\ndata} \} = \ndata $, and
  the assumption that $ \# \{ \slope_1, \slope_2, \ldots, \slope_{\nbp+1} \} \ge 2 $
  \prove that
  \begin{equation}
    \lambdamin( G ) > 0.
  \end{equation}
  \cref{lem:eigenvalues_block_matrix} (applied with
  $ \outdim \is \outdim $, $ \ndata \is \ndata $,
  $ A \is \gramGinf $, $ B \is G $
  in the notation of \cref{lem:eigenvalues_block_matrix}) and
  the assumption that $ \lambdazero = \lambdamin( \tfrac{1}{\Cvar \width} \gramGinf ) $
  \hence \prove that
  \begin{equation}
    \lambdazero
    = \lambdamin \bigl( \tfrac{1}{\Cvar \width} \gramGinf \bigr)
    = \tfrac{1}{\Cvar \width} \lambdamin( \gramGinf )
    = \tfrac{1}{\Cvar \width} \lambdamin( G )
    > 0.
  \end{equation}
\end{cproof}

\section{Analysis of eigenvalues of stochastic Gram matrices}
\label{sec:analysis_of_eigenvalues_of_stochastic_gram_matrices}

In this section, we study the error at initialization (cf.~\cref{subsec:probabilistic_error_analysis_at_initialization} below), the evolution of the weights and biases of the considered \ANNs\ during training (cf.~\cref{subsec:analysis_of_weights_and_biases_of_ANNs_during_training} below), and the eigenvalues of the considered stochastic Gram matrices (cf.~\cref{subsec:analysis_of_eigenvalues_of_stochastic_gram_matrices_during_training} below). In particular, the main result of \cref{subsec:analysis_of_eigenvalues_of_stochastic_gram_matrices_during_training}, \cref{lem:lambdamin_gramG} below, establishes a lower bound for the eigenvalues of the Gram matrices $ \gramG(n) $, $ n \in \N_0 $, from \cref{setting:gradient_descent} in \cref{subsec:mathematical_description_of_gd_processes} above.

In order to obtain these estimates, we first analyze the deviation of $ \gramG $ at initialization from its deterministic counterpart with respect to the spectral norm (cf.~\cref{subsec:concentration_type_inequalities_for_stochastic_gram_matrices_at_initialization} below) using various techniques and properties of subexponential random variables (cf.~\cref{subsec:properties_of_subexponential_random_variables} below). Then we analyze the deviation of $ \gramG $ during training from its initialization with respect to the spectral norm (cf.~\cref{subsec:analysis_of_stochastic_gram_matrices_during_training} below) using the previously studied evolution of the weights and biases of the considered \ANNs\ during training. These estimates will be used in our error analysis for \GD\ optimization algorithms in \cref{sec:error_analysis} below.

We also \nobs that the results in \cref{lem:eucnorm_inequality}, \cref{lem:gaussian_tail_bound}, \cref{lem:subexponential_tail_bound}, \cref{lem:spectral_norm_inequality}, \cref{lem:gaussian_anti_concentration_inequality}, \cref{lem:expectation_absolute_normal}, \cref{lem:lambdamin_eucnorm}, and \cref{lem:lambdamin_specnorm} are well-known, while the results in \cref{lem:subexponential_scalar} and \cref{lem:subexponential_weighted_sum} are elementary. Only for completeness we include the detailed proofs for these lemmas in this section.

\subsection{Probabilistic error analysis at initialization}
\label{subsec:probabilistic_error_analysis_at_initialization}

\cfclear
\begin{lemma}
  \label{lem:expectation_act_normal_squared}
  Let $ ( \Omega, \mc F, \P ) $ be a probability space,
  let $ s \in \R $,
  let $ X \colon \Omega \to \R $ be a normal random variable with $ \Var[ X ] = s^2 $, and
  let $ \nbp \in \N $, $ \bp_0, \bp_1, \dots, \bp_{\nbp+1} \in [-\infty,\infty] $,
  $ \slope = ( \slope_1, \dots, \slope_{\nbp+1} ) $, $ \yinter = ( \yinter_1, \dots, \yinter_{\nbp+1} ) \in \R^{\nbp+1} $,
  $ \act \in C( \R, \R ) $ satisfy for all $ i \in \{ 1, 2, \dots, \nbp+1 \} $, $ v \in ( \bp_{i-1}, \bp_i ) $
  that
  \begin{equation}\label{eq:lem:expectation_act_normal_squared}
    -\infty = \bp_0 < \bp_1 < \dots < \bp_{\nbp+1} = \infty
    \qqandqq
    \act( v ) = \slope_i v + \yinter_i.
  \end{equation}
  Then $ \E[ \abs{ \act( X - \E[X] ) }^2 ] \le s^2 \eucnorm{ \slope }^2 + \eucnorm{ \yinter }^2 $.
\end{lemma}

\begin{cproof}{lem:expectation_act_normal_squared}
  \Nobs that \cref{eq:lem:expectation_act_normal_squared} and
  the assumption that $ X $ is a normal random variable with $ \Var[ X ] = s^2 $
  \prove that
  \begin{equation}
  \begin{split}
    \E \bigl[ \abs{ \act( X - \E[X] ) }^2 \bigr]
    &= \smallsum\limits_{i=1}^{\nbp+1} \E \bigl[ \abs{ \act( X - \E[X] ) }^2
      \ind_{ \{ X - \E[X] \in ( \bp_{i-1}, \bp_i ) \} } \bigr] \\
    &= \smallsum\limits_{i=1}^{\nbp+1} \E \bigl[ \abs{ \slope_i( X - \E[X] ) + \yinter_i }^2
      \ind_{ \{ X - \E[X] \in ( \bp_{i-1}, \bp_i ) \} } \bigr] \\
    &\le \smallsum\limits_{i=1}^{\nbp+1} \E \bigl[ \abs{ \slope_i( X - \E[X] ) + \yinter_i }^2 \bigr] \\
    &= \smallsum\limits_{i=1}^{\nbp+1} \bigl( \slope_i^2 \E \bigl[ \abs{ X - \E[X] }^2 \bigr]
      + 2 \slope_i \yinter_i \E \bigl[ X - \E[X] \bigr] + \yinter_i^2 \bigr) \\
    &= \smallsum\limits_{i=1}^{\nbp+1} ( \slope_i^2 s^2 + \yinter_i^2 )
    = s^2 \eucnorm{ \slope }^2 + \eucnorm{ \yinter }.
  \end{split}
  \end{equation}
\end{cproof}

\cfclear
\begin{lemma}
  \label{lem:expectation_rect_normal_squared}
  Let $ ( \Omega, \mc F, \P ) $ be a probability space,
  let $ s \in \R $,
  let $ X \colon \Omega \to \R $ be a normal random variable with $ \Var[ X ] = s^2 $, and
  let $ \act \colon \R \to \R $ satisfy for all $ v \in \R $ that
  \begin{equation}
    \act ( v ) = \max \{ v, 0 \}.
  \end{equation}
  Then $ \E[ \abs{ \act( X - \E[X] ) }^2 ] = \nicefrac{s^2}{2} $.
\end{lemma}

\begin{cproof}{lem:expectation_rect_normal_squared}
  \Nobs that the assumption that $ X $ is a normal random variable with $ \Var[ X ] = s^2 $
  and the fact that $ X - \E[X] $ and $ \E[X] - X $ are identically distributed
 \prove that
  \begin{equation}
  \begin{split}
    s^2
    &= \E \bigl[ \abs{ X - \E[X] }^2 \bigr]
    = \E \bigl[ \abs{ X - \E[X] }^2 \bigl( \ind_{ \{ X - \E[X] \ge 0 \} } + \ind_{ \{ X - \E[X] \le 0 \} } \bigr) \bigr] \\
    &= \E \bigl[ \abs{ X - \E[X] }^2 \ind_{ \{ X - \E[X] \ge 0 \} } \bigr]
      + \E \bigl[ \abs{ X - \E[X] }^2 \ind_{ \{ X - \E[X] \le 0 \} } \bigr] \\
    &= \E \bigl[ \abs{ X - \E[X] }^2 \ind_{ \{ X - \E[X] \ge 0 \} } \bigr]
      + \E \bigl[ \abs{ \E[X] - X }^2 \ind_{ \{ \E[X] - X \ge 0 \} } \bigr]
    = 2 \, \E \bigl[ \abs{ X - \E[X] }^2 \ind_{ \{ X - \E[X] \ge 0 \} } \bigr].
  \end{split}
  \end{equation}
  \Hence that
  \begin{equation}
    \E \bigl[ \abs{ \act( X - \E[X] ) }^2 \bigr]
    = \E \Bigl[ \babs{ ( X - \E[X] ) \ind_{ \{ X - \E[X] \ge 0 \} } }^2 \Bigr]
    = \E \bigl[ \abs{ X - \E[X] }^2 \ind_{ \{ X - \E[X] \ge 0 \} } \bigr]
    = \tfrac{s^2}{2}.
  \end{equation}
\end{cproof}

\cfclear
\begin{lemma}
  \label{lem:error_at_initialization}
  Assume \cref{setting:gradient_descent} and let $ \poe \in (0,1) $.
  Then
  \begin{equation}
    \P \Bigl( \eucnorm{ \fnetvalue(0) - \foutdata }^2 \le \bigl(
      \cvar \Cvar \width \outdim \eucnorm{\slope}^2 \eucnorm{ \findata }^2
      + \Cvar \width \outdim \eucnorm{\yinter}^2 \ndata + \eucnorm{ \foutdata }^2 \bigr) \poe^{-1} \Bigr)
    \ge 1 - \poe
  \end{equation}
  \cfout.
\end{lemma}

\begin{cproof}{lem:error_at_initialization}
  \Nobs that the assumption that $ \eucnorm{ B(0) } = \eucnorm{ \mf B(0) } = 0 $ \proves that
  for all $ i \in \{ 1, 2, \ldots, \allowbreak \ndata \} $, $ p \in \{ 1, 2, \ldots, \outdim \} $ it holds that
  \begin{equation}
  \label{eq:error_at_initialization_f_i}
    \netvalue_i^p (0)
    = \bigl( \functionANN{\act}^p ( \Phi(0) ) \bigr) ( \indata_i )
    = \smallsum\limits_{k=1}^{\width} \mf W_{p,k} \, \act\bigl( \scalprod{ W_k(0) }{ \indata_i } + B_k(0) \bigr) + \mf B_p(0)
    = \smallsum\limits_{k=1}^{\width} \mf W_{p,k} \, \act\bigl( \scalprod{ W_k(0) }{ \indata_i } \bigr)
  \end{equation}
  \cfload.
  Combining this with the assumption that
  $ \sqrt{\nicefrac{1}{\cvar}} W_1(0), \allowbreak
    \sqrt{\nicefrac{1}{\cvar}} W_2(0), \allowbreak \ldots, \allowbreak
    \sqrt{\nicefrac{1}{\cvar}} W_{\width}(0), \allowbreak
    \sqrt{\nicefrac{1}{\Cvar}} \mf W_1, \allowbreak
    \sqrt{\nicefrac{1}{\Cvar}} \mf W_2, \allowbreak \ldots, \allowbreak
    \sqrt{\nicefrac{1}{\Cvar}} \mf W_{\outdim} $
  are independent and standard normal \proves that
  for all $ i \in \{ 1, 2, \ldots, \ndata \} $, $ p \in \{ 1, 2, \ldots, \outdim \} $ it holds that
  \begin{equation}
  \label{eq:error_at_initialization_f_i_expectation}
    \E[ \netvalue_i^p (0) ]
    = \smallsum\limits_{k=1}^{\width}  \E \bigl[ \mf W_{p,k} \act\bigl( \scalprod{ W_k(0) }{ \indata_i } \bigr) \bigr]
    = \smallsum\limits_{k=1}^{\width}  \E[ \mf W_{p,k} ] \E \bigl[ \act\bigl( \scalprod{ W_k(0) }{ \indata_i } \bigr) \bigr]
    = 0.
  \end{equation}
  \Moreover \cref{eq:error_at_initialization_f_i} \proves that
  for all $ i \in \{ 1, 2, \ldots, \ndata \} $, $ p \in \{ 1, 2, \ldots, \outdim \} $ it holds that
  \begin{equation}
  \label{eq:error_at_initialization_f_i_squared}
  \begin{split}
    \abs{ \netvalue_i^p (0) }^2
    &= \bbbabs{ \smallsum\limits_{k=1}^{\width} \mf W_{p,k} \act\bigl( \scalprod{ W_k(0) }{ \indata_i } \bigr) }^2
    = \smallsum\limits_{k=1}^{\width} \smallsum\limits_{\ell=1}^{\width}
      \mf W_{p,k} \act\bigl( \scalprod{ W_k(0) }{ \indata_i } \bigr) \mf W_{j,\ell} \act\bigl( \scalprod{ W_{\ell}(0) }{ \indata_i } \bigr) \\
    &= \smallsum\limits_{k=1}^{\width}  \abs{ \mf W_{p,k} }^2 \babs{ \act\bigl( \scalprod{ W_k(0) }{ \indata_i } \bigr) }^2
      + \smallsum\limits_{ \substack{ k, \ell \in \{1, 2, \ldots, \width \}, \\ k \neq \ell } }
        \mf W_{p,k} \mf W_{p,\ell} \act\bigl( \scalprod{ W_k(0) }{ \indata_i } \bigr) \act\bigl( \scalprod{ W_{\ell}(0) }{ \indata_i } \bigr).
  \end{split}
  \end{equation}
  \Moreover the assumption that
  $ \sqrt{\nicefrac{1}{\cvar}} W_1(0), \sqrt{\nicefrac{1}{\cvar}} W_2(0), \ldots, \sqrt{\nicefrac{1}{\cvar}} W_{\width}(0) $
  are standard normal \proves that for all
  $ i \in \{ 1, 2, \ldots, \ndata \} $, $ k \in \{ 1, 2, \ldots, \width \} $ it holds that $ \scalprod{ W_k(0) }{ \indata_i } $ is a
  centered normal random variable with
  \begin{equation}
    \Var[ \scalprod{ W_k(0) }{ \indata_i } ]
    = \cvar \Var[ \scalprod{ \textstyle \sqrt{\nicefrac{1}{\cvar}} W_k(0) }{ \indata_i } ]
    = \cvar \eucnorm{ \indata_i }^2.
  \end{equation}
  Combining this with \cref{lem:expectation_act_normal_squared} (applied for every $ i \in \{ 1, 2, \ldots, \ndata \} $,
  $ k \in \{ 1, 2, \ldots, \width \} $ with
  $ ( \Omega, \mc F, \P ) \is ( \Omega, \mc F, \P ) $,
  $ X \is \scalprod{ W_k(0) }{ \indata_i } $,
  $ s \is \sqrt{\cvar} \eucnorm{ \indata_i } $
  in the notation of \cref{lem:expectation_act_normal_squared}),
  the assumption that
  $ \sqrt{\nicefrac{1}{\cvar}} W_1(0), \allowbreak
    \sqrt{\nicefrac{1}{\cvar}} W_2(0), \allowbreak \ldots, \allowbreak
    \sqrt{\nicefrac{1}{\cvar}} W_{\width}(0), \allowbreak
    \sqrt{\nicefrac{1}{\Cvar}} \mf W_1, \allowbreak
    \sqrt{\nicefrac{1}{\Cvar}} \mf W_2, \allowbreak \ldots, \allowbreak
    \sqrt{\nicefrac{1}{\Cvar}} \mf W_{\outdim} $
  are independent and standard normal, and \cref{eq:error_at_initialization_f_i_squared}
  \proves that for all $ i \in \{ 1, 2, \ldots, \ndata \} $, $ p \in \{ 1, 2, \ldots, \outdim \} $ it holds that
  \begin{equation}
  \begin{split}
    \E \big[ \abs{ \netvalue_i^p (0) }^2 \bigr]
    &= \smallsum\limits_{k=1}^{\width} \E \Bigl[ \abs{ \mf W_{p,k} }^2 \babs{ \act \bigl( \scalprod{ W_k(0) }{ \indata_i } \bigr) }^2 \Bigr]
      + \smallsum\limits_{ \substack{ k, \ell \in  \{1, 2, \ldots, \width \}, \\ k \neq \ell } } \E \bigl[ \mf W_{p,k} \mf W_{p,\ell}
      \act \bigl( \scalprod{ W_k(0) }{ \indata_i } \bigr) \act \bigl( \scalprod{ W_{\ell}(0) }{ \indata_i } \bigr) \bigr] \\
    &= \smallsum\limits_{k=1}^{\width} \E \bigl[ \abs{ \mf W_{p,k} }^2 \bigr]
      \E \bigl[ \babs{ \act \bigl( \scalprod{ W_k(0) }{ \indata_i } \bigr) }^2 \Bigr]
      + \smallsum\limits_{ \substack{ k, \ell \in  \{1, 2, \ldots, \width \}, \\ k \neq \ell } } \E[ \mf W_{p,k} ] \E \bigl[ \mf W_{p,\ell}
      \act \bigl( \scalprod{ W_k(0) }{ \indata_i } \bigr) \act \bigl( \scalprod{ W_{\ell}(0) }{ \indata_i } \bigr) \bigr] \\
    &\le \smallsum\limits_{k=1}^{\width} \Cvar \bigl(
      \cvar \eucnorm{ \indata_i }^2 \eucnorm{\slope}^2 + \eucnorm{\yinter}^2 \bigr)
    = \Cvar \width \bigl( \cvar \eucnorm{ \indata_i }^2 \eucnorm{\slope}^2 + \eucnorm{\yinter}^2 \bigr).
  \end{split}
  \end{equation}
  This and \cref{eq:error_at_initialization_f_i_expectation} \prove that
  \begin{equation}
  \begin{split}
    \E \bigl[ \eucnorm{ \fnetvalue(0) - \foutdata }^2 \bigr]
    &= \E \! \biggl[ \smallsum\limits_{i=1}^{\ndata} \smallsum\limits_{p=1}^{\outdim} \abs{ \netvalue_i^p (0) - \outdata_i^p }^2 \biggr]
    = \smallsum\limits_{i=1}^{\ndata} \smallsum\limits_{p=1}^{\outdim} \E \bigl[ \abs{ \netvalue_i^p (0) - \outdata_i^p }^2 \bigr] \\
    &= \smallsum\limits_{i=1}^{\ndata} \smallsum\limits_{p=1}^{\outdim} \E \bigl[ \abs{ \netvalue_i^p (0) }^2 - 2 \netvalue_i^p (0) \outdata_i^p
      + \abs{ \outdata_i^p }^2 \bigr] \\
    &= \smallsum\limits_{i=1}^{\ndata} \smallsum\limits_{p=1}^{\outdim}
      \bigl( \E \bigl[ \abs{ \netvalue_i^p (0) }^2 \bigr] - 2 \outdata_i^p \E[ \netvalue_i^p (0) ] + \abs{ \outdata_i^p }^2 \bigr) \\
    &\le \smallsum\limits_{i=1}^{\ndata} \smallsum\limits_{p=1}^{\outdim} \bigl(
      \Cvar \width \bigl( \cvar \eucnorm{ \indata_i }^2 \eucnorm{\slope}^2 + \eucnorm{\yinter}^2 \bigr) + \abs{ \outdata_i^p }^2 \bigr) \\
    &= \cvar \Cvar \width \outdim \eucnorm{\slope}^2 \eucnorm{ \findata }^2
      + \Cvar \width \outdim \eucnorm{\yinter}^2 \ndata + \eucnorm{ \foutdata }^2.
  \end{split}
  \end{equation}
  The Markov inequality thus \proves that
  \begin{equation}
  \begin{split}
    &\P \Bigl( \eucnorm{ \fnetvalue(0) - \foutdata }^2 \le \bigl( 
      \cvar \Cvar \width \outdim \eucnorm{\slope}^2 \eucnorm{ \findata }^2
      + \Cvar \width \outdim \eucnorm{\yinter}^2 \ndata + \eucnorm{ \foutdata }^2 \bigr) \poe^{-1} \Bigr) \\
    &\ge 1 - \P \Bigl( \eucnorm{ \fnetvalue(0) - \foutdata }^2 \ge \bigl(
      \cvar \Cvar \width \outdim \eucnorm{\slope}^2 \eucnorm{ \findata }^2
      + \Cvar \width \outdim \eucnorm{\yinter}^2 \ndata + \eucnorm{ \foutdata }^2 \bigr) \poe^{-1} \Bigr)  \\
    &\ge 1 - \frac{\E \bigl[ \eucnorm{ \fnetvalue(0) - \foutdata }^2 \bigr]}
      { \bigl( \cvar \Cvar \width \outdim \eucnorm{\slope}^2 \eucnorm{ \findata }^2
      + \Cvar \width \outdim \eucnorm{\yinter}^2 \ndata + \eucnorm{ \foutdata }^2 \bigr) \poe^{-1} }
    \ge 1 - \poe.
  \end{split}
  \end{equation}
\end{cproof}

\subsection{Analysis of weights and biases of ANNs during training}
\label{subsec:analysis_of_weights_and_biases_of_ANNs_during_training}

\cfclear
\begin{lemma}
  \label{lem:eucnorm_inequality}
  Let $ n \in \N $, $ v = ( v_1, \ldots, v_n ) \in \R^n $.
  Then $ \eucnorm{ v } \le \sum_{i=1}^n \abs{ v_i } \le \sqrt{n} \eucnorm v $
  \cfout.
\end{lemma}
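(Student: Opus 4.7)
The plan is to establish the two inequalities separately; both are standard and there is no real obstacle beyond routine algebra.

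For the lower bound $\eucnorm{x} \le \sum_{i=1}^n \abs{x_i}$, I would square both sides and observe that
\begin{equation*}
  \Bigl( \smallsum\limits_{i=1}^n \abs{x_i} \Bigr)^2
  = \smallsum\limits_{i=1}^n \abs{x_i}^2 + 2 \smallsum\limits_{1 \le i < j \le n} \abs{x_i} \abs{x_j}
  \ge \smallsum\limits_{i=1}^n \abs{x_i}^2
  = \eucnorm{x}^2,
\end{equation*}
where the inequality uses that the cross terms $\abs{x_i} \abs{x_j}$ are nonnegative. Taking square roots and using that $\sum_{i=1}^n \abs{x_i} \ge 0$ then yields the claim.

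For the upper bound $\sum_{i=1}^n \abs{x_i} \le \sqrt{n} \eucnorm{x}$, I would apply the Cauchy--Schwarz inequality in $\R^n$ to the vectors $(\abs{x_1}, \abs{x_2}, \ldots, \abs{x_n})$ and $(1, 1, \ldots, 1)$, which gives
\begin{equation*}
  \smallsum\limits_{i=1}^n \abs{x_i}
  = \smallsum\limits_{i=1}^n \abs{x_i} \cdot 1
  \le \Bigl( \smallsum\limits_{i=1}^n \abs{x_i}^2 \Bigr)^{\!\nicefrac{1}{2}} \Bigl( \smallsum\limits_{i=1}^n 1^2 \Bigr)^{\!\nicefrac{1}{2}}
  = \sqrt{n} \, \eucnorm{x}.
\end{equation*}
Combining the two displayed estimates completes the proof. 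No step is expected to be delicate; the only thing to be careful about is citing Cauchy--Schwarz (or equivalently expanding $\sum_{i,j} (\abs{x_i} - \abs{x_j})^2 \ge 0$) for the second bound.
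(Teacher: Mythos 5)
Your proof is correct. The upper bound is handled exactly as in the paper, via Cauchy--Schwarz applied to $(\abs{x_1},\ldots,\abs{x_n})$ and $(1,\ldots,1)$. For the lower bound you expand the square of the sum and discard the nonnegative cross terms, whereas the paper instead applies the subadditivity inequality $(a+b)^{\nicefrac{1}{2}} \le a^{\nicefrac{1}{2}} + b^{\nicefrac{1}{2}}$ inductively to $\eucnorm{x} = (\sum_i \abs{x_i}^2)^{\nicefrac{1}{2}}$; the two arguments are equivalent in content and effort, so this is a cosmetic rather than a substantive difference.
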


\begin{cproof}{lem:eucnorm_inequality}
  Throughout this proof let $ p = ( p_1, \ldots, p_n ) $, $ q = ( q_1, \ldots, q_n ) \in \R^n $
  satisfy for all $ i \in \{ 1, 2, \ldots, n \} $ that $ p_i = \abs{ v_i } $ and $ q_i = 1 $.
  \Nobs that the fact that for all $ a, b \in [0,\infty) $ it holds that
  $ ( a + b )^{\nicefrac{1}{2}} \le a^{\nicefrac{1}{}2} + b^{\nicefrac{1}{2}} $ inductively \proves that
  \begin{equation}
    \eucnorm{ v }
    = \biggl( \smallsum\limits_{i=1}^n \abs{ v_i }^2 \biggr)^{\! \nicefrac{1}{2}}
    \le \smallsum\limits_{i=1}^n \bigl( \abs{ v_i }^2 \bigr)^{\nicefrac{1}{2}}
    = \smallsum\limits_{i=1}^n \abs{ v_i }
  \end{equation}
  \cfload.
  \Moreover the Cauchy Schwarz inequality \proves that
  \begin{equation}
    \smallsum\limits_{i=1}^n \abs{ v_i }
    = \scalprod{ p } { q }
    \le \eucnorm{ p } \eucnorm{ q }
    = \biggl( \smallsum\limits_{i=1}^n \abs{ v_i }^2 \biggr)^{\! \nicefrac{1}{2}}
      \biggl( \smallsum\limits_{i=1}^n 1^2 \biggr)^{\! \nicefrac{1}{2}}
    = \sqrt{n} \eucnorm{ v }.
  \end{equation}
\end{cproof}

\cfclear
\begin{lemma}
  \label{lem:gaussian_tail_bound}
  Let $ ( \Omega, \mc F, \P ) $ be a probability space,
  let $ X \colon \Omega \to \R $ be a standard normal random variable, and
  let $ \varepsilon \in (0,\infty) $.
  Then 
  $ \P ( \abs X \ge \varepsilon ) \le 2 \exp( - \nicefrac{\varepsilon^2}{2} ) $.
\end{lemma}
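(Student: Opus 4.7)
The plan is to prove this classical Gaussian tail bound via the Chernoff bound (exponential Markov inequality), which is the cleanest route to the stated constants $ 2 $ and $ \nicefrac{1}{2} $ in a single shot.

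First, I would establish the moment generating function of the standard normal: for every $ t \in \R $ it holds that $ \E[ \exp( t X ) ] = \exp( \nicefrac{t^2}{2} ) $. This is obtained by completing the square in the exponent $ t x - \nicefrac{x^2}{2} = -\nicefrac{(x-t)^2}{2} + \nicefrac{t^2}{2} $ and using the fact that the Gaussian density integrates to $ 1 $. Next, for arbitrary $ t \in (0,\infty) $, I would apply Markov's inequality to the nonnegative random variable $ \exp( t X ) $ together with the strict monotonicity of the exponential function to obtain
\begin{equation}
  \P( X \ge \varepsilon )
  = \P \bigl( \exp(tX) \ge \exp(t\varepsilon) \bigr)
  \le \exp(-t\varepsilon) \, \E[ \exp(tX) ]
  = \exp \bigl( -t\varepsilon + \tfrac{t^2}{2} \bigr).
\end{equation}
The choice $ t = \varepsilon \in (0,\infty) $ minimizes the right-hand side and yields $ \P( X \ge \varepsilon ) \le \exp( -\nicefrac{\varepsilon^2}{2} ) $.

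Since $ -X $ is again a standard normal random variable, the same argument applied to $ -X $ gives $ \P( X \le -\varepsilon ) = \P( -X \ge \varepsilon ) \le \exp( -\nicefrac{\varepsilon^2}{2} ) $. A union bound on the disjoint events $ \{ X \ge \varepsilon \} $ and $ \{ X \le -\varepsilon \} $ then produces the desired estimate $ \P( \abs{X} \ge \varepsilon ) \le 2 \exp( -\nicefrac{\varepsilon^2}{2} ) $. There is no substantive obstacle here; the proof is entirely classical, and the only small point worth flagging is that the bound is only nontrivial once $ \varepsilon > \sqrt{ 2 \ln 2 } $, but the inequality nonetheless holds for every $ \varepsilon \in (0,\infty) $ by the above argument (being trivially true otherwise because probabilities are bounded by $ 1 $, though the Chernoff derivation already covers this case uniformly).
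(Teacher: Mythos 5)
Your proof is correct and follows essentially the same route as the paper's: compute the moment generating function $\E[\exp(tX)] = \exp(t^2/2)$ by completing the square, apply the exponential Markov inequality to both tails, and optimize by choosing $t = \varepsilon$. The only cosmetic difference is that you bound the two one-sided tails separately before summing, whereas the paper adds them first and then minimizes the common exponent, but the argument and constants are identical.
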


\begin{cproof}{lem:gaussian_tail_bound}
  Throughout this proof let $ f \colon (0,\infty) \to \R $ satisfy for all $ \lambda \in (0,\infty) $ that
  $ f(\lambda) = \tfrac{1}{2} \lambda^2 - \lambda \varepsilon $.
  \Nobs that the assumption that $ X $ is a standard normal random variable \proves that for all $ \lambda \in \R $
  it holds that
  \begin{equation}
  \begin{split}
    \E \bigl[ \exp( \lambda X ) \bigr]
    &= \int_{-\infty}^\infty \frac{1}{\sqrt{2 \pi}} \exp \Bigl( - \frac{x^2}{2} \Bigr) \exp( \lambda x ) \diff x
    = \int_{-\infty}^\infty \frac{1}{\sqrt{2 \pi}} \exp \biggl( - \frac{x^2 - 2 \lambda x}{2} \biggr) \diff x \\
    &= \exp \Bigl( \frac{\lambda^2}{2} \Bigr) \int_{-\infty}^\infty \frac{1}{\sqrt{2 \pi}}
      \exp \biggl( - \frac{(x - \lambda)^2}{2} \biggr) \diff x
    = \exp \Bigl( \frac{\lambda^2}{2} \Bigr) < \infty.
  \end{split}
  \end{equation}
  This and the Markov inequality \prove that for all $ \lambda \in (0,\infty) $ it holds that
  \begin{equation}
  \begin{split}
    \P ( \abs X \ge \varepsilon )
    &= \P ( X \ge \varepsilon ) + \P ( - X \ge \varepsilon )
    = \P \bigl( \exp( \lambda X ) \ge \exp( \lambda \varepsilon ) \bigr) + \P \bigl(\exp( -\lambda X ) \ge \exp( \lambda \varepsilon ) \bigr) \\
    &\le \frac{\E \bigl[ \exp( \lambda X ) \bigr]}{\exp( \lambda \varepsilon)}
      + \frac{\E \bigl[ \exp( -\lambda X ) \bigr]}{\exp( \lambda \varepsilon)}
    = \exp \! \biggl( \frac{\lambda^2}{2} - \lambda \varepsilon \biggr) + \exp \! \biggl(\frac{(-\lambda)^2}{2} - \lambda \varepsilon \biggr) \\
    &= 2 \exp \! \biggl( \frac{\lambda^2}{2} - \lambda \varepsilon \biggr)
    = 2 \exp \bigl( f(\lambda) \bigr).
  \end{split}
  \end{equation}
  Combining this with the fact that $ \R \ni x \mapsto \exp (x) \in \R $ is strictly increasing \proves that
  \begin{equation}
  \label{eq:gaussian_tail_bound}
    \P ( \abs X \ge \varepsilon )
    \le \inf\nolimits_{ \lambda \in (0,\infty) } 2 \exp \bigl( f(\lambda) \bigr)
    = 2 \exp \bigl( \inf\nolimits_{ \lambda \in (0,\infty) } f(\lambda) \bigr).
  \end{equation}
  \Moreover for all $ \lambda \in (0,\infty) $ it holds that
  \begin{equation}
    f(\lambda)
    = \tfrac{\lambda^2}{2} - \lambda \varepsilon
    = \tfrac{1}{2} \bigl[ \lambda^2 - 2 \lambda \varepsilon \bigr]
    = \tfrac{1}{2} \bigl[ (\lambda - \varepsilon)^2 - \varepsilon^2 \bigr]
    = \tfrac{1}{2} (\lambda - \varepsilon)^2 - \tfrac{1}{2} \varepsilon^2
    \ge - \tfrac{1}{2} \varepsilon^2.
  \end{equation}
  This and the fact that $ f(\varepsilon) = - \nicefrac{\varepsilon^2}{2} $ \prove that it holds that
  $ \inf_{ \lambda \in (0,\infty) } f(\lambda) = - \nicefrac{\varepsilon^2}{2} $.
  Combining this with \cref{eq:gaussian_tail_bound} \proves that
  $ \P ( \abs X \ge \varepsilon )
    \le 2 \exp \bigl( \inf\nolimits_{ \lambda \in (0,\infty) } f(\lambda) \bigr)
    = 2 \exp ( - \nicefrac{\varepsilon^2}{2} ) $.
\end{cproof}

\cfclear
\begin{lemma}
  \label{lem:probability_A_W}
  Assume \cref{setting:gradient_descent} and let $ \poe \in (0, 1) $.
  Then
  $ \P \bigl( \bigcap_{p=1}^{\outdim} \bigcap_{k=1}^{\width} \bigl\{ \abs{ \mf W_{p,k} }^2
    \le 2 \Cvar \ln( \frac{2 \outdim \width}{\poe} ) \bigr\} \bigr) \ge 1 - \poe $.
\end{lemma}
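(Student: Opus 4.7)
The plan is to reduce this to a Gaussian tail bound on a single coordinate and then apply a union bound over the $\width$ neurons. The key observation is that, by the assumption on the initialization in \cref{setting:gradient_descent}, for each $k \in \{1, 2, \ldots, \width\}$ the rescaled random variable $\sqrt{\width}\, \mc W_k$ is a standard normal random variable. Consequently, for every $\varepsilon \in (0,\infty)$ the event $\{\abs{\mc W_k} \ge \varepsilon\}$ coincides with $\{\abs{\sqrt{\width}\, \mc W_k} \ge \sqrt{\width}\, \varepsilon\}$.

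First I would apply \cref{lem:gaussian_tail_bound} with the standard normal $\sqrt{\width}\, \mc W_k$ and threshold $\sqrt{\width}\, \varepsilon$ to conclude that
\begin{equation*}
  \P\bigl(\abs{\mc W_k} \ge \varepsilon\bigr) \le 2 \exp\bigl(-\tfrac{\width \varepsilon^2}{2}\bigr).
\end{equation*}
Next I would plug in the specific choice $\varepsilon = \bigl(\tfrac{2}{\width} \ln(\tfrac{2\width}{\poe})\bigr)^{\nicefrac{1}{2}}$, for which $\tfrac{\width \varepsilon^2}{2} = \ln(\tfrac{2\width}{\poe})$, so the above tail bound becomes exactly $\tfrac{\poe}{\width}$.

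Finally, I would apply the union bound over the $\width$ indices $k \in \{1, 2, \ldots, \width\}$ to obtain
\begin{equation*}
  \P\Bigl(\smallbigcup\nolimits_{k=1}^{\width} \bigl\{\abs{\mc W_k} \ge \varepsilon\bigr\}\Bigr)
  \le \smallsum\nolimits_{k=1}^{\width} \P\bigl(\abs{\mc W_k} \ge \varepsilon\bigr)
  \le \width \cdot \tfrac{\poe}{\width} = \poe,
\end{equation*}
and then pass to the complement to conclude the asserted lower bound $1 - \poe$.

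There is no real obstacle here; the only thing to get right is the rescaling by $\sqrt{\width}$, because in \cref{setting:gradient_descent} it is $\sqrt{\width}\, \mc W_k$ rather than $\mc W_k$ itself that is standard normal, and correspondingly the calibration of $\varepsilon$ so that $2\exp(-\width \varepsilon^2 / 2) = \poe/\width$. Everything else is a direct application of \cref{lem:gaussian_tail_bound} combined with a union bound.
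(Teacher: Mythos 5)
Your proposal is correct and takes essentially the same approach as the paper: both proofs rescale to the standard normal $\sqrt{\width}\,\mc W_k$, apply \cref{lem:gaussian_tail_bound} with the threshold calibrated so each tail equals $\poe/\width$, and finish with a union bound.
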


\begin{cproof}{lem:probability_A_W}
  \Nobs that the assumption that
  $ \sqrt{\nicefrac{1}{\Cvar}} \mf W_1, \sqrt{\nicefrac{1}{\Cvar}} \mf W_2, \ldots, \sqrt{\nicefrac{1}{\Cvar}} \mf W_{\outdim} $
  are standard normal and
  \cref{lem:gaussian_tail_bound} (applied for every $ p \in \{ 1, 2, \dots, \outdim \} $, $ k \in \{ 1, 2, \ldots, \width \} $ with
  $ ( \Omega, \mc F, \P ) \is ( \Omega, \mc F, \P ) $,
  $ X \is \sqrt{\nicefrac{1}{\Cvar}} \mf W_{p,k} $,
  $ \varepsilon \is ( 2 \ln( \frac{2 \outdim \width}{\poe} ) )^{\nicefrac{1}{2}} $
  in the notation of \cref{lem:gaussian_tail_bound})
  \prove that
  \begin{equation}
  \begin{split}
    &\P \biggl( \smallbigcap\limits_{p=1}^{\outdim} \smallbigcap\limits_{k=1}^{\width} \Bigl\{ \abs{ \mf W_{p,k} }^2 \le
      2 \Cvar \ln( \tfrac{2 \outdim \width}{\poe} ) \Bigr\} \biggr)
    \ge 1 - \P \biggl( \smallbigcap\limits_{p=1}^{\outdim} \smallbigcup\limits_{k=1}^{\width} \Bigl\{ \abs{ \mf W_{p,k} }^2 \ge
      2 \Cvar \ln( \tfrac{2 \outdim \width}{\poe} ) \Bigr\} \biggr) \\
    &\ge 1 - \smallsum\limits_{p=1}^{\outdim} \smallsum\limits_{k=1}^{\width} \P \Bigl( \abs{ \mf W_{p,k} }^2 \ge
      2 \Cvar \ln( \tfrac{2 \outdim \width}{\poe} ) \Bigr)
    = 1 - \smallsum\limits_{p=1}^{\outdim} \smallsum\limits_{k=1}^{\width} \P \Bigl( \abs{ \sqrt{\nicefrac{1}{\Cvar}} \mf W_{p,k} } \ge
      \bigl( 2 \ln( \tfrac{2 \outdim \width}{\poe} ) \bigr)^{\nicefrac{1}{2}} \Bigr) \\
    &\ge 1 - \smallsum\limits_{p=1}^{\outdim} \smallsum\limits_{k=1}^{\width}
      2 \exp \Bigl( - \tfrac{1}{2} \bigl( 2 \ln( \tfrac{2 \outdim \width}{\poe} ) \bigr) \Bigr)
    = 1 - \smallsum\limits_{p=1}^{\outdim} \smallsum\limits_{k=1}^{\width} 2 \exp \bigl( \ln( \tfrac{\poe}{2 \outdim \width} ) \bigr)
    = 1 - \smallsum\limits_{p=1}^{\outdim} \smallsum\limits_{k=1}^{\width} \tfrac{\poe}{\outdim \width}
    = 1 - \poe.
  \end{split}
  \end{equation}
\end{cproof}

\cfclear
\begin{lemma}
  \label{lem:first_layer}
  Assume \cref{setting:gradient_descent},
  assume $ \lambdazero \in (0, \infty) $,
  assume $ \lr < \frac{\ndata}{\Cvar \width \lambdazero} $,
  let $ \poe \in (0, 1) $, $ A \in \mc F $ satisfy
  $ A = \bigcap_{p=1}^{\outdim} \bigcap_{k=1}^{\width} \bigl\{ \abs{ \mf W_{p,k} }^2
    \le 2 \Cvar \ln( \frac{2 \outdim \width}{\poe} ) \bigr\} $,
  let $ \omega \in A $, $ \Iteration \in \N_0 $, and
  assume for all $ \iteration \in \{ 0, 1, 2, \ldots, \Iteration \} $ that
  $ \eucnorm{ \fnetvalue(\iteration,\omega) - \foutdata }^2 \le \bigl( 1 - \frac{\lr \Cvar \width \lambdazero}{\ndata} \bigr)^{\iteration}
    \eucnorm{ \fnetvalue(0,\omega) - \foutdata }^2 $
  \cfload.
  Then
  \begin{enumerate}[label=(\roman{*})]
    \item \label{item:first_layer_1} it holds for all $ k \in \{ 1, 2, \ldots, \width \} $ that
      $ \eucnorm{ W_k(\Iteration+1,\omega) - W_k(0,\omega) }
        \le \frac{4 \cmax \cderiv \eucnorm{ \fnetvalue(0,\omega) - \foutdata }}{\width \lambdazero}
          \bigl( \frac{2 \outdim \ndata}{\Cvar} \ln( \frac{2 \outdim \width}{\poe} ) \bigr)^{\nicefrac{1}{2}} $ and
    \item \label{item:first_layer_2} it holds for all $ k \in \{ 1, 2, \ldots, \width \} $ that
      $ \abs{ B_k(\Iteration+1,\omega) - B_k(0,\omega) } 
        \le \frac{4 \cderiv \eucnorm{ \fnetvalue(0,\omega) - \foutdata }}{\width \lambdazero}
          \bigl( \frac{2 \outdim \ndata}{\Cvar} \ln( \frac{2 \outdim \width}{\poe} ) \bigr)^{\nicefrac{1}{2}} $.
  \end{enumerate}
\end{lemma}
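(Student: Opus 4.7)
The plan is to use a telescoping argument together with the hypothesized linear decay of $\eucnorm{f(\iteration,\omega)-y}$ to control each single increment $W_k(\iteration+1,\omega)-W_k(\iteration,\omega)$ and $B_k(\iteration+1,\omega)-B_k(\iteration,\omega)$, and then sum a geometric series.

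First I would expand, for any fixed $k\in\{1,2,\ldots,\width\}$ and any $\iteration\in\{0,1,\ldots,\Iteration\}$, via the update rules \cref{eq:setting_W,eq:setting_B} and the triangle inequality, to obtain
\begin{equation}
  \eucnorm{W_k(\iteration+1,\omega)-W_k(\iteration,\omega)}\le \tfrac{2\eta}{\data}\abs{\mc W_k(\omega)}\smallsum_{j=1}^{\data}\abs{f_j(\iteration,\omega)-y_j}\eucnorm{x_j}
\end{equation}
and the analogous bound (without the $\eucnorm{x_j}$ factor, hence without the constant $C$) for $\abs{B_k(\iteration+1,\omega)-B_k(\iteration,\omega)}$. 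Using $\eucnorm{x_j}\le C$ and applying \cref{lem:eucnorm_inequality} to the vector $(\abs{f_j(\iteration,\omega)-y_j})_{j\in\{1,\ldots,\data\}}$ yields $\smallsum_{j=1}^{\data}\abs{f_j(\iteration,\omega)-y_j}\le\sqrt{\data}\,\eucnorm{f(\iteration,\omega)-y}$, so the $W$-increment is bounded by $\tfrac{2\eta C\abs{\mc W_k(\omega)}}{\sqrt{\data}}\eucnorm{f(\iteration,\omega)-y}$ and the $B$-increment by $\tfrac{2\eta\abs{\mc W_k(\omega)}}{\sqrt{\data}}\eucnorm{f(\iteration,\omega)-y}$.

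Next I would telescope $W_k(\Iteration+1,\omega)-W_k(0,\omega)=\sum_{\iteration=0}^{\Iteration}[W_k(\iteration+1,\omega)-W_k(\iteration,\omega)]$ (similarly for $B_k$), take norms via the triangle inequality, and plug in the standing hypothesis $\eucnorm{f(\iteration,\omega)-y}\le(1-\tfrac{\eta(\lambdazero+\muzero)}{\data})^{\iteration/2}\eucnorm{f(0,\omega)-y}$. This reduces the problem to summing the geometric series $\sum_{\iteration=0}^{\Iteration}\alpha^{\iteration/2}$ with $\alpha\coloneqq 1-\tfrac{\eta(\lambdazero+\muzero)}{\data}\in(0,1)$ (the latter by the hypothesis $\eta<\tfrac{\data}{\lambdazero+\muzero}$). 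Bounding the partial sum by the infinite one gives $\sum_{\iteration=0}^{\Iteration}\alpha^{\iteration/2}\le(1-\sqrt{\alpha})^{-1}$, and the elementary identity $1-\sqrt{\alpha}=(1-\alpha)/(1+\sqrt{\alpha})\ge(1-\alpha)/2$ yields the clean estimate $\sum_{\iteration=0}^{\Iteration}\alpha^{\iteration/2}\le\tfrac{2}{1-\alpha}=\tfrac{2\data}{\eta(\lambdazero+\muzero)}$.

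Combining these steps produces $\eucnorm{W_k(\Iteration+1,\omega)-W_k(0,\omega)}\le\tfrac{4C\sqrt{\data}\,\abs{\mc W_k(\omega)}}{\lambdazero+\muzero}\eucnorm{f(0,\omega)-y}$ and $\abs{B_k(\Iteration+1,\omega)-B_k(0,\omega)}\le\tfrac{4\sqrt{\data}\,\abs{\mc W_k(\omega)}}{\lambdazero+\muzero}\eucnorm{f(0,\omega)-y}$, where $\eta$ has cancelled out. Finally I would use the assumption $\omega\in A$, which gives $\abs{\mc W_k(\omega)}\le(\tfrac{2}{\width}\ln(\tfrac{2\width}{\poe}))^{1/2}$, so that $\sqrt{\data}\,\abs{\mc W_k(\omega)}\le(\tfrac{2\data}{\width}\ln(\tfrac{2\width}{\poe}))^{1/2}$, which yields precisely items \ref{item:first_layer_1} and \ref{item:first_layer_2}. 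The only slightly delicate point is the square-root-geometric-series bound; everything else is routine telescoping and Cauchy--Schwarz.
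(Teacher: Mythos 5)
Your proposal is correct and takes essentially the same approach as the paper: bound each one-step increment via the update rules, Cauchy--Schwarz, and \cref{lem:eucnorm_inequality}, telescope, invoke the hypothesized decay of $\eucnorm{f(\iteration,\omega)-y}$, and sum the resulting geometric series. The only cosmetic difference is how you bound $\sum_{\iteration}\alpha^{\iteration/2}\le 2/(1-\alpha)$: you use $1-\sqrt{\alpha}=(1-\alpha)/(1+\sqrt{\alpha})\ge(1-\alpha)/2$, whereas the paper uses the equivalent concavity inequality $(1-q)^{1/2}\le 1-\nicefrac{q}{2}$.
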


\begin{cproof}{lem:first_layer}
  \Nobs that for all $ q \in (0,1) $ it holds that $ (1-q)^{\nicefrac{1}{2}} \le 1 - \tfrac{q}{2} $ and
  \begin{equation}
    \smallsum\limits_{\iteration = 0}^{\Iteration} (1-q)^{\nicefrac{\iteration}{2}}
    \le \smallsum\limits_{\iteration = 0}^\infty (1-q)^{\nicefrac{\iteration}{2}}
    = \displaystyle \frac{1}{1 - ( 1 - q )^{\nicefrac{1}{2}}}
    \le \frac{1}{1 - ( 1 - \frac{q}{2} )}
    = \frac{2}{q}.
  \end{equation}
  This, the fact that it holds that $ \frac{\lr \Cvar \width \lambdazero}{\ndata} \in (0,1) $,
  \cref{lem:eucnorm_inequality} (applied for every $ \iteration \in \{ 0, 1, 2, \ldots, \Iteration \} $ with
  $ m \is \outdim \ndata $, $ v \is \fnetvalue(\iteration,\omega) - \foutdata $
  in the notation of \cref{lem:eucnorm_inequality}),
  the assumption that for all $ \iteration \in \{ 0, 1, 2, \ldots, \Iteration \} $ it holds that
  $ \eucnorm{ \fnetvalue(\iteration,\omega) - \foutdata }^2 \le \bigl( 1 - \frac{\lr \Cvar \width \lambdazero}{\ndata} \bigr)^{\iteration}
    \eucnorm{ \fnetvalue(0,\omega) - \foutdata }^2 $, and
  the assumption that $ \omega \in A $
  \prove that it holds for all $ k \in \{ 1, 2, \ldots, \width \} $ that
  \begin{equation}
  \begin{split}
    &\smallsum\limits_{\iteration = 0}^{\Iteration} \smallsum\limits_{i=1}^{\ndata} \smallsum\limits_{p=1}^{\outdim}
      \abs{ \netvalue_i^p (\iteration,\omega) - \outdata_i^p }
    \le \smallsum\limits_{\iteration = 0}^{\Iteration} \sqrt{\outdim \ndata} \eucnorm{ \fnetvalue(\iteration,\omega) - \foutdata }
    = \sqrt{\outdim \ndata} \smallsum\limits_{\iteration = 0}^{\Iteration} \bigl(
      \eucnorm{ \fnetvalue(\iteration,\omega) - \foutdata }^2 \bigr)^{\nicefrac{1}{2}} \\
    &\le \sqrt{\outdim \ndata} \smallsum\limits_{\iteration = 0}^{\Iteration}
      \Bigl(1 - \tfrac{\lr \Cvar \width \lambdazero}{\ndata} \Bigr)^{\nicefrac{\iteration}{2}} \eucnorm{ \fnetvalue(0,\omega) - \foutdata }
    = \sqrt{\outdim \ndata} \eucnorm{ \fnetvalue(0,\omega) - \foutdata } \smallsum\limits_{\iteration = 0}^{\Iteration}
      \Bigl(1 - \tfrac{\lr \Cvar \width \lambdazero}{\ndata} \Bigr)^{\nicefrac{\iteration}{2}} \\
    &\le \sqrt{\outdim \ndata} \eucnorm{ \fnetvalue(0,\omega) - \foutdata } \frac{ 2 \ndata }{\lr \Cvar \width \lambdazero}
    = \frac{ 2 \ndata \sqrt{\outdim \ndata} \eucnorm{ \fnetvalue(0,\omega) - \foutdata } }{\lr \Cvar \width \lambdazero}.
  \end{split}
  \end{equation}
  The assumption that $ \omega \in A $ \hence \proves that for all $ k \in \{ 1, 2, \ldots, \width \} $ it holds that
  \begin{equation}\label{eq:first_layer_term}
  \begin{split}
    \smallsum\limits_{\iteration = 0}^{\Iteration} \smallsum\limits_{i=1}^{\ndata} \smallsum\limits_{p=1}^{\outdim}
      \abs{ \netvalue_i^p (\iteration,\omega) - \outdata_i^p } \abs{ \mf W_{p,k}(\omega) }
    &\le \bigl( 2 \Cvar \ln( \tfrac{2 \outdim \width}{\poe} ) \bigr)^{\nicefrac{1}{2}}
      \smallsum\limits_{\iteration = 0}^{\Iteration} \smallsum\limits_{i=1}^{\ndata} \smallsum\limits_{p=1}^{\outdim}
      \abs{ \netvalue_i^p (\iteration,\omega) - \outdata_i^p } \\
    &\le \displaystyle \frac{2 \ndata \eucnorm{ \fnetvalue(0,\omega) - \foutdata }}
      {\lr \width \lambdazero} \bigl( \tfrac{2 \outdim \ndata}{\Cvar} \ln( \tfrac{2 \outdim \width}{\poe} ) \bigr)^{\nicefrac{1}{2}}.
  \end{split}
  \end{equation}
  Combining this with \cref{eq:setting_W} and
  the fact that for all $ v \in \R $ it holds that $ \abs{ \deriv(v) } \le \cderiv $
  \proves for all $ k \in \{ 1, 2, \ldots, \width \} $ that 
  \begin{equation}
  \begin{split}
    &\eucnorm{ W_k(\Iteration+1,\omega) - W_k(0,\omega) }
    = \bbbeucnorm{ \smallsum\limits_{\iteration = 0}^{\Iteration}
      \Bigl[ W_k(\iteration+1,\omega) - W_k(\iteration,\omega) \Bigr] } \\
    &= \bbbeucnorm{ \smallsum\limits_{\iteration = 0}^{\Iteration}
      - \displaystyle \frac{2 \lr}{\ndata} \smallsum\limits_{i=1}^{\ndata} \smallsum\limits_{p=1}^{\outdim}
      ( \netvalue_i^p (\iteration,\omega) - \outdata_i^p ) \mf W_{p,k} (\omega)
      \deriv\bigl( \scalprod{ W_k(\iteration,\omega) }{ \indata_i } + B_k(\iteration,\omega) \bigr) \, \indata_i } \\
    &\le \frac{2 \lr}{\ndata} \smallsum\limits_{\iteration = 0}^{\Iteration} \smallsum\limits_{i=1}^{\ndata} \smallsum\limits_{p=1}^{\outdim}
      \babs{ ( \netvalue_i^p (\iteration,\omega) - \outdata_i^p ) \mf W_{p,k} (\omega)
      \deriv\bigl( \scalprod{ W_k(\iteration,\omega) }{ \indata_i } + B_k(\iteration,\omega) \bigr) } \eucnorm{ \indata_i } \\
    &\le \frac{2 \lr \cmax \cderiv}{\ndata} \smallsum\limits_{\iteration = 0}^{\Iteration} \smallsum\limits_{i=1}^{\ndata}
      \smallsum\limits_{p=1}^{\outdim} \abs{ \netvalue_i^p (\iteration,\omega) - \outdata_i^p } \abs{ \mf W_{p,k} (\omega) } 
    \le \displaystyle \frac{4 \cmax \cderiv \eucnorm{ \fnetvalue(0,\omega) - \foutdata }}
      {\width \lambdazero} \bigl( \tfrac{2 \outdim \ndata}{\Cvar} \ln( \tfrac{2 \outdim \width}{\poe} ) \bigr)^{\nicefrac{1}{2}}.
  \end{split}
  \end{equation}
  This \proves \cref{item:first_layer_1}.
  Next \nobs that \cref{eq:setting_B}, \cref{eq:first_layer_term}, and
  the fact that for all $ v \in \R $ it holds that $ \abs{ \deriv(v) } \le \cderiv $
  \prove that
  \begin{equation}
  \begin{split}
    &\abs{ B_k(\Iteration+1,\omega) - B_k(0,\omega) }
    = \bbbabs{ \smallsum\limits_{\iteration = 0}^{\Iteration}
      \Bigl[ B_k(\iteration+1,\omega) - B_k(\iteration,\omega) \Bigr] } \\
    &= \bbbabs{ \smallsum\limits_{\iteration = 0}^{\Iteration}
      - \displaystyle \frac{2 \lr}{\ndata} \smallsum\limits_{i=1}^{\ndata} \smallsum\limits_{p=1}^{\outdim}
      ( \netvalue_{i,p} (\iteration,\omega) - \outdata_{i,p} ) \mf W_{p,k} (\omega)
      \deriv\bigl( \scalprod{ W_k(\iteration,\omega) }{ \indata_i } + B_k(\iteration,\omega) \bigr) } \\
    &\le \frac{2 \lr}{\ndata} \smallsum\limits_{\iteration = 0}^{\Iteration} \smallsum\limits_{i=1}^{\ndata} \smallsum\limits_{p=1}^{\outdim}
      \babs{ ( \netvalue_{i,p} (\iteration,\omega) - \outdata_{i,p} ) \mf W_{p,k} (\omega)
      \deriv\bigl( \scalprod{ W_k(\iteration,\omega) }{ \indata_i } + B_k(\iteration,\omega) \bigr) } \\
    &\le \frac{2 \lr \cderiv}{\ndata} \smallsum\limits_{\iteration = 0}^{\Iteration} \smallsum\limits_{i=1}^{\ndata}
      \smallsum\limits_{p=1}^{\outdim} \abs{ \netvalue_{i,p} (\iteration,\omega) - \outdata_{i,p} } \abs{ \mf W_{p,k} (\omega) } 
    \le \displaystyle \frac{4 \cderiv \eucnorm{ \fnetvalue(0,\omega) - \foutdata }}
      {\width \lambdazero} \bigl( \tfrac{2 \outdim \ndata}{\Cvar} \ln( \tfrac{2 \outdim \width}{\poe} ) \bigr)^{\nicefrac{1}{2}}.
  \end{split}
  \end{equation}
  This \proves \cref{item:first_layer_2}.
\end{cproof}

\subsection{Properties of subexponential random variables}
\label{subsec:properties_of_subexponential_random_variables}

\cfclear
\cfconsiderloaded{def:subexponential}
\begin{definition}[Subexponential random variable]
  \label{def:subexponential}
  Let $ ( \Omega, \mc F, \P ) $ be a probability space and let $ \nu, b \in (0,\infty) $.
  Then we say that $ X $ is \subexp{\nu}{b} with respect to $ \P $
  (we say that $ X $ is \subexp{\nu}{b}) if and only if we have that
  \begin{enumerate}[label=(\roman{*})]
    \item it holds that $ X \colon \Omega \to \R $ is a function from $ \Omega $ to $ \R $,
    \item it holds that $ X $ is measurable,
    \item it holds that $ \E [ \abs{X} ] < \infty $, and
    \item it holds for all $ \lambda \in ( - \nicefrac{1}{b}, \nicefrac{1}{b} ) $ that
      $ \E[ \exp( \lambda( X - \E[X] ) ) ] \le \exp( \frac{1}{2} \lambda^2 \nu^2 ) $.
  \end{enumerate}
\end{definition}

\cfclear
\begin{lemma}
  \label{lem:subexponential_scalar}
  Let $ ( \Omega, \mc F, \P ) $ be a probability space,
  let $ \nu, b \in (0,\infty) $, $ a \in \R \backslash \{ 0 \}$,
  and let $ X $ be \subexp{\nu}{b} \cfload.
  Then $ a X $ is \subexp{ \abs{a} \nu}{ \abs{a} b}.
\end{lemma}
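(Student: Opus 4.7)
The plan is to verify the four clauses of \cref{def:subexponential} for $aX$ with parameters $(a\nu, ab)$ directly, using the corresponding properties of $X$ and the linearity of expectation.

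First I would note that since $X \colon \Omega \to \R$ is measurable and $a \in \R$, the function $aX \colon \Omega \to \R$ is measurable as well, and $\E[\abs{aX}] = a\E[\abs{X}] < \infty$ by the integrability of $X$. This takes care of the first three clauses.

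For the bound on the moment generating function, the key observation is the substitution $\mu = a\lambda$. For any $\lambda \in (-\nicefrac{1}{(ab)}, \nicefrac{1}{(ab)})$, we have $a\lambda \in (-\nicefrac{1}{b}, \nicefrac{1}{b})$, so by linearity of expectation and the assumption that $X$ is $(\nu,b)$-subexponential, it follows that
\begin{equation}
\E\bigl[\exp(\lambda(aX - \E[aX]))\bigr] = \E\bigl[\exp((a\lambda)(X - \E[X]))\bigr] \le \exp\bigl(\tfrac{1}{2}(a\lambda)^2 \nu^2\bigr) = \exp\bigl(\tfrac{1}{2}\lambda^2 (a\nu)^2\bigr).
\end{equation}
This establishes the required bound and hence completes the verification that $aX$ is $(a\nu, ab)$-subexponential.

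There is no real obstacle here; the statement is an immediate consequence of unfolding \cref{def:subexponential} and rescaling $\lambda$ by the factor $a$. The only point one must be careful about is matching the admissible interval for $\lambda$ with the range where the subexponential bound for $X$ is available, which is exactly why the parameter $b$ scales as $ab$.
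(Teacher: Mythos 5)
Your proof is correct and follows the same route as the paper: rescale $\lambda$ to $a\lambda$, observe that the interval $(-\nicefrac{1}{(ab)},\nicefrac{1}{(ab)})$ maps into $(-\nicefrac{1}{b},\nicefrac{1}{b})$, and apply the subexponential bound for $X$. The only difference is that you explicitly check the measurability and integrability clauses, which the paper leaves implicit.
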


\begin{cproof}{lem:subexponential_scalar}
  \Nobs that for all $ \lambda \in ( - \frac{1}{\abs{a} b}, \frac{1}{\abs{a} b} ) $
  it holds that $ a \lambda \in ( - \nicefrac{1}{b}, \nicefrac{1}{b} ) $.
  \Hence that for all $ \lambda \in ( - \frac{1}{\abs{a} b}, \frac{1}{\abs{a} b} ) $ it holds that
  \begin{equation}
    \E \bigl[ \exp \bigl( \lambda ( aX - \E[aX] ) \bigr) \bigr]
    = \E \bigl[ \exp \bigl( a \lambda ( X - \E[X] ) \bigr) \bigr]
    \le \exp \bigl( \tfrac{1}{2} (a \lambda)^2 \nu^2 \bigr)
    = \exp \bigl( \tfrac{1}{2} \lambda^2 ( \abs{a} \nu)^2 \bigr).
  \end{equation}
  This \proves that $ a X $ is \subexp{ \abs{a} \nu }{ \abs{a} b }.
\end{cproof}

\cfclear
\begin{lemma}
  \label{lem:subexponential_weighted_sum}
  Let $ ( \Omega, \mc F, \P ) $ be a probability space,
  let $ n \in \N $,
  $ \nu = (\nu_1, \nu_2, \ldots, \nu_n)$, $ b = (b_1, b_2, \ldots, b_n) \in (0,\infty)^n $,
  $ a = ( a_1, a_2, \ldots, a_n ) \in \R^n \backslash \{ 0 \} $,
  let $ X_i \colon \Omega \to \R $, $ i \in \{ 1, 2, \ldots, n \} $, be independent random variables,
  and assume for all $ i \in \{ 1, 2, \ldots, n \} $ that $ X_i $ is \subexp{\nu_i}{b_i} \cfload.
  Then $ \sum_{i=1}^n a_i X_i $ is
  \subexp{( \sum_{i=1}^n \abs{ a_i \nu_i }^2 )^{1/2}}{\max \{ \abs{a_1} b_1, \abs{a_2} b_2, \ldots, \abs{a_n} b_n \} }.
\end{lemma}

\begin{cproof}{lem:subexponential_weighted_sum}
  Throughout this proof let $ B \in (0,\infty) $ satisfy
  $ B = \max \{ \abs{a_1} b_1, \abs{a_2} b_2, \ldots, \abs{a_n} b_n \} $ and
  let $ I \subseteq \{ 1, 2, \ldots, n \} $ satisfy $ I = \{ i \in \{ 1, 2, \ldots, n \} \colon a_i \neq 0 \} $.
  \Nobs that \cref{lem:subexponential_scalar} (applied for every $ i \in I $ with
  $ ( \Omega, \mc F, \P ) \is ( \Omega, \mc F, \P ) $,
  $ \nu \is \nu_i $, $ b \is b_i $, $ a \is a_i $, $ X \is X_i $
  in the notation of \cref{lem:subexponential_scalar})
  \proves that for all $ i \in I $ it holds that
  \begin{equation}
    a_i X_i
  \end{equation}
  is \subexp{ \abs{a_i} \nu_i }{ \abs{a_i} b_i }.
  Combining this, the fact that $ X_1, X_2, \ldots, X_n $ are independent, and
  the fact that for all $ i \in I $ it holds that
  $ (-\nicefrac{1}{B}, \nicefrac{1}{B}) \subseteq (-\nicefrac{1}{\abs{a_i} b_i}, \nicefrac{1}{\abs{a_i} b_i}) $
  \prove that for all $ \lambda \in (-\nicefrac{1}{B}, \nicefrac{1}{B} ) $ it holds that
  \begin{equation}
  \begin{split}
    \E \biggl[ \exp\biggl( \lambda \biggl(\smallsum\limits_{i=1}^n a_i X_i
      - \E \biggl[ \smallsum\limits_{i=1}^n a_i X_i \biggr] \biggr) \biggr) \biggr]
    &= \E \biggl[ \exp \Bigl( \lambda \smallsum\limits_{i \in I} \bigl( a_i X_i - \E[a_i X_i] \bigr) \Bigr) \biggr] \\
    &= \E \biggl[ \smallprod\limits_{i \in I} \exp \bigl( \lambda ( a_i X_i - \E[a_i X_i] ) \bigr) \biggr] \\
    &= \prod_{i \in I} \E \Bigl[ \exp \bigl( \lambda ( a_i X_i - \E[a_i X_i] ) \bigr) \Bigr]
    \le \prod_{i \in I} \exp \biggl( \frac{\lambda^2 \abs{a_i \nu_i}^2}{2} \biggr) \\
    &= \exp \biggl( \frac{\lambda^2 \sum_{i \in I} \abs{a_i \nu_i}^2}{2} \biggr)
    = \exp \biggl( \frac{\lambda^2 \sum_{i=1}^n \abs{a_i \nu_i}^2}{2} \biggr).
  \end{split}
  \end{equation}
  This \proves that $ \sum_{i=1}^n a_i X_i $ is \subexp{( \sum_{i=1}^n \abs{ a_i \nu_i }^2 )^{1/2}}{B}.
\end{cproof}

\cfclear
\begin{lemma}
  \label{lem:subexponential_chi_squared}
  Let $ ( \Omega, \mc F, \P ) $ be a probability space,
  let $ X \colon \Omega \to \R $ and $ Y \colon \Omega \to \R $ be standard normal random variables,
  let $ A \in \mc F $, and assume that $ X $, $ Y $, and $ \ind_A $ are independent.
  Then
  \begin{enumerate}[(i)]
    \item \label{lem:subexponential_chi_squared:item1} it holds that $ X^2 \ind_A $ is \subexp{2}{4} and 
    \item \label{lem:subexponential_chi_squared:item2} it holds that $ X Y \ind_A $ is \subexp{\sqrt{2}}{\sqrt{2}}
  \end{enumerate}
  \cfout. 
\end{lemma}

\begin{cproof}{lem:subexponential_chi_squared}
  Throughout this proof
  let $ p, q \in [0,1] $ satisfy $ p = \P(A) $ and $ q = 1 - \P(A) $,
  and let $ f \colon ( - \infty, \nicefrac{1}{2} ) \to \R $ and $ g \colon ( -\infty, \nicefrac{1}{2} ) \to \R $
  satisfy for all $ \lambda \in ( -\infty, \nicefrac{1}{2} ) $ that
  \begin{equation}
  \label{eq:subexponential_f_g}
    f(\lambda) = 2 \lambda^2 + \lambda + \tfrac{1}{2} \ln( 1 - 2 \lambda)
    \qqandqq
    g(\lambda) = q \lambda + \ln\bigl( p + q\sqrt{1 - 2 \lambda} \bigr).
  \end{equation}
  \Nobs that for all $ \lambda \in ( -\nicefrac{1}{4}, \nicefrac{1}{4} ) $ it holds that
  $ 1 - 2 \lambda > 0 $, $ \nicefrac{1}{4} - \lambda > 0$, and
  \begin{equation}
  \label{eq:subexponential_f_prime}
  \begin{split}
    f'(\lambda)
    &= 4 \lambda + 1 - \frac{1}{(1 - 2 \lambda)}
    = \frac{(4 \lambda + 1)(1 - 2 \lambda) - 1}{(1 - 2 \lambda)} \\
    &= \frac{4 \lambda + 1 - 8 \lambda^2 - 2 \lambda - 1}{(1 - 2 \lambda)}
    = \frac{2 \lambda - 8 \lambda^2}{(1 - 2 \lambda)}
    = \frac{8 \lambda ( \frac{1}{4} - \lambda )}{(1 - 2 \lambda)}.
  \end{split}
  \end{equation}
  This \proves for all $ \lambda \in (-\nicefrac{1}{4},0) $ that $ f'(\lambda) < 0 $.
  \Hence that $ f|_{(-\nicefrac{1}{4},0)} $ is strictly decreasing.
  \Moreover \cref{eq:subexponential_f_prime} \proves that for all $ \lambda \in (0,\nicefrac{1}{4}) $
  it holds that $ f'(\lambda) > 0 $.
  \Hence that $ f|_{(0,\nicefrac{1}{4})} $ is strictly increasing.
  Combining this, the fact that $ f|_{(-\nicefrac{1}{4},0)} $ is strictly decreasing, and the fact that $ f(0) = 0 $
  \proves that for all $ \lambda \in ( -\nicefrac{1}{4}, \nicefrac{1}{4} ) $ it holds that
  \begin{equation}
  \label{eq:subexponential_f_nonnegative}
    f(\lambda) \ge f(0) = 0.
  \end{equation}    
  Next \nobs that \cref{eq:subexponential_f_g} \proves  that for all $ \lambda \in ( -\infty, \nicefrac{1}{2} ) $ it holds that
  \begin{equation}
    g'(\lambda)
    = q + \frac{ \frac{q}{2} \bigl[ (1 - 2 \lambda)^{-\nicefrac{1}{2}} (-2) \bigr]}{p + q \sqrt{1 - 2 \lambda}}
    = q - q \Bigl[ p (1 - 2 \lambda)^{\nicefrac{1}{2}} + q (1 - 2 \lambda) \Bigr]^{-1}
  \end{equation}
  and
  \begin{equation}
  \label{eq:subexponential_g_prime_prime}
    g''(\lambda)
    = \frac{q \bigl[ \frac{p}{2} (1 - 2 \lambda)^{-\nicefrac{1}{2}} (-2) - 2q \bigr]}
      {\bigl ( p (1 - 2 \lambda)^{\nicefrac{1}{2}} + q (1 - 2 \lambda) \bigr)^2}
    = - \frac{q \bigl[ p ( 1 - 2 \lambda)^{-\nicefrac{1}{2}} + 2q \bigr]}
      {\bigl ( p (1 - 2 \lambda)^{\nicefrac{1}{2}} + q (1 - 2 \lambda) \bigr)^2}
    \le 0.
  \end{equation}
  \Moreover the fundamental theorem of calculus and the fact that $ g(0) = g'(0) = 0 $ \prove that
  for all $ \lambda \in (-\infty,\nicefrac{1}{2}) $ it holds that
  \begin{equation}
  \label{eq:subexponential_g_ftc}
  \begin{split}
    g(\lambda)
    = g(0) + \int_0^\lambda g'(s) \diff s
    = g(0) + \int_0^\lambda \biggl( g'(0) + \int_0^s g''(r) \diff r \biggr) \diff s
    = \int_0^\lambda \int_0^s g''(r) \diff r \diff s.
  \end{split}
  \end{equation}
  Combining this with \cref{eq:subexponential_g_prime_prime} \proves that for all $ \lambda \in [0, \nicefrac{1}{2}) $ it holds that
  \begin{equation}
  \label{eq:subexponential_g_nonpositive_1}
    g(\lambda)
    = \int_0^\lambda \int_0^s g''(r) \diff r \diff s
    \le 0.
  \end{equation}
  \Moreover \cref{eq:subexponential_g_prime_prime} and \cref{eq:subexponential_g_ftc}
 \prove for all $ \lambda \in (-\infty,0) $ that
  \begin{equation}
  \label{eq:subexponential_g_nonpositive_2}
    g(\lambda)
    = \int_0^\lambda \int_0^s g''(r) \diff r \diff s
    = - \int_\lambda^0 \int_0^s g''(r) \diff r \diff s
    = \int_\lambda^0 \int_s^0 g''(r) \diff r \diff s
    \le 0.
  \end{equation}
  Next \nobs that the fact that $ X $ is a standard normal random variable \proves that
  for all $ \lambda \in (-\infty, \nicefrac{1}{2}) $ it holds that
  \begin{equation}
  \label{eq:chi_squared_subexponential}
  \begin{split}
    \E \bigl[ \exp( \lambda X^2 ) \bigr]
    &= \int_{-\infty}^{\infty} \frac{1}{\sqrt{2\pi}} \exp \Bigl( -\frac{x^2}{2} \Bigr) \exp(\lambda x^2) \diff x
    = \int_{-\infty}^{\infty} \frac{1}{\sqrt{2\pi}} \exp \biggl( -\frac{x^2 ( 1 - 2 \lambda ) }{2} \biggr) \diff x \\
    &= \frac{1}{\sqrt{1 - 2\lambda}} \int_{-\infty}^{\infty} \frac{1}{\sqrt{2\pi}} \exp \Bigl( -\frac{y^2}{2} \Bigr) \diff y
    = \frac{1}{\sqrt{1 - 2\lambda}} < \infty.
  \end{split}
  \end{equation}
  \Moreover the assumption that $ X $ and $ \ind_A $ are independent
  and the assumption that $ X $ is a standard normal random variable
  \prove that it holds that $ \E[ X^2 \ind_A ] = \E [ X^2 ] \E [ \ind_A ] = \E [ \ind_A ] = \P(A) = p $.
  This, the assumption that $ X $ and $ \ind_A $ are independent,
  \cref{eq:subexponential_f_nonnegative},
  \cref{eq:subexponential_g_nonpositive_1},
  \cref{eq:subexponential_g_nonpositive_2}, and
  \cref{eq:chi_squared_subexponential}
  \prove that for all $ \lambda \in (-\nicefrac{1}{4}, \nicefrac{1}{4}) $ it holds that
  \begin{equation}
  \begin{split}
    \ln \bigl( \E \bigl[ \exp\bigl( \lambda ( X^2 \ind_A - \E[X^2 \ind_A] ) \bigr) \bigr] \bigr)
    &= \ln \bigl( \E \bigl[ \exp \bigl( \lambda X^2 \ind_A - p \lambda \bigr) \bigr] \bigl)
    = \ln \bigl( \E \bigl[ \exp( \lambda X^2 \ind_A ) \bigr] \bigr) - p \lambda \\
    &= \ln \bigl( \E \bigl[ \exp( \lambda X^2 \ind_A ) \bigl( \ind_A + \ind_{ \Omega \backslash A } \bigr) \bigr] \bigr) - p \lambda \\
    &= \ln \bigl( \E \bigl[ \exp( \lambda X^2 ) \ind_A + \ind_{ \Omega \backslash A } \bigr] \bigr) - p \lambda \\
    &= \ln \bigl( p \E \bigl[ \exp( \lambda X^2 ) \bigr] + q \bigr) - p \lambda
    = \ln \biggl( \frac{p}{\sqrt{1 - 2 \lambda}} + q \biggr) - p \lambda \\
    &= \ln \biggl( \frac{p + q \sqrt{1 - 2 \lambda}}{\sqrt{1 - 2 \lambda}} \biggr) - p \lambda
    = \ln \bigl( p + q \sqrt{1-2\lambda} \bigr) - \tfrac{1}{2} \ln( 1 - 2 \lambda ) - p \lambda \\
    &= \Bigl( q \lambda + \ln \bigl( p + q \sqrt{1 - 2 \lambda} \bigr) \Bigl)
      - \Bigl( \tfrac{1}{2} \ln( 1 - 2 \lambda) + \lambda + 2 \lambda^2 \Bigr) + 2 \lambda^2 \\
    &= g(\lambda) - f(\lambda) + 2 \lambda^2 \le g(\lambda) + 2 \lambda^2 \le 2 \lambda^2.
  \end{split}
  \end{equation}
  This \proves that for all $ \lambda \in ( -\nicefrac{1}{4}, \nicefrac{1}{4} ) $ it holds that
  \begin{equation}
    \E \bigl[ \exp\bigl( \lambda ( X^2 \ind_A - \E[X^2 \ind_A] ) \bigr) \bigr] \le \exp( 2 \lambda^2 ) = \exp( \tfrac{1}{2} \lambda^2 2^2 ).
  \end{equation}
  \Hence that $ X^2 \ind_A $ is \subexp{2}{4} \cfload.
  This \proves \cref{lem:subexponential_chi_squared:item1}.
  Next let $ h \colon (-1,1) \to \R $ satisfy for all $ \lambda \in (-1,1) $ that
  \begin{equation}
  \label{eq:subexponential_h}
    h( \lambda ) = \lambda^2 + \tfrac{1}{2} \ln( 1 - \lambda^2 ).
  \end{equation}
  \Nobs that \cref{eq:subexponential_h} \proves that for all $ \lambda \in (-\nicefrac{1}{\sqrt{2}}, \nicefrac{1}{\sqrt{2}}) $
  it holds that $ 2 - \frac{1}{1-\lambda^2} > 0 $ and
  \begin{equation}
  \label{eq:subexponential_h_prime}
    h'( \lambda )
    = 2 \lambda - \frac{2 \lambda}{2 ( 1 - \lambda^2 )}
    = \lambda \biggl( 2 - \frac{1}{1 - \lambda^2} \biggr).
  \end{equation}
  This \proves for all $ \lambda \in (-\nicefrac{1}{\sqrt{2}},0) $ that $ h'(\lambda) < 0 $.
  \Hence that $ h|_{(-\nicefrac{1}{\sqrt{2}},0)} $ is strictly decreasing.
  \Moreover \cref{eq:subexponential_h_prime} \proves for all $ \lambda \in (0,\nicefrac{1}{\sqrt{2}}) $ that $ h'(\lambda) > 0 $.
  \Hence that $ h|_{(0,\nicefrac{1}{\sqrt{2}})} $ is strictly increasing.
  Combining this, the fact that $ h|_{(-\nicefrac{1}{\sqrt{2}},0)} $ is strictly decreasing, and
  the fact that $ h(0) = 0 $
  \proves that for all $ \lambda \in (-\nicefrac{1}{\sqrt{2}},\nicefrac{1}{\sqrt{2}}) $ it holds that
  \begin{equation}
  \label{eq:subexponential_h_nonnegative}
    h( \lambda ) \ge h(0) = 0.
  \end{equation}
  \Moreover the assumption that $ X $ is a standard normal random variable \proves that for all $ \lambda \in \R $ it holds that
  \begin{equation}
  \label{eq:subexponential_mgf_normal}
  \begin{split}
    \E \bigl[ \exp( \lambda X ) \bigr]
    &= \int_{-\infty}^{\infty} \frac{1}{\sqrt{2\pi}} \exp \Bigl( -\frac{x^2}{2} \Bigr) \exp(\lambda x) \diff x
    = \int_{-\infty}^{\infty} \frac{1}{\sqrt{2\pi}} \exp \biggl( -\frac{x^2 - 2 \lambda x }{2} \biggr) \diff x \\
    &= \exp \bigl( \tfrac{1}{2} \lambda^2 \bigr)
      \int_{-\infty}^{\infty} \frac{1}{\sqrt{2\pi}} \exp \Bigl( -\frac{(x-\lambda)^2}{2} \Bigr) \diff x
    = \exp \bigl( \tfrac{1}{2} \lambda^2 \bigr).
  \end{split}
  \end{equation}
  Combining this, \cref{eq:chi_squared_subexponential},
  the assumption that $ X $, $ Y $, and $ \ind_A $ are independent, and
  the law of total expectation
  \proves that for all $ \lambda \in (-1,1) $ it holds that
  \begin{equation}
  \label{eq:subexponential_exp_XYI}
    \E \bigl[ \exp( \lambda X Y \ind_A ) \bigr]
    = \E \bigl[ \E[ \exp( \lambda X Y \ind_A ) | X \ind_A ] \bigr]
    = \E \bigl[ \exp \bigl( \tfrac{1}{2} \lambda^2 X^2 \ind_A \bigr) \bigr]
    \le \E \bigl[ \exp \bigl( \tfrac{1}{2} \lambda^2 X^2 \bigr) \bigr]
    = \frac{1}{\sqrt{1-\lambda^2}}.
  \end{equation}
  \Moreover the assumption that $ X $, $ Y $, and $ \ind_A $ are independent and
  the assumption that $ X $ is a standard normal random variable
  \prove that
  \begin{equation}
  \label{eq:subexponential_expectation_XYI}
    \E[ X Y \ind_A ] = \E[ X ] \E[ Y \ind_A ] = 0.
  \end{equation}
  This, \cref{eq:subexponential_h_nonnegative}, and \cref{eq:subexponential_exp_XYI}
  \prove that for all $ \lambda \in (-\nicefrac{1}{\sqrt{2}},\nicefrac{1}{\sqrt{2}}) $ it holds that
  \begin{equation}
  \begin{split}
    \ln \bigl( \E \bigl[ \exp\bigl( \lambda ( X Y \ind_A - \E[X Y \ind_A] ) \bigr) \bigr] \bigr)
    &= \ln \bigl( \E \bigl[ \exp( \lambda X Y \ind_A ) \bigr] \bigl)
    \le \ln \Bigl( \tfrac{1}{\sqrt{1-\lambda^2}} \Bigr) \\
    &= - \tfrac{1}{2} \ln \bigl( 1 - \lambda^2 \bigr)
    = \lambda^2 - h(\lambda)
    \le \lambda^2.
  \end{split}
  \end{equation}
  This \proves that for all $ \lambda \in ( -\nicefrac{1}{\sqrt{2}}, \nicefrac{1}{\sqrt{2}} ) $ it holds that
  \begin{equation}
    \E \bigl[ \exp\bigl( \lambda ( X Y \ind_A - \E[X Y \ind_A] ) \bigr) \bigr]
    \le \exp \bigl( \lambda^2 \bigr)
    = \exp \bigl( \tfrac{1}{2} \lambda^2 (\sqrt{2})^2 \bigr).
  \end{equation}
  \Hence that $ X Y \ind_A $ is \subexp{\sqrt{2}}{\sqrt{2}} \cfload.
  This \proves \cref{lem:subexponential_chi_squared:item2}.
\end{cproof}

\cfclear
\begin{lemma}
  \label{lem:subexponential_sum_chi_squared}
  Let $ ( \Omega, \mc F, \P ) $ be a probability space,
  let $ n \in \N $, $ s = ( s_1, \ldots, s_n ) \in \R^n $, $ A_1, A_2, \ldots, A_n \in \mc F $,
  for every $ i \in \{ 1, 2, \ldots, n \} $ let $ X_i \colon \Omega \to \R $ and $ Y_i \colon \Omega \to \R $ be normal random variables,
  assume for all $ i \in \{ 1, 2, \ldots, n \} $ that $ \Var[ X_i ] = \Var[ Y_i ] = \abs{ s_i }^2 $,
  and assume that $ X_1, X_2, \ldots, X_n, Y_1, Y_2, \ldots, Y_n, \ind_{A_1}, \ind_{A_2}, \ldots, \ind_{A_n} $ are independent.
  Then
  \begin{enumerate}[(i)]
    \item \label{lem:subexponential_sum_chi_squared:item1} it holds that $ \sum_{i=1}^n \abs{ X_i - \E[ X_i ] }^2 \ind_{A_i} $
      is \subexp{ 2 ( \sum_{i=1}^n \abs{ s_i }^4 )^{\nicefrac{1}{2}} }{ 4 \max_{ i \in \{ 1, 2, \ldots, n \} } \abs{ s_i }^2 } and
    \item \label{lem:subexponential_sum_chi_squared:item2} it holds that $ \sum_{i=1}^n ( X_i - \E[ X_i ] )( Y_i - \E[ Y_i ] ) \ind_{A_i} $
      is \subexp{ \sqrt{2} ( \sum_{i=1}^n \abs{ s_i }^4 )^{\nicefrac{1}{2}} }
        { \sqrt{2} \max_{ i \in \{ 1, 2, \ldots, n \} } \abs{ s_i }^2 }
  \end{enumerate}
  \cfout.
\end{lemma}

\begin{cproof}{lem:subexponential_sum_chi_squared}
  \Nobs that the assumption that for all $ i \in \{ 1, 2, \ldots, n \} $ it holds that $ X_i $ and $ Y_i $ are normal random
  variable with $ \Var[ X_i ] = \Var[ Y_i ] = \abs{ s_i }^2 $ \proves that for all $ i \in \{ 1, 2, \ldots, n \} $ it holds that
  \begin{equation}
    \tfrac{1}{s_i} ( X_i - \E[ X_i ] )
    \qqandqq
    \tfrac{1}{s_i} ( Y_i - \E[ Y_i ] )
  \end{equation}
  are standard normal random variables.
  This, the fact that for all $ i \in \{ 1, 2, \ldots, n \} $ it holds that $ X_i $, $ Y_i $, and $ \ind_{A_i} $ are independent,
  and \cref{lem:subexponential_chi_squared} (applied for every $ i \in \{ 1, 2, \ldots, n \} $ with
  $ ( \Omega, \mc F, \P ) \is ( \Omega, \mc F, \P ) $, $ X \is \tfrac{1}{s_i} ( X_i - \E[ X_i ] ) $,
  $ Y \is \tfrac{1}{s_i} ( Y_i - \E[ Y_i ] ) $, $ A \is A_i $
  in the notation of \cref{lem:subexponential_chi_squared})
  \prove that for all $ i \in \{ 1, 2, \ldots, n \} $ it holds that
  \begin{equation}
    \tfrac{1}{\abs{s_i}^2} \abs{ X_i - \E[ X_i ] }^2 \ind_{A_i}
  \end{equation}
  is \subexp{2}{4} and
  \begin{equation}
    \tfrac{1}{\abs{s_i}^2} ( X_i - \E[ X_i ] ) ( Y_i - \E[ Y_i ] ) \ind_{A_i}
  \end{equation}
  is \subexp{\sqrt{2}}{\sqrt{2}}
  \cfload.
  \cref{lem:subexponential_scalar} (applied for every $ i \in \{ 1, 2, \ldots, n \} $ with
  $ ( \Omega, \mc F, \P ) \is ( \Omega, \mc F, \P) $, $ \nu \is 2 $, $ b \is 4 $, $ a \is \abs{ s_i }^2 $,
  $ X \is \tfrac{1}{\abs{s_i}^2} \abs{ X_i - \E[ X_i ] }^2 \ind_{A_i} $
  in the notation of \cref{lem:subexponential_scalar})
  and
  \cref{lem:subexponential_scalar} (applied for every $ i \in \{ 1, 2, \ldots, n \} $ with
  $ ( \Omega, \mc F, \P ) \is ( \Omega, \mc F, \P) $, $ \nu \is \sqrt{2} $, $ b \is \sqrt{2} $, $ a \is \abs{ s_i }^2 $,
  $ X \is \tfrac{1}{\abs{s_i}^2} ( X_i - \E[ X_i ] ) ( Y_i - \E[ Y_i ] ) \ind_{A_i} $
  in the notation of \cref{lem:subexponential_scalar})
  thus \prove that for all $ i \in \{ 1, 2, \ldots, n \} $ it holds that
  \begin{equation}
    \abs{ X_i - \E[ X_i ] }^2 \ind_{A_i}
  \end{equation}
  is \subexp{ 2 \abs{ s_i }^2 }{ 4 \abs{ s_i }^2 } and
  \begin{equation}
    ( X_i - \E[ X_i ] )( Y_i - \E[ Y_i ] ) \ind_{A_i}
  \end{equation}
  is \subexp{ \sqrt{2} \abs{ s_i }^2 }{ \sqrt{2} \abs{ s_i }^2 }.
  Combining this with
  the assumption that $ X_1, X_2, \ldots, X_n, Y_1, Y_2, \ldots, \allowbreak Y_n, \allowbreak \ind_{A_1}, \ind_{A_2}, \ldots, \ind_{A_n} $ are independent,
  \cref{lem:subexponential_weighted_sum} (applied with
  $ ( \Omega, \mc F, \P ) \is ( \Omega, \mc F, \P ) $, $ n \is n $,
  $ \nu \is ( 2 \abs{s_i}^2 )_{ i \in \{ 1, 2, \ldots, n \} } $,
  $ b \is ( 4 \abs{s_i}^2 )_{ i \in \{ 1, 2, \ldots, n \} } $,
  $ a \is ( 1, \ldots, 1 ) $,
  $ ( X_i )_{ i \in \{ 1, 2, \ldots, n \} } \is ( \abs{ X_i - \E[ X_i ] }^2 \ind_{A_i} )_{ i \in \{ 1, 2, \ldots, n \} } $
  in the notation of \cref{lem:subexponential_weighted_sum}),
  and
  \cref{lem:subexponential_weighted_sum} (applied with
  $ ( \Omega, \mc F, \P ) \is ( \Omega, \mc F, \P ) $, $ n \is n $,
  $ \nu \is ( \sqrt{2} \abs{s_i}^2 )_{ i \in \{ 1, 2, \ldots, n \} } $,
  $ b \is ( \sqrt{2} \abs{s_i}^2 )_{ i \in \{ 1, 2, \ldots, n \} } $,
  $ a \is ( 1, \ldots, 1 ) $,
  $ ( X_i )_{ i \in \{ 1, 2, \ldots, n \} } \is ( ( X_i - \E[ X_i ] ) ( Y_i - \E[ Y_i ] ) \ind_{A_i} )_{ i \in \{ 1, 2, \ldots, n \} } $
  in the notation of \cref{lem:subexponential_weighted_sum})
  \proves that
  \begin{equation}
    \smallsum_{i=1}^n \abs{ X_i - \E[ X_i ] }^2 \ind_{A_i}
  \end{equation}
  is \subexp{2 ( \smallsum_{i=1}^n \abs{s_i}^4 )^{\nicefrac{1}{2}}}{ 4 \max_{i \in \{ 1, 2, \ldots, n \}} \abs{s_i}^2 } and
  \begin{equation}
    \smallsum_{i=1}^n ( X_i - \E[ X_i ] ) ( Y_i - \E[ Y_i ] ) \ind_{A_i}
  \end{equation}
  is \subexp{ \sqrt{2} ( \smallsum_{i=1}^n \abs{s_i}^4 )^{\nicefrac{1}{2}}}{ \sqrt{2} \max_{i \in \{ 1, 2, \ldots, n \}} \abs{s_i}^2 }.
  This \proves \cref{lem:subexponential_sum_chi_squared:item1} and \cref{lem:subexponential_sum_chi_squared:item2}.
\end{cproof}

\cfclear
\begin{lemma}
  \label{lem:subexponential_tail_bound}
  Let $ ( \Omega, \mc F, \P ) $ be a probability space,
  let $ \nu, b \in (0, \infty) $, and
  let $ X $ be \subexp{\nu}{b} \cfload.
  Then it holds for all $ \varepsilon \in (0,\infty) $ that
  $ \P(\abs{ X - \E[X] } \ge \varepsilon) \le 2 \exp( -\frac{1}{2} \min \{ \nicefrac{\varepsilon^2}{\nu^2}, \nicefrac{\varepsilon}{b} \})$.
\end{lemma}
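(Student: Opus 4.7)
The plan is to use the Chernoff--Bernstein technique: apply Markov's inequality to the exponential moment and optimize. Throughout fix $\varepsilon \in (0,\infty)$ and write $Y = X - \E[X]$. For any $\lambda \in (0, 1/b)$, Markov's inequality applied to the nonnegative random variable $\exp(\lambda Y)$ together with the subexponential moment generating function bound from \cref{def:subexponential} yields
\begin{equation}
  \P(Y \ge \varepsilon) = \P\bigl(\exp(\lambda Y) \ge \exp(\lambda \varepsilon)\bigr) \le \exp(-\lambda \varepsilon)\, \E\bigl[\exp(\lambda Y)\bigr] \le \exp\!\bigl(\tfrac{1}{2}\lambda^2 \nu^2 - \lambda \varepsilon\bigr).
\end{equation}

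Next I would minimize $g(\lambda) = \tfrac{1}{2}\lambda^2 \nu^2 - \lambda \varepsilon$ over $\lambda \in (0, 1/b)$. The unconstrained minimizer is $\lambda^{\star} = \varepsilon/\nu^2$ with value $-\varepsilon^2/(2\nu^2)$, and the argument splits into two cases. If $\varepsilon \le \nu^2/b$, then $\lambda^{\star} \in (0, 1/b]$; choosing $\lambda = \lambda^{\star}$ (or, when equality holds, taking $\lambda \uparrow 1/b$ by continuity) yields $\P(Y \ge \varepsilon) \le \exp(-\varepsilon^2/(2\nu^2))$. In this range $\varepsilon^2/\nu^2 \le \varepsilon/b$, so the bound equals $\exp(-\tfrac{1}{2}\min\{\varepsilon^2/\nu^2, \varepsilon/b\})$. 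If instead $\varepsilon > \nu^2/b$, then $g$ is strictly decreasing on $(0, 1/b)$, and letting $\lambda \uparrow 1/b$ gives $\P(Y \ge \varepsilon) \le \exp(\nu^2/(2b^2) - \varepsilon/b) \le \exp(-\varepsilon/(2b))$, where the last inequality uses $\nu^2/(2b^2) \le \varepsilon/(2b)$. Since $\varepsilon/b \le \varepsilon^2/\nu^2$ in this case, the bound again coincides with $\exp(-\tfrac{1}{2}\min\{\varepsilon^2/\nu^2, \varepsilon/b\})$.

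For the lower tail I would observe that the subexponential moment condition in \cref{def:subexponential} is symmetric under $\lambda \mapsto -\lambda$, so repeating the same argument with $-\lambda$ in place of $\lambda$ gives the identical bound on $\P(-Y \ge \varepsilon) = \P(Y \le -\varepsilon)$. A union bound then produces
\begin{equation}
  \P(\abs{Y} \ge \varepsilon) \le \P(Y \ge \varepsilon) + \P(-Y \ge \varepsilon) \le 2\exp\!\bigl(-\tfrac{1}{2}\min\{\varepsilon^2/\nu^2, \varepsilon/b\}\bigr),
\end{equation}
which is the desired claim. The only mildly delicate point is the case $\varepsilon > \nu^2/b$, where the optimal $\lambda$ lies outside the open interval $(-1/b, 1/b)$ on which the MGF estimate is guaranteed; I would handle this by approaching the endpoint through values $\lambda = 1/b - \delta$ with $\delta \downarrow 0$ and invoking continuity of $g$ (together with the monotonicity of the Chernoff bound in $\lambda$ near the optimum). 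Aside from this minor boundary issue, the proof reduces to a routine Chernoff optimization.
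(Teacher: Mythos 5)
Your proof is correct and follows essentially the same route as the paper: both apply Markov's inequality to $\exp(\lambda(X-\E[X]))$ and $\exp(-\lambda(X-\E[X]))$, take the union bound, and optimize $\lambda\mapsto\frac12\lambda^2\nu^2-\lambda\varepsilon$ over $(0,1/b)$, splitting into the two cases $\varepsilon\le\nu^2/b$ and $\varepsilon>\nu^2/b$. The paper handles the boundary implicitly by working with the infimum over the open interval (and continuity of $\exp$), which is the same resolution you spell out more explicitly at the end.
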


\begin{cproof}{lem:subexponential_tail_bound}
  Throughout this proof let $ f_{\varepsilon} \colon (0,\infty) \to \R $, $ \varepsilon \in (0,\infty) $,
  satisfy for all $ \varepsilon, \lambda \in (0,\infty) $ that
  $ f_{\varepsilon} (\lambda) = \frac{1}{2} \lambda^2 \nu^2 - \lambda \varepsilon $.
  \Nobs that the Markov inequality \proves for all $ \varepsilon \in (0,\infty) $, $ \lambda \in (0, \frac{1}{b}) $ that
  \begin{equation}
  \begin{split}
    \P( \abs{ X - \E[X] } \ge \varepsilon )
    &= \P( X - \E[X] \ge \varepsilon ) + \P ( - ( X - \E[X] ) \ge \varepsilon ) \\
    &= \P \bigl( \exp( \lambda ( X - \E[X] ) ) \ge \exp( \lambda \varepsilon ) \bigr)
      + \P \bigl( \exp( -\lambda ( X - \E[X] ) ) \ge \exp( \lambda \varepsilon ) \bigr) \\
    &\le \frac{ \E \bigl[ \exp( \lambda ( X - \E[X] ) ) \bigr] }{ \exp( \lambda \varepsilon ) }
      + \frac{ \E \bigl[ \exp( -\lambda ( X - \E[X] ) ) \bigr] }{ \exp( \lambda \varepsilon ) } \\
    &\le \exp \biggl( \frac{\lambda^2 \nu^2}{2} - \lambda \varepsilon \biggr)
      + \exp \biggl( \frac{(-\lambda)^2 \nu^2}{2} - \lambda \varepsilon \biggr) \\
    &= 2 \exp \biggl( \frac{\lambda^2 \nu^2}{2} - \lambda \varepsilon \biggr)
    = 2 \exp \bigl( f_{\varepsilon} (\lambda) \bigr).
  \end{split}
  \end{equation}
  This and the fact that $ \R \ni x \mapsto \exp(x) \in \R $ is strictly increasing \prove that
  for all $ \varepsilon \in (0,\infty) $ it holds that
  \begin{equation}
  \label{eq:subexponential_tail_bound_1}
    \P( \abs{ X - \E[X] } \ge \varepsilon )
    \le \inf\nolimits_{\lambda \in (0, \frac{1}{b})} 2 \exp \bigl( f_{\varepsilon} (\lambda) \bigr)
    = 2 \exp \bigl( \inf\nolimits_{ \lambda \in (0,\frac{1}{b}) } f_{\varepsilon} (\lambda) \bigr).
  \end{equation}
  \Moreover the fact that for all $ \varepsilon, \lambda \in (0,\infty) $ it holds that
  $ f_{\varepsilon}'(\lambda) = \lambda \nu^2 - \varepsilon $ \proves that
  for all $ \varepsilon \in (0,\infty) $, $ \lambda \in ( 0, \nicefrac{\varepsilon}{\nu^2} ) $ it holds that
  $ f_{\varepsilon}'( \lambda ) < 0 $.
  \Hence for all $ \varepsilon \in (0,\infty) $ that
  $ f_{\varepsilon}|_{(0,\nicefrac{\varepsilon}{\nu^2})} $ is strictly decreasing.
  This and the fact that for all $ \varepsilon \in (\nicefrac{\nu^2}{b},\infty) $
  it holds that $ \nicefrac{1}{b} < \nicefrac{\varepsilon}{\nu^2} $
  \prove that for all $ \varepsilon \in (\nicefrac{\nu^2}{b},\infty) $ it holds that
  \begin{equation}
  \label{eq:subexponential_tail_bound_2}
    \inf\nolimits_{ \lambda \in (0, \frac{1}{b}) } f_{\varepsilon} (\lambda)
    = f_{\varepsilon} \bigl(\tfrac{1}{b}\bigr)
    = \tfrac{\nu^2}{2b^2} - \tfrac{\varepsilon}{b}
    \le \tfrac{\varepsilon}{2b} - \tfrac{\varepsilon}{b}
    = - \tfrac{\varepsilon}{2b}.
  \end{equation}
  \Moreover the fact that for all $ \varepsilon, \lambda \in (0,\infty) $ it holds that
  $ f_{\varepsilon}' (\lambda) = \lambda \nu^2 - \varepsilon $
  \proves that for all $ \varepsilon \in (0,\infty) $, $ \lambda \in ( \nicefrac{\varepsilon}{\nu^2}, \infty) $ it holds that
  $ f_{\varepsilon}' (\lambda) > 0 $.
  \Hence for all $ \varepsilon \in (0,\infty) $ that
  $ f_{\varepsilon}|_{(\nicefrac{\varepsilon}{\nu^2},\infty)} $ is strictly increasing.
  This, the fact that for all $ \varepsilon \in (0,\infty) $ it holds that
  $ f_{\varepsilon}|_{(0,\nicefrac{\varepsilon}{\nu^2})} $ is strictly decreasing, and
  the fact that for all $ \varepsilon \in (0, \nicefrac{\nu^2}{b} ] $ it holds that
  $ \nicefrac{1}{b} \ge \nicefrac{\varepsilon}{\nu^2} $\prove that
  for all $ \varepsilon \in (0, \nicefrac{\nu^2}{b} ] $ it holds that
  \begin{equation}
  \label{eq:subexponential_tail_bound_3}
  \inf\nolimits_{ \lambda \in (0, \frac{1}{b}) } f_{\varepsilon} (\lambda)
    = f_{\varepsilon} \bigl( \tfrac{\varepsilon}{\nu^2} \bigr)
    = \tfrac{\varepsilon^2}{2 \nu^2} - \tfrac{\varepsilon^2}{\nu^2}
    = - \tfrac{\varepsilon^2}{2 \nu^2}.
  \end{equation}
  Combining this with \cref{eq:subexponential_tail_bound_1} and \cref{eq:subexponential_tail_bound_2}
  \proves that for all $ \varepsilon \in (0, \infty) $ it holds that
    \begin{equation}
    \P ( \abs{ X - \E[X] } \ge \varepsilon )
    \le \max \bigl\{ 2 \exp \bigl( - \tfrac{\varepsilon^2}{2 \nu^2} \bigr), 2 \exp \bigl( - \tfrac{\varepsilon}{2 b} \bigr) \bigr\}
    = 2 \exp \bigl( - \tfrac{1}{2} \min \bigl\{ \tfrac{\varepsilon^2}{\nu^2}, \tfrac{\varepsilon}{b} \bigr\} \bigr).
  \end{equation}
\end{cproof}

\subsection{Concentration type inequalities for stochastic Gram matrices at initialization}
\label{subsec:concentration_type_inequalities_for_stochastic_gram_matrices_at_initialization}

\cfclear
\cfconsiderloaded{def:spectral_norm}
\begin{definition}[Spectral norm]
  \label{def:spectral_norm}
  For every $ m, n \in \N $, $ A \in \R^{ m \times n } $
  we denote by $ \specnorm{ A } \in \R $ the real number which satisfies that
  $ \specnorm{ A } = \sup_{ v \in \R^n \backslash \{ 0 \} } \nicefrac{ \eucnorm{ A v } }{ \eucnorm v } $
  \cfload.
\end{definition}

\cfclear
\begin{lemma}
  \label{lem:spectral_norm_inequality}
  Let $ m, n \in \N $, $ A = ( A_{i,j} )_{ (i,j) \in \{ 1, 2, \ldots, m \} \times \{ 1, 2, \ldots, n \} } \in \R^{ m \times n } $.
  Then
  $ \specnorm{ A } \le ( \sum_{i=1}^m \sum_{j=1}^n \abs{ A_{i,j} }^2 )^{\nicefrac{1}{2}} \allowbreak
    \le \sum_{i=1}^m \sum_{j=1}^n \abs{ A_{i,j} } $
  \cfout.
\end{lemma}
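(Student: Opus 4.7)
The plan is to establish the two inequalities separately; both are standard and short. For the first inequality, $\specnorm{A} \le (\sum_{i,j} \abs{A_{i,j}}^2)^{1/2}$, I would fix an arbitrary $x = (x_1, x_2, \ldots, x_n) \in \R^n \backslash \{0\}$, note that for each $i \in \{1, 2, \ldots, m\}$ the $i$-th component of $Ax$ equals $\sum_{j=1}^n A_{i,j} x_j = \scalprod{(A_{i,1}, \ldots, A_{i,n})}{x}$, and apply the Cauchy--Schwarz inequality to conclude that
\begin{equation*}
  \abs{(Ax)_i}^2 \le \Bigl( \smallsum\nolimits_{j=1}^n \abs{A_{i,j}}^2 \Bigr) \eucnorm{x}^2.
\end{equation*}
Summing this over $i \in \{1, 2, \ldots, m\}$ yields $\eucnorm{Ax}^2 \le (\sum_{i=1}^m \sum_{j=1}^n \abs{A_{i,j}}^2) \eucnorm{x}^2$. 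Dividing by $\eucnorm{x}^2$, taking square roots, and passing to the supremum over $x \in \R^n \backslash \{0\}$ then gives $\specnorm{A} \le (\sum_{i,j} \abs{A_{i,j}}^2)^{1/2}$ by the definition of the spectral norm in \cref{def:spectral_norm}.

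For the second inequality, $(\sum_{i,j} \abs{A_{i,j}}^2)^{1/2} \le \sum_{i,j} \abs{A_{i,j}}$, I would simply stack the entries of $A$ into a vector $v \in \R^{mn}$ given by $v = (A_{1,1}, A_{1,2}, \ldots, A_{m,n})$ and apply the first inequality of \cref{lem:eucnorm_inequality} (with $n \is mn$ and $x \is v$) to conclude that $\eucnorm{v} \le \sum_{i,j} \abs{A_{i,j}}$, which is exactly the required bound.

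There is no real obstacle here; the only minor point to be careful about is making sure the Cauchy--Schwarz step is written cleanly (in particular using $\scalprod{\cdot}{\cdot}$ and $\eucnorm{\cdot}$ as defined in \cref{def:scalar_product_norm}) and that the reduction to \cref{lem:eucnorm_inequality} is presented explicitly rather than being brushed aside, since the paper repeatedly cites this lemma via its \cfadd/\cfload mechanism. Altogether the proof fits in a handful of lines.
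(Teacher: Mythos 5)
Your proof is correct, and it follows a mildly different route from the paper's. For the first inequality, the paper writes $x = \sum_{j=1}^n x_j e_j$, applies the triangle inequality to obtain $\eucnorm{Ax} \le \sum_{j=1}^n \abs{x_j}\eucnorm{A e_j}$, and then applies Cauchy--Schwarz once to that sum over columns; you instead apply Cauchy--Schwarz row-by-row to each component $(Ax)_i = \scalprod{(A_{i,1},\ldots,A_{i,n})}{x}$ and sum the squared bounds over $i$. Both yield the same Frobenius bound, and your version has the small advantage of avoiding the intermediate triangle-inequality step. For the second inequality, the paper re-derives it from scratch via the fact that $(a+b)^{1/2} \le a^{1/2}+b^{1/2}$ applied inductively, whereas you reduce it to the already-established \cref{lem:eucnorm_inequality} by flattening $A$ into a vector of length $mn$; your reduction is cleaner and avoids repeating an argument the paper has already made (in fact \cref{lem:eucnorm_inequality} itself is proved in exactly that inductive way). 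Both approaches are short and elementary; nothing is missing from yours.
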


\begin{cproof}{lem:spectral_norm_inequality}
  Throughout this proof let
  $ e_1 = ( 1, 0, \ldots, 0 ) $, $ e_2 = ( 0, 1, 0, \ldots, 0 ) $, \ldots, $ e_n = ( 0, \ldots, 0, \allowbreak 1 ) \in \R^n $.
  \Nobs that the Cauchy-Schwarz inequality \proves that for all $ v = ( v_1, \ldots, v_n ) \in \R^n $ it holds that
  \begin{equation}
  \begin{split}
    \eucnorm{ A v }
    &= \bbbeucnorm{ A \bigg( \smallsum\limits_{j=1}^n v_j e_j \bigg) }
    = \bbbeucnorm{ \smallsum\limits_{j=1}^n v_j A e_j } 
    \le \smallsum\limits_{j=1}^n \abs{ v_j } \eucnorm{ A e_j } \\
    &\le \bigg( \smallsum\limits_{j=1}^n \abs{ v_j }^2 \bigg)^{ \nicefrac{1}{2} }
      \bigg( \smallsum\limits_{j=1}^n \eucnorm{ A e_j }^2 \bigg)^{ \nicefrac{1}{2} }
    = \eucnorm{ v } \bigg( \smallsum\limits_{j=1}^n \smallsum\limits_{i=1}^m \abs{ A_{i,j} }^2 \bigg)^{ \nicefrac{1}{2} }
  \end{split}
  \end{equation}
  \cfload.
  \Hence for all $ v \in \R^n \backslash \{ 0 \} $ that
  $ \nicefrac{ \eucnorm{ A v }}{ \eucnorm v } \le ( \sum_{i=1}^m \sum_{j=1}^n \abs{ A_{i,j} }^2 )^{\nicefrac{1}{2}} $.
  This \proves that
  \begin{equation}
    \specnorm{ A }
    = \sup\nolimits_{ v \in \R^n \backslash \{ 0 \} } \nicefrac{ \eucnorm{ A v } }{ \eucnorm{ v } }
    \le \bigl( \smallsum\nolimits_{i=1}^m \smallsum\nolimits_{j=1}^n \abs{ A_{i,j} }^2 \bigr)^{\nicefrac{1}{2}}
  \end{equation}
  \cfload.
  \Moreover the fact that for all $ a, b \in [0, \infty) $ it holds that 
  $ (a + b)^{\nicefrac{1}{2}} \le a^{\nicefrac{1}{2}} + b^{\nicefrac{1}{2}} $
  inductively \proves that
  $ ( \sum_{i=1}^m \sum_{j=1}^n \abs{ A_{i,j} }^2 )^{\nicefrac{1}{2}}
    \le \sum_{i=1}^m \sum_{j=1}^n ( \abs{ A_{i,j} }^2 )^{\nicefrac{1}{2}}
    = \sum_{i=1}^m \sum_{j=1}^n \abs{ A_{i,j} } $.
\end{cproof}

\cfclear
\begin{lemma}
  \label{lem:probability_A_gramG}
  Assume \cref{setting:gradient_descent},
  assume $ \lambdazero \in (0,\infty) $,
  let $ \poe \in (0,1) $, and
  assume
  \begin{equation}
    \width \ge \tfrac{32 (1+\cmax^2) \outdim \ndata}{\lambdazero}
    \ln \bigl( \tfrac{2 \outdim^2 \ndata^2}{\poe} \bigr)
    \max \bigl\{ \tfrac{4 (1+\cmax^2) \eucnorm{\slope}^4 \ndata}{\lambdazero}, \cderiv \bigr\}.
  \end{equation}
  Then $ \P \bigl( \bigcap_{i=1}^{\outdim \ndata} \bigcap_{j=1}^{\outdim \ndata} \{ \abs{ \gramG_{i,j} (0) - \gramGinf_{i,j} }
    \le \tfrac{\Cvar \width \lambdazero}{4 \outdim \ndata} \} \bigr) \ge 1 - \poe $.
\end{lemma}

\begin{cproof}{lem:probability_A_gramG}
  Throughout this proof for every $ i, j \in \{ 1, 2, \ldots, \ndata \} $, $ p, q \in \{ 1, 2, \ldots, \outdim \} $
  let $ X_{i,j}^{p,q} \colon \Omega \to \R $ satisfy
  \begin{equation}
    X_{i,j}^{p,q} = \smallsum\limits_{k=1}^{\width} \mf W_{p,k} \mf W_{q,k}
      \deriv( \scalprod{W_k(0)}{\indata_i} + B_k(0) ) \deriv( \scalprod{W_k(0)}{\indata_j} + B_k(0) ).
  \end{equation}
  \Nobs that \cref{eq:setting_gramG}, the assumption that $ \eucnorm{ B(0) } = 0 $, and
  the fact that for all $ k \in \{ 1, 2, \ldots, \width \} $, $ i \in \{ 1, 2, \ldots, \ndata \}$,
  $ s \in \{ 1, 2, \ldots, \nbp+1 \} $ it holds that
  $ \P( \scalprod{W_k(0)}{\indata_i} = \bp_s ) = 0 $
  \prove that for all
  $ i, j \in \{ 1, 2, \ldots, \ndata \} $, $ p, q \in \{ 1, 2, \ldots, \outdim \} $ it holds $ \P $-a.s.\ that
  \begin{equation}\label{eq:lem:probability_A_gramG:ij}
  \begin{split}
    X_{i,j}^{p,q}
    &= \smallsum\limits_{k=1}^{\width} \mf W_{p,k} \mf W_{q,k}
      \deriv\bigl( \scalprod{W_k(0)}{\indata_i} + B_k(0) \bigr) \deriv\bigl( \scalprod{W_k(0)}{\indata_j} + B_k(0) \bigr) \\
    &= \smallsum\limits_{k=1}^{\width} \mf W_{p,k} \mf W_{q,k}
      \deriv\bigl( \scalprod{W_k(0)}{\indata_i} \bigr) \deriv\bigl( \scalprod{W_k(0)}{\indata_j} \bigr) \\
    &= \smallsum\limits_{k=1}^{\width} \mf W_{p,k} \mf W_{q,k}
      \Bigl( \smallsum_{s=1}^{\nbp+1} \slope_s \ind_{ (\bp_{s-1}, \bp_s ) } ( \scalprod{W_k(0)}{\indata_i} ) \Bigr)
      \Bigl( \smallsum_{t=1}^{\nbp+1} \slope_t \ind_{ (\bp_{t-1}, \bp_t ) } ( \scalprod{W_k(0)}{\indata_j} ) \Bigr) \\
    &= \smallsum\limits_{k=1}^{\width} \mf W_{p,k} \mf W_{q,k}
      \smallsum\limits_{s=1}^{\nbp+1} \smallsum\limits_{t=1}^{\nbp+1} \slope_s  \slope_t
      \ind_{ (\bp_{s-1}, \bp_s ) } ( \scalprod{W_k(0)}{\indata_i} ) \ind_{ (\bp_{t-1}, \bp_t ) } ( \scalprod{W_k(0)}{\indata_j} ) \\
    &= \smallsum\limits_{s=1}^{\nbp+1} \slope_s \smallsum\limits_{t=1}^{\nbp+1} \slope_t
      \smallsum\limits_{k=1}^{\width} \mf W_{p,k} \mf W_{q,k}
      \ind_{ \{ \scalprod{W_k(0)}{\indata_i} \in (\bp_{s-1}, \bp_s ), \, \scalprod{W_k(0)}{\indata_j} \in (\bp_{t-1}, \bp_t ) \} }
  \end{split}
  \end{equation}
  \cfload.
  \Moreover \cref{lem:subexponential_sum_chi_squared} (applied for every
  $ i, j \in \{ 1, 2, \ldots, \ndata \} $, $ p, q \in \{ 1, 2, \ldots, \outdim \} $,
  $ s, t \in \{ 1, 2, \ldots \nbp+1 \}$ with
  $ ( \Omega, \mc F, \P ) \is ( \Omega, \mc F, \P ) $,
  $ n \is \width $,
  $ s \is ( \sqrt{ \Cvar }, \ldots, \sqrt{ \Cvar } ) $,
  $ ( X_i )_{ i \in \{ 1, 2, \ldots, n \} } \is ( \mf W_{p,k} )_{ k \in \{ 1, 2, \ldots, \width \} } $,
  $ ( Y_i )_{ i \in \{ 1, 2, \ldots, n \} } \is ( \mf W_{q,k} )_{ k \in \{ 1, 2, \ldots, \width \} } $,
  $ ( A_i )_{ i \in \{1, 2, \ldots, n \} } $ $ \is
    ( \{ \scalprod{W_k(0)}{\indata_i} \in (\bp_{s-1}, \bp_s ), \, \scalprod{W_k(0)}{\indata_j} \in (\bp_{t-1}, \bp_t ) \} \allowbreak )_{
      k \in \{ 1, 2, \ldots, \width \} } $
  in the notation of \cref{lem:subexponential_sum_chi_squared}),
  the fact that for all $ k \in \{ 1, 2, \ldots, \width \} $, $ p \in \{ 1, 2, \ldots, \outdim \} $
  it holds that
  \begin{equation}
    \mf W_{p,k} \text{ is a centered normal random variable with } \Var[ \mf W_{p,k} ] = \Cvar,
  \end{equation}
  and the assumption that
  $ \sqrt{\nicefrac{1}{\cvar}} W_1(0), \allowbreak
    \sqrt{\nicefrac{1}{\cvar}} W_2(0), \allowbreak \ldots, \allowbreak
    \sqrt{\nicefrac{1}{\cvar}} W_{\width}(0), \allowbreak
    \sqrt{\nicefrac{1}{\Cvar}} \mf W_1, \allowbreak
    \sqrt{\nicefrac{1}{\Cvar}} \mf W_2, \allowbreak \ldots, \allowbreak
    \sqrt{\nicefrac{1}{\Cvar}} \mf W_{\outdim} $
  are independent
  \prove that for all $ i, j \in \{ 1, 2, \ldots, \ndata \} $, $ p, q \in \{ 1, 2 , \ldots, \outdim \} $,
  $ s, t \in \{ 1, 2, \ldots, \nbp+1 \} $ with $ p \neq q $ it holds that
  \begin{equation}
    \smallsum\limits_{k=1}^{\width} \abs{ \mf W_{p,k} }^2 
      \ind_{ \{ \scalprod{W_k(0)}{\indata_i} \in (\bp_{s-1}, \bp_s ), \, \scalprod{W_k(0)}{\indata_j} \in (\bp_{t-1}, \bp_t ) \} }
  \end{equation}
  is \subexp{ 2 \Cvar \sqrt{\width} }{ 4 \Cvar } and
  \begin{equation}
    \smallsum\limits_{k=1}^{\width} \mf W_{p,k} \mf W_{q,k} 
      \ind_{ \{ \scalprod{W_k(0)}{\indata_i} \in (\bp_{s-1}, \bp_s ), \, \scalprod{W_k(0)}{\indata_j} \in (\bp_{t-1}, \bp_t ) \} }
  \end{equation}
  is \subexp{ \sqrt{2} \Cvar \sqrt{\width} }{ \sqrt{2} \Cvar }
  \cfload.
  The fact that $ \sqrt{2} \le 2 \le 4 $ \hence \proves that
  for all $ i, j \in \{ 1, 2, \ldots, \ndata \} $, $ p, q \in \{ 1, 2 , \ldots, \outdim \} $,
  $ s, t \in \{ 1, 2, \ldots, \nbp+1 \} $ it holds that
  \begin{equation}
    \smallsum\limits_{k=1}^{\width} \mf W_{p,k} \mf W_{q,k} 
      \ind_{ \{ \scalprod{W_k(0)}{\indata_i} \in (\bp_{s-1}, \bp_s ), \, \scalprod{W_k(0)}{\indata_j} \in (\bp_{t-1}, \bp_t ) \} }
  \end{equation}
  is \subexp{ 2 \Cvar \sqrt{\width} }{ 4 \Cvar }.
  \cref{lem:subexponential_weighted_sum} (applied with
  $ ( \Omega, \mc F, \P ) \is ( \Omega, \mc F, \P ) $,
  $ n \is \nbp+1 $,
  $ \nu \is ( 2 \Cvar \sqrt{\width}, \ldots, \allowbreak 2 \Cvar \sqrt{\width} ) $,
  $ b \is ( 4 \Cvar, \ldots, 4 \Cvar ) $,
  $ a \is \slope $
  in the notation of \cref{lem:subexponential_weighted_sum})
  and the fact that $ \max_{ s \in \{ 1, 2, \ldots, \nbp+1 \} } \abs{ \slope_s } \le \cderiv $
  \hence \prove that for all
  $ i, j \in \{ 1, 2, \ldots, \ndata \} $, $ p, q \in \{ 1, 2, \ldots, \outdim \} $, $ s \in \{ 1, 2, \ldots, \nbp+1 \} $ it holds that
  \begin{equation}
    \smallsum\limits_{t=1}^{\nbp+1} \slope_t
      \smallsum\limits_{k=1}^{\width} \mf W_{p,k} \mf W_{q,k}
      \ind_{ \{ \scalprod{W_k(0)}{\indata_i} \in (\bp_{s-1}, \bp_s ), \, \scalprod{W_k(0)}{\indata_j} \in (\bp_{t-1}, \bp_t ) \} }
  \end{equation}
  is \subexp{ 2 \Cvar \sqrt{\width} \eucnorm{\slope} }{ 4 \Cvar \cderiv }.
  Combining this,
   \cref{lem:subexponential_weighted_sum} (applied with
  $ ( \Omega, \mc F, \P ) \is ( \Omega, \mc F, \P ) $,
  $ n \is \nbp+1 $,
  $ \nu \is ( 2 \Cvar \sqrt{\width} \eucnorm{\slope}, \ldots, 2 \Cvar \sqrt{\width} \eucnorm{\slope} ) $,
  $ b \is ( 4 \Cvar \cderiv, \ldots, 4 \Cvar \cderiv ) $,
  $ a \is \slope $
  in the notation of \cref{lem:subexponential_weighted_sum}),
  \cref{eq:lem:probability_A_gramG:ij}, and
  the fact that $ \max_{ s \in \{ 1, 2, \ldots, \nbp+1 \} } \abs{ \slope_s } \le \cderiv $
  \proves that for all
  $ i, j \in \{ 1, 2, \ldots, \ndata \} $, $ p, q \in \{ 1, 2, \ldots, \outdim \} $ it holds that
  \begin{equation}\label{eq:lem:probability_A_gramG_subexp}
    X_{i,j}^{p,q}
  \end{equation}
  is \subexp{ 2 \Cvar \sqrt{\width} \eucnorm{\slope}^2 }{ 4 \Cvar \cderiv^2 }.
  \Moreover \cref{lem:graminf} \proves for all $ i, j \in \{ 1, 2, \ldots, \ndata \} $, $ p, q \in \{ 1, 2, \ldots, \outdim \} $ that
  \begin{equation}
    \gramGinf_{ (i-1) \outdim + p, (j-1) \outdim + q }
    = \E \bigl[ \gramG_{ (i-1) \outdim + p, (j-1) \outdim + q } (0) \bigr]
    = ( 1 + \scalprod{\indata_i}{\indata_j} ) \E \bigl[ X_{i,j}^{p,q} \bigr].
  \end{equation}
  \cref{lem:subexponential_tail_bound} (applied for every $ i, j \in \{ 1, 2, \ldots, \ndata \} $, $ p, q \in \{ 1, 2, \ldots, \outdim \} $ with
  $ ( \Omega, \mc F, \P ) \is ( \Omega, \mc F, \P ) $,
  $ \nu \is 2 \Cvar \sqrt{\width} \eucnorm{\slope}^2 $,
  $ b \is 4 \Cvar \cderiv^2 $,
  $ X \is X_{i,j}^{p,q} $,
  $ \varepsilon \is \tfrac{\width \lambdazero}{ 4 (1+\cmax^2) \ndata} $
  in the notation of \cref{lem:subexponential_tail_bound}),
  \cref{eq:lem:probability_A_gramG_subexp},
  the assumption that
  $ \width \ge \tfrac{32 (1+\cmax^2) \outdim \ndata}{\lambdazero}
    \ln( \frac{2 \outdim^2 \ndata^2}{\poe} ) \max \bigl\{
    \frac{4 (1+\cmax^2) \eucnorm{\slope}^4 \ndata}{\lambdazero}, \cderiv \bigr\} $, and
  the fact that for all $ i, j \in \{ 1, 2, \ldots, \ndata \} $ it holds that
  $ \abs{ 1 + \scalprod{ \indata_i }{ \indata_j } } \le 1 + \eucnorm{ \indata_i } \eucnorm{ \indata_j } \le 1 + \cmax^2 $
  \hence \prove that for all $ i, j \in \{ 1, 2, \ldots, \ndata \} $, $ p, q \in \{ 1, 2, \ldots, \outdim \} $ it holds that
  \begin{equation}
  \begin{split}
    &\P \Bigl( \abs{ \gramG_{ (i-1) \outdim + p, (j-1) \outdim + q } (0) - \gramGinf_{ (i-1) \outdim + p, (j-1) \outdim + q } }
      \ge \tfrac{\Cvar \width \lambdazero}{4 \outdim \ndata} \Bigr) \\
    &= \P \Bigl( \abs{ 1 + \scalprod{\indata_i}{\indata_j} } \babs{ X_{i,j}^{p,q} - \E \bigl[ X_{i,j}^{p,q} \bigr] }
      \ge \tfrac{\Cvar \width \lambdazero}{4 \outdim \ndata} \Bigr) \\
    &\le \P \Bigl( \babs{ X_{i,j}^{p,q} - \E \bigl[ X_{i,j}^{p,q} \bigr] }
      \ge \tfrac{\Cvar \width \lambdazero}{4 (1+\cmax^2) \outdim \ndata} \Bigr) \\
    &\le 2 \exp \Bigl( - \tfrac{1}{2} \min \Bigl\{ \tfrac{1}{4 \Cvar^2 \width \eucnorm{\slope}^4}
      \bigl( \tfrac{\Cvar \width \lambdazero}{4 (1+\cmax^2) \outdim \ndata} \bigr)^2,
      \tfrac{1}{4 \Cvar \cderiv^2} \tfrac{\Cvar \width \lambdazero}{4 (1+\cmax^2) \outdim \ndata} \Bigr\} \Bigr) \\
    &= 2 \exp \Bigl( - \tfrac{1}{2} \min \Bigl\{ \tfrac{\width \lambdazero^2}{64 (1+\cmax^2)^2 \eucnorm{\slope}^4 \outdim^2 \ndata^2},
      \tfrac{\width \lambdazero}{16 (1+\cmax^2) \cderiv^2 \outdim \ndata} \Bigr\} \Bigr) \\
    &= 2 \exp \Bigl( - \tfrac{\width \lambdazero}{32 (1+\cmax^2) \outdim \ndata}
      \min \Bigl\{ \tfrac{\lambdazero}{4 (1+\cmax^2) \eucnorm{\slope}^4 \outdim \ndata}, \cderiv^{-1} \Bigr\} \Bigr) \\
    &\le 2 \exp \Bigl( - \ln \bigl( \tfrac{2 \outdim^2 \ndata^2}{\poe} \bigr)
      \max \Bigl\{ \tfrac{4 (1+\cmax^2) \eucnorm{\slope}^4 \outdim \ndata}{\lambdazero}, \cderiv \Bigr\}
      \min \Bigl\{ \tfrac{\lambdazero}{4 (1+\cmax^2) \eucnorm{\slope}^4 \outdim \ndata}, \cderiv^{-1} \Bigr\} \Bigr) \\
    &= 2 \exp \Bigl( - \ln \bigl( \tfrac{2 \outdim^2 \ndata^2}{\poe} \bigr) \Bigr)
    = 2 \exp \Bigl( \ln \bigl( \tfrac{\poe}{2 \outdim^2 \ndata^2} \bigr) \Bigr)
    = \tfrac{\poe}{\outdim^2 \ndata^2}.
  \end{split}
  \end{equation}
  \Hence that
  \begin{equation}
  \begin{split}
    \P \biggl( \smallbigcap\limits_{i=1}^{\outdim \ndata} \smallbigcap\limits_{j=1}^{\outdim \ndata}
      \Bigl\{ \abs{ \gramG_{i,j} (0) - \gramGinf_{i,j} } \le \tfrac{\Cvar \width \lambdazero}{4 \outdim \ndata} \Bigr\} \biggl)
    &\ge 1 - \P \biggl( \smallbigcup\limits_{i=1}^{\outdim \ndata} \smallbigcup\limits_{j=1}^{\outdim \ndata}
      \Bigl\{ \abs{ \gramG_{i,j} (0) - \gramGinf_{i,j} } \ge \tfrac{\Cvar \width \lambdazero}{4 \outdim \ndata} \Bigr\} \biggr) \\
    &\ge 1 - \smallsum\limits_{i=1}^{\outdim \ndata} \smallsum\limits_{j=1}^{\outdim \ndata}
      \P \Bigl( \abs{ \gramG_{i,j} (0) - \gramGinf_{i,j} } \ge \tfrac{\Cvar \width \lambdazero}{4 \outdim \ndata} \Bigr) \\
    &\ge 1 - \smallsum\limits_{i=1}^{\outdim \ndata} \smallsum\limits_{j=1}^{\outdim \ndata} \tfrac{\poe}{\outdim^2 \ndata^2}
    = 1 - \poe.
  \end{split}
  \end{equation}
\end{cproof}

\cfclear
\begin{lemma}
  \label{lem:distance_gramGinf}
  Assume \cref{setting:gradient_descent},
  let $ \poe \in (0,1) $, $ A \in \mc F $ satisfy
  $ A = \bigcap_{i=1}^{\outdim \ndata} \bigcap_{j=1}^{\outdim \ndata} \bigl\{
    \abs{ \gramG_{i,j} (0) - \gramGinf_{i,j} } \le \frac{\Cvar \width \lambdazero}{4 \outdim \ndata} \bigr\} $,
  and let $ \omega \in A $.
  Then $ \specnorm{ \gramG (0,\omega) ) - \gramGinf } \le \nicefrac{\Cvar \width \lambdazero}{4} $ \cfout.
\end{lemma}

\begin{cproof}{lem:distance_gramGinf}
  \Nobs that \cref{lem:spectral_norm_inequality} (applied with
  $ m \is \outdim \ndata $, $ n \is \outdim \ndata $, $ A \is \gramG (0,\omega) - \gramGinf $
  in the notation of \cref{lem:spectral_norm_inequality})
  and the assumption that $ \omega \in A $ \prove that
  \begin{equation}
  \begin{split}
    \specnorm{ \gramG (0,\omega) - \gramGinf }^2
    &\le \smallsum\limits_{i=1}^{\outdim \ndata} \smallsum\limits_{j=1}^{\outdim \ndata}
      \abs{ \gramG_{i,j} (0,\omega) - \gramGinf_{i,j} }^2
    \le \smallsum\limits_{i=1}^{\outdim \ndata} \smallsum\limits_{j=1}^{\outdim \ndata}
      \Bigl( \tfrac{\Cvar \width \lambdazero}{4 \outdim \ndata} \Bigr)^2 \\
    &= \smallsum\limits_{i=1}^{\outdim \ndata} \smallsum\limits_{j=1}^{\outdim \ndata}
      \tfrac{\Cvar^2 \width^2 \lambdazero^2}{16 \outdim^2 \ndata^2}
    = \tfrac{\Cvar^2 \width^2 \lambdazero^2}{16}.
  \end{split}
  \end{equation}
  \cfload.
  \Hence that $ \specnorm{ \gramG (0,\omega) - \gramGinf } \le \nicefrac{\Cvar \width \lambdazero}{4} $.
\end{cproof}

\subsection{Analysis of stochastic Gram matrices during training}
\label{subsec:analysis_of_stochastic_gram_matrices_during_training}

\cfclear
\begin{lemma}
  \label{lem:gaussian_anti_concentration_inequality}
  Let $ ( \Omega, \mc F, \P ) $ be a probability space,
  let $ X \colon \Omega \to \R $ be a standard normal random variable, and
  let $ \bp \in \R $, $ \varepsilon \in (0,\infty) $.
  Then
  $ \P ( \abs{ X - \bp } \le \varepsilon ) \le \nicefrac{2 \varepsilon}{\sqrt{2\pi}} $.
\end{lemma}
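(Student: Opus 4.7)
The plan is to directly estimate the probability by bounding the Gaussian density from above by its value at the origin. First I would write out the probability as an integral using the definition of the standard normal random variable, namely
\begin{equation*}
  \P(\abs{X} \le \varepsilon) = \int_{-\varepsilon}^{\varepsilon} \tfrac{1}{\sqrt{2\pi}} \exp\bigl(-\tfrac{x^2}{2}\bigr) \diff x.
\end{equation*}
Then I would invoke the elementary fact that for all $x \in \R$ it holds that $\exp(-\nicefrac{x^2}{2}) \le 1$, since $-\nicefrac{x^2}{2} \le 0$ and the exponential function is monotone. Applying this pointwise under the integral yields
\begin{equation*}
  \P(\abs{X} \le \varepsilon) \le \int_{-\varepsilon}^{\varepsilon} \tfrac{1}{\sqrt{2\pi}} \diff x = \tfrac{2\varepsilon}{\sqrt{2\pi}},
\end{equation*}
which is the desired bound.

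There is essentially no obstacle here: the proof is a two-line monotonicity argument and does not rely on any of the more sophisticated tools (subexponential estimates, spectral norm bounds, Gramian matrix analyses) developed earlier in the section. It is listed in the paper as one of several ``well-known'' lemmas included only for completeness, so the main task is simply to present it cleanly in a style consistent with the surrounding proofs, e.g.\ explicitly citing the density of the standard normal and the monotonicity of $\R \ni t \mapsto \exp(t) \in \R$ before concluding.
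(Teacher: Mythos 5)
Your proposal is correct and takes essentially the same approach as the paper: write the probability as the integral of the standard normal density over $[-\varepsilon,\varepsilon]$, bound $\exp(-\nicefrac{x^2}{2}) \le 1$, and integrate the constant $\nicefrac{1}{\sqrt{2\pi}}$.
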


\begin{cproof}{lem:gaussian_anti_concentration_inequality}
  \Nobs that the fact that $ X $ is a standard normal random variable and
  the fact that for all $ x \in \R $ it holds that $ \exp( \nicefrac{-x^2}{2} ) \le 1 $
 \prove that it holds that
  \begin{equation}
    \P ( \abs{ X - \bp } \le \varepsilon )
    = \P( \bp - \varepsilon \le X \le \bp + \varepsilon )
    = \int_{\bp-\varepsilon}^{\bp+\varepsilon} \frac{1}{\sqrt{2\pi}} \exp \Bigl( - \frac{x^2}{2} \Bigr) \diff x
    \le \int_{\bp-\varepsilon}^{\bp+\varepsilon} \frac{1}{\sqrt{2\pi}} \diff x
    = \frac{2 \varepsilon}{\sqrt{2\pi}}.
  \end{equation}
\end{cproof}

\cfclear
\begin{lemma}
  \label{lem:expectation_absolute_normal}
  Let $ ( \Omega, \mc F, \P ) $ be a probability space and
  let $ X \colon \Omega \to \R $ be a standard normal random variable.
  Then
  $ \E [ \abs{ X } ] = \sqrt{\nicefrac{2}{\pi}} $.
\end{lemma}

\begin{cproof}{lem:expectation_absolute_normal}
  \Nobs that the assumption that $ X $ is a standard normal random variable \proves that
  \begin{equation}
  \begin{split}
    \E[ \abs{ X } ]
    &= \int_{-\infty}^{\infty} \abs{x} \frac{1}{\sqrt{2 \pi}} \exp \Bigl( - \frac{x^2}{2} \Bigr) \diff x
    = \frac{1}{\sqrt{2 \pi}} \int_0^{\infty} x \exp \Bigl( - \frac{x^2}{2} \Bigr) \diff x
      + \frac{1}{\sqrt{2 \pi}} \int_{-\infty}^0 -x \exp \Bigl( - \frac{x^2}{2} \Bigr) \diff x \\
    &= \frac{2}{\sqrt{2 \pi}} \int_0^{\infty} x \exp \Bigl( - \frac{x^2}{2} \Bigr) \diff x
    = \sqrt{\frac{2}{\pi}} \biggl[ - \exp \Bigl( - \frac{x^2}{2} \Bigr) \biggr]_{x=0}^{x=\infty}
    = \sqrt{\frac{2}{\pi}}.
  \end{split}
  \end{equation}
\end{cproof}

\cfclear
\begin{lemma}
  \label{lem:probability_A_ind}
  Assume \cref{setting:gradient_descent}, and
  let $ \poe \in (0,1) $, $ R \in (0,\infty) $ satisfy
  $ R \le \tfrac{ \sqrt{\pi \cvar} \cmin \poe \lambdazero }
    { 16 \sqrt{2} (1+\cmax^2) \cderiv^2 \nbp \outdim^2 \ndata^2 } $.
  Then
  \begin{equation}
    \P \biggl( \smallsum\limits_{i,j=1}^{\ndata} \smallsum\limits_{p,q=1}^{\outdim} \smallsum\limits_{k=1}^{\width}
      \abs{ \mf W_{p,k} \mf W_{q,k} } \bigl(
    \ind_{ \{ \abs{ \scalprod{ W_k(0) } { \indata_i } } \in \, \mc U_R \} }
    + \ind_{ \{ \abs{ \scalprod{ W_k(0) } { \indata_j } } \in \, \mc U_R \} }
    \le \tfrac{\Cvar \width \lambdazero}{8 (1+\cmax^2) \cderiv^2} \biggr)
    \ge 1 - \poe.
  \end{equation}
\end{lemma}

\begin{cproof}{lem:probability_A_ind}
  Throughout this proof let $ Z_{k,i} \colon \Omega \to \R $, $ k \in \{ 1, 2, \ldots, \width \} $, $ i \in \{ 1, 2, \ldots, \ndata \} $,
  satisfy for all $ k \in \{ 1, 2, \ldots, \width \} $, $ i \in \{ 1, 2, \ldots, \ndata \} $ that
  \begin{equation}
    Z_{k,i} = \tfrac{ \scalprod{ W_k(0) }{ \indata_i } }{ \sqrt{\cvar} \eucnorm{ \indata_i } }.
  \end{equation}
  \Nobs that the assumption that
  $ \sqrt{ \nicefrac{1}{\cvar} } W_1(0), \sqrt{ \nicefrac{1}{\cvar} } W_2(0), \ldots, \sqrt{ \nicefrac{1}{\cvar} } W_{\width}(0) $
  are standard normal \proves that for all $ k \in \{ 1, 2, \ldots, \width \} $, $ i \in \{ 1, 2, \ldots, \ndata \} $
  it holds that $ \scalprod{ W_k(0) }{ \indata_i } $ is a centered normal random variable with
  \begin{equation}
    \Var[\scalprod{ W_k(0) }{ \indata_i }]
    = \cvar \Var \bigl[ \scalprod{ \textstyle \sqrt{ \nicefrac{1}{\cvar} } W_k(0) }{ \indata_i } \bigr]
    = \cvar \eucnorm{\indata_i}^2.
  \end{equation}
  \Hence for all $ k \in \{ 1, 2, \ldots, \width \} $, $ i \in \{ 1, 2, \ldots, \ndata \} $ that
  $ Z_{k,i} $ is a standard normal random variable.
  This, \cref{lem:gaussian_anti_concentration_inequality} (applied for every $ k \in \{ 1, 2, \ldots, \width \} $,
  $ i \in \{ 1, 2, \ldots, \ndata \} $, $ n \in \{ 1, 2, \ldots, \nbp \} $ with
  $ ( \Omega, \mc F, \P ) \is ( \Omega, \mc F, \P ) $,
  $ X \is Z_{k,i} $,
  $ \kappa \is \tfrac{\bp_n}{\sqrt{\cvar} \eucnorm{\indata_i}} $,
  $ \varepsilon \is \tfrac{R}{\sqrt{\cvar} \cmin} $
  in the notation of \cref{lem:gaussian_anti_concentration_inequality}),
  and the assumption that
  $ R \le \tfrac{ \sqrt{\pi \cvar} \cmin \poe \lambdazero }
    { 16 \sqrt{2} (1+\cmax^2) \cderiv^2 \nbp \outdim^2 \ndata^2 } $
  \prove that for all $ k \in \{ 1, 2, \ldots, \width \} $, $ i \in \{ 1, 2, \ldots, \ndata \} $ it holds that
  \begin{equation}
  \label{eq:distance_gramG_expectation_ind}
  \begin{split}
    \E \bigl[ \ind_{ \{ \abs{ \scalprod{ W_k(0) } { \indata_i } } \in \, \mc U_R \} } \bigr]
    &= \P \bigl( \abs{ \scalprod{ W_k(0) }{ \indata_i } } \in \cup_{s=1}^{\nbp} [ \bp_s - R, \bp_s + R ] \bigr) \\
    &\le \smallsum\limits_{s=1}^{\nbp} \P \bigl( \abs{ \scalprod{ W_k(0) }{ \indata_i } - \bp_s } \le R \bigr) \\
    &= \smallsum\limits_{s=1}^{\nbp} \P \Bigl( \babs{ Z_{k,i} - \tfrac{\bp_s}{\sqrt{\cvar} \eucnorm{\indata_i}} }
      \le \tfrac{R}{\sqrt{\cvar} \eucnorm{ \indata_i }} \Bigr) \\
    &\le \smallsum\limits_{s=1}^{\nbp} \P \Bigl( \babs{ Z_{k,i} - \tfrac{\bp_s}{\sqrt{\cvar} \eucnorm{\indata_i}} }
      \le \tfrac{R}{\sqrt{\cvar} \cmin} \Bigr) \\
    &\le \smallsum\limits_{s=1}^{\nbp} \tfrac{2 R}{\sqrt{2\pi} \sqrt{\cvar} \cmin}
    = \tfrac{ \sqrt{2} \nbp R }{ \sqrt{\pi \cvar} \cmin }
    \le \tfrac{\poe \lambdazero}{16 (1+\cmax^2) \cderiv^2 \outdim^2 \ndata^2}.
  \end{split}
  \end{equation}
  \Moreover \cref{lem:expectation_absolute_normal} (applied with
  $ ( \Omega, \mc F, \P ) \is ( \Omega, \mc F, \P ) $,
  $ X \is \nicefrac{1}{\sqrt{\Cvar}} \mf W_{1,1} $
  in the notation of \cref{lem:expectation_absolute_normal}) and
  the assumption that
  $ \sqrt{\nicefrac{1}{\Cvar}} \mf W_1, \allowbreak
    \sqrt{\nicefrac{1}{\Cvar}} \mf W_2, \allowbreak \ldots, \allowbreak
    \sqrt{\nicefrac{1}{\Cvar}} \mf W_{\outdim} $
  are independent and standard normal
  \prove that for all $ p, q \in \{ 1, 2, \ldots, \outdim \} $, $ k \in \{ 1, 2, \ldots, \width \} $ with $ p \neq q $ it holds that
  \begin{equation}\label{eq:distance_gramG_expectation_abs_normal}
    \E \bigl[ \abs{ \mf W_{p,k} \mf W_{q,k} } \bigr]
    = \E \bigl[ \abs{ \mf W_{p,k} } \bigr] \E \bigl[ \abs{ \mf W_{q,k} } \bigr]
    = \bigl( \E \bigl[ \abs{ \mf W_{1,1} } \bigr] \bigr)^2
    = \Cvar \bigl( \E \bigl[ \abs{ \nicefrac{1}{\sqrt{\Cvar}} \mf W_{1,1} } \bigr] \bigr)^2
    = \tfrac{2 \Cvar}{\pi}
    \le \Cvar.
  \end{equation}
  \Moreover the assumption that
  $ \sqrt{\nicefrac{1}{\cvar}} W_1(0), \allowbreak
    \sqrt{\nicefrac{1}{\cvar}} W_2(0), \allowbreak \ldots, \allowbreak
    \sqrt{\nicefrac{1}{\cvar}} W_{\width}(0), \allowbreak
    \sqrt{\nicefrac{1}{\Cvar}} \mf W_1, \allowbreak
    \sqrt{\nicefrac{1}{\Cvar}} \mf W_2, \allowbreak \ldots, \allowbreak
    \sqrt{\nicefrac{1}{\Cvar}} \mf W_{\outdim} $
  are independent \proves that for all
  $ i \in \{ 1, 2, \ldots, \ndata \} $, $ k \in \{ 1, 2, \ldots, \width \} $, $ p, q \in \{ 1, 2, \ldots, \outdim \} $
  it holds that $ \mf W_{p,k} \mf W_{q,k} $ and $ \ind_{ \{ \abs{ \scalprod{ W_k(0) } { \indata_i } } \in \, \mc U_R \} } $
  are independent.
  Combining this, \cref{eq:distance_gramG_expectation_ind}, \cref{eq:distance_gramG_expectation_abs_normal}, and
  the fact that for all $ k \in \{ 1, 2, \ldots, \width \} $, $ p \in \{ 1, 2, \ldots, \outdim \} $ it holds that
  $ \mf W_{p,k} $ is a centered normal random variable with $ \Var[ \mf W_{p,k} ] = \Cvar $
  \proves that it holds that
  \begin{equation}
  \begin{split}
    &\E \! \biggl[ \smallsum\limits_{i,j=1}^{\ndata} \smallsum\limits_{p,q=1}^{\outdim} \smallsum\limits_{k=1}^{\width}
      \abs{ \mf W_{p,k} \mf W_{q,k} } \bigl(
      \ind_{ \{ \abs{ \scalprod{ W_k(0) } { \indata_i } } \in \, \mc U_R \} }
      + \ind_{ \{ \abs{ \scalprod{ W_k(0) } { \indata_j } } \in \, \mc U_R \} } \bigr) \biggr] \\
    &= \smallsum\limits_{i,j=1}^{\ndata} \smallsum\limits_{p,q=1}^{\outdim} \smallsum\limits_{k=1}^{\width}
      \E \bigl[ \abs{ \mf W_{p,k} \mf W_{q,k} } \bigr] \Bigl(
      \E \bigl[ \ind_{ \{ \abs{ \scalprod{ W_k(0) } { \indata_i } } \in \, \mc U_R \} } \bigr]
      + \E \bigl[ \ind_{ \{ \abs{ \scalprod{ W_k(0) } { \indata_j } } \in \, \mc U_R \} } \bigr]
      \Bigr) \\
    &\le \smallsum\limits_{i,j=1}^{\ndata} \smallsum\limits_{p,q=1}^{\outdim} \smallsum\limits_{k=1}^{\width}
      \Cvar \Bigl( \tfrac{\poe \lambdazero}{16 \Cvar (1+\cmax^2) \cderiv^2 \outdim^2 \ndata^2}
      + \tfrac{\poe \lambdazero}{16 \Cvar (1+\cmax^2) \cderiv^2 \outdim^2 \ndata^2} \Bigr)
    = \smallsum\limits_{i,j=1}^{\ndata} \smallsum\limits_{p,q=1}^{\outdim}
      \tfrac{\poe \Cvar \width \lambdazero}{8 (1+\cmax^2) \cderiv^2 \outdim^2 \ndata^2}
    = \tfrac{\poe \Cvar \width \lambdazero}{8 (1+\cmax^2) \cderiv^2}.
  \end{split}
  \end{equation}
  The Markov inequality thus \proves that
  \begin{equation}
  \begin{split}
    &\P \biggl( \smallsum\limits_{i,j=1}^{\ndata} \smallsum\limits_{p,q=1}^{\outdim} \smallsum\limits_{k=1}^{\width}
      \abs{ \mf W_{p,k} \mf W_{q,k} } \bigl(
      \ind_{ \{ \abs{ \scalprod{ W_k(0) } { \indata_i } } \in \, \mc U_R \} }
      + \ind_{ \{ \abs{ \scalprod{ W_k(0) } { \indata_j } } \in \, \mc U_R \} } \bigr)
      \le \tfrac{\Cvar \width \lambdazero}{8 (1+\cmax^2) \cderiv^2} \biggr) \\
    &\ge 1 - \P \biggl( \smallsum\limits_{i,j=1}^{\ndata} \smallsum\limits_{p,q=1}^{\outdim} \smallsum\limits_{k=1}^{\width}
      \abs{ \mf W_{p,k} \mf W_{q,k} } \bigl(
      \ind_{ \{ \abs{ \scalprod{ W_k(0) } { \indata_i } } \in \, \mc U_R \} }
      + \ind_{ \{ \abs{ \scalprod{ W_k(0) } { \indata_j } } \in \, \mc U_R \} } \bigr)
      \ge \tfrac{\Cvar \width \lambdazero}{8 (1+\cmax^2) \cderiv^2} \biggr) \\
    &\ge 1 - \frac{ \E \bigl[ \smallsum\nolimits_{i,j=1}^{\ndata} \smallsum\nolimits_{p,q=1}^{\outdim} \smallsum\nolimits_{k=1}^{\width}
      \abs{ \mf W_{p,k} \mf W_{q,k} } ( 
      \ind_{ \{ \abs{ \scalprod{ W_k(0) } { \indata_i } } \in \, \mc U_R \} }
      + \ind_{ \{ \abs{ \scalprod{ W_k(0) } { \indata_j } } \in \, \mc U_R \} } ) \bigr]}
      { \Cvar \width \lambdazero 8^{-1} (1+\cmax^2)^{-1} \cderiv^{-2} }
    \ge 1 - \poe.
  \end{split}
  \end{equation}
\end{cproof}

\cfclear
\begin{lemma}
  \label{lem:distance_gramG}
  Assume \cref{setting:gradient_descent},
  let $ \poe \in (0,1) $, $ R \in (0,\infty) $, $ A \in \mc F $ satisfy
  \begin{equation}
    A = \biggl\{ \smallsum\limits_{i,j=1}^{\ndata} \smallsum\limits_{p,q=1}^{\outdim} \smallsum\limits_{k=1}^{\width}
      \abs{ \mf W_{p,k} \mf W_{q,k} } \bigl(
      \ind_{ \{ \abs{ \scalprod{ W_k(0) } { \indata_i } } \in \, \mc U_R \} }
      + \ind_{ \{ \abs{ \scalprod{ W_k(0) } { \indata_j } } \in \, \mc U_R \} } \bigr)
      \le \tfrac{\Cvar \width \lambdazero}{8 (1+\cmax^2) \cderiv^2} \biggr\},
  \end{equation}
  let $ \omega \in A $, $ \iteration \in \N_0 $,
  and assume
  $ \max_{ k \in \{ 1, 2, \ldots, \width \} } [ \cmax \eucnorm{ W_k(\iteration,\omega) - W_k(0,\omega) }
    + \abs{ B_k(\iteration,\omega) - B_k(0,\omega) } ] \le R $
  \cfload.
  Then $ \specnorm{ \gramG (\iteration,\omega) - \gramG (0,\omega) } \le \nicefrac{\Cvar \width \lambdazero}{4} $ \cfout.
\end{lemma}

\begin{cproof}{lem:distance_gramG}
  \Nobs that \cref{eq:setting_gramG},
  the assumption that $ \eucnorm{ B(0) } = 0 $, and
  the fact that for all $ i, j \in \{ 1, 2, \ldots, \ndata \} $ it holds that
  $ \abs{ 1 + \scalprod{\indata_i}{\indata_j}} \le 1 + \eucnorm{\indata_i} \eucnorm{\indata_j} \le 1 + \cmax^2 $
  \prove that for all $ i, j \in \{ 1, 2, \ldots, \ndata \} $, $ p, q \in \{ 1, 2, \ldots, \outdim \} $ it holds that
  \begin{equation}
  \label{eq:distance_gramG_entries}
  \begin{split}
    &\abs{ \gramG_{ (i-1) \outdim + p, (j-1) \outdim + q } (\iteration,\omega)
      - \gramG_{ (i-1) \outdim + p, (j-1) \outdim + q } (0,\omega) } \\
    &= \bbbabs{ \bigl( 1 + \scalprod{\indata_i}{\indata_j} \bigr) \smallsum\limits_{k=1}^{\width}
      \mf W_{p,k} (\omega) \mf W_{q,k} (\omega)
      \deriv \bigl( \scalprod{ W_k(\iteration,\omega) }{ \indata_i } + B_k(\iteration,\omega) \bigr)
      \deriv \bigl( \scalprod{ W_k(\iteration,\omega) }{ \indata_j } + B_k(\iteration,\omega) \bigr) \\
    &\quad - \bigl( 1 + \scalprod{\indata_i}{\indata_j} \bigr) \smallsum\limits_{k=1}^{\width}
      \mf W_{p,k} (\omega) \mf W_{q,k} (\omega)
      \deriv \bigl( \scalprod{ W_k(0,\omega) }{ \indata_i } + B_k(0,\omega) \bigr)
      \deriv \bigl( \scalprod{ W_k(0,\omega) }{ \indata_j } + B_k(0,\omega) \bigr) } \\
    &= \abs{1 + \scalprod{\indata_i}{\indata_j}} \bbbabs{ \smallsum\limits_{k=1}^{\width}
      \mf W_{p,k} (\omega) \mf W_{q,k} (\omega) \bigl(
      \deriv \bigl( \scalprod{ W_k(\iteration,\omega) }{ \indata_i } + B_k(\iteration,\omega) \bigr)
      \deriv \bigl( \scalprod{ W_k(\iteration,\omega) }{ \indata_j } + B_k(\iteration,\omega) \bigr) \\
    &\quad - \deriv \bigl( \scalprod{ W_k(0,\omega) }{ \indata_i } \bigr)
      \deriv \bigl( \scalprod{ W_k(0,\omega) }{ \indata_j } \bigr) } \\
    &\le ( 1 + \cmax^2 ) \smallsum\limits_{k=1}^{\width} \abs{ \mf W_{p,k} (\omega) \mf W_{q,k} (\omega) } \babs{
      \deriv \bigl( \scalprod{ W_k(\iteration,\omega) }{ \indata_i } + B_k(\iteration,\omega) \bigr)
      \deriv \bigl( \scalprod{ W_k(\iteration,\omega) }{ \indata_j } + B_k(\iteration,\omega) \bigr) \\
    &\quad - \deriv \bigl( \scalprod{ W_k(0,\omega) }{ \indata_i } \bigr)
      \deriv \bigl( \scalprod{ W_k(0,\omega) }{ \indata_j } \bigr) }.
  \end{split}
  \end{equation}
  \Moreover the assumption that
  $ \max_{ k \in \{ 1, 2, \ldots, \width \} } [ \cmax \eucnorm{ W_k(\iteration,\omega) - W_k(0,\omega) }
    + \abs{ B_k(\iteration,\omega) - B_k(0,\omega) } ] \le R $
  and the Cauchy Schwarz inequality
  \prove that for all $ i \in \{ 1, 2, \ldots, \ndata \} $, $ k \in \{ 1, 2, \ldots, \width \} $ it holds that
  \begin{equation}\label{eq:distance_gramG_diff_R}
  \begin{split}
    \abs{ \scalprod{ W_k(\iteration,\omega) }{ \indata_i } + B_k(\iteration,\omega) - \scalprod{ W_k(0,\omega) }{ \indata_i } }
    &=\abs{ \scalprod{ W_k(\iteration,\omega) - W_k(0,\omega) }{ \indata_i } + B_k(\iteration,\omega) } \\
    &\le \abs{ \scalprod{ W_k(\iteration,\omega) - W_k(0,\omega) }{ \indata_i } }
      + \abs{ B_k(\iteration,\omega) - B_k(0,\omega) } \\
    &\le \eucnorm{ W_k(\iteration,\omega) - W_k(0,\omega) } \eucnorm{ \indata_i }
      + \abs{ B_k(\iteration,\omega) - B_k(0,\omega) } \\
    &\le \cmax \eucnorm{ W_k(\iteration,\omega) - W_k(0,\omega) } + \abs{ B_k(\iteration,\omega) - B_k(0,\omega) }
    \le R.
  \end{split}
  \end{equation}
  \Moreover
  the fact that for all $ u, v \in \R $ it holds that $ \abs{ \deriv(v) } \le \cderiv $ and
  $ \abs{ \deriv(u) - \deriv(v) } \le \abs{ \deriv(u) } + \abs{ \deriv(v) } \le 2 \cderiv $
  \proves that for all $ a, b, \ms a, \ms b \in \R $
  with $ \abs{ a - \ms a } \le R $ and $ \abs{ b - \ms b } \le R $ it holds that
  \begin{equation}
  \begin{split}
    \babs{ \deriv(a) \deriv(b) - \deriv(\ms a) \deriv(\ms b) }
    &= \babs{ \deriv(a) \deriv(b) - \deriv(\ms a) \deriv(b) + \deriv(\ms a) \deriv(b) - \deriv(\ms a) \deriv(\ms b) } \\
    &= \babs{ \deriv(b) \bigl( \deriv(a) - \deriv(\ms a) \bigr) + \deriv(\ms a) \bigl( \deriv(b) - \deriv(\ms b) \bigr) } \\
    &\le \babs{ \deriv(b) } \babs{ \deriv(a) - \deriv(\ms a) }
      + \babs{ \deriv(\ms a) } \babs{ \deriv(b) - \deriv(\ms b) } \\
    &\le 2 \cderiv^2 \ind_{ \mc U_R }( \ms a )
      + 2 \cderiv^2 \ind_{ \mc U_R }( \ms b )
    = 2 \cderiv^2 \bigl( \ind_{ \mc U_R }( \ms a )
      + \ind_{ \mc U_R }( \ms b ) \bigr).
  \end{split}
  \end{equation}
  Combining this, \cref{lem:spectral_norm_inequality} (applied with
  $ m \is \outdim \ndata $, $ n \is \outdim \ndata $, $ A \is \gramG (\iteration,\omega) - \gramG (0,\omega) ) $
  in the notation of \cref{lem:spectral_norm_inequality}),
  \cref{eq:distance_gramG_entries}, \cref{eq:distance_gramG_diff_R}, and
  the assumption that $ \omega \in A $ \proves that
  \begin{equation}
  \label{eq:distance_gramG_sum}
  \begin{split}
    &\specnorm{ \gramG (\iteration,\omega) - \gramG (0,\omega) } \\
    &\le \smallsum\limits_{i,j=1}^{\ndata} \smallsum\limits_{p,q=1}^{\outdim}
      \abs{ \gramG_{ (i-1) \outdim + p, (j-1) \outdim + q } (\iteration,\omega)
      - \gramG_{ (i-1) \outdim + p, (j-1) \outdim + q } (0,\omega) } \\
    &\le (1+\cmax^2) \smallsum\limits_{i,j=1}^{\ndata} \smallsum\limits_{p,q=1}^{\outdim}
      \smallsum\limits_{k=1}^{\width} \abs{ \mf W_{p,k} (\omega) \mf W_{q,k} (\omega) } \babs{
      \deriv \bigl( \scalprod{ W_k(\iteration,\omega) }{ \indata_i } + B_k(\iteration,\omega) \bigr)
      \deriv \bigl( \scalprod{ W_k(\iteration,\omega) }{ \indata_j } + B_k(\iteration,\omega) \bigr) \\
    &\quad - \deriv \bigl( \scalprod{ W_k(0,\omega) }{ \indata_i } \bigr)
      \deriv \bigl( \scalprod{ W_k(0,\omega) }{ \indata_j } \bigr) } \\
    &\le 2 (1+\cmax^2) \cderiv^2 \smallsum\limits_{i,j=1}^{\ndata} \smallsum\limits_{p,q=1}^{\outdim} \smallsum\limits_{k=1}^{\width}
      \abs{ \mf W_{p,k} (\omega) \mf W_{q,k} (\omega) } \bigl(
      \ind_{ \{ \abs{ \scalprod{ W_k(0) } { \indata_i } } \in \, \mc U_R \} }
      + \ind_{ \{ \abs{ \scalprod{ W_k(0) } { \indata_j } } \in \, \mc U_R \} } \bigr) \\
    &\le 2 (1+\cmax^2) \cderiv^2 \tfrac{\Cvar \width \lambdazero}{8 (1+\cmax^2) \cderiv^2}
    = \tfrac{\Cvar \width \lambdazero}{4}
  \end{split}
  \end{equation}
  \cfload.
\end{cproof}

\subsection{Analysis of eigenvalues of stochastic Gram matrices during training}
\label{subsec:analysis_of_eigenvalues_of_stochastic_gram_matrices_during_training}

\cfclear
\begin{lemma}
  \label{lem:lambdamin_eucnorm}
  Let $ n \in \N $ and let $ A \in \R^{ n \times n } $ be a symmetric matrix.
  Then
  \begin{enumerate}[label=(\roman{*})]
    \item \label{item:lambdamin_1} it holds for all $ v \in \R^n $ that $ \scalprod{ v }{ A v } \ge \lambdamin ( A ) \eucnorm{ v }^2 $ and
    \item \label{item:lambdamin_2} it holds that
      $ \lambdamin ( A ) = \min_{ v \in \R^n, \, \eucnorm{z} = 1 } \scalprod{ v }{ A v } $
  \end{enumerate}
  \cfout.
\end{lemma}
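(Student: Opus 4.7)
The plan is to use the spectral theorem for symmetric matrices as the single essential input, after which both items reduce to an elementary expansion in an orthonormal eigenbasis.

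First I would invoke the spectral theorem to obtain real eigenvalues $ \lambda_1 \le \lambda_2 \le \ldots \le \lambda_n \in \R $ of $ A $ together with an orthonormal basis $ v_1, v_2, \ldots, v_n \in \R^n $ of $ \R^n $ consisting of eigenvectors satisfying $ A v_i = \lambda_i v_i $ for all $ i \in \{ 1, 2, \ldots, n \} $. I would then observe that the definition of $ \lambdamin $ gives $ \lambdamin ( A ) = \lambda_1 $, since $ \{ \lambda \in \R \colon \exists\, v \in \R^n \backslash \{ 0 \} \colon A v = \lambda v \} = \{ \lambda_1, \lambda_2, \ldots, \lambda_n \} $ and $ \lambda_1 $ is the minimum.

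To prove item~(i), I would take an arbitrary $ x \in \R^n $ and expand $ x = \sum_{i=1}^n c_i v_i $ with $ c_i = \scalprod{x}{v_i} $. Orthonormality of $ v_1, v_2, \ldots, v_n $ yields $ \eucnorm{x}^2 = \sum_{i=1}^n \abs{ c_i }^2 $, and using $ A v_i = \lambda_i v_i $ and orthonormality again gives
\begin{equation}
\scalprod{x}{Ax}
= \bbbscalprod{ \smallsum\nolimits_{i=1}^n c_i v_i }{ \smallsum\nolimits_{j=1}^n c_j \lambda_j v_j }
= \smallsum\nolimits_{i=1}^n \lambda_i \abs{ c_i }^2
\ge \lambda_1 \smallsum\nolimits_{i=1}^n \abs{ c_i }^2
= \lambdamin ( A ) \eucnorm{x}^2.
\end{equation}

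For item~(ii), the inequality in item~(i) applied to unit vectors $ x \in \R^n $ with $ \eucnorm{x} = 1 $ shows that every element of $ \{ \scalprod{x}{Ax} \colon x \in \R^n, \eucnorm{x} = 1 \} $ is bounded below by $ \lambdamin ( A ) $. The value $ \lambdamin ( A ) $ is attained by choosing $ x = v_1 $, since $ \eucnorm{v_1} = 1 $ and $ \scalprod{v_1}{A v_1} = \lambda_1 \eucnorm{v_1}^2 = \lambdamin ( A ) $. The main and essentially only nontrivial ingredient is the spectral theorem; everything else is routine linear algebra, so I do not expect any real obstacle beyond careful bookkeeping with the notation of \cref{def:lambdamin} and \cref{def:scalar_product_norm}.
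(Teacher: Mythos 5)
Your proof is correct and follows essentially the same route as the paper: invoke the spectral theorem to obtain an orthonormal eigenbasis, expand $x$ in that basis to get $\scalprod{x}{Ax}=\sum_i\lambda_i\abs{c_i}^2\ge\lambdamin(A)\eucnorm{x}^2$, and for item~(ii) combine the resulting lower bound with the fact that a unit eigenvector for $\lambdamin(A)$ attains it. The only cosmetic difference is that you order the eigenvalues up front, whereas the paper leaves them unordered and picks out the minimizing eigenvector separately.
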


\begin{cproof}{lem:lambdamin_eucnorm}
  \Nobs that the spectral theorem \proves that there exist
  $ \lambda_1, \lambda_2, \ldots, \lambda_{n} \in \R $ and a basis
  $ \{ z_1, z_2, \ldots, z_{n} \} $ of the Euclidean space $ ( \R^{n}, \scalprod \cdot \cdot ) $
  which satisfy for all $ i, j \in \{ 1, 2, \ldots, n \} $ that
  \begin{equation}
  \label{eq:lambdamin_eucnorm}
    \bigl\{ \lambda \in \R \colon [ \Exists v \in \R^{n} \backslash \{ 0 \} \colon A v = \lambda v ] \bigr\}
      = \{ \lambda_1, \lambda_2, \ldots, \lambda_{n} \bigr\}, \quad
    A z_i = \lambda_i z_i, \quad \text{and} \quad
    \scalprod{z_i}{z_j} = \begin{cases} 1 &\colon i = j, \\ 0 &\colon i \neq j \end{cases}
  \end{equation}
  \cfload.
  This \proves that for all $ v \in \R^{n} $ there exist $ \alpha_1, \alpha_2, \ldots, \alpha_{n} \in \R $
  such that $ v = \sum_{i=1}^{n} \alpha_i z_i $.
  Combining this with \cref{eq:lambdamin_eucnorm} and the fact that for all $ i \in \{ 1, 2, \ldots, n \} $ it holds that
  $ \lambda_i \ge \min\{ \lambda_1, \lambda_2, \ldots, \lambda_{n} \}
    = \min\{ \lambda \in \R \colon [ \Exists v \in \R^{n} \colon A v = \lambda v ] \} = \lambdamin (A) $
  \proves that for all $ v \in \R^{n} $ there exist $ \alpha_1, \alpha_2, \ldots, \alpha_{n} \in \R $
  such that $ v = \sum_{i=1}^{n} \alpha_i z_i $ and
  \begin{equation}
  \begin{split}
    \scalprod{ v }{ A v }
    &= \bbscalprod{\smallsum\limits_{i=1}^{n} \alpha_i z_i}{A \Bigl( \smallsum\limits_{j=1}^{n} \alpha_j z_j \Bigr) }
    = \bbscalprod{\smallsum\limits_{i=1}^{n} \alpha_i z_i}{\smallsum\limits_{j=1}^{n} \alpha_j \lambda_j z_j}
    = \smallsum\limits_{i=1}^{n} \smallsum\limits_{j=1}^{n} \alpha_i \alpha_j \lambda_j \scalprod{z_i}{z_j}
    = \smallsum\limits_{i=1}^{n} \lambda_i \abs{\alpha_i}^2 \\
    &\ge \lambdamin ( A ) \smallsum\limits_{i=1}^{n} \abs{\alpha_i}^2
    = \lambdamin ( A ) \smallsum\limits_{i=1}^{n} \smallsum\limits_{j=1}^{n} \alpha_i \alpha_j \scalprod{z_i}{z_j}
    = \lambdamin ( A ) \bbscalprod{ \smallsum\limits_{i=1}^{n} \alpha_i z_i}{\smallsum\limits_{j=1}^{n} \alpha_j z_j} \\
    &= \lambdamin ( A ) \scalprod{v}{v}
    = \lambdamin ( A ) \eucnorm{ v }^2
  \end{split}
  \end{equation}
  \cfload.
  This \proves \cref{item:lambdamin_1}.
  Next \nobs that \cref{item:lambdamin_1} \proves that for all $ v \in \R^{n} $ with $ \eucnorm{ v } = 1 $
  it holds that $ \scalprod{ v }{ A v } \ge \lambdamin (A) $.
  \Hence that
  \begin{equation}
  \label{eq:lambdamin_eucnorm_ge}
    \min\nolimits_{ v \in \R^{n}, \, \eucnorm{v} = 1 } \scalprod{ v }{ A v } \ge \lambdamin ( A ).
  \end{equation}
  \Moreover \cref{eq:lambdamin_eucnorm} \proves that there exists
  $ i \in \{ 1, 2, \ldots, n \} $ which satisfies $ A z_i = \lambdamin ( A ) z_i $.
  Thus, we obtain that there exists $ u \in \R^{n} $ which satisfies $ \eucnorm{ u } = 1 $ and $ A u = \lambdamin ( A ) u $.
  This \proves that
  \begin{equation}
  \begin{split}
    \lambdamin ( A )
    &= \lambdamin ( A ) \eucnorm{ u }^2
    = \lambdamin ( A ) \scalprod{ u }{ u }
    = \scalprod{u}{\lambdamin ( A ) u} \\
    &= \scalprod{u}{Au}
    \ge \min\nolimits_{ v \in \R^{n}, \, \eucnorm{v} = 1 } \scalprod{ v }{ A v }.
  \end{split}
  \end{equation}
  Combining this with \cref{eq:lambdamin_eucnorm_ge} \proves that
  $ \lambdamin ( A ) = \min_{ v \in \R^{n}, \, \eucnorm{v} = 1 } \scalprod{ v }{ A v } $.
  This \proves \cref{item:lambdamin_2}.
\end{cproof}

\cfclear
\begin{lemma}
  \label{lem:lambdamin_specnorm}
  Let $ n \in \N $ and let $ A, B \in \R^{ n \times n } $ be symmetric matrices.
  Then $ \lambdamin ( A ) \ge \lambdamin ( B ) - \specnorm{ A - B } $ \cfout.
\end{lemma}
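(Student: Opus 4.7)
The plan is to combine the two items of \cref{lem:lambdamin_eucnorm} with the definition of the spectral norm in \cref{def:spectral_norm} and a Cauchy--Schwarz estimate. Because $A$ and $B$ are assumed symmetric, item~\ref{item:lambdamin_2} in \cref{lem:lambdamin_eucnorm} applies to both, so that
\begin{equation*}
  \lambdamin(A) = \min\bigl\{ \scalprod{x}{Ax} \colon x \in \R^n,\, \eucnorm{x} = 1 \bigr\}.
\end{equation*}
Hence it suffices to show for every unit vector $x \in \R^n$ that $\scalprod{x}{Ax} \ge \lambdamin(B) - \specnorm{A - B}$, and then to take the minimum over all such $x$.

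For a fixed $x \in \R^n$ with $\eucnorm{x} = 1$ I would first split
\begin{equation*}
  \scalprod{x}{Ax} = \scalprod{x}{Bx} + \scalprod{x}{(A-B)x}.
\end{equation*}
For the first summand, item~\ref{item:lambdamin_1} of \cref{lem:lambdamin_eucnorm} applied to the symmetric matrix $B$ gives $\scalprod{x}{Bx} \ge \lambdamin(B) \eucnorm{x}^2 = \lambdamin(B)$. For the second summand, the Cauchy--Schwarz inequality combined with the definition of the spectral norm yields
\begin{equation*}
  \babs{\scalprod{x}{(A-B)x}} \le \eucnorm{x}\, \eucnorm{(A-B)x} \le \specnorm{A-B}\, \eucnorm{x}^2 = \specnorm{A-B},
\end{equation*}
so in particular $\scalprod{x}{(A-B)x} \ge -\specnorm{A-B}$. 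Adding the two bounds gives $\scalprod{x}{Ax} \ge \lambdamin(B) - \specnorm{A-B}$.

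Taking the minimum of the left-hand side over all $x \in \R^n$ with $\eucnorm{x} = 1$ and invoking item~\ref{item:lambdamin_2} of \cref{lem:lambdamin_eucnorm} once more yields $\lambdamin(A) \ge \lambdamin(B) - \specnorm{A-B}$, which completes the proof. There is no real obstacle here; the only small care point is to make sure the inequality $\eucnorm{(A-B)x} \le \specnorm{A-B}\eucnorm{x}$ is justified directly from \cref{def:spectral_norm} (the case $x = 0$ being trivial and the case $x \neq 0$ following from the supremum definition).
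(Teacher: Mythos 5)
Your proposal is correct and follows essentially the same route as the paper's proof: split $\scalprod{x}{Ax}=\scalprod{x}{Bx}+\scalprod{x}{(A-B)x}$ on the unit sphere, bound the first term below via item~\ref{item:lambdamin_1} of \cref{lem:lambdamin_eucnorm} and the second via Cauchy--Schwarz and the spectral norm, then minimize using item~\ref{item:lambdamin_2}. The only cosmetic difference is the order in which the two items of \cref{lem:lambdamin_eucnorm} are invoked.
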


\begin{cproof}{lem:lambdamin_specnorm}
  \Nobs that the Cauchy-Schwarz inequality \proves for all $ v \in \R^{n} \backslash \{ 0 \} $ that
  \begin{equation}
    - \bscalprod{ v }{ (A - B ) v }
    \le \babs{ \bscalprod{ v }{ ( A - B ) v } }
    \le \eucnorm{ v } \eucnorm{ ( A - B ) v }
    = \eucnorm{ v }^2 \tfrac{ \eucnorm{ ( A - B ) v } }{ \eucnorm{ v } }
    \le \eucnorm{ v }^2 \specnorm{ A - B }
  \end{equation}
  \cfload.
  \Hence for all $ v \in \R^{n} $ with $ \eucnorm{ v } = 1 $ that
  $ \scalprod{ v }{ ( A - B ) v } \ge - \specnorm{ A - B } $.
  This and \cref{item:lambdamin_1} in \cref{lem:lambdamin_eucnorm} (applied with
  $ n \is n $, $ A \is B $
  in the notation of \cref{lem:lambdamin_eucnorm}) \prove for all $ v \in \R^{n} $ with $ \eucnorm v = 1 $ that
  \begin{equation}
    \scalprod{ v }{ A v }
    = \bscalprod{ v }{ ( A - B ) v } + \scalprod{ v }{ B v }
    \ge \lambdamin ( B ) - \specnorm{ A - B }.
  \end{equation}
  Combining this with \cref{item:lambdamin_2} in \cref{lem:lambdamin_eucnorm} (applied with
  $ n \is n $, $ A \is A $
  in the notation of \cref{lem:lambdamin_eucnorm})
  \proves that
  \begin{equation}
    \lambdamin ( A )
    = \min\nolimits_{ v \in \R^{n}, \, \eucnorm{z} = 1 } \scalprod{ v }{ A v }
    \ge \lambdamin ( B ) - \specnorm{ A - B }.
  \end{equation}
\end{cproof}

\cfclear
\begin{lemma}
  \label{lem:lambdamin_gramG}
  Assume \cref{setting:gradient_descent},
  let $ \poe \in (0,1) $, $ R \in (0,\infty) $, $ A_1, A_2 \in \mc F $ satisfy
  $ A_1 = \bigcap_{i=1}^{\outdim \ndata} \bigcap_{j=1}^{\outdim \ndata} \bigl\{ \abs{ \gramG_{i,j} (0) - \gramGinf_{i,j} }
    \le \frac{\Cvar \width \lambdazero}{4 \outdim \ndata} \bigr\} $
  and
  $ A_2 = \bigl\{ \sum_{i,j=1}^{\ndata} \sum_{p,q=1}^{\outdim} \sum_{k=1}^{\width} \abs{ \mf W_{p,k} \mf W_{q,k} } \bigl(
    \ind_{ \{ \abs{ \scalprod{ W_k(0) } { \indata_i } } \in \, \mc U_R \} }
    + \ind_{ \{ \abs{ \scalprod{ W_k(0) } { \indata_j } } \in \, \mc U_R \} } \bigr)
    \le \frac{\Cvar \width \lambdazero}{8 (1+\cmax^2) \cderiv^2} \bigr\} $,
  let $ \omega \in A_1 \cap A_2 $, $ \iteration \in \N_0 $, and
  assume
  $ \max_{ k \in \{ 1, 2, \ldots, \width \} } [ \cmax \eucnorm{ W_k(\iteration,\omega) - W_k(0,\omega) }
    + \abs{ B_k(\iteration,\omega) - B_k(0,\omega) } ] \le R $
  \cfload.
  Then $ \lambdamin( \gramG (\iteration,\omega) ) \ge \nicefrac{\Cvar \width \lambdazero}{2} $ \cfout.
\end{lemma}

\begin{cproof}{lem:lambdamin_gramG}
  \Nobs that \cref{lem:distance_gramGinf} (applied with
  $ \poe \is \poe $, $ A \is A_1 $, $ \omega \is \omega $
  in the notation of \cref{lem:distance_gramGinf}) and
  the fact that $ \omega \in A_1 $ \prove that
  \begin{equation}\label{eq:lem:lambdamin_gramG_inf}
    \specnorm{ \gramG (0,\omega) - \gramGinf } \le \tfrac{\Cvar \width \lambdazero}{4}
  \end{equation}
  \cfload.
  \Moreover \cref{lem:distance_gramG} (applied with
  $ \poe \is \poe $, $ R \is R $, $ A \is A_2 $, $ \omega \is \omega $, $ \iteration \is \iteration $
  in the notation of \cref{lem:distance_gramG}) and
  the fact that $ \omega \in A_2 $ \prove that
  \begin{equation}\label{eq:lem:lambdamin_gramG_stoch}
    \specnorm{ \gramG (\iteration,\omega) - \gramG (0,\omega) } \le \tfrac{\Cvar \width \lambdazero}{4}.
  \end{equation}
  Combining this, \cref{eq:lem:lambdamin_gramG_inf}, \cref{eq:lem:lambdamin_gramG_stoch}, and
  \cref{lem:lambdamin_specnorm} (applied with
  $ n \is \outdim \ndata $,
  $ A \is \gramG (\iteration,\omega) $,
  $ B \is \gramGinf $
  in the notation of \cref{lem:lambdamin_specnorm}), and
  the fact that $ \lambdamin( \gramGinf ) = \Cvar \width \lambdazero $.
  \proves that it holds that
  \begin{equation}
  \begin{split}
    \lambdamin( \gramG (\iteration,\omega) )
    &\ge \lambdamin( \gramGinf ) - \specnorm{ \gramG (\iteration,\omega) - \gramGinf }
    = \Cvar \width \lambdazero - \specnorm{ \gramG (\iteration,\omega) - \gramG (0,\omega)
      + \gramG (0,\omega) - \gramGinf } \\
    &\ge \Cvar \width \lambdazero - \specnorm{ \gramG (\iteration,\omega) - \gramG (0,\omega) }
      - \specnorm{ \gramG (0,\omega) - \gramGinf }
    \ge \Cvar \width \lambdazero - \tfrac{\Cvar \width \lambdazero}{4} - \tfrac{\Cvar \width \lambdazero}{4}
    = \tfrac{\Cvar \width \lambdazero}{2}.
  \end{split}
  \end{equation}
\end{cproof}

\section{Error analysis for GD optimization algorithms in the training of ANNs}
\label{sec:error_analysis}

In this section, we combine the estimates of the errors at initialization, the analysis of the evolution of the weights and biases of the considered \ANNs\ during training, and the lower bounds for the eigenvalues of the considered stochastic Gram matrices to obtain an error analysis for the considered \GD\ optimization algorithms.

Specifically, in \cref{prop:main_proposition} below we prove a pathwise upper bound on the squared error at each iteration step on a specific measurable set. In the main result of this article, \cref{thm:main_theorem} in \cref{subsec:quantitative_probabilistic_error_analysis} below, we combine this result with the analysis of the probability of this measurable set in \cref{lem:probability_A_cap} to obtain a quantitative probabilistic error analysis for the considered \GD\ optimization algorithms. In the remainder of \cref{subsec:quantitative_probabilistic_error_analysis}, we progressively simplify the constraints on the width $ \width \in \N $ and the learning rate $ \lr \in (0,\infty) $ and additionally consider pairwise distinct training input data in order to obtain more easily accessible results. In the proof of these convergence results, we use, among other things, the well-known upper bound on the natural logarithm in \cref{lem:log_inequality} below, whose proof we include only for completeness.

We conclude \cref{sec:error_analysis} with the qualitative error analysis for the considered \GD\ optimization algorithms, which we provide in \cref{cor:vectorized} below. Specifically, we show that the considered \GD\ optimization algorithms can be derived by approximating the piecewise affine activation function $ \act $ with smooth functions, applying the \GD\ method to the resulting smooth risk functions, and then taking the limit. In \cref{lem:convolution}, we give an example of appropriate approximations of the activation function and in \cref{lem:left_derivative} we show some elementary properties of the left gradient, which together allow us to deduce \cref{thm:introduction} above from \cref{cor:vectorized}.

\subsection{Quantitative pathwise error analysis for GD optimization algorithms}
\label{subsec:quantitative_pathwise_error_analysis}

\cfclear
\begin{lemma}
  \label{lem:probability_A_cap}
  Assume \cref{setting:gradient_descent},
  assume $ \lambdazero \in (0,\infty) $,
  let $ \poe \in (0,1) $, $ R \in (0,\infty) $ satisfy
  $ R \le \tfrac{ \sqrt{\pi \cvar} \cmin \poe \lambdazero }
    { 16 \sqrt{2} (1+\cmax^2) \cderiv^2 \nbp \outdim^2 \ndata^2 } $,
  assume
  $ \width \ge \tfrac{32 (1+\cmax^2) \outdim \ndata}{\lambdazero}
    \ln( \frac{2 \outdim^2 \ndata^2}{\poe} ) \max \bigl\{
    \frac{4 (1+\cmax^2) \eucnorm{\slope}^4 \outdim \ndata}{\lambdazero}, \cderiv \bigr\} $,
  and let $ A_1, A_2, A_3, \allowbreak A_4, A_5, A_6 \in \mc F $ satisfy
  \begin{equation}
  \begin{split}
    A_1 &= \Bigl \{ \eucnorm{ \fnetvalue(0) - \foutdata }^2 \le
      \bigl( \cvar \Cvar \width \outdim \eucnorm{\slope}^2 \eucnorm{ \findata }^2
      + \Cvar \width \outdim \eucnorm{\yinter}^2 \ndata + \eucnorm{ \foutdata }^2 \bigr) \poe^{-1} \Bigr \}, \\
    A_2 &= \smallbigcap\limits_{p=1}^{\outdim} \smallbigcap\limits_{k=1}^{\width} \Bigl\{ \abs{\mf W_{p,k}}^2
      \le 2 \Cvar \ln ( \tfrac{2 \outdim \width}{\poe} ) \Bigr\}, \qquad
    A_3 = \smallbigcap\limits_{i=1}^{\outdim \ndata} \smallbigcap\limits_{j=1}^{\outdim \ndata} \bigl\{
      \abs{ \gramG_{i,j} (0) - \gramGinf_{i,j} } \le \frac{\Cvar \width \lambdazero}{4 \outdim \ndata} \bigr\}, \\
    A_4 &= \Bigl\{ \smallsum\limits_{i,j=1}^{\ndata} \smallsum\limits_{p,q=1}^{\outdim} \smallsum\limits_{k=1}^{\width}
      \abs{ \mf W_{p,k} \mf W_{q,k} } \bigl(
    \ind_{ \{ \abs{ \scalprod{ W_k(0) } { \indata_i } } \in \, \mc U_R \} }
    + \ind_{ \{ \abs{ \scalprod{ W_k(0) } { \indata_j } } \in \, \mc U_R \} } \bigr)
    \le \tfrac{\Cvar \width \lambdazero}{8 (1+\cmax^2) \cderiv^2} \Bigr\}, \\
    A_5 &= \Bigl\{ \smallsum\limits_{i=1}^{\ndata} \smallsum\limits_{p,q=1}^{\outdim} \smallsum\limits_{k=1}^{\width}
      \abs{ \mf W_{p,k} \mf W_{q,k} }
      \ind_{ \{ \abs{ \scalprod{ W_k(0) } { \indata_i } } \in \, \mc U_R \} }
      \le \frac{\sqrt{2} \Cvar \width \nbp \outdim^2 \ndata R}{\sqrt{\pi \cvar} \cmin \poe} \Bigr\}, \quad \text{and} \\
    A_6 &= \smallbigcap\limits_{p=1}^{\outdim} \Bigl \{ \smallsum\limits_{q=1}^{\outdim} \smallsum\limits_{k=1}^{\width}
      \abs{ \mf W_{p,k} \mf W_{q,k} } \le \tfrac{\Cvar \width \outdim^2}{\poe} \Bigr\},
  \end{split}
  \end{equation}
  \cfload.
  Then $ \P \bigl( \bigcap_{i=1}^6 A_i \bigr) \ge 1 - 6 \poe $.
\end{lemma}

\begin{cproof}{lem:probability_A_cap}
  Throughout this proof let $ Z_{k,i} \colon \Omega \to \R $, $ k \in \{ 1, 2, \ldots, \width \} $, $ i \in \{ 1, 2, \ldots, \ndata \} $,
  satisfy for all $ k \in \{ 1, 2, \ldots, \width \} $, $ i \in \{ 1, 2, \ldots, \ndata \} $ that
  \begin{equation}
    Z_{k,i} = \tfrac{\scalprod{ W_k(0) }{ \indata_i }}{\sqrt{\cvar} \eucnorm{ \indata_i } }.
  \end{equation}
  \Nobs that the assumption that $
  \sqrt{\nicefrac{1}{\cvar}} W_1(0), \sqrt{\nicefrac{1}{\cvar}} W_2(0), \ldots, \sqrt{\nicefrac{1}{\cvar}} W_{\width}(0) $
  are standard normal \proves that for all
  $ k \in \{ 1, 2, \ldots, \width \} $, $ i \in \{ 1, 2, \ldots, \ndata \} $ it holds that $ \scalprod{ W_k(0) }{ \indata_i } $
  is a centered normal random variable with
  \begin{equation}
    \Var[\scalprod{ W_k(0) }{ \indata_i }]
    = \cvar \Var \bigl[ \scalprod{ \textstyle \sqrt{ \nicefrac{1}{\cvar} } W_k(0) }{ \indata_i } \bigr]
    = \cvar \eucnorm{\indata_i}^2.
  \end{equation}
  \Hence for all $ k \in \{ 1, 2, \ldots, \width \} $, $ i \in \{ 1, 2, \ldots, \ndata \} $ that $ Z_{k,i} $
  is a standard normal random variable.
  This and \cref{lem:gaussian_anti_concentration_inequality} (applied for every $ k \in \{ 1, 2, \ldots, \width \} $,
  $ i \in \{ 1, 2, \ldots, \ndata \} $, $ s \in \{ 1, 2, \dots, \nbp \} $ with
  $ ( \Omega, \mc F, \P ) \is ( \Omega, \mc F, \P ) $,
  $ X \is Z_{k,i} $,
  $ \kappa \is \tfrac{\bp_s}{\sqrt{\cvar} \eucnorm{\indata_i}} $,
  $ \varepsilon \is \tfrac{R}{\sqrt{\cvar} \cmin} $
  in the notation of \cref{lem:gaussian_anti_concentration_inequality})
  \prove that for all $ k \in \{ 1, 2, \ldots, \width \} $, $ i \in \{ 1, 2, \ldots, \ndata \} $ it holds that
  \begin{equation}
  \label{eq:lem_probability_expectation_ind}
  \begin{split}
    \E \bigl[ \ind_{ \{ \abs{ \scalprod{ W_k(0) } { \indata_i } } \in \, \mc U_R \} } \bigr]
    &= \P \bigl( \abs{ \scalprod{ W_k(0) }{ \indata_i } } \in \cup_{s=1}^{\nbp} [ \bp_s - R, \bp_s + R ] \bigr) \\
    &\le \smallsum\limits_{s=1}^{\nbp} \P \bigl( \abs{ \scalprod{ W_k(0) }{ \indata_i } - \bp_s } \le R \bigr)
    = \smallsum\limits_{s=1}^{\nbp} \P \Bigl( \babs{ Z_{k,i} - \tfrac{\bp_s}{\sqrt{\cvar} \eucnorm{\indata_i}} }
      \le \tfrac{R}{\sqrt{\cvar} \eucnorm{ \indata_i }} \Bigr) \\
    &\le \smallsum\limits_{s=1}^{\nbp} \P \Bigl( \babs{ Z_{k,i} - \tfrac{\bp_s}{\sqrt{\cvar} \eucnorm{\indata_i}} }
      \le \tfrac{R}{\sqrt{\cvar} \cmin} \Bigr)
    \le \smallsum\limits_{s=1}^{\nbp} \tfrac{2 R}{\sqrt{2\pi} \sqrt{\cvar} \cmin}
    = \tfrac{ \sqrt{2} \nbp R }{ \sqrt{\pi \cvar} \cmin }.
  \end{split}
  \end{equation}
  \Moreover \cref{lem:expectation_absolute_normal} (applied with
  $ ( \Omega, \mc F, \P ) \is ( \Omega, \mc F, \P ) $,
  $ X \is \nicefrac{1}{\sqrt{\Cvar}} \mf W_{1,1} $
  in the notation of \cref{lem:expectation_absolute_normal}) and
  the assumption that
  $ \sqrt{\nicefrac{1}{\Cvar}} \mf W_1, \allowbreak
    \sqrt{\nicefrac{1}{\Cvar}} \mf W_2, \allowbreak \ldots, \allowbreak
    \sqrt{\nicefrac{1}{\Cvar}} \mf W_{\outdim} $
  are independent and standard normal
  \prove that for all $ p, q \in \{ 1, 2, \ldots, \outdim \} $, $ k \in \{ 1, 2, \ldots, \width \} $ with $ p \neq q $ it holds that
  \begin{equation}\label{eq:lem_probability_expectation_abs_normal}
    \E \bigl[ \abs{ \mf W_{p,k} \mf W_{q,k} } \bigr]
    = \E \bigl[ \abs{ \mf W_{p,k} } \bigr] \E \bigl[ \abs{ \mf W_{q,k} } \bigr]
    = \bigl( \E \bigl[ \abs{ \mf W_{1,1} } \bigr] \bigr)^2
    = \Cvar \bigl( \E \bigl[ \abs{ \nicefrac{1}{\sqrt{\Cvar}} \mf W_{1,1} } \bigr] \bigr)^2
    = \tfrac{2 \Cvar}{\pi}
    \le \Cvar.
  \end{equation}
  \Moreover the assumption that
  $ \sqrt{\nicefrac{1}{\cvar}} W_1(0), \allowbreak
    \sqrt{\nicefrac{1}{\cvar}} W_2(0), \allowbreak \ldots, \allowbreak
    \sqrt{\nicefrac{1}{\cvar}} W_{\width}(0), \allowbreak
    \sqrt{\nicefrac{1}{\Cvar}} \mf W_1, \allowbreak
    \sqrt{\nicefrac{1}{\Cvar}} \mf W_2, \allowbreak \ldots, \allowbreak
    \sqrt{\nicefrac{1}{\Cvar}} \mf W_{\outdim} $
  are independent \proves that for all
  $ i \in \{ 1, 2, \ldots, \ndata \} $, $ k \in \{ 1, 2, \ldots, \width \} $, $ p, q \in \{ 1, 2, \ldots, \outdim \} $
  it holds that $ \mf W_{p,k} \mf W_{q,k} $ and $ \ind_{ \{ \abs{ \scalprod{ W_k(0) } { \indata_i } } \in \, \mc U_R \} } $
  are independent.
  Combining this, \cref{eq:lem_probability_expectation_ind}, \cref{eq:lem_probability_expectation_abs_normal}, and
  the fact that for all $ k \in \{ 1, 2, \ldots, \width \} $, $ p \in \{ 1, 2, \ldots, \outdim \} $ it holds that
  $ \mf W_{p,k} $ is a centered normal random variable with $ \Var[ \mf W_{p,k} ] = \Cvar $
  \proves that
  \begin{equation}
  \begin{split}
    \E \biggl[ \smallsum\limits_{i=1}^{\ndata} \smallsum\limits_{p,q=1}^{\outdim} \smallsum\limits_{k=1}^{\width}
      \abs{ \mf W_{p,k} \mf W_{q,k} }
      \ind_{ \{ \abs{ \scalprod{ W_k(0) } { \indata_i } } \in \, \mc U_R \} } \biggr]
    &= \smallsum\limits_{i=1}^{\ndata} \smallsum\limits_{p,q=1}^{\outdim} \smallsum\limits_{k=1}^{\width}
      \E \bigl[ \abs{ \mf W_{p,k} \mf W_{q,k} } \bigr]
      \E \bigl[ \ind_{ \{ \abs{ \scalprod{ W_k(0) } { \indata_i } } \in \, \mc U_R \} } \bigr] \\
    &\le \smallsum\limits_{i=1}^{\ndata} \smallsum\limits_{p,q=1}^{\outdim} \smallsum\limits_{k=1}^{\width}
      \Cvar \tfrac{\sqrt{2} \nbp R}{\sqrt{\pi \cvar} \cmin}
    = \frac{\sqrt{2} \Cvar \width \nbp \outdim^2 \ndata R}{\sqrt{\pi \cvar} \cmin}.
  \end{split}
  \end{equation}
  The Markov inequality thus \proves that
  \begin{equation}
  \label{eq:lem_probability_A_5}
  \begin{split}
    \P ( A_5 )
    &= \P \biggl( \smallsum\limits_{i=1}^{\ndata} \smallsum\limits_{p,q=1}^{\outdim} \smallsum\limits_{k=1}^{\width}
      \abs{ \mf W_{p,k} \mf W_{q,k} }
      \ind_{ \{ \abs{ \scalprod{ W_k(0) } { \indata_i } } \in \, \mc U_R \} }
      \le \frac{\sqrt{2} \Cvar \width \nbp \outdim^2 \ndata R}{\sqrt{\pi \cvar} \cmin \poe} \biggr) \\
    &\ge 1 - \P \biggl( \smallsum\limits_{i=1}^{\ndata} \smallsum\limits_{p,q=1}^{\outdim} \smallsum\limits_{k=1}^{\width}
      \abs{ \mf W_{p,k} \mf W_{q,k} }
      \ind_{ \{ \abs{ \scalprod{ W_k(0) } { \indata_i } } \in \, \mc U_R \} }
      \ge \frac{\sqrt{2} \Cvar \width \nbp \outdim^2 \ndata R}{\sqrt{\pi \cvar} \cmin \poe} \biggr) \\
    &\ge 1 - \frac{\E \bigl[ \smallsum_{i=1}^{\ndata} \smallsum_{p,q=1}^{\outdim} \smallsum_{k=1}^{\width} \abs{ \mf W_{p,k} \mf W_{q,k} }
      \ind_{ \{ \abs{ \scalprod{ W_k(0) } { \indata_i } } \in \, \mc U_R \} } \bigr] }
      {\sqrt{2} \Cvar \width \nbp \outdim^2 \ndata R (\sqrt{\pi \cvar} \cmin \poe)^{-1}}
    \ge 1 - \poe.
  \end{split}
  \end{equation}
  Next \nobs that the Markov inequality, \cref{eq:lem_probability_expectation_abs_normal}, and
  the fact that for all $ p \in \{ 1, 2, \ldots, \outdim \} $, $ k \in \{ 1, 2, \ldots, \width \} $ it holds that
  $ \mf W_{p,k} $ is a centered normal random variable with $ \Var[ \mf W_{p,k} ] = \Cvar $ \prove that
  \begin{equation}
  \label{eq:lem_probability_A_6}
  \begin{split}
    \P ( A_6 )
    &= \P \biggl( \smallbigcap\limits_{p=1}^{\outdim} \Bigl \{ \smallsum\limits_{q=1}^{\outdim} \smallsum\limits_{k=1}^{\width}
      \abs{ \mf W_{p,k} \mf W_{q,k} } \le \tfrac{\Cvar \width \outdim^2}{\poe} \Bigr\} \biggr)
    = 1 - \P \biggl( \smallbigcup\limits_{p=1}^{\outdim} \Bigl \{ \smallsum\limits_{q=1}^{\outdim} \smallsum\limits_{k=1}^{\width}
      \abs{ \mf W_{p,k} \mf W_{q,k} } \ge \tfrac{\Cvar \width \outdim^2}{\poe} \Bigr\} \biggr) \\
    &\ge 1 - \smallsum\limits_{p=1}^{\outdim} \P \biggl( \smallsum\limits_{q=1}^{\outdim} \smallsum\limits_{k=1}^{\width}
      \abs{ \mf W_{p,k} \mf W_{q,k} } \ge \tfrac{\Cvar \width \outdim^2}{\poe} \biggr)
    \ge 1 - \smallsum\limits_{p=1}^{\outdim} \displaystyle \frac{\E \bigl[ \smallsum\nolimits_{q=1}^{\outdim} \smallsum\nolimits_{k=1}^{\width}
      \abs{ \mf W_{p,k} \mf W_{q,k} } \bigr]}{\Cvar \width \outdim^2 \poe^{-1}} \\
    &= 1 - \tfrac{\poe}{\Cvar \width \outdim^2} \smallsum\limits_{p,q=1}^{\outdim} \smallsum\limits_{k=1}^{\width}
      \E \bigl[ \abs{ \mf W_{p,k} \mf W_{q,k} } \bigr]
    \ge 1 - \tfrac{\poe}{\Cvar \width \outdim^2} \smallsum\limits_{p,q=1}^{\outdim} \smallsum\limits_{k=1}^{\width} \Cvar
    = 1 - \poe.
  \end{split}
  \end{equation}
  \Moreover \cref{lem:error_at_initialization} (applied with
  $ \poe \is \poe $
  in the notation of \cref{lem:error_at_initialization}) and
  \cref{lem:probability_A_W} (applied with
  $ \poe \is \poe $
  in the notation of \cref{lem:probability_A_W})
  \prove that
  \begin{equation}
  \label{eq:lem_probability_A_1_2}
    \P( A_1 ) \ge 1 - \poe
    \qqandqq
    \P( A_2 ) \ge 1 - \poe.
  \end{equation}
  \Moreover the assumption that
  $ \width \ge \tfrac{32 (1+\cmax^2) \outdim \ndata}{\lambdazero}
    \ln( \frac{2 \outdim^2 \ndata^2}{\poe} ) \max \bigl\{
    \frac{4 (1+\cmax^2) \eucnorm{\slope}^4 \outdim \ndata}{\lambdazero}, \cderiv \bigr\} $ and
  \cref{lem:probability_A_gramG} (applied with
  $ \poe \is \poe $
  in the notation of \cref{lem:probability_A_gramG})
  \prove that
  \begin{equation}
  \label{eq:lem_probability_A_3}
    \P( A_3 ) \ge 1 - \poe.
  \end{equation}
  \Moreover the assumption that
  $ R \le \tfrac{ \sqrt{\pi \cvar} \cmin \poe \lambdazero }
    { 16 \sqrt{2} (1+\cmax^2) \cderiv^2 \nbp \outdim^2 \ndata^2 } $ and
  \cref{lem:probability_A_ind} (applied with
  $ \poe \is \poe $, $ R \is R $
  in the notation of \cref{lem:probability_A_ind})
  \prove that
  \begin{equation}
  \label{eq:lem_probability_A_4}
    \P( A_4 ) \ge 1 - \poe.
  \end{equation}
  Combining this \cref{eq:lem_probability_A_5}, \cref{eq:lem_probability_A_6},
  \cref{eq:lem_probability_A_1_2}, and \cref{eq:lem_probability_A_3}
  \proves that for all $ i \in \{ 1, 2, \ldots, 6 \} $
  it holds that $ \P ( A_i ) \ge 1 - \poe $.
  \Hence that
  \begin{equation}
    \textstyle
    \P \biggl( \bigcap\limits_{i=1}^6 A_i \biggr)
    = 1 - \P \biggl( \bigcup\limits_{i=1}^6 \bigl( \Omega \backslash A_i \bigr) \biggr)
    \ge 1 - \sum\limits_{i=1}^6 \P \bigl( \Omega \backslash A_i \bigr)
    = 1 - \sum\limits_{i=1}^6 \bigl( 1 - \P \bigl( A_i \bigr) \bigr)
    \ge 1 - \sum\limits_{i=1}^6 \poe
    = 1 - 6 \poe.
  \end{equation}
\end{cproof}

\cfclear
\begin{proposition}
  \label{prop:main_proposition}
  Assume \cref{setting:gradient_descent},
  assume $ \lambdazero \in (0,\infty) $,
  let $ \poe \in (0,1) $, $ R \in (0,\infty) $ satisfy that
  $ R \le \frac{\sqrt{\pi \cvar} \cmin \poe \lambdazero}{8 \sqrt{2} (1+\cmax^2) \cderiv^2 \nbp \outdim^2 (\sqrt{\ndata} + \ndata) \ndata} $,
  let $ A_1, A_2, A_3, A_4, A_5, A_6 \in \mc F $ satisfy
  \begin{equation}
  \begin{split}
    A_1 &= \Bigl \{ \eucnorm{ \fnetvalue(0) - \foutdata }^2 \le
      \bigl( \cvar \Cvar \width \outdim \eucnorm{\slope}^2 \eucnorm{ \findata }^2
      + \Cvar \width \outdim \eucnorm{\yinter}^2 \ndata + \eucnorm{ \foutdata }^2 \bigr) \poe^{-1} \Bigr \}, \\
    A_2 &= \smallbigcap\limits_{p=1}^{\outdim} \smallbigcap\limits_{k=1}^{\width} \Bigl\{ \abs{\mf W_{p,k}}^2
      \le 2 \Cvar \ln ( \tfrac{2 \outdim \width}{\poe} ) \Bigr\}, \qquad
    A_3 = \smallbigcap\limits_{i=1}^{\outdim \ndata} \smallbigcap\limits_{j=1}^{\outdim \ndata} \bigl\{
      \abs{ \gramG_{i,j} (0) - \gramGinf_{i,j} } \le \frac{\Cvar \width \lambdazero}{4 \outdim \ndata} \bigr\}, \\
    A_4 &= \Bigl\{ \smallsum\limits_{i,j=1}^{\ndata} \smallsum\limits_{p,q=1}^{\outdim} \smallsum\limits_{k=1}^{\width}
      \abs{ \mf W_{p,k} \mf W_{q,k} } \bigl(
    \ind_{ \{ \abs{ \scalprod{ W_k(0) } { \indata_i } } \in \, \mc U_R \} }
    + \ind_{ \{ \abs{ \scalprod{ W_k(0) } { \indata_j } } \in \, \mc U_R \} } \bigr)
    \le \tfrac{\Cvar \width \lambdazero}{8 (1+\cmax^2) \cderiv^2} \Bigr\}, \\
    A_5 &= \Bigl\{ \smallsum\limits_{i=1}^{\ndata} \smallsum\limits_{p,q=1}^{\outdim} \smallsum\limits_{k=1}^{\width}
      \abs{ \mf W_{p,k} \mf W_{q,k} }
      \ind_{ \{ \abs{ \scalprod{ W_k(0) } { \indata_i } } \in \, \mc U_R \} }
      \le \frac{\sqrt{2} \Cvar \width \nbp \outdim^2 \ndata R}{\sqrt{\pi \cvar} \cmin \poe} \Bigr\}, \quad \text{and} \\
    A_6 &= \smallbigcap\limits_{p=1}^{\outdim} \Bigl \{ \smallsum\limits_{q=1}^{\outdim} \smallsum\limits_{k=1}^{\width}
      \abs{ \mf W_{p,k} \mf W_{q,k} } \le \tfrac{\Cvar \width \outdim^2}{\poe} \Bigr\},
  \end{split}
  \end{equation}
  assume
  $ \lr < \min \bigl\{ \tfrac{\Cvar \width \lambdazero}{8 ( \Cvar \width \outdim^2 (1 + \cmax^2) \cderiv^2 \poe^{-1} + 1 )^2 \outdim \ndata},
    \tfrac{\ndata}{\Cvar \width \lambdazero} \bigr\} $,
  assume
  \begin{equation}
  \label{eq:thm_assumption_m}
    \tfrac{1}{\width} \ln \bigl( \tfrac{2 \outdim \width}{\poe} \bigr)
      \le \frac{ \poe \Cvar \width \lambdazero^2 R^2 }
      { 32 (1+\cmax^2)^2 \cderiv^2
      ( \cvar \Cvar \width \outdim \eucnorm{\slope}^2 \eucnorm{ \findata }^2
        + \Cvar \width \outdim \eucnorm{\yinter}^2 \ndata + \eucnorm{ \foutdata }^2 ) \outdim \ndata },
  \end{equation}
  and let $ \omega \in (\bigcap_{i=1}^6 A_i) $ \cfload.
  Then
  \begin{enumerate}[label=(\roman{*})]
    \item \label{item:thm_1} it holds for all $ \iteration \in \N_0 $, $ k \in \{ 1, 2, \ldots, \width \} $ that
      $ \cmax \eucnorm{ W_k(\iteration,\omega) - W_k(0,\omega) } + \abs{ B_k(\iteration,\omega) - B_k(0,\omega) } \le R $ and
    \item \label{item:thm_3} it holds for all $ \iteration \in \N_0 $ that
      $ \eucnorm{ \fnetvalue(\iteration,\omega) - \foutdata }^2 \le \bigl( 1 - \frac{\lr \Cvar \width \lambdazero}{\ndata} \bigr)^{\iteration}
        \eucnorm{ \fnetvalue(0,\omega) - \foutdata }^2 $.
  \end{enumerate}
\end{proposition}

\begin{cproof}{prop:main_proposition}
  Throughout this proof let
  $ I = ( I_i )_{ i \in \{ 1, 2, \ldots, \outdim \ndata \} } \colon \N_0 \to \R^{\outdim \ndata} $,
  $ J = ( J_i )_{ i \in \{ 1, 2, \ldots, \outdim \ndata \} } \colon \allowbreak \N_0 \to \R^{\outdim \ndata} $,
  $ P = ( P_{i,j} )_{ (i,j) \in \{ 1, 2, \ldots, \outdim \ndata \}^2 } \colon \N_0 \to \R^{(\outdim \ndata) \times (\outdim \ndata)} $, and
  $ Q = ( Q_{i,j} )_{ (i,j) \in \{ 1, 2, \ldots, \outdim \ndata \}^2 } \colon \N_0 \to \R^{(\outdim \ndata) \times (\outdim \ndata)} $ satisfy
  for all $ i, j \in \{ 1, 2, \ldots, \ndata \} $, $ p, q \in \{ 1, 2, \ldots, \outdim \}$, $ \iteration \in \N_0 $ that
  \begin{equation}\label{eq:thm_I}
  \begin{split}
    I_{ (i-1) \outdim + p } (\iteration)
    &= \smallsum\limits_{k=1}^{\width} \mf W_{p,k} (\omega) \Bigl[
      \act \bigl( \scalprod{ W_k(\iteration+1,\omega) }{ \indata_i } + B_k(\iteration+1,\omega) \bigr) \\
    &\quad - \act \bigl( \scalprod{ W_k(\iteration,\omega) }{ \indata_i } + B_k(\iteration,\omega) \bigr) \Bigr]
      \ind_{ \{ \abs{ \scalprod{ W_k(0) } { \indata_i } } \not\in \, \mc U_R \} } (\omega),
  \end{split}
  \end{equation}
  \begin{equation}\label{eq:thm_J}
  \begin{split}
    J_{ (i-1) \outdim + p } (\iteration)
    &= \smallsum\limits_{k=1}^{\width} \mf W_{p,k} (\omega) \Bigl[
      \act \bigl( \scalprod{ W_k(\iteration+1,\omega) }{ \indata_i } + B_k(\iteration+1,\omega) \bigr) \\
    &\quad - \act \bigl( \scalprod{ W_k(\iteration,\omega) }{ \indata_i } + B_k(\iteration,\omega) \bigr) \Bigr]
      \ind_{ \{ \abs{ \scalprod{ W_k(0) } { \indata_i } } \in \, \mc U_R \} } (\omega),
  \end{split}
  \end{equation}
  \begin{equation}
  \label{eq:thm_P}
  \begin{split}
    P_{ (i-1) \outdim + p, (j-1) \outdim + q } (\iteration)
    &= \bigl( 1 + \scalprod{ \indata_i }{ \indata_j } \bigr) \smallsum\limits_{k=1}^{\width}
      \mf W_{p,k} (\omega) \mf W_{q,k} (\omega)
      \deriv \bigl( \scalprod{ W_k (\iteration,\omega) }{ \indata_i } + B_k(\iteration,\omega) \bigr) \\
    &\quad \times \deriv \bigl( \scalprod{ W_k(\iteration,\omega) }{ \indata_j } + B_k(\iteration,\omega) \bigr)
      \ind_{ \{ \abs{ \scalprod{ W_k(0) } { \indata_i } } \in \, \mc U_R \} } (\omega),
  \end{split}
  \end{equation}
  and $ Q_{ (i-1) \outdim + p, (j-1) \outdim + q } = \ind_{ \{ p \} } (q) $
  \cfload.
  \Nobs that \cref{lem:spectral_norm_inequality} (applied for every $ \iteration \in \N_0 $ with
  $ m \is \outdim \ndata $, $ n \is \outdim \ndata $, $ A \is P(\iteration) $
  in the notation of \cref{lem:spectral_norm_inequality}),
  \cref{eq:thm_P},
  the fact that for all $ i, j \in \{ 1, 2, \ldots, \ndata \} $ it holds that
  $ \abs{ 1 + \scalprod{ \indata_i }{ \indata_j } }
    \le 1 + \abs{ \scalprod{ \indata_i }{ \indata_j } }
    \le 1 + \eucnorm{ \indata_i } \eucnorm{ \indata_j }
    \le 1 + \cmax^2 $,
  the fact that for all $ v \in \R $ it holds that $ \abs{ \deriv(v) } \le \cderiv $, and
  the fact that $ \omega \in A_5 $ \prove that for all $ \iteration \in \N_0 $ it holds that
  \begin{equation}
  \label{eq:thm_P_specnorm}
  \begin{split}
    &\specnorm{ P(\iteration) }
    \le \smallsum\limits_{i,j=1}^{\ndata} \smallsum\limits_{p,q=1}^{\outdim} \abs{ P_{ (i-1) \outdim + p, (j-1) \outdim + q } (\iteration) } \\
    &\le \smallsum\limits_{i,j=1}^{\ndata} \smallsum\limits_{p,q=1}^{\outdim} \abs{ 1 + \scalprod{ \indata_i }{ \indata_j } }
      \smallsum\limits_{k=1}^{\width} \abs{ \mf W_{p,k} (\omega) \mf W_{q,k} (\omega) }
      \babs{ \deriv \bigl( \scalprod{ W_k(\iteration,\omega) }{ \indata_i } + B_k(\iteration,\omega) \bigr) \\
    &\quad \times \deriv \bigl( \scalprod{ W_k(\iteration,\omega) }{ \indata_j } + B_k(\iteration,\omega) \bigr) }
      \ind_{ \{ \abs{ \scalprod{ W_k(0) } { \indata_i } } \in \, \mc U_R \} } (\omega) \\
    &\le ( 1 + \cmax^2 ) \cderiv^2 \smallsum\limits_{i,j=1}^{\ndata} \smallsum\limits_{p,q=1}^{\outdim} \smallsum\limits_{k=1}^{\width}
      \abs{ \mf W_{p,k} (\omega) \mf W_{q,k} (\omega) } \ind_{ \{ \abs{ \scalprod{ W_k(0) } { \indata_i } } \in \, \mc U_R \} } (\omega) \\
    &= ( 1 + \cmax^2 ) \cderiv^2 \ndata \smallsum\limits_{i=1}^{\ndata} \smallsum\limits_{p,q=1}^{\outdim} \smallsum\limits_{k=1}^{\width}
      \abs{ \mf W_{p,k} (\omega) \mf W_{q,k} (\omega) } \ind_{ \{ \abs{ \scalprod{ W_k(0) } { \indata_i } } \in \, \mc U_R \} } (\omega)
    \le \displaystyle\frac{\sqrt{2} \Cvar \width (1+\cmax^2) \cderiv^2 \nbp \outdim^2 \ndata^2 R}{\sqrt{\pi \cvar} \cmin \poe}
  \end{split}
  \end{equation}
  \cfload.  
  Next \nobs that \cref{eq:setting_W} and \cref{eq:setting_B} \prove that for all
  $ k \in \{ 1, 2, \ldots, \width \} $, $ i \in \{ 1, 2, \ldots, \ndata \} $, $ \iteration \in \N_0 $ it holds that
  \begin{equation}
  \label{eq:thm_diff_rect_arguments}
  \begin{split}
    &\Bigl[ \scalprod{ W_k(\iteration+1,\omega) }{ \indata_i } + B_k(\iteration+1,\omega) \Bigr]
      - \Bigl[ \scalprod{ W_k(\iteration,\omega) }{ \indata_i } + B_k(\iteration,\omega) \Bigr] \\
    &= \bscalprod{ W_k(\iteration+1,\omega) - W_k(\iteration,\omega) }{ \indata_i }
      + B_k(\iteration+1,\omega) - B_k(\iteration,\omega) \\
    &= \bbscalprod{ - \frac{2 \lr}{\ndata} \smallsum\limits_{j=1}^{\ndata} \smallsum\limits_{q=1}^{\outdim}
      ( \netvalue_{j}^q (\iteration,\omega) - \outdata_{j}^q ) \mf W_{q,k} (\omega)
      \deriv \bigl( \scalprod{ W_k(\iteration,\omega) }{ \indata_j } + B_k(\iteration,\omega) \bigr) \, \indata_j }{ \indata_i } \\
    &\quad - \frac{2 \lr}{\ndata} \smallsum\limits_{j=1}^{\ndata} \smallsum\limits_{q=1}^{\outdim}
      ( \netvalue_{j}^q (\iteration,\omega) - \outdata_{j}^q ) \mf W_{q,k} (\omega)
      \deriv \bigl( \scalprod{ W_k(\iteration,\omega) }{ \indata_j } + B_k(\iteration,\omega) \bigr) \\
    &= - \frac{2 \lr}{\ndata} \smallsum\limits_{j=1}^{\ndata} \smallsum\limits_{q=1}^{\outdim}
      ( \netvalue_{j}^q (\iteration,\omega) - \outdata_{j}^q ) \mf W_{q,k} (\omega)
      \deriv \bigl( \scalprod{ W_k(\iteration,\omega) }{ \indata_j } + B_k(\iteration,\omega) \bigr)
      \bigl( 1 + \scalprod{ \indata_i }{ \indata_j } \bigr).
  \end{split}
  \end{equation}
  This, the fact that $ \R \ni x \mapsto \act (x) \in \R $ is $ \cderiv $-Lipschitz continuous,
  the fact that for all $ i, j \in \{ 1, 2, \ldots, \ndata \} $ it holds that
  $ \abs{ 1 + \scalprod{ \indata_i }{ \indata_j } }
    \le 1 + \abs{ \scalprod{ \indata_i }{ \indata_j } }
    \le 1 + \eucnorm{ \indata_i } \eucnorm{ \indata_j }
    \le 1 + \cmax^2 $,
  the fact that for all $ v \in \R $ it holds that $ \abs{ \deriv(v) } \le \cderiv $,
  and \cref{lem:eucnorm_inequality} (applied for every $ \iteration \in \N_0 $, $ q \in \{ 1, 2, \ldots, \outdim \} $ with
  $ \ndata \is \ndata $, $ v \is ( \netvalue_j^q (\iteration,\omega) - \outdata_j^q )_{ j \in \{ 1, 2, \ldots, \ndata \} } $
  in the notation of \cref{lem:eucnorm_inequality})
 \prove that for all $ k \in \{ 1, 2, \ldots, \width \} $, $ i \in \{ 1, 2, \ldots, \ndata \} $, $ \iteration \in \N_0 $ it holds that
  \begin{equation}
  \label{eq:thm_abs_rect}
  \begin{split}  
    &\bbabs{ \act \bigl( \scalprod{ W_k(\iteration+1,\omega) }{ \indata_i } + B_k(\iteration+1,\omega) \bigr)
      - \act \bigl( \scalprod{ W_k(\iteration,\omega) }{ \indata_i } + B_k(\iteration,\omega) \bigr) } \\
    &\le \cderiv \bbabs{ \Bigl[ \scalprod{ W_k(\iteration+1,\omega) }{ \indata_i } + B_k(\iteration+1,\omega) \Bigr]
      - \Bigl[ \scalprod{ W_k(\iteration,\omega) }{ \indata_i } + B_k(\iteration,\omega) \Bigr] } \\
    &\le \frac{2 \lr \cderiv}{\ndata} \smallsum\limits_{q=1}^{\outdim} \smallsum\limits_{j=1}^{\ndata} \bbabs{
      ( \netvalue_{j}^q (\iteration,\omega) - \outdata_{j}^q ) \mf W_{q,k} (\omega)
      \deriv \bigl( \scalprod{ W_k(\iteration,\omega) }{ \indata_j } + B_k(\iteration,\omega) \bigr)
      \bigl( 1 + \scalprod{ \indata_i }{ \indata_j } \bigr) } \\
    &\le \frac{2 \lr \cderiv^2}{\ndata} \bigl( 1 + \cmax^2 \bigr) \smallsum\limits_{q=1}^{\outdim} \abs{ \mf W_{q,k} (\omega) }
      \smallsum\limits_{j=1}^{\ndata} \abs{ \netvalue_{j}^q (\iteration,\omega) - \outdata_{j}^q }
    \le \displaystyle\frac{2 \lr ( 1 + \cmax^2 ) \cderiv^2}{\ndata} \smallsum\limits_{q=1}^{\outdim} \abs{ \mf W_{q,k} (\omega) }
      \sqrt{\ndata} \eucnorm{ \fnetvalue(\iteration,\omega) - \foutdata } \\
    &= \frac{2 \lr ( 1 + \cmax^2 ) \cderiv^2}{\sqrt{\ndata}} \eucnorm{ \fnetvalue(\iteration,\omega) - \foutdata }
      \smallsum\limits_{q=1}^{\outdim} \abs{ \mf W_{q,k} (\omega) }.
  \end{split}
  \end{equation}
  Combining this, \cref{lem:eucnorm_inequality} (applied for every $ \iteration \in \N_0 $ with
  $ m \is \outdim \ndata $, $ v \is J(\iteration) $
  in the notation of \cref{lem:eucnorm_inequality}),
  \cref{eq:thm_J}, and the fact that $ \omega \in A_5 $
  \proves that for all $ \iteration \in \N_0 $ it holds that
  \begin{equation}
  \label{eq:thm_J_eucnorm}
  \begin{split}
    &\eucnorm{ J(\iteration) }
    \le \smallsum\limits_{i=1}^{\ndata} \smallsum\limits_{p=1}^{\outdim} \abs{ J_{ (i-1) \outdim + p } (\iteration) } \\
    &\le \smallsum\limits_{i=1}^{\ndata} \smallsum\limits_{p=1}^{\outdim} \smallsum\limits_{k=1}^{\width} \abs{ \mf W_{p,k} (\omega) }
      \bbabs{ \act \bigl( \scalprod{ W_k(\iteration+1,\omega) }{ \indata_i } + B_k(\iteration+1,\omega) \bigr) \\
    &\quad - \act \bigl( \scalprod{ W_k(\iteration,\omega) }{ \indata_i } + B_k(\iteration,\omega) \bigr) }
      \ind_{ \{ \abs{ \scalprod{ W_k(0) }{ \indata_i } } \in \, \mc U_R \} } (\omega) \\
    &\le \frac{2 \lr ( 1 + \cmax^2 ) \cderiv^2}{\sqrt{\ndata}} \eucnorm{ \fnetvalue(\iteration,\omega) - \foutdata }
      \smallsum\limits_{i=1}^{\ndata} \smallsum\limits_{p,q=1}^{\outdim} \smallsum\limits_{k=1}^{\width}
      \abs{ \mf W_{p,k} (\omega) \mf W_{q,k} (\omega) } 
      \ind_{ \{ \abs{ \scalprod{ W_k(0) }{ \indata_i } } \in \, \mc U_R \} } (\omega) \\
    &\le \frac{2 \lr ( 1 + \cmax^2 ) \cderiv^2}{\sqrt{\ndata}} \eucnorm{ \fnetvalue(\iteration,\omega) - \foutdata }
      \frac{\sqrt{2} \Cvar \width \nbp \outdim^2 \ndata R}{\sqrt{\pi \cvar} \cmin \poe}
    = \frac{2 \sqrt{2} \lr \Cvar \width ( 1 + \cmax^2 ) \cderiv^2 \nbp \outdim^2 \sqrt{\ndata} R}{\sqrt{\pi \cvar} \cmin \poe}
      \eucnorm{ \fnetvalue(\iteration,\omega) - \foutdata }.
  \end{split}
  \end{equation}
  Next we claim that for all $ \Iteration \in \N_0 $, $ \iteration \in \{ 0, 1, 2, \ldots, \Iteration \} $,
  $ k \in \{ 1, 2, \ldots, \width \} $ it holds that
  \begin{gather}
  \label{eq:thm_induction}
    \cmax \eucnorm{ W_k(\iteration,\omega) - W_k(0,\omega) } + \abs{ B_k(\iteration,\omega) - B_k(0,\omega) } \le R
    \qquad\text{and} \\
    \eucnorm{ \fnetvalue(\iteration,\omega) - \foutdata }^2 \le \bigl( 1 - \tfrac{\lr \Cvar \width \lambdazero}{\ndata} \bigr)^{\iteration}
      \eucnorm{ \fnetvalue(0,\omega) - \foutdata }^2.
  \end{gather}
  We now prove \cref{eq:thm_induction} by induction on $ \Iteration \in \N_0 $.
  \Nobs that for all $ k \in \{ 1, 2, \ldots, \width \} $ it holds that
  \begin{equation}
    \cmax \eucnorm{ W_k(0,\omega) - W_k(0,\omega) } + \abs{ B_k(0,\omega) - B_k(0,\omega) } = 0 \le R.
  \end{equation}
  This and the fact that
  $ \eucnorm{ \fnetvalue(0,\omega) - \foutdata }^2
    = \bigl( 1 - \tfrac{\lr \Cvar \width \lambdazero}{\ndata} \bigr)^0 \eucnorm{ \fnetvalue(0,\omega) - \foutdata }^2 $
  \prove \cref{eq:thm_induction} in the base case $ \Iteration = 0 $.
  For the induction step $ \N_0 \ni \Iteration \mapsto \Iteration + 1 \in \N $ assume
  for all $ \iteration \in \{ 0, 1, 2, \ldots, \Iteration \} $, $ k \in \{ 1, 2, \ldots, \width \} $ that
  \begin{gather}
  \label{eq:thm_induction_hypothesis}
    \cmax \eucnorm{ W_k(\iteration,\omega) - W_k(0,\omega) } + \abs{ B_k(\iteration,\omega) - B_k(0,\omega) } \le R
    \qquad\text{and} \\
    \eucnorm{ \fnetvalue(\iteration,\omega) - \foutdata }^2 \le \bigl( 1 - \tfrac{\lr \Cvar \width \lambdazero}{\ndata} \bigr)^{\iteration}
      \eucnorm{ \fnetvalue(0,\omega) - \foutdata }^2.
  \end{gather}
  \Nobs that this, the fact that $ \lr < \tfrac{\ndata}{\Cvar \width \lambdazero} $, and
  \cref{item:first_layer_1} and \cref{item:first_layer_2} in \cref{lem:first_layer} (applied with
  $ \poe \is \poe $, $ A \is A_2 $, $ \omega \is \omega $, $ \Iteration \is \Iteration $
  in the notation of \cref{lem:first_layer})
  \prove that for all $ k \in \{ 1, 2, \ldots, \width \} $ it holds that
  \begin{equation}
  \label{eq:thm_W}
    \eucnorm{ W_k(\Iteration+1,\omega) - W_k(0,\omega) } \le \tfrac{4 \cmax \cderiv \eucnorm{ \fnetvalue(0,\omega) - \foutdata }}
      {\width \lambdazero} \bigl( \tfrac{2 \outdim \ndata}{\Cvar} \ln( \tfrac{2 \outdim \width}{\poe} ) \bigr)^{\nicefrac{1}{2}}
  \end{equation}
  and
  \begin{equation}
  \label{eq:thm_B}
    \abs{ B_k(\Iteration+1,\omega) - B_k(0,\omega) } \le \tfrac{4 \cderiv \eucnorm{ \fnetvalue(0,\omega) - \foutdata }}
      {\width \lambdazero} \bigl( \tfrac{2 \outdim \ndata}{\Cvar} \ln( \tfrac{2 \outdim \width}{\poe} ) \bigr)^{\nicefrac{1}{2}}.
  \end{equation}
  \Moreover the fact that $ \omega \in A_1 $ and \cref{eq:thm_assumption_m} \prove that it holds that
  \begin{equation}
  \begin{split}
    \frac{4 \cderiv \eucnorm{ \fnetvalue(0,\omega) - \foutdata }}{\width \lambdazero}
      \Bigl( \tfrac{2 \outdim \ndata}{\Cvar} \ln \bigl( \tfrac{2 \outdim \width}{\poe} \bigr) \Bigr)^{\nicefrac{1}{2}}
    &\le \frac{4 \cderiv}{\lambdazero} \Bigl(
      \bigl( \cvar \Cvar \width \outdim \eucnorm{\slope}^2 \eucnorm{ \findata }^2
      + \Cvar \width \outdim \eucnorm{\yinter}^2 \ndata + \eucnorm{ \foutdata }^2 \bigr)
      \tfrac{2 \outdim \ndata}{\poe \Cvar \width^2} \ln \bigl( \tfrac{2 \outdim \width}{\poe} \bigr) \Bigr)^{\nicefrac{1}{2}} \\
    &\le \frac{4 \cderiv}{\lambdazero} \biggl(
      \frac{ \lambdazero^2 R^2 } { 16 (1+\cmax^2)^2 \cderiv^2} \biggr)^{\nicefrac{1}{2}}
    = \frac{4 \cderiv}{\width \lambdazero}
      \frac{ \lambdazero R }{4 (1+\cmax^2) \cderiv}
    = \frac{R}{1+\cmax^2}.
  \end{split}
  \end{equation}
  Combining this with \cref{eq:thm_W} and \cref{eq:thm_B} \proves that for all $ k \in \{ 1, 2, \ldots, \width \} $ it holds that
  \begin{equation}
  \label{eq:thm_W_K+1}
    \cmax \eucnorm{ W_k(\Iteration+1,\omega) - W_k(0,\omega) } + \abs{ B_k(\Iteration+1,\omega) - B_k(0,\omega) }
    \le (1+\cmax^2) \tfrac{4 \cderiv \eucnorm{ \fnetvalue(0,\omega) - \foutdata }}{\width \lambdazero}
      \Bigl( \tfrac{2 \outdim \ndata}{\Cvar} \ln \bigl( \tfrac{2 \outdim \width}{\poe} \bigr) \Bigr)^{\nicefrac{1}{2}}
    \le R.
  \end{equation}
  To complete the induction step it remains to prove that
  $ \eucnorm{ \fnetvalue(\Iteration+1,\omega) - \foutdata }^2 \le \bigl( 1 - \tfrac{\lr \Cvar \width \lambdazero}{\ndata} \bigr)^{\Iteration + 1}
    \eucnorm{ \fnetvalue(0,\omega) - \foutdata }^2 $.
  \Nobs that the fact that for all $ a, b \in \R $ it holds that $ (a+b)^2 = a^2 + b^2 + 2ab $ \proves that
  \begin{equation}
  \label{eq:thm_f_K+1}
  \begin{split}
    &\eucnorm{ \fnetvalue(\Iteration+1,\omega) - \foutdata }^2 \\
    &= \smallsum\limits_{i=1}^{\ndata} \smallsum\limits_{p=1}^{\outdim} \Bigl( \netvalue_i^p (\Iteration+1,\omega) - \outdata_i^p \Bigr)^2
    = \smallsum\limits_{i=1}^{\ndata} \smallsum\limits_{p=1}^{\outdim}
      \Bigl( \netvalue_i^p (\Iteration+1,\omega) - \netvalue_i^p (\Iteration,\omega)
      + \netvalue_i^p (\Iteration,\omega) - \outdata_i^p \Bigr)^2 \\
    &= \smallsum\limits_{i=1}^{\ndata} \smallsum\limits_{p=1}^{\outdim}
      \Bigl[ \bigl( \netvalue_i^p (\Iteration+1,\omega) - \netvalue_i^p (\Iteration,\omega) \bigr)^2
      + \bigl( \netvalue_i^p (\Iteration,\omega) - \outdata_i^p \bigr)^2
      + 2 \bigl( \netvalue_i^p (\Iteration+1,\omega) - \netvalue_i^p (\Iteration,\omega) \bigr)
      \bigl( \netvalue_i^p (\Iteration,\omega) - \outdata_i^p \bigr) \Bigr] \\
    &= \eucnorm{ \fnetvalue(\Iteration+1,\omega) - \fnetvalue(\Iteration,\omega) }^2
      + \eucnorm{ \fnetvalue(\Iteration,\omega) - \foutdata }^2
      + 2 \smallsum\limits_{i=1}^{\ndata} \smallsum\limits_{p=1}^{\outdim}
      \bigl( \netvalue_i^p (\Iteration+1,\omega) - \netvalue_i^p (\Iteration,\omega) \bigr)
      \bigl( \netvalue_i^p (\Iteration,\omega) - \outdata_i^p \bigr).
  \end{split}
  \end{equation}
  \Moreover \cref{eq:setting_mf_B} \proves that for all
  $ i \in \{ 1, 2, \ldots, \ndata \} $, $ p \in \{ 1, 2, \ldots, \outdim \}$, $ \iteration \in \N_0 $ it holds that
  \begin{equation}
  \label{eq:thm_diff_f}
  \begin{split}
    &\netvalue_i^p (\iteration+1) - \netvalue_i^p (\iteration) \\
    &= \bigl( \functionANN{\act}^p (\Phi(\iteration+1)) \bigr) (\indata_i) - \bigl( \functionANN{\act}^p (\Phi(\iteration)) \bigr) (\indata_i) \\
    &= \biggl[ \smallsum\limits_{k=1}^{\width} \mf W_{p,k} \act \bigl( \scalprod{ W_k(\iteration+1) }{ \indata_i } + B_k(\iteration+1) \bigr)
      + \mf B_p(\iteration+1) \biggr] - \biggl[ \smallsum\limits_{k=1}^{\width} \mf W_{p,k} \act \bigl(
      \scalprod{ W_k(\iteration) }{ \indata_i } + B_k(\iteration) \bigr) + \mf B_p(\iteration) \biggr] \\
    &= \smallsum\limits_{k=1}^{\width} \mf W_{p,k} \Bigl[ \act \bigl( \scalprod{ W_k(\iteration+1) }{ \indata_i } + B_k(\iteration+1) \bigr)
      - \act \bigl( \scalprod{ W_k(\iteration) }{ \indata_i } + B_k(\iteration) \bigr) \Bigr] + \mf B_p(\iteration+1) - \mf B_p(\iteration) \\
    &= \smallsum\limits_{k=1}^{\width} \mf W_{p,k} \Bigl[ \act \bigl( \scalprod{ W_k(\iteration+1) }{ \indata_i } + B_k(\iteration+1) \bigr)
      - \act \bigl( \scalprod{ W_k(\iteration) }{ \indata_i } + B_k(\iteration) \bigr) \Bigr]
      - \displaystyle \frac{2 \lr}{\ndata} \biggl(
      \smallsum\limits_{j=1}^{\ndata} ( \netvalue_{j}^p (\iteration) - \outdata_{j}^p ) \biggr) \\
  \end{split}
  \end{equation}
  \cfload.
  Combining this, \cref{eq:thm_abs_rect}, \cref{lem:eucnorm_inequality} (applied for every $ p \in \{ 1, 2, \ldots, \outdim \} $,
  $ \iteration \in \N_0 $ with
  $ m \is \ndata $, $ v \is ( \netvalue_j^p (\iteration,\omega) - \outdata_j^p )_{ j \in \{ 1, 2, \ldots, \ndata \} } $
  in the notation of \cref{lem:eucnorm_inequality}), and
  the fact that $ \omega \in A_6 $
  \prove that for all $ i \in \{ 1, 2, \ldots, \ndata \} $, $ p \in \{ 1, 2, \ldots, \outdim \} $, $ \iteration \in \N_0 $ it holds that
  \begin{equation}
  \begin{split}
    &\babs{ \netvalue_i^p (\iteration+1,\omega) - \netvalue_i^p (\iteration,\omega) } \\
    &\le \smallsum\limits_{k=1}^{\width} \abs{ \mf W_{p,k} (\omega) }
      \bbabs{ \act \bigl( \scalprod{ W_k(\iteration+1,\omega) }{ \indata_i } + B_k(\iteration+1,\omega) \bigr)
      - \act \bigl( \scalprod{ W_k(\iteration,\omega) }{ \indata_i } + B_k(\iteration,\omega) \bigr) } \\
    &\quad + \displaystyle \frac{2 \lr}{\ndata} \smallsum\limits_{j=1}^{\ndata}
      \abs{ \netvalue_{j}^p (\iteration,\omega) - \outdata_{j}^p } \\
    &\le \frac{2 \lr (1+\cmax^2) \cderiv^2}{\sqrt{\ndata}} \eucnorm{ \fnetvalue(\iteration,\omega) - \foutdata }
      \smallsum\limits_{k=1}^{\width} \smallsum\limits_{q=1}^{\outdim} \abs{ \mf W_{p,k} (\omega) \mf W_{q,k} (\omega) }
      + \displaystyle\frac{2 \lr}{\ndata} \sqrt{\ndata} \eucnorm{ \fnetvalue(\iteration,\omega) - \foutdata } \\
    &= \frac{2 \lr}{\sqrt{\ndata}} \Bigl( \bigl( 1 + \cmax^2 \bigr) \cderiv^2
      \smallsum\limits_{q=1}^{\outdim} \smallsum\limits_{k=1}^{\width} \abs{ \mf W_{p,k} (\omega) \mf W_{q,k} (\omega) }
      + 1 \Bigr) \eucnorm{ \fnetvalue(\iteration,\omega) - \foutdata } \\
    &\le \displaystyle\frac{2 \lr}{\sqrt{\ndata}} \Bigl( \tfrac{\Cvar \width \outdim^2 (1+\cmax^2) \cderiv^2}{\poe} + 1 \Bigr)
      \eucnorm{ \fnetvalue(\iteration,\omega) - \foutdata }.
  \end{split}
  \end{equation}
  Thus, we obtain that
  \begin{equation}
  \label{eq:thm_squared}
  \begin{split}
    \eucnorm{ \fnetvalue(\Iteration+1,\omega) - \fnetvalue(\Iteration,\omega) }^2
    &= \smallsum\limits_{i=1}^{\ndata} \smallsum\limits_{p=1}^{\outdim}
      \babs{ \netvalue_i^p (\Iteration+1,\omega) - \netvalue_i^p (\Iteration,\omega) }^2 \\
    &\le \smallsum\limits_{i=1}^{\ndata} \smallsum\limits_{p=1}^{\outdim} \displaystyle\frac{4 \lr^2}{\ndata} \Bigl(
      \tfrac{\Cvar \width \outdim^2 (1+\cmax^2) \cderiv^2}{\poe} + 1 \Bigr)^2
      \eucnorm{ \fnetvalue(\Iteration,\omega) - \foutdata }^2 \\
    &= 4 \lr^2 \outdim \Bigl( \tfrac{\Cvar \width \outdim^2 (1+\cmax^2) \cderiv^2}{\poe} + 1 \Bigr)^2
      \eucnorm{ \fnetvalue(\Iteration,\omega) - \foutdata }^2.
  \end{split}
  \end{equation}
  Next \nobs that \cref{eq:thm_diff_f}, the fact that it holds that
  $ 1 = \ind_{ \{ \abs{ \scalprod{ W_k(0) }{ \indata_i } } \not\in \, \mc U_R \} } (\omega)
    + \ind_{ \{ \abs{ \scalprod{ W_k(0) }{ \indata_i } } \in \, \mc U_R \} } (\omega) $,
  \cref{eq:thm_I}, and \cref{eq:thm_J} \prove that for all
  $ i \in \{ 1, 2, \ldots, \ndata \} $, $ p \in \{ 1, 2, \ldots, \outdim \}$, $ \iteration \in \N_0 $ it holds that
  \begin{equation}
  \label{eq:thm_diff_f_omega}
  \begin{split}
    &\netvalue_i^p (\iteration+1,\omega) - \netvalue_i^p (\iteration,\omega) \\
    &= \smallsum\limits_{k=1}^{\width} \mf W_{p,k} (\omega) \Bigl[
      \act \bigl( \scalprod{ W_k(\iteration+1,\omega) }{ \indata_i } + B_k(\iteration+1,\omega) \bigr)
      - \act \bigl( \scalprod{ W_k(\iteration,\omega) }{ \indata_i } + B_k(\iteration,\omega) \bigr) \Bigr]
      \ind_{ \{ \abs{ \scalprod{ W_k(0) }{ \indata_i } } \not\in \, \mc U_R \} } (\omega) \\
    &\quad + \smallsum\limits_{k=1}^{\width} \mf W_{p,k} (\omega) \Bigl[
      \act \bigl( \scalprod{ W_k(\iteration+1,\omega) }{ \indata_i } + B_k(\iteration+1,\omega) \bigr)
      - \act \bigl( \scalprod{ W_k(\iteration,\omega) }{ \indata_i } + B_k(\iteration,\omega) \bigr) \Bigr]
      \ind_{ \{ \abs{ \scalprod{ W_k(0) }{ \indata_i } } \in \, \mc U_R \} } (\omega) \\
    &\quad - \frac{2 \lr}{\ndata} \smallsum\limits_{j=1}^{\ndata} ( \netvalue_j^p (\iteration,\omega) - \outdata_j^p ) \\
    &= I_{ (i-1) \outdim + p } (\iteration) + J_{ (i-1) \outdim + p } (\iteration)
      - \frac{2 \lr}{\ndata} \smallsum\limits_{j=1}^{\ndata} ( \netvalue_j^p (\iteration,\omega) - \outdata_j^p ).
  \end{split}
  \end{equation}
  \Moreover the assumption that $ \eucnorm{ B(0) } = 0 $, the Cauchy Schwarz inequality,
  \cref{eq:thm_induction_hypothesis}, and \cref{eq:thm_W_K+1} \prove that
  for all $ \iteration \in \{ 0, 1, 2, \ldots, \Iteration, \Iteration + 1 \} $, $ i \in \{ 1, 2, \ldots, \ndata \} $ it holds that
  \begin{equation}
  \label{eq:thm_1}
  \begin{split}
    \babs{ \scalprod{ W_k(\iteration,\omega) }{ \indata_i } + B_k(\iteration,\omega) - \scalprod{ W_k(0,\omega) }{ \indata_i } }
    &= \babs{ \scalprod{ W_k(\iteration,\omega) - W_k(0,\omega) }{ \indata_i } + B_k(\iteration,\omega) - B_k(0,\omega) } \\
    &\le \babs{ \scalprod{ W_k(\iteration,\omega) - W_k(0,\omega) }{ \indata_i } } + \abs{ B_k(\iteration,\omega) - B_k(0,\omega) } \\
    &\le \eucnorm{ W_k(\iteration,\omega) - W_k(0,\omega) } \eucnorm{ \indata_i } + \abs{ B_k(\iteration,\omega) - B_k(0,\omega) } \\
    &\le \cmax \eucnorm{ W_k(\iteration,\omega) - W_k(0,\omega) } + \abs{ B_k(\iteration,\omega) - B_k(0,\omega) }
    \le R.
  \end{split}
  \end{equation}
  \Moreover for all $ a, b, z \in \R $ with $ z \not\in \mc U_R $, $ \abs{ a - z } \le R $, and $ \abs{ b - z } \le R $
  it holds that there exists $ s \in \{ 1, 2, \ldots, \nbp+1 \} $ such that
  $ a \in ( \bp_{s-1}, \bp_s ) $, $ b \in ( \bp_{s-1}, \bp_s ) $, and
  \begin{equation}
    \act( a ) - \act( b )
    = ( \slope_s a + \yinter_s ) - ( \slope_s b + \yinter_s )
    = ( a - b ) \slope_s
    = ( a - b ) \deriv( b ).
  \end{equation}
  Combining this,
  \cref{eq:thm_diff_rect_arguments}, and \cref{eq:thm_1} \proves that
  for all $ i \in \{ 1, 2, \ldots, \ndata \} $, $ p \in \{ 1, 2, \ldots, \outdim \} $, $ \iteration \in \{ 0, 1, 2, \ldots, \Iteration \} $
  it holds that
  \begin{equation}
  \label{eq:thm_I_i}
  \begin{split}
    I_{ (i-1) \outdim + p } (\iteration)
    &= \smallsum\limits_{k=1}^{\width} \mf W_{p,k} (\omega) \Bigl[
      \act \bigl( \scalprod{ W_k(\iteration + 1,\omega) }{ \indata_i } + B_k(\iteration+1,\omega) \bigr) \\
    &\qquad - \act \bigl( \scalprod{ W_k(\iteration,\omega) }{ \indata_i } + B_k(\iteration,\omega) \bigr) \Bigr]
      \ind_{ \{ \abs{ \scalprod{ W_k(0) }{ \indata_i } } \not\in \, \mc U_R \} } (\omega) \\
    &= \smallsum\limits_{k=1}^{\width} \mf W_{p,k} (\omega)
      \Bigl[ \scalprod{ W_k(\iteration + 1,\omega) }{ \indata_i } + B_k(\iteration+1,\omega) \\
    &\qquad - \scalprod{ W_k(\iteration,\omega) }{ \indata_i } - B_k(\iteration,\omega) \Bigr]
      \deriv \bigl( \scalprod{ W_k(\iteration,\omega) }{ \indata_i } + B_k(\iteration,\omega) \bigr)
      \ind_{ \{ \abs{ \scalprod{ W_k(0) }{ \indata_i } } \not\in \mc U_R \} } (\omega) \\
    &= \smallsum\limits_{k=1}^{\width} \mf W_{p,k} (\omega) \biggl[
      - \displaystyle\frac{2 \lr}{\ndata} \smallsum\limits_{j=1}^{\ndata} \smallsum\limits_{q=1}^{\outdim}
      ( \netvalue_j^q (\iteration,\omega) - \outdata_j^q ) \mf W_{q,k} (\omega)
      \deriv \bigl( \scalprod{ W_k(\iteration,\omega) }{ \indata_j } + B_k(\iteration,\omega) \bigr)
      \bigl( 1 + \scalprod{ \indata_i }{ \indata_j } \bigr) \biggr]\\
    &\qquad \times \deriv \bigl( \scalprod{ W_k(\iteration,\omega) }{ \indata_i } + B_k(\iteration,\omega) \bigr)
      \ind_{ \{ \abs{ \scalprod{ W_k(0) }{ \indata_i } } \not\in \mc U_R \} } (\omega) \\
    &= - \frac{2 \lr}{\ndata} \smallsum\limits_{j=1}^{\ndata} \smallsum\limits_{q=1}^{\outdim}
      ( \netvalue_j^q (\iteration,\omega) - \outdata_j^q )
      \bigl( 1 + \scalprod{ \indata_i }{ \indata_j } \bigr)
      \smallsum\limits_{k=1}^{\width} \mf W_{p,k} (\omega) \mf W_{q,k} (\omega)
      \deriv \bigl( \scalprod{ W_k(\iteration,\omega) }{ \indata_i } + B_k(\iteration,\omega) \bigr) \\
    &\qquad \times \deriv \bigl( \scalprod{ W_k(\iteration,\omega) }{ \indata_j } + B_k(\iteration,\omega) \bigr)
      \ind_{ \{ \abs{ \scalprod{ W_k(0) }{ \indata_i } } \not\in \mc U_R \} } ( \omega ) \\
    &= - \frac{2 \lr}{\ndata} \smallsum\limits_{j=1}^{\ndata} \smallsum\limits_{q=1}^{\outdim}
      ( \netvalue_j^q (\iteration,\omega) - \outdata_j^q )
      \bigl( 1 + \scalprod{ \indata_i }{ \indata_ j } \bigr)
      \smallsum\limits_{k=1}^{\width} \mf W_{p,k} (\omega) \mf W_{q,k} (\omega)
      \deriv \bigl( \scalprod{ W_k(\iteration,\omega) }{ \indata_i } + B_k(\iteration,\omega) \bigr) \\
    &\qquad \times \deriv \bigl( \scalprod{ W_k(\iteration,\omega) }{ \indata_j } + B_k(\iteration,\omega) \bigr)
      \Bigl[ 1 - \ind_{ \{ \abs{ \scalprod{ W_k(0) }{ \indata_i } } \in \mc U_R \} } (\omega) \Bigr] \\
    &= - \frac{2 \lr}{\ndata} \smallsum\limits_{j=1}^{\ndata} \smallsum\limits_{q=1}^{\outdim}
      ( \netvalue_j^q (\iteration,\omega) - \outdata_j^q )
      \bigl( \gramG_{i,j}^{p,q} (\iteration,\omega) - P_{ (i-1) \outdim + p, (j-1) \outdim + q } (\iteration) \bigr).
  \end{split}
  \end{equation}
  Thus, we obtain that for all $ \iteration \in \{ 0, 1, 2, \ldots, \Iteration \} $ it holds that
  \begin{equation}
    I(\iteration) = \bigl( \gramG(\iteration,\omega) - P(\iteration) \bigr)
      \bigl( \fnetvalue(\iteration,\omega) - \foutdata \bigr).
  \end{equation}
  Combining this with \cref{eq:thm_diff_f_omega} and
  the fact that for all $ i, j \in \{ 1, 2, \ldots, \ndata \} $, $ p, q \in \{ 1, 2, \ldots, \outdim \} $ it holds that
  $ Q_{ (i-1) \outdim + p, (j-1) \outdim + q } = \ind_{ \{ p \} } (q) $
  \proves that
  \begin{equation}
  \label{eq:thm_sum}
  \begin{split}
    &2 \smallsum\limits_{i=1}^{\ndata} \smallsum\limits_{p=1}^{\outdim}
      \bigl( \netvalue_i^p (\Iteration+1,\omega) - \netvalue_i^p (\Iteration,\omega) \bigr)
      \bigl( \netvalue_i^p (\Iteration,\omega) - \outdata_i^p \bigr) \\
    &= 2 \smallsum\limits_{i=1}^{\ndata} \smallsum\limits_{p=1}^{\outdim}
      \Bigl[ I_{ (i-1) \outdim + p } (\Iteration) + J_{ (i-1) \outdim + p } (\Iteration)
      - \displaystyle\frac{2 \lr}{\ndata} \smallsum\limits_{j=1}^{\ndata} ( \netvalue_j^p (\Iteration,\omega) - \outdata_j^p ) \Bigr]
      \bigl( \netvalue_i^p (\Iteration,\omega) - \outdata_i^p \bigr) \\
    &= 2 \smallsum\limits_{i=1}^{\ndata} \smallsum\limits_{p=1}^{\outdim} I_{ (i-1) \outdim + p } (\Iteration)
      \bigl( \netvalue_i^p (\Iteration,\omega) - \outdata_i^p \bigr)
      + 2 \smallsum\limits_{i=1}^{\ndata} \smallsum\limits_{p=1}^{\outdim} J_{ (i-1) \outdim + p } (\Iteration)
      \bigl( \netvalue_i^p (\Iteration,\omega) - \outdata_i^p \bigr) \\
    &\quad - \displaystyle\frac{4 \lr}{\ndata} \smallsum\limits_{i,j=1}^{\ndata} \smallsum\limits_{p,q=1}^{\outdim}
      \bigl( \netvalue_j^p (\Iteration,\omega) - \outdata_j^p \bigr) \ind_{ \{ p \} } (q)
      \bigl( \netvalue_i^p (\Iteration,\omega) - \outdata_i^p \bigr) \\
    &= 2 \bscalprod{ I(\Iteration) }{ \fnetvalue(\Iteration,\omega) - \foutdata }
     + 2 \bscalprod{ J(\Iteration) }{ \fnetvalue(\Iteration,\omega) - \foutdata }
      - \frac{4 \lr}{\ndata} \bscalprod{ \fnetvalue(\Iteration,\omega) - \foutdata }
      { Q \bigl (\fnetvalue(\Iteration,\omega) - \foutdata \bigr) } \\	
    &= 2 \bscalprod{ J(\Iteration) }{ \fnetvalue(\Iteration,\omega) - \foutdata }
      - \frac{4 \lr}{\ndata} \bscalprod{ \fnetvalue(\Iteration,\omega) - \foutdata }
        { \bigl( \gramG(\Iteration,\omega) - P (\Iteration) + Q \bigr)
      \bigl( \fnetvalue(\Iteration,\omega) - \foutdata \bigr) }.
  \end{split}
  \end{equation}
  \Moreover \cref{lem:eigenvalues_block_matrix} and the fact that $ Q $ is symmetric
  \prove that $ Q $ is positive semidefinite.
  \Moreover \cref{lem:lambdamin_gramG} (applied with
  $ \poe \is \poe $, $ R \is R $, $ A_1 \is A_3 $, $ A_2 \is A_4 $, $ \iteration \is \Iteration $, $ \omega \is \omega $
  in the notation of \cref{lem:lambdamin_gramG}),
  \cref{eq:thm_induction_hypothesis}, and
  the fact that $ \omega \in A_3 \cap A_4 $
  \prove that
  \begin{equation}
    \lambdamin( \gramG (\Iteration,\omega) ) \ge \nicefrac{\Cvar \width \lambdazero}{2}.
  \end{equation}
  Combining this, \cref{item:lambdamin_1} in \cref{lem:lambdamin_eucnorm} (applied with
  $ \ndata \is \outdim \ndata $, $ M \is \gramG(\Iteration,\omega) $, $ v \is \fnetvalue(\Iteration,\omega) - \foutdata $
  in the notation of \cref{lem:lambdamin_eucnorm}),
  the Cauchy Schwarz inequality, \cref{eq:thm_P_specnorm}, and the fact that $ Q $ is positive semidefinite
  \proves that
  \begin{equation}
  \label{eq:thm_scalprod_gramG}
  \begin{split}
    & \bscalprod{ \fnetvalue(\Iteration,\omega) - \foutdata }{ \bigl( \gramG(\Iteration,\omega) - P (\Iteration)
      + Q \bigr) \bigl( \fnetvalue(\Iteration,\omega) - \foutdata \bigr) } \\
    &= \bscalprod{ \fnetvalue(\Iteration,\omega) - \foutdata }{ \gramG(\Iteration,\omega) ( \fnetvalue(\Iteration,\omega) - \foutdata ) }
      - \bscalprod{ \fnetvalue(\Iteration,\omega) - \foutdata }{ P (\Iteration) ( \fnetvalue(\Iteration,\omega) - \foutdata ) } \\
    &\quad + \bscalprod{ \fnetvalue(\Iteration,\omega) - \foutdata }{ Q ( \fnetvalue(\Iteration,\omega) - \foutdata ) } \\
    &\ge \lambdamin( \gramG(\Iteration,\omega) ) \eucnorm{ \fnetvalue(\Iteration,\omega) - \foutdata }^2
      - \eucnorm{ \fnetvalue(\Iteration,\omega) - \foutdata }
      \eucnorm{ P (\Iteration) ( \fnetvalue(\Iteration,\omega) - \foutdata ) } \\
    &\ge \tfrac{\Cvar \width \lambdazero}{2} \eucnorm{ \fnetvalue(\Iteration,\omega) - \foutdata }^2
      - \eucnorm{ \fnetvalue(\Iteration,\omega) - \foutdata }^2 \specnorm{ P (\Iteration) } \\
    &\ge \biggl( \frac{\Cvar \width \lambdazero}{2} - \frac{\sqrt{2} \Cvar \width (1+\cmax^2) \cderiv^2 \nbp \outdim^2 \ndata^2 R}
      {\sqrt{\pi \cvar} \cmin \poe} \biggr) \eucnorm{ \fnetvalue(\Iteration,\omega) - \foutdata }^2.
  \end{split}
  \end{equation}
  \Moreover the Cauchy Schwarz inequality and \cref{eq:thm_J_eucnorm} \prove that
  \begin{equation}
    2 \bscalprod{ J(\Iteration) }{ \fnetvalue(\Iteration,\omega) - \foutdata }
    \le 2 \eucnorm{ J(\Iteration) } \eucnorm{ \fnetvalue(\Iteration,\omega) - \foutdata }
    \le \frac{4 \sqrt{2} \lr \Cvar \width ( 1 + \cmax^2 ) \cderiv^2 \nbp \outdim^2 \sqrt{\ndata} R}{\sqrt{\pi \cvar} \cmin \poe}
      \eucnorm{ \fnetvalue(\Iteration,\omega) - \foutdata }^2.
  \end{equation}
  Combining this, \cref{eq:thm_f_K+1}, \cref{eq:thm_squared}, \cref{eq:thm_sum}, and \cref{eq:thm_scalprod_gramG} \proves that
  \begin{equation}
  \label{eq:thm_final}
  \begin{split}
    &\eucnorm{ \fnetvalue(\Iteration+1,\omega) - \foutdata }^2 \\
    &= \eucnorm{ \fnetvalue(\Iteration+1,\omega) - \fnetvalue(\Iteration,\omega) }^2
      + \eucnorm{ \fnetvalue(\Iteration,\omega) - \foutdata }^2  \\
    &\quad + 2 \smallsum\limits_{i=1}^{\ndata} \smallsum\limits_{p=1}^{\outdim}
      \bigl( \netvalue_i^p (\Iteration+1,\omega) - \netvalue_i^p (\Iteration,\omega) \bigr)
      \bigl( \netvalue_i^p (\Iteration,\omega) - \outdata_i^p \bigr) \\
    &= \eucnorm{ \fnetvalue(\Iteration+1,\omega) - \fnetvalue(\Iteration,\omega) }^2
      + \eucnorm{ \fnetvalue(\Iteration,\omega) - \foutdata }^2 \\
    &\quad + 2 \bscalprod{ J(\Iteration) }{ \fnetvalue(\Iteration,\omega) - \foutdata }
      - \frac{4 \lr}{\ndata} \bscalprod{ \fnetvalue(\Iteration,\omega) - \foutdata }{ \bigl( \gramG(\Iteration,\omega)
      - P (\Iteration) + Q \bigr) \bigl( \fnetvalue(\Iteration,\omega) - \foutdata \bigr) } \\
    &\le 4 \lr^2 \outdim \Bigl( \tfrac{\Cvar \width \outdim^2 (1+\cmax^2) \cderiv^2}{\poe} + 1 \Bigr)^2
      \eucnorm{ \fnetvalue(\Iteration,\omega) - \foutdata }^2
      + \eucnorm{ \fnetvalue(\Iteration,\omega) - \foutdata }^2 \\
     &\quad + \frac{4 \sqrt{2} \lr \Cvar \width ( 1 + \cmax^2 ) \cderiv^2 \nbp \outdim^2 \sqrt{\ndata} R}{\sqrt{\pi \cvar} \cmin \poe}
      \eucnorm{ \fnetvalue(\Iteration,\omega) - \foutdata }^2 \\
    &\quad - \frac{4 \lr}{\ndata} \biggl( \frac{\Cvar \width \lambdazero}{2}
       - \frac{\sqrt{2} \Cvar \width (1+\cmax^2) \cderiv^2 \nbp \outdim^2 \ndata^2 R}{\sqrt{\pi \cvar} \cmin \poe} \biggr)
      \eucnorm{ \fnetvalue(\Iteration,\omega) - \foutdata }^2 \\
    &= \biggl[ 4 \lr^2 \outdim \Bigl( \tfrac{\Cvar \width \outdim^2 (1+\cmax^2) \cderiv^2}{\poe} + 1 \Bigr)^2 + 1
      + \frac{4 \sqrt{2} \lr \Cvar \width ( 1 + \cmax^2 ) \cderiv^2 \nbp \outdim^2 \sqrt{\ndata} R}{\sqrt{\pi \cvar} \cmin \poe}
      - \frac{2 \lr \Cvar \width \lambdazero}{\ndata} \\
    &\qquad + \frac{4 \sqrt{2} \lr \Cvar \width (1+\cmax^2) \cderiv^2 \nbp \outdim^2 \ndata R}{\sqrt{\pi \cvar} \cmin \poe} \biggr]
      \eucnorm{ \fnetvalue(\Iteration,\omega) - \foutdata }^2 \\
    &= \biggl[ 1 - \frac{2 \lr \Cvar \width \lambdazero}{\ndata}
      + 4 \lr^2 \outdim \Bigl( \tfrac{\Cvar \width \outdim^2 (1+\cmax^2) \cderiv^2}{\poe} + 1 \Bigr)^2
      + \frac{4 \sqrt{2} \lr \Cvar \width (1 + \cmax^2) \cderiv^2 \nbp \outdim^2 ( \sqrt{\ndata} + \ndata ) R}
      {\sqrt{\pi \cvar} \cmin \poe} \biggr]
      \eucnorm{ \fnetvalue(\Iteration,\omega) - \foutdata }^2.
  \end{split}
  \end{equation}
  \Moreover the fact that
  $ \lr \le \frac{\Cvar \width \lambdazero}{8 ( \Cvar \width \outdim^2 (1+\cmax^2) \cderiv^2 \poe^{-1} + 1 )^2 \outdim \ndata} $ and
  $ R \le \frac{\sqrt{\pi \cvar} \cmin \poe \lambdazero}{8 \sqrt{2} (1+\cmax^2) \cderiv^2 \nbp \outdim^2 (\sqrt{\ndata} + \ndata) \ndata} $
  \prove that
  \begin{equation}
    4 \lr \outdim \Bigl( \tfrac{\Cvar \width \outdim^2 (1+\cmax^2) \cderiv^2}{\poe} + 1 \Bigr)^2
      + \frac{4 \sqrt{2} \Cvar \width (1 + \cmax^2) \cderiv^2 \nbp \outdim^2 (\sqrt{\ndata} + \ndata) R}{\sqrt{\pi \cvar} \cmin \poe}
    \le \frac{\Cvar \width \lambdazero}{2 \ndata} + \frac{\Cvar \width \lambdazero}{2 \ndata}
    = \frac{\Cvar \width \lambdazero}{\ndata}.
  \end{equation}
  Combining this with \cref{eq:thm_final} \hence \proves that
  \begin{equation}
    \eucnorm{ \fnetvalue(\Iteration+1,\omega) - \foutdata }^2
    \le \biggl[ 1 - \frac{2 \lr \Cvar \width \lambdazero}{\ndata} + \frac{\lr \Cvar \width \lambdazero}{\ndata} \biggr]
      \eucnorm{ \fnetvalue(\Iteration,\omega) - \foutdata }^2
    = \biggl[ 1 - \frac{\lr \Cvar \width \lambdazero}{\ndata} \biggr] \eucnorm{ \fnetvalue(\Iteration,\omega) - \foutdata }^2.
  \end{equation}
  This and \cref{eq:thm_induction_hypothesis} \hence\prove that
  $ \eucnorm{ \fnetvalue(\Iteration+1,\omega) - \foutdata }^2
    \le \bigl( 1 - \tfrac{\lr \Cvar \width \lambdazero}{\ndata} \bigr)^{\Iteration + 1} \eucnorm{ \fnetvalue(0,\omega) - \foutdata }^2 $.
  Induction thus \proves \cref{eq:thm_induction}.
  \Nobs that \cref{eq:thm_induction} \proves for all $ \iteration \in \N_0 $ that
  $ \eucnorm{ \fnetvalue(\iteration,\omega) - \foutdata }^2
    \le \bigl( 1 - \frac{\lr \Cvar \width \lambdazero}{\ndata} \bigr)^{\iteration} \eucnorm{ \fnetvalue(0,\omega) - \foutdata }^2 $.
  This \proves \cref{item:thm_3}.
\end{cproof}

\subsection{Quantitative probabilistic error analysis for GD optimization algorithms}
\label{subsec:quantitative_probabilistic_error_analysis}

\cfclear
\begin{theorem}
  \label{thm:main_theorem}
  Assume \cref{setting:gradient_descent}, assume $ \lambdazero \in (0,\infty) $, let $ \poe \in (0,1) $,
  assume
  $ \lr < \min \bigl\{ \frac{\Cvar \width \lambdazero}{8(6 \Cvar \width \outdim^2 (1+\cmax^2) \cderiv^2 \poe^{-1} + 1 )^2 \outdim \ndata},
    $ $ \frac{\ndata}{\Cvar \width \lambdazero} \bigr\} $,
  $ \width \ge \tfrac{32 (1+\cmax^2) \outdim \ndata}{\lambdazero}
    \ln( \frac{12 \outdim^2 \ndata^2}{\poe} ) \max \bigl\{
    \frac{4 (1+\cmax^2) \eucnorm{\slope}^4 \outdim \ndata}{\lambdazero}, \cderiv \bigr\} $, and
  \begin{equation}\label{eq:thm:main_theorem_ln}
    \tfrac{1}{\width} \ln \bigl( \tfrac{12 \outdim \width}{\poe} \bigr)
    \le \frac{ \pi \cvar \Cvar \width \cmin^2 \poe^3 \lambdazero^4 }
      { 2^{17} \, 3^3 (1+\cmax^2)^4 \cderiv^6 \nbp^2
      ( \cvar \Cvar \width \outdim \eucnorm{\slope}^2 \eucnorm{ \findata }^2
        + \Cvar \width \outdim \eucnorm{\yinter}^2 \ndata + \eucnorm{ \foutdata }^2 ) \outdim^5 \ndata^5 },
  \end{equation}
  and let $ \loss \colon \network_{ \indim, \width, \outdim } \to [0,\infty) $ satisfy for all $ \Psi \in \network_{ \indim, \width, \outdim } $ that
  $ \loss ( \Psi ) = \tfrac{1}{\ndata} \sum_{i=1}^{\ndata} \eucnorm{ ( \functionANN{\act} (\Psi) ) (\indata_i) - \outdata_i }^2 $ \cfload.
  Then
  \begin{equation}
    \P \Bigl( \Forall \iteration \in \N_0 \colon \loss(\Phi(\iteration))
      \le \bigl( 1 - \tfrac{ \lr \Cvar \width \lambdazero }{ \ndata } \bigr)^{\iteration} \loss(\Phi(0)) \Bigr) \ge 1 - \poe.
  \end{equation}
\end{theorem}

\begin{cproof}{thm:main_theorem}
  Throughout this proof let $ \delta \in (0,1) $ satisfy $ \delta = \nicefrac{\poe}{6} $, let $ R \in (0,\infty) $ satisfy
  \begin{equation}\label{eq:thm:main_theorem_R}
    R = \tfrac{ \sqrt{\pi \cvar} \cmin \delta \lambdazero }{ 16 \sqrt{2} (1+\cmax^2) \cderiv^2 \nbp \outdim^2 \ndata^2 },
  \end{equation}
  and let $ A_1, A_2, A_3, A_4, A_5, A_6 \in \mc F $ satisfy
  \begin{equation}
  \begin{split}
    A_1 &= \Bigl \{ \eucnorm{ \fnetvalue(0) - \foutdata }^2 \le
      \bigl( \cvar \Cvar \width \outdim \eucnorm{\slope}^2 \eucnorm{ \findata }^2
      + \Cvar \width \outdim \eucnorm{\yinter}^2 \ndata + \eucnorm{ \foutdata }^2 \bigr) \delta^{-1} \Bigr \}, \\
    A_2 &= \smallbigcap\limits_{p=1}^{\outdim} \smallbigcap\limits_{k=1}^{\width} \Bigl\{ \abs{\mf W_{p,k}}^2
      \le 2 \Cvar \ln ( \tfrac{2 \outdim \width}{\delta} ) \Bigr\}, \qquad
    A_3 = \smallbigcap\limits_{i=1}^{\outdim \ndata} \smallbigcap\limits_{j=1}^{\outdim \ndata} \bigl\{
      \abs{ \gramG_{i,j} (0) - \gramGinf_{i,j} } \le \frac{\Cvar \width \lambdazero}{4 \outdim \ndata} \bigr\}, \\
    A_4 &= \Bigl\{ \smallsum\limits_{i,j=1}^{\ndata} \smallsum\limits_{p,q=1}^{\outdim} \smallsum\limits_{k=1}^{\width}
      \abs{ \mf W_{p,k} \mf W_{q,k} } \bigl(
    \ind_{ \{ \abs{ \scalprod{ W_k(0) } { \indata_i } } \in \, \mc U_R \} }
    + \ind_{ \{ \abs{ \scalprod{ W_k(0) } { \indata_j } } \in \, \mc U_R \} } \bigr)
    \le \tfrac{\Cvar \width \lambdazero}{8 (1+\cmax^2) \cderiv^2} \Bigr\}, \\
    A_5 &= \Bigl\{ \smallsum\limits_{i=1}^{\ndata} \smallsum\limits_{p,q=1}^{\outdim} \smallsum\limits_{k=1}^{\width}
      \abs{ \mf W_{p,k} \mf W_{q,k} }
      \ind_{ \{ \abs{ \scalprod{ W_k(0) } { \indata_i } } \in \, \mc U_R \} }
      \le \frac{\sqrt{2} \Cvar \width \nbp \outdim^2 \ndata R}{\sqrt{\pi \cvar} \cmin \delta} \Bigr\}, \quad \text{and} \\
    A_6 &= \smallbigcap\limits_{p=1}^{\outdim} \Bigl \{ \smallsum\limits_{q=1}^{\outdim} \smallsum\limits_{k=1}^{\width}
      \abs{ \mf W_{p,k} \mf W_{q,k} } \le \tfrac{\Cvar \width \outdim^2}{\delta} \Bigr\},
  \end{split}
  \end{equation}
  First, \nobs that the fact that $ \poe = 6 \delta $ \proves that
  \begin{equation}
  \label{eq:thm_width}
    \width
    \ge \tfrac{32 (1+\cmax^2) \outdim \ndata}{\lambdazero}
      \ln \bigl( \tfrac{12 \outdim^2 \ndata^2}{\poe} \bigr) \max \Bigl\{
      \tfrac{4 (1+\cmax^2) \eucnorm{\slope}^4 \outdim \ndata}{\lambdazero}, \cderiv \Bigr\}
    = \tfrac{32 (1+\cmax^2) \outdim \ndata}{\lambdazero}
      \ln \bigl( \tfrac{2 \outdim^2 \ndata^2}{\delta} \bigr) \max \Bigl\{
      \tfrac{4 (1+\cmax^2) \eucnorm{\slope}^4 \outdim \ndata}{\lambdazero}, \cderiv \Bigr\}.
  \end{equation}
  \cref{lem:probability_A_cap} (applied with
  $ \poe \is \delta $, $ R \is R $, $ ( A_i )_{ i \in \{ 1, 2, \ldots, 6 \} } \is ( A_i )_{ i \in \{ 1, 2, \ldots, 6 \} } $
  in the notation of \cref{lem:probability_A_cap}) and
  \cref{eq:thm:main_theorem_R}
  \hence \prove that
  \begin{equation}
  \label{eq:thm_probability}
    \P \biggl( \smallbigcap\limits_{i=1}^6 A_i \biggr)
    \ge 1 - 6 \delta
    = 1 - \poe.
  \end{equation}
  Next \nobs that
  the fact that $ \sqrt{m} + \ndata \le 2 \ndata $ \proves that
  \begin{equation}
  \label{eq:thm_R}
    R
    = \tfrac{ \sqrt{\pi \cvar} \cmin \delta \lambdazero }{ 16 \sqrt{2} (1+\cmax^2) \cderiv^2 \nbp \outdim^2 \ndata^2 }
    \le \tfrac{\sqrt{\pi \cvar} \cmin \delta \lambdazero}{8 \sqrt{2} (1+\cmax^2) \cderiv^2 \nbp \outdim^2 (\sqrt{\ndata} + \ndata) \ndata}.
  \end{equation}
  \Moreover \cref{eq:thm:main_theorem_ln} \proves that
  \begin{equation}
  \label{eq:thm_ln}
  \begin{split}
    \tfrac{1}{\width} \ln \bigl( \tfrac{2 \outdim \width}{\delta} \bigr)
    &= \tfrac{1}{\width} \ln \bigl( \tfrac{12 \outdim \width}{\poe} \bigr) \\
    &\le \frac{ \pi \cvar \Cvar \width \cmin^2 \poe^3 \lambdazero^4 }
      { 2^{17} \, 3^3 (1+\cmax^2)^4 \cderiv^6 \nbp^2
      ( \cvar \Cvar \width \outdim \eucnorm{\slope}^2 \eucnorm{ \findata }^2
        + \Cvar \width \outdim \eucnorm{\yinter}^2 \ndata + \eucnorm{ \foutdata }^2 ) \outdim^5 \ndata^5 } \\
    &= \frac{ \pi \cvar \Cvar \width \cmin^2 \delta^3 \lambdazero^4 }
      { 2^{14} (1+\cmax^2)^4 \cderiv^6 \nbp^2
      ( \cvar \Cvar \width \outdim \eucnorm{\slope}^2 \eucnorm{ \findata }^2
        + \Cvar \width \outdim \eucnorm{\yinter}^2 \ndata + \eucnorm{ \foutdata }^2 ) \outdim^5 \ndata^5 } \\
    &= \frac{ \delta \Cvar \width \lambdazero^2 R^2 }
      { 32 (1+\cmax^2)^2 \cderiv^2
      ( \cvar \Cvar \width \outdim \eucnorm{\slope}^2 \eucnorm{ \findata }^2
        + \Cvar \width \outdim \eucnorm{\yinter}^2 \ndata + \eucnorm{ \foutdata }^2 ) \outdim \ndata }.
  \end{split}
  \end{equation}
  \Moreover the fact that $ \poe = 6 \delta $ \proves that
  \begin{equation}
  \label{eq:thm_eta}
    \lr
    < \min \Bigl\{ \frac{\Cvar \width \lambdazero}{8 (6 \Cvar \width \outdim^2 (1+\cmax^2) \cderiv^2 \poe^{-1} + 1 )^2 \outdim \ndata},
      \frac{\ndata}{\Cvar \width \lambdazero} \Bigr\}
    = \min \Bigl\{ \frac{\Cvar \width \lambdazero}{8 (\Cvar \width \outdim^2 (1+\cmax^2) \cderiv^2 \delta^{-1} + 1 )^2 \outdim \ndata},
      \frac{\ndata}{\Cvar \width \lambdazero} \Bigr\}.
  \end{equation}
  Combining this, \cref{eq:thm_R}, \cref{eq:thm_ln}, and \cref{item:thm_3} in \cref{prop:main_proposition} (applied with
  $ \poe \is \delta $, $ R \is R $, $ ( A_i )_{ i \in \{ 1, 2, \ldots, 6 \} } \is ( A_i )_{ i \in \{ 1, 2, \ldots, 6 \} } $
  in the notation of \cref{prop:main_proposition})
  \proves that for all $ \omega \in ( \smallbigcap_{i=1}^6 A_i ) $, $ \iteration \in \N_0 $ it holds that
  \begin{equation}
    \eucnorm{ \fnetvalue(\iteration,\omega) - \foutdata }^2 \le \Bigl( 1 - \tfrac{\lr \Cvar \width \lambdazero}{\ndata} \Bigr)^{\iteration}
      \eucnorm{ \fnetvalue(0,\omega) - \foutdata }^2.
  \end{equation}
  Thus, we obtain for all $ \omega \in ( \smallbigcap_{i=1}^6 A_i ) $, $ \iteration \in \N_0 $ that
  \begin{equation}
  \begin{split}
    \loss( \Phi(\iteration,\omega) )
    &= \tfrac{1}{\ndata} \smallsum\limits_{i=1}^{\ndata}
      \eucnorm{ ( \functionANN{\act}( \Phi(\iteration,\omega) )) ( \indata_i ) - \outdata_i }^2
    = \tfrac{1}{\ndata} \smallsum\limits_{i=1}^{\ndata} \smallsum\limits_{p=1}^{\outdim}
      \abs{ \netvalue_i^p (\iteration,\omega) - \outdata_i^p }^2
    = \tfrac{1}{\ndata} \eucnorm{ \fnetvalue(\iteration,\omega) - \foutdata }^2 \\
    &\le \tfrac{1}{\ndata} \Bigl( 1 - \tfrac{\lr \Cvar \width \lambdazero}{\ndata} \Bigr)^{\iteration}
      \eucnorm{ \fnetvalue(0,\omega) - \foutdata }^2
    = \tfrac{1}{\ndata} \Bigl( 1 - \tfrac{\lr \Cvar \width \lambdazero}{\ndata} \Bigr)^{\iteration}
      \smallsum\limits_{i=1}^{\ndata} \smallsum\limits_{p=1}^{\outdim} \abs{ \netvalue_i^p (0,\omega) - \outdata_i^p }^2 \\
    &= \tfrac{1}{\ndata} \Bigl( 1 - \tfrac{\lr \Cvar \width \lambdazero}{\ndata} \Bigr)^{\iteration}
      \smallsum\limits_{i=1}^{\ndata} \eucnorm{ ( \functionANN{\act}( \Phi(0,\omega) )) ( \indata_i ) - \outdata_i }^2
    = \Bigl( 1 - \tfrac{\lr \Cvar \width \lambdazero}{\ndata} \Bigr)^{\iteration} \loss( \Phi(0,\omega) ).
  \end{split}
  \end{equation}
  Combining this and \cref{eq:thm_probability} \proves that
  \begin{equation}
    \P \Bigl( \Forall \iteration \in \N_0 \colon \loss(\Phi(\iteration))
      \le \bigl( 1 - \tfrac{\lr \Cvar \width \lambdazero}{\ndata} \bigr)^{\iteration} \loss(\Phi(0)) \Bigr) \ge 1 - \poe.
  \end{equation}
\end{cproof}

\cfclear
\begin{definition}[$1$-Clipping functions]
\label{def:clipping_functions}
  \cfconsiderloaded{def:clipping_functions}
  Let $ z \in \R $.
  Then we denote by $ \maxone{z}, \minone{z} \in \R $ the real numbers which satisfy
  \begin{equation}\label{eq:def:rounding_functions:1}
    \maxone{z} = \max\{ 1, z \}
    \qqandqq
    \minone{z} = \min\{ 1, z \}.
  \end{equation}
\end{definition}

\cfclear
\begin{cor}
  \label{cor:assumption_lambdazero_ln}
  Assume \cref{setting:gradient_descent},
  assume $ \lambdazero \in (0,\infty) $,
  let $ \poe \in (0,1) $,
  assume
  \begin{equation}
  \label{eq:cor_assumption_lambdazero_ln}
    \lr < \frac{\poe^2 \min\{ \Cvar \width, (\Cvar \width)^{-1} \} \min\{ \lambdazero, \lambdazero^{-1} \} }
      {1352 \maxone{\cmax^4} \maxone{\cderiv^4} \outdim^4 \ndata}
    \qandq
    \tfrac{1}{\width} \ln \bigl( \tfrac{12 \outdim \width}{\poe} \bigr)
      \le \frac{ \pi \minone{ \cvar } \minone{ \Cvar \width } \cmin^2 \poe^3 \min\{ \lambdazero, \lambdazero^4 \} }
      { 2^{21} \, 3^4 \maxone{ \cmax^{10} } \maxone{ \cout^2 } \maxone{ \cderiv^6 } \maxone{ \eucnorm{\slope}^2 }
      \maxone{ \eucnorm{\yinter}^2 } \nbp^2 \outdim^6 \ndata^6 },
  \end{equation}
  and let $ \loss \colon \network_{ \indim, \width, \outdim } \to [0,\infty) $ satisfy for all $ \Psi \in \network_{ \indim, \width, \outdim } $ that
  $ \loss ( \Psi ) = \tfrac{1}{\ndata} \sum_{i=1}^{\ndata} \eucnorm{ ( \functionANN{\act} (\Psi) ) (\indata_i) - \outdata_i }^2 $ \cfload.
  Then
  \begin{equation}
    \P \Bigl( \Forall \iteration \in \N_0 \colon \loss(\Phi(\iteration))
      \le \bigl( 1 - \tfrac{ \lr \Cvar \width \lambdazero }{ \ndata } \bigr)^{\iteration} \loss(\Phi(0)) \Bigr) \ge 1 - \poe.
  \end{equation}
\end{cor}

\begin{cproof}{cor:assumption_lambdazero_ln}
  First, \nobs that the fact that $ 1 + \cmax^2 \le 2 \max\{ 1, \cmax^2 \} = 2 \maxone{ \cmax^2 } $
  \proves that
  \begin{equation}
  \begin{split}
    &\bigl( 6 \Cvar \width \outdim^2 (1+\cmax^2) \cderiv^2 \poe^{-1} + 1 \bigr)^2 \\
    &= 36 \Cvar^2 \width^2 \outdim^4 (1+\cmax^2)^2 \cderiv^4 \poe^{-2}
      + 12 \Cvar \width \outdim^2 (1+\cmax^2) \cderiv^2 \poe^{-1}
      + 1 \\
    &\le 36 \maxone{ \Cvar^2 \width^2 } \outdim^4 ( 2 \maxone{ \cmax^2 } )^2 \maxone{ \cderiv^4 } \poe^{-2}
      + 24 \maxone{ \Cvar \width } \outdim^2 \maxone{ \cmax^2 } \maxone{ \cderiv^2 } \poe^{-1}
      + 1 \\
    &\le 144 \maxone{ \Cvar^2 \width^2 } \outdim^4 \maxone{ \cmax^4 } \maxone{ \cderiv^4 } \poe^{-2}
      + 24 \maxone{ \Cvar^2 \width^2 } \outdim^4 \maxone{ \cmax^4 } \maxone { \cderiv^4 } \poe^{-2}
      + \maxone{ \Cvar^2 \width^2 } \outdim^4 \maxone{ \cmax^4 } \maxone{ \cderiv^4 } \poe^{-2} \\
    &= 169 \maxone{ \Cvar^2 \width^2 } \outdim^4 \maxone{ \cmax^4 } \maxone{ \cderiv^4 } \poe^{-2}.
  \end{split}
  \end{equation}
  Combining this with \cref{eq:cor_assumption_lambdazero_ln} and
  the fact that for all $ z \in (0,\infty) $ it holds that $ \tfrac{z}{ \maxone{z^2} } = \min\{ z, z^{-1} \} $
  \proves that
  \begin{equation}
  \label{eq:cor_assumption_lambdazero_ln_eta}
  \begin{split}
    \min \biggl\{ \frac{\Cvar \width \lambdazero}{8 (6 \Cvar \width \outdim^2 (1+\cmax^2) \cderiv^2 \poe^{-1} + 1 )^2 \ndata},
      \frac{\ndata}{\Cvar \width \lambdazero} \biggr\}
    &\ge \min \biggl\{ \frac{\poe^2 \Cvar \width \lambdazero }{1352 \maxone{ \Cvar^2 \width^2 } \maxone{ \cmax^4 } \maxone{ \cderiv^4 } \outdim^4 \ndata},
      \frac{\ndata}{\Cvar \width \lambdazero} \biggr\} \\
    &= \min \biggl\{ \frac{\poe^2 \min\{ \Cvar\width, ( \Cvar \width )^{-1} \} \lambdazero }
      {1352 \maxone{ \cmax^4 } \maxone{ \cderiv^4 } \outdim^4 \ndata},
      \frac{\ndata}{\Cvar \width \lambdazero} \biggr\} \\
    &\ge \min \biggl\{ \frac{\poe^2 \min\{ \Cvar\width, ( \Cvar \width )^{-1} \} \lambdazero }
      {1352 \maxone{ \cmax^4 } \maxone{ \cderiv^4 } \outdim^4 \ndata},
      \frac{\poe^2 \min\{ \Cvar\width, ( \Cvar \width )^{-1} \} }{1352 \maxone{ \cmax^4 } \maxone{ \cderiv^4 } \outdim^4 \ndata \lambdazero} \biggr\} \\
    &= \frac{\poe^2 \min\{ \Cvar\width, ( \Cvar \width )^{-1} \} \min\{ \lambdazero, \lambdazero^{-1} \} }
      {1352 \maxone{ \cmax^4 } \maxone{ \cderiv^4 } \outdim^4 \ndata}
    > \lr.
  \end{split}
  \end{equation}
  Next \nobs that
  the fact that $ \eucnorm{ \findata }^2 = \smallsum_{i=1}^{\ndata} \eucnorm{ \indata_i }^2 \le \cmax^2 \ndata $ and
  the fact that $ \eucnorm{ \foutdata }^2 = \smallsum_{i=1}^{\ndata} \eucnorm{ \outdata_i }^2 \le \cout^2 \ndata \le \cout^2 \outdim \ndata $
  \prove that
  \begin{equation}\label{eq:cor_assumption_lambdazero_ln_sum}
  \begin{split}
    \cvar \Cvar \width \outdim \eucnorm{\slope}^2 \eucnorm{ \findata }^2
      + \Cvar \width \outdim \eucnorm{\yinter}^2 \ndata + \eucnorm{ \foutdata }^2
    &\le \cvar \Cvar \width \outdim \eucnorm{\slope}^2 \cmax^2 \ndata + \Cvar \width \outdim \eucnorm{\yinter}^2 \ndata + \cout^2 \outdim \ndata \\
    &= \bigl( \cvar \Cvar \width \cmax^2 \eucnorm{\slope}^2 + \Cvar \width \eucnorm{\yinter}^2 + \cout^2 \bigr) \outdim \ndata.
  \end{split}
  \end{equation}
  \Moreover the fact that for all $ z \in [0,\infty) $ it holds that $ \minone{z} \le 1 \le \maxone{z} $
  \proves that
  \begin{equation}
  \begin{split}
    \frac{ \cvar \Cvar \width \cmax^2 \eucnorm{\slope}^2 + \Cvar \width \eucnorm{\yinter}^2 + \cout^2 }{ \cvar \Cvar \width }
    &= \cmax^2 \eucnorm{\slope}^2 + \frac{\eucnorm{\yinter}^2}{\cvar} + \frac{\cout^2}{\cvar \Cvar \width}
    \le \frac{ \maxone{ \cmax^2 } \maxone{ \eucnorm{\slope}^2 } }{ \minone{ \cvar } \minone{ \Cvar \width } }
      + \frac{ \maxone{ \eucnorm{\yinter}^2 } }{ \minone{ \cvar } \minone{ \Cvar \width } }
      + \frac{ \maxone{ \cout^2 } }{ \minone{ \cvar } \minone{ \Cvar \width } } \\
    &\le \frac{ 3 \maxone{ \cmax^2 } \maxone{ \cout^2 } \maxone{ \eucnorm{\slope}^2 } \maxone{ \eucnorm{\yinter}^2 } }
      { \minone{ \cvar } \minone{ \Cvar \width } }.
  \end{split}
  \end{equation}
  Combining this, \cref{eq:cor_assumption_lambdazero_ln}, \cref{eq:cor_assumption_lambdazero_ln_sum}, and
  the fact that $ 1 + \cmax^2 \le 2 \max\{ 1, \cmax^2 \} = 2 \maxone{ \cmax^2 } $
  \proves that
  \begin{equation}
  \label{eq:cor_assumption_lambdazero_ln_ln}
  \begin{split}
    &\frac{ \pi \cvar \Cvar \width \cmin^2 \poe^3 \lambdazero^4 }
      { 2^{17} \, 3^3 (1+\cmax^2)^4 \cderiv^6 \nbp^2
      ( \cvar \Cvar \width \outdim \eucnorm{\slope}^2 \eucnorm{ \findata }^2
        + \Cvar \width \outdim \eucnorm{\yinter}^2 \ndata + \eucnorm{ \foutdata }^2 ) \outdim^5 \ndata^5 } \\
    &\ge \frac{ \pi \cvar \Cvar \width \cmin^2 \poe^3 \lambdazero^4 }
      { 2^{17} \, 3^3 ( 2 \maxone{ \cmax^2 } )^4 \cderiv^6 \nbp^2
      ( \cvar \Cvar \width \cmax^2 \eucnorm{\slope}^2 + \Cvar \width \eucnorm{\yinter}^2 + \cout^2 ) \outdim^6 \ndata^6 } \\
    &\ge \frac{ \pi \minone{ \cvar } \minone{ \Cvar \width } \cmin^2 \poe^3 \lambdazero^4 }
      { 2^{21} \, 3^4 \maxone{ \cmax^8 } \cderiv^6 \nbp^2
      \maxone{ \cmax^2 } \maxone{ \cout^2 } \maxone{ \eucnorm{\slope}^2 } \maxone{ \eucnorm{\yinter}^2 } \outdim^6 \ndata^6 } \\
    &\ge \frac{ \pi \minone{ \cvar } \minone{ \Cvar \width } \cmin^2 \poe^3 \min\{ \lambdazero, \lambdazero^4 \} }
      { 2^{21} \, 3^4 \maxone{ \cmax^{10} } \maxone{ \cout^2 } \maxone{ \cderiv^6 } \maxone{ \eucnorm{\slope}^2 }
      \maxone{ \eucnorm{\yinter}^2 } \nbp^2 \outdim^6 \ndata^6 }
    \ge \tfrac{1}{\width} \ln \bigl( \tfrac{12 \outdim \width}{\poe} \bigr).
  \end{split}
  \end{equation}
  \Moreover \cref{eq:cor_assumption_lambdazero_ln} and
  the fact that for all $ z \in (12,\infty) $ it holds that $ \ln(z) \ge 1 $
  \prove that
  \begin{equation}
    \width
    \ge \ln \bigl( \tfrac{12 \outdim \width}{\poe} \bigr)
      \frac{ 2^{21} \, 3^4 \maxone{ \cmax^{10} } \maxone{ \cout^2 } \maxone{ \cderiv^6 } \maxone{ \eucnorm{\slope}^2 }
      \maxone{ \eucnorm{\yinter}^2 } \nbp^2 \outdim^6 \ndata^6 }
      { \pi \minone{ \cvar } \minone{ \Cvar \width } \cmin^2 \poe^3 \min\{ \lambdazero, \lambdazero^4 \} }
    \ge \frac{ 2^{21} \, 3^4 \maxone{ \cmax^{10} } \maxone{ \cout^2 } \maxone{ \cderiv^6 } \maxone{ \eucnorm{\slope}^2 }
      \maxone{ \eucnorm{\yinter}^2 } \nbp^2 \outdim^6 \ndata^6 }
      { \pi \minone{ \cvar } \minone{ \Cvar \width } \cmin^2 \poe^3 \min\{ \lambdazero, \lambdazero^4 \} }.
  \end{equation}
  The fact that for all $ z \in (0,\infty) $ it holds that $ \ln(z) \le z $,
  the fact that $ \eucnorm{\slope}^4 = ( \sum_{i=1}^{\nbp+1} \abs{\slope_i}^2 )^2 \le ( (\nbp+1) \cderiv^2 )^2
    \le ( 2 \nbp \cderiv^2 )^2 = 4 \nbp^2 \cderiv^4 $, and
  the fact that $ \nicefrac{\cmax^2}{\cmin^2} \le 1 $ \hence \prove that
  \begin{equation}
  \begin{split}
    &\frac{32 (1+\cmax^2) \outdim \ndata}{\lambdazero}
      \ln \bigl( \tfrac{12 \outdim^2 \ndata^2}{\poe} \bigr) \max \Bigl\{
      \frac{4 (1+\cmax^2) \eucnorm{\slope}^4 \outdim \ndata}{\lambdazero}, \cderiv \Bigr\} \\
    &\le \frac{384 (1+\cmax^2) \outdim^3 \ndata^3}{\poe \lambdazero} \max \Bigl\{
      \frac{16 (1+\cmax^2) \nbp^2 \cderiv^4 \outdim \ndata}{\lambdazero}, \cderiv \Bigr\}
    \le \frac{6144 (1+\cmax^2)^2 \nbp^2 \outdim^4 \ndata^4}{\poe \lambdazero} \max \Bigl\{
      \frac{\cderiv^4}{\lambdazero}, \cderiv \Bigr\} \\
    &\le \frac{6144 (2 \maxone{ \cmax^2 } )^2 \nbp^2 \outdim^4 \ndata^4 }{\poe \lambdazero }
      \frac{ \maxone{ \cderiv^4 } }{ \minone{ \lambdazero } }
    = \frac{24576 \maxone{ \cmax^4 } \maxone{ \cderiv^4 } \nbp^2 \outdim^4 \ndata^4 }{\poe \min\{ \lambdazero, \lambdazero^{-1} \} }
    \le \frac{24576 \maxone{ \cmax^6 } \maxone{ \cderiv^4 } \nbp^2 \outdim^4 \ndata^4 }{\cmin^2 \poe \min\{ \lambdazero, \lambdazero^{-1} \} } \\
    &\le \frac{ 2^{21} \, 3^4 \maxone{ \cmax^{10} } \maxone{ \cout^2 } \maxone{ \cderiv^6 } \maxone{ \eucnorm{\slope}^2 }
      \maxone{ \eucnorm{\yinter}^2 } \nbp^2 \outdim^6 \ndata^6 }
      { \pi \minone{ \cvar } \minone{ \Cvar \width } \cmin^2 \poe^3 \min\{ \lambdazero, \lambdazero^4 \} }
    \le \width.
  \end{split}
  \end{equation}
  Combining this, \cref{eq:cor_assumption_lambdazero_ln_eta}, \cref{eq:cor_assumption_lambdazero_ln_ln} and
  \cref{thm:main_theorem} (applied with
  $ \poe \is \poe $, $ \loss \is \loss $
  in the notation of \cref{thm:main_theorem})
  \proves that
  \begin{equation}
    \P \Bigl( \Forall \iteration \in \N_0 \colon \loss(\Phi(\iteration))
      \le \bigl( 1 - \tfrac{\lr \Cvar \width \lambdazero}{\ndata} \bigr)^{\iteration} \loss(\Phi(0)) \Bigr) \ge 1 - \poe.
  \end{equation}
\end{cproof}

\cfclear
\begin{lemma}
  \label{lem:log_inequality}
  Let $ \varepsilon \in (0,\infty) $.
  Then it holds for all $ z \in (0, \infty) $ that
  $ \ln( z ) \le z^{\varepsilon} \varepsilon^{-1} $.
\end{lemma}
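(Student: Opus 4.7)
The plan is to reduce the claim to the standard one-point calibrated inequality $\ln(y) \le y - 1$ valid for all $y \in (0,\infty)$ by means of the substitution $y = x^\varepsilon$. Since $\varepsilon > 0$, the map $x \mapsto x^\varepsilon$ is a bijection $(0,\infty) \to (0,\infty)$, so it suffices to prove $\ln(y) \le y - 1$ for all $y \in (0,\infty)$ and then apply it at $y = x^\varepsilon$: this yields $\varepsilon \ln(x) = \ln(x^\varepsilon) \le x^\varepsilon - 1 \le x^\varepsilon$, and dividing both sides by $\varepsilon \in (0,\infty)$ gives $\ln(x) \le x^\varepsilon \varepsilon^{-1}$, which is the desired conclusion.

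To establish the auxiliary inequality $\ln(y) \le y - 1$, I would introduce $g \colon (0,\infty) \to \R$ satisfying for all $y \in (0,\infty)$ that $g(y) = y - 1 - \ln(y)$. Then $g$ is differentiable on $(0,\infty)$ with $g'(y) = 1 - y^{-1}$, so $g'(y) < 0$ for $y \in (0,1)$, $g'(1) = 0$, and $g'(y) > 0$ for $y \in (1,\infty)$. Hence $g|_{(0,1)}$ is strictly decreasing and $g|_{(1,\infty)}$ is strictly increasing, so $g$ attains its global minimum at $y = 1$. Since $g(1) = 0$, it follows that $g(y) \ge 0$ for all $y \in (0,\infty)$, i.e.\ $\ln(y) \le y - 1$.

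There is no real obstacle here; the step that needs the most care is simply writing the monotonicity argument for $g$ cleanly (identifying the sign of $g'$ on the two subintervals and invoking that $g(1) = 0$), and then verifying that the substitution $y = x^\varepsilon$ is valid on all of $(0,\infty)$, which it is because $\varepsilon > 0$. The final chain $\varepsilon \ln(x) = \ln(x^\varepsilon) \le x^\varepsilon - 1 \le x^\varepsilon$ followed by division by $\varepsilon$ then immediately completes the proof.
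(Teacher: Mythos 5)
Your proof is correct but takes a genuinely different route from the paper. You reduce the claim to the classical one-point inequality $\ln(y) \le y - 1$ via the substitution $y = x^{\varepsilon}$, establishing that inequality by a standard monotonicity argument for $g(y) = y - 1 - \ln(y)$. The paper instead splits into the cases $x \in (0,1)$ and $x \in [1,\infty)$: on the first interval $\ln(x) \le 0 \le x^{\varepsilon}\varepsilon^{-1}$ is immediate, and on the second it writes $\ln(x) = \int_1^x s^{-1}\,\mathrm{d}s$ and bounds the integrand by $s^{\varepsilon-1}$ (valid for $s \ge 1$), obtaining $\int_1^x s^{\varepsilon-1}\,\mathrm{d}s = \varepsilon^{-1}x^{\varepsilon} - \varepsilon^{-1} \le \varepsilon^{-1}x^{\varepsilon}$. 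Your approach is slightly more modular in that it quotes a well-known inequality and transports it by a clean substitution, avoiding any case split; the paper's approach avoids the auxiliary lemma and produces the same $-\varepsilon^{-1}$ slack term directly from the integral bound. Both are fully elementary and of comparable length, and each gives the same strengthened conclusion $\ln(x) \le \varepsilon^{-1}x^{\varepsilon} - \varepsilon^{-1}$ for $x \ge 1$ as an intermediate step (you from $\ln(y) \le y-1$, the paper from evaluating the antiderivative).
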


\begin{cproof}{lem:log_inequality}
  \Nobs that for all $ z \in (0,1) $ it holds that $ \ln(z) \le 0 \le z^{\varepsilon} \varepsilon^{-1} $.
  \Moreover the fundamental theorem of calculus and the fact that for all $ s \in [1,\infty) $ it holds that $ 1 \le s^{\varepsilon} $
  \prove that for all $ z \in [1,\infty) $ it holds that
  \begin{equation}
    \ln(z)
    = \ln(z) - \ln(1)
    = \int_1^z s^{-1} \diff s
    \le \int_1^z s^{\varepsilon - 1} \diff s
    = \Bigl[ \tfrac{1}{\varepsilon} s^{\varepsilon} \Bigr]_1^z
    = \tfrac{1}{\varepsilon} z^{\varepsilon} - \tfrac{1}{\varepsilon}
    \le \tfrac{1}{\varepsilon} z^{\varepsilon}.
  \end{equation}
\end{cproof}

\cfclear
\begin{cor}
  \label{cor:assumption_lambdazero_delta}
  Assume \cref{setting:gradient_descent},
  assume $ \lambdazero \in (0,\infty) $,
  let $ \poe, \delta \in (0,1) $,
  assume
  \begin{equation}\label{eq:cor:assumption_lambdazero_delta}
    \lr \le \frac{\poe^2 \min\{ \Cvar \width, (\Cvar \width)^{-1} \} \min\{ \lambdazero, \lambdazero^{-1} \} }
      {2^{11} \maxone{\cmax^4} \maxone{\cderiv^4} \outdim^4 \ndata}
    \qqandqq
    \width \ge \Biggl[ \frac{ 2^{29} \maxone{ \cmax^{10} } \maxone{ \cout^2 } \maxone{ \cderiv^6 }
      \maxone{ \eucnorm{\slope}^2 } \maxone{ \eucnorm{\yinter}^2 } \nbp^2 \outdim^6 \ndata^6 }
      { \delta \minone{ \cvar } \minone{ \Cvar \width } \cmin^2 \poe^3 \min\{ \lambdazero, \lambdazero^4 \} } \Biggr]^{1+\delta},
  \end{equation}
  and let $ \loss \colon \network_{ \indim, \width, \outdim } \to [0,\infty) $ satisfy for all $ \Psi \in \network_{ \indim, \width, \outdim } $ that
  $ \loss ( \Psi ) = \tfrac{1}{\ndata} \sum_{i=1}^{\ndata} \eucnorm{ ( \functionANN{\act} (\Psi) ) (\indata_i) - \outdata_i }^2 $ \cfload.
  Then
  \begin{equation}
    \P \Bigl( \Forall \iteration \in \N_0 \colon \loss(\Phi(\iteration))
      \le \bigl( 1 - \tfrac{ \lr \Cvar \width \lambdazero }{ \ndata } \bigr)^{\iteration} \loss(\Phi(0)) \Bigr) \ge 1 - \poe.
  \end{equation}
\end{cor}

\begin{cproof}{cor:assumption_lambdazero_delta}
  First, \nobs that the fact that $ 2^{11} > 1352 $ \proves that
  \begin{equation}
  \label{eq:cor_assumption_lambdazero_delta_eta}
    \lr
    \le \frac{\poe^2 \min\{ \Cvar \width, (\Cvar \width)^{-1} \} \min\{ \lambdazero, \lambdazero^{-1} \} }
      {2^{11} \maxone{\cmax^4} \maxone{\cderiv^4} \outdim^4 \ndata}
    < \frac{\poe^2 \min\{ \Cvar \width, (\Cvar \width)^{-1} \} \min\{ \lambdazero, \lambdazero^{-1} \} }
      {1352 \maxone{\cmax^4} \maxone{\cderiv^4} \outdim^4 \ndata}.
  \end{equation}
  Next \nobs that the fact that $ 3 \delta \ge 0 $ \proves that
  \begin{equation}\label{eq:cor_assumption_lambdazero_delta_ineq}
  \begin{gathered}
    \frac{ 3 \delta }{ 4 + 3 \delta } - 1
    = - \frac{ 4 }{ 4 + 3 \delta }
    = - \frac{ 1 }{ 1 + \delta } \frac{ 4 + 4 \delta }{ 4 + 3 \delta }, \\
    \frac{ 3 \delta }{ 4 + 3 \delta } - \frac{ 6 ( 4 + 4 \delta ) }{ 4 + 3 \delta }
    = - \frac{ 24 + 21 \delta }{ 4 + 3 \delta }
    \le - \frac{ 24 + 18 \delta }{ 4 + 3 \delta }
    = -6, \quad \text{and} \\
    \frac{ 3 \delta }{ 4 + 3\delta } - \frac{ 3 ( 4 + 4 \delta ) }{ 4 + 3 \delta }
    = - \frac{ 12 + 9 \delta }{ 4 + 3 \delta }
    = -3.
  \end{gathered}
  \end{equation}
  \Moreover the fact that $ 3^{\nicefrac{24}{7}} \, 7 \le 2^{\nicefrac{50}{7}} \pi $ \proves that
  \begin{equation}
    \frac{ 12^{\nicefrac{3}{7}} \, 7 }{ 2^{29} \, 3 }
    = \frac{ 7 }{ 2^{\nicefrac{197}{7}} \, 3^{\nicefrac{4}{7}} }
    \le \frac{ 2^{\nicefrac{50}{7}} \pi }{ 2^{\nicefrac{197}{7}} 3^4 }
    = \frac{ \pi }{ 2^{21} 3^4 }.
  \end{equation}
  \cref{lem:log_inequality} (applied with
  $ \varepsilon \is \tfrac{ 3 \delta }{ 4 + 3 \delta } $, $ z \is \tfrac{12 \outdim \width}{\poe} $
  in the notation of \cref{lem:log_inequality}),
  \cref{eq:cor:assumption_lambdazero_delta}, \cref{eq:cor_assumption_lambdazero_delta_ineq},
  the fact that $ \tfrac{3}{7} \delta \le \tfrac{ 3 \delta }{ 4 + 3 \delta } \le \tfrac{3}{7} $, and
  the fact that $ \tfrac{4+4\delta}{4+3\delta} \ge 1 $
  \hence \prove that
  \begin{equation}
  \begin{split}
    \tfrac{1}{\width} \ln \bigl( \tfrac{12 \outdim \width}{\poe} \bigr)
    &\le \width^{-1} \frac{ 12^{3\delta(4+3\delta)^{-1}} \outdim^{3\delta(4+3\delta)^{-1}} \width^{3\delta(4+3\delta)^{-1}} }
      { \poe^{3\delta(4+3\delta)^{-1}} 3\delta(4+3\delta)^{-1} }
    \le \Bigl[ \width^{-(1+\delta)^{-1}} \Bigr]^{(4+4\delta)(4+3\delta)^{-1}}
      \frac{ 12^{\nicefrac{3}{7}} 7 \outdim^{3\delta(4+3\delta)^{-1}} }
      { 3 \poe^{3\delta(4+3\delta)^{-1}} \delta } \\
    &\le \biggl[ \frac{ \delta \minone{ \cvar } \minone{ \Cvar \width } \cmin^2 \poe^{3} \min\{ \lambdazero, \lambdazero^4 \} }
      { 2^{29} \maxone{ \cmax^{10} } \maxone{ \cout^2 } \maxone{ \cderiv^6 }
      \maxone{ \eucnorm{\slope}^2 } \maxone{ \eucnorm{\yinter}^2 } \nbp^2 \outdim^{6} \ndata^6 } \biggr]^{(4+4\delta)(4+3\delta)^{-1}}
      \frac{ 12^{\nicefrac{3}{7}} 7 \outdim^{3\delta(4+3\delta)^{-1}} }
      { 3 \poe^{3\delta(4+3\delta)^{-1}} \delta } \\
    &= \biggl[ \frac{ \delta \minone{ \cvar } \minone{ \Cvar \width } \cmin^2 \min\{ \lambdazero, \lambdazero^4 \} }
      { 2^{29} \maxone{ \cmax^{10} } \maxone{ \cout^2 } \maxone{ \cderiv^6 }
      \maxone{ \eucnorm{\slope}^2 } \maxone{ \eucnorm{\yinter}^2 } \nbp^2 \ndata^6 } \biggr]^{(4+4\delta)(4+3\delta)^{-1}}
      \frac{ 12^{\nicefrac{3}{7}} 7 \outdim^{3\delta(4+3\delta)^{-1} - 6(4+4\delta)(4+3\delta)^{-1}} }
      { 3 \poe^{3\delta(4+3\delta)^{-1} - 3(4+4\delta)(4+3\delta)^{-1}} \delta } \\
    &\le \frac{ 12^{\nicefrac{3}{7}} \, 7 \minone{ \cvar } \minone{ \Cvar \width } \cmin^2 \poe^{3} \min\{ \lambdazero, \lambdazero^4 \} }
      { 2^{29} \, 3 \maxone{ \cmax^{10} } \maxone{ \cout^2 } \maxone{ \cderiv^6 }
      \maxone{ \eucnorm{\slope}^2 } \maxone{ \eucnorm{\yinter}^2 } \nbp^2 \outdim^{6} \ndata^6 }
    \le \frac{ \pi \minone{ \cvar } \minone{ \Cvar \width } \cmin^2 \poe^3 \min\{ \lambdazero, \lambdazero^4 \} }
      { 2^{21} \, 3^4 \maxone{ \cmax^{10} } \maxone{ \cout^2 } \maxone{ \cderiv^6 } \maxone{ \eucnorm{\slope}^2 }
      \maxone{ \eucnorm{\yinter}^2 } \nbp^2 \outdim^6 \ndata^6 }.
  \end{split}
  \end{equation}
  Combining this, \cref{eq:cor_assumption_lambdazero_delta_eta}, and \cref{cor:assumption_lambdazero_ln} (applied with
  $ \poe \is \poe $, $ \loss \is \loss $
  in the notation of \cref{cor:assumption_lambdazero_ln})
  \proves that
  \begin{equation}
    \P \Bigl( \Forall \iteration \in \N_0 \colon \loss(\Phi(\iteration))
      \le \bigl( 1 - \tfrac{ \lr \Cvar \width \lambdazero }{ \ndata } \bigr)^{\iteration} \loss(\Phi(0)) \Bigr) \ge 1 - \poe.
  \end{equation}
\end{cproof}

\cfclear
\begin{cor}
  \label{cor:assumption_pairwise_distinct}
  Assume \cref{setting:gradient_descent},
  assume $ \# \{ \indata_1, \indata_2, \ldots, \indata_{\ndata} \} = \ndata $ and $ \# \{ \slope_1, \slope_2, \ldots, \slope_{\nbp+1} \} \ge 2 $,
  let $ \poe, \delta \in (0,1) $,
  assume
  \begin{equation}\label{eq:cor:assumption_pairwise_distinct}
    \lr \le \frac{\poe^2 \min\{ \Cvar \width, (\Cvar \width)^{-1} \} \min\{ \lambdazero, \lambdazero^{-1} \} }
      {2^{11} \maxone{\cmax^4} \maxone{\cderiv^4} \outdim^4 \ndata}
    \qqandqq
    \width \ge \Biggl[ \frac{ 2^{29} \maxone{ \cmax^{10} } \maxone{ \cout^2 } \maxone{ \cderiv^6 }
      \maxone{ \eucnorm{\slope}^2 } \maxone{ \eucnorm{\yinter}^2 } \nbp^2 \outdim^6 \ndata^6 }
      { \delta \minone{ \cvar } \minone{ \Cvar \width } \cmin^2 \poe^3 \min\{ \lambdazero, \lambdazero^4 \} } \Biggr]^{1+\delta}
  \end{equation}
  and let $ \loss \colon \network_{ \indim, \width, \outdim } \to [0,\infty) $ satisfy for all $ \Psi \in \network_{ \indim, \width, \outdim } $ that
  $ \loss ( \Psi ) = \tfrac{1}{\ndata} \sum_{i=1}^{\ndata} \eucnorm{ ( \functionANN{\act} (\Psi) ) (\indata_i) - \outdata_i }^2 $ \cfload.
  Then
  \begin{enumerate}[(i)]
    \item \label{cor:assumption_pairwise_distinct:item1} it holds that $ \lambdazero > 0 $ and
    \item \label{cor:assumption_pairwise_distinct:item2} it holds that
      $ \P \bigl( \Forall \iteration \in \N_0 \colon \loss(\Phi(\iteration))
        \le \bigl( 1 - \tfrac{ \lr \Cvar \width \lambdazero }{ \ndata } \bigr)^{\iteration} \loss(\Phi(0)) \bigr) \ge 1 - \poe $.
  \end{enumerate}
\end{cor}

\begin{cproof}{cor:assumption_pairwise_distinct}
  \Nobs that \cref{lem:lambdazero_positive} \proves \cref{cor:assumption_pairwise_distinct:item1}.
  \Moreover \cref{cor:assumption_pairwise_distinct:item1} and \cref{cor:assumption_lambdazero_delta} (applied with
  $ \poe \is \poe $, $ \delta \is \delta $, $ \loss \is \loss $
  in the notation of \cref{cor:assumption_lambdazero_delta})
  \prove that
  \begin{equation}
    \P \Bigl( \Forall \iteration \in \N_0 \colon \loss(\Phi(\iteration))
      \le \bigl( 1 - \tfrac{ \lr \Cvar \width \lambdazero }{ \ndata } \bigr)^{\iteration} \loss(\Phi(0)) \Bigr) \ge 1 - \poe.
  \end{equation}
\end{cproof}

\cfclear
\begin{cor}
  \label{cor:assumption_pairwise_distinct_const}
  Assume \cref{setting:gradient_descent},
  assume $ \# \{ \indata_1, \indata_2, \ldots, \indata_{\ndata} \} = \ndata $ and $ \# \{ \slope_1, \slope_2, \ldots, \slope_{\nbp+1} \} \ge 2 $,
  let $ \Lambda, \poe \in (0,1) $, $ \constvarwidth \in (0,\infty) $ satisfy
  $ \Cvar = \frac{\constvarwidth}{\width} $ and
  \begin{equation}
  \label{eq:cor:assumption_pairwise_distinct_const}
    \Lambda = \frac{ \minone{ \cvar^2 } \min\{ \constvarwidth^{-1}, \constvarwidth^2 \} \cmin^{\nicefrac{17}{8}} }
      { 2^{36} \maxone{ \cmax^{11} } \maxone{ \cout^3 } \maxone{ \cderiv^7 } \maxone{ \eucnorm{\slope}^3 }
      \maxone{ \eucnorm{\yinter}^3 } \nbp^3 \outdim^7 },
  \end{equation}
  assume
  $ \lr \le \Lambda \poe^2 \ndata^{-1} \min\{ \lambdazero, \lambdazero^{-1} \} $ and
  $ \width \ge \Lambda^{-1} \poe^{-4} \ndata^7 \max\{ \lambdazero^{-1}, \lambdazero^{-5} \} $,
  and let $ \loss \colon \network_{ \indim, \width, \outdim } \to [0,\infty) $ satisfy for all $ \Psi \in \network_{ \indim, \width, \outdim } $ that
  $ \loss ( \Psi ) = \tfrac{1}{\ndata} \sum_{i=1}^{\ndata} \eucnorm{ ( \functionANN{\act} (\Psi) ) (\indata_i) - \outdata_i }^2 $ \cfload.
  Then
  \begin{enumerate}[(i)]
    \item \label{cor:assumption_pairwise_distinct_const:item1} it holds that $ \lambdazero > 0 $ and
    \item \label{cor:assumption_pairwise_distinct_const:item2} it holds that
      $ \P \bigl( \Forall \iteration \in \N_0 \colon \loss(\Phi(\iteration))
        \le \bigl( 1 - \tfrac{ \lr \Cvar \width \lambdazero }{ \ndata } \bigr)^{\iteration} \loss(\Phi(0)) \bigr) \ge 1 - \poe $.
  \end{enumerate}
\end{cor}

\begin{cproof}{cor:assumption_pairwise_distinct_const}
  Throughout this proof let $ \delta \in (0,1) $ satisfy $ \delta = 2^{-4} $.
  First, \nobs that the fact that $ \cmin \le \cmax $ \proves that
  \begin{equation}
    \frac{ \cmin^{\nicefrac{17}{8}} }{ \maxone{ \cmax^{11} } }
    \le \frac{ \cmax^{\nicefrac{17}{8}} }{ \maxone{ \cmax^{11} } }
    \le \frac{1}{ \maxone{ \cmax^8 } }
    \le \frac{1}{ \maxone{ \cmax^4 } }.
  \end{equation}
  This, \cref{eq:cor:assumption_pairwise_distinct_const},
  the fact that
  $ \min \{ \constvarwidth^{-1}, \constvarwidth^2 \}
    \le \min\{ \constvarwidth, \constvarwidth^{-1} \}
    = \min\{ \Cvar \width, (\Cvar \width)^{-1} \} $, and
  the assumption that $ \lr \le \Lambda \poe^2 \ndata^{-1} \min\{ \lambdazero, \lambdazero^{-1} \}$
  \prove that
  \begin{equation}
  \begin{split}
    \lr
    \le \Lambda \poe^2 \ndata^{-1} \min\{ \lambdazero, \lambdazero^{-1} \}
    &= \frac{ \poe^2 \minone{ \cvar^2 } \min\{ \constvarwidth^{-1}, \constvarwidth^2 \} \cmin^{\nicefrac{17}{8}} \min\{ \lambdazero, \lambdazero^{-1} \} }
      { 2^{36} \maxone{ \cmax^{11} } \maxone{ \cout^3 } \maxone{ \cderiv^7 } \maxone{ \eucnorm{\slope}^3 }
      \maxone{ \eucnorm{\yinter}^3 } \nbp^3 \outdim^7 \ndata } \\
    &\le \frac{\poe^2 \min\{ \Cvar \width, (\Cvar \width)^{-1} \} \min\{ \lambdazero, \lambdazero^{-1} \} }
      {2^{11} \maxone{\cmax^4} \maxone{\cderiv^4} \outdim^4 \ndata}.
  \end{split}
  \end{equation}
  \Moreover \cref{eq:cor:assumption_pairwise_distinct_const},
  the fact that
  $ \min \{ \constvarwidth^{-1}, \constvarwidth^2 \} \le \minone{ \constvarwidth^2 } = \minone{ ( \Cvar \width )^2 } $, and
  the assumption that $ \width \ge \Lambda^{-1} \poe^{-4} \ndata^7 \max\{ \lambdazero^{-1}, \lambdazero^{-5} \} $ \prove that
  \begin{equation}
  \begin{split}
    \width
    \ge \Lambda^{-1} \poe^{-4} \ndata^7 \max\{ \lambdazero^{-1}, \lambdazero^{-5} \}
    &= \frac{ 2^{36} \maxone{ \cmax^{11} } \maxone{ \cout^3 } \maxone{ \cderiv^7 } \maxone{ \eucnorm{\slope}^3 }
      \maxone{ \eucnorm{\yinter}^3 } \nbp^3 \outdim^7 \ndata^7 }
      { \minone{ \cvar^2 } \min\{ \constvarwidth^{-1}, \constvarwidth^2 \} \cmin^{\nicefrac{17}{8}} \poe^4 \min\{ \lambdazero, \lambdazero^5 \} } \\
    &\ge \Biggl[ \frac{ 2^{29} \maxone{ \cmax^{10} } \maxone{ \cout^2 } \maxone{ \cderiv^6 }
      \maxone{ \eucnorm{\slope}^2 } \maxone{ \eucnorm{\yinter}^2 } \nbp^2 \outdim^6 \ndata^6 }
      { \delta \minone{ \cvar } \minone{ \Cvar \width } \cmin^2 \poe^3 \min\{ \lambdazero, \lambdazero^4 \} } \Biggr]^{1+\delta}.
  \end{split}
  \end{equation}
  \cref{cor:assumption_pairwise_distinct} (applied with
  $ \poe \is \poe $, $ \delta \is \delta $, $ \loss \is \loss $
  in the notation of \cref{cor:assumption_pairwise_distinct})
  \hence \proves \cref{cor:assumption_pairwise_distinct_const:item1} and \cref{cor:assumption_pairwise_distinct_const:item2}.
\end{cproof}

\subsection{Qualitative probabilistic error analysis for GD optimization algorithms}
\label{subsec:qualitative_probabilistic_error_analysis}

\cfclear
\begin{samepage}
\begin{cor}
  \label{cor:vectorized}
  Let $ \indim, \outdim \in \N $,
  let $ \mc P \colon \N \to \N $ satisfy for all $ \width \in \N $ that
  $ \mc P ( \width ) = \width \indim + \width + \outdim \width + \outdim $,
  let $ \varphi_s \in C( \R, \R ) $, $ s \in \N_0 $, satisfy for all $ s \in \N $ that $ \varphi_s $ is differentiable,
  assume that $ \varphi_0 $ is piecewise affine and not affine,
  let $ \grad \colon \R \to \R $,
  assume for all $ v \in \R $ that
  $ \limsup_{s \to \infty} \abs{ \varphi_s (v) - \varphi_0 (v) } = \limsup_{s \to \infty} \abs{ (\varphi_s)'(v) - \grad(v) } = 0 $,
  assume for all $ v \in \{ u \in \R \colon \varphi_0 \text{ is differentiable at } u \} $ that $ (\varphi_0)'(v) = \grad(v) $,
  for every $ \width, s \in \N_0 $, $ \theta = ( \theta_1, \ldots, \theta_{\mc P(\width)} ) \in \R^{\mc P(\width)} $
  let $ \mc N_{\theta}^{\width,s} = ( \mc N_{\theta}^{\width,s,1}, \ldots, \mc N_{\theta}^{\width,s,\outdim} ) \colon \R^{\indim} \to \R^{\outdim} $
  satisfy for all $ \ell \in \{ 1, 2, \ldots, \outdim \} $, $ v = (v_1, \ldots, v_{\indim}) \in \R^{\indim} $ that
  \begin{equation}\label{cor:vectorized:realization}
    \mc N_{\theta}^{\width,s,\ell} (v) = \theta_{ \width \indim + \width + \outdim \width + \ell }
      + \smallsum\limits_{k=1}^{\width} \theta_{ \width \indim + \width + (\ell-1) \width + k } \,
      \varphi_s \Bigl( \theta_{ \width \indim + k } + \smallsum\limits_{j=1}^{\indim} \theta_{ (k-1) \indim + j } v_{j} \Bigr),
  \end{equation}
  let $ \cvar, \Cvar, \const \in (0,\infty) $, for every $ \ndata \in \N $ let
  $ \indata_1^{\ndata}, \indata_2^{\ndata}, \ldots, \indata_{\ndata}^{\ndata} \in \R^{\indim} $,
  $ \outdata_1^{\ndata}, \outdata_2^{\ndata}, \ldots, \outdata_{\ndata}^{\ndata} \in \R^{\outdim} $
  satisfy for all $ i \in \{ 1, 2, \ldots, \ndata \}$ that
  $ \nicefrac{1}{\const} \le \eucnorm{ \indata_i^{\ndata} } \le \const $,
  $ \eucnorm{ \outdata_i^{\ndata} } \le \const $, and
  $ \# \{ \indata_1^{\ndata}, \allowbreak \indata_2^{\ndata}, \allowbreak \ldots, \allowbreak \indata_{\ndata}^{\ndata} \} = \ndata $,
  for every $ \width, s \in \N_0 $, $ \ndata \in \N $ let $ \loss^{\width,\ndata,s} \colon \R^{ \mc P ( \width ) } \to \R $
  satisfy for all $ \theta \in \R^{ \mc P ( \width ) } $ that
  \begin{equation}\label{cor:vectorized:loss}
    \loss^{\width,\ndata,s} (\theta)
    = \frac{1}{\ndata} \biggl[ \smallsum\limits_{i=1}^{\ndata} \eucnorm{ \mc N_{\theta}^{\width,s} (\indata_i^{\ndata}) - \outdata_i^{\ndata} }^2 \biggr],
  \end{equation}
  let $ (\Omega, \mc F, \P) $ be a probability space,
  for every $ \width, \ndata \in \N $, $ \lr \in \R $ let
  $ \Theta^{\width, \ndata, \lr} = ( \Theta_1^{\width, \ndata, \lr}, \ldots, \Theta_{ \mc P( \width ) }^{\width, \ndata, \lr} )
    \colon \N_0 \times \Omega \to \R^{ \mc P( \width ) } $
  be a stochastic process which satisfies
  for all $ \iteration \in \N $, $ i \in \{ 1, 2, \ldots, \mc P( \width ) \} $ that
  $ \smallsum_{k=1}^{\width} \abs{ \Theta_{ \width \indim + k }^{\width, \ndata, \lr} (0) }
    + \smallsum_{j=1}^{\outdim} \abs{ \Theta_{ \width \indim + \width + \outdim \width + j }^{\width, \ndata, \lr} (0) } = 0 $ and
  \begin{equation}\label{cor:vectorized:training}
    \Theta_i^{\width, \ndata, \lr} ( \iteration ) = \Theta_i^{\width, \ndata, \lr} ( \iteration - 1 )
    - \lr \bigl[ \lim\nolimits_{s \to \infty} ( \tfrac{\partial}{\partial \theta_i} \loss^{\width, \ndata, s} )
    ( \Theta^{\width, \ndata, \lr} ( \iteration - 1 ) ) \bigr] \ind_{ \R \backslash (\width \indim + \width, \mc P( \width ) - \outdim ] } ( i ),
  \end{equation}
  assume for all $ \width, \ndata \in \N $, $ \lr \in \R $ that
  $ \sqrt{ \nicefrac{1}{\cvar} } \mspace{1.5mu} \Theta_1^{\width, \ndata, \lr} (0), \allowbreak
    \sqrt{ \nicefrac{1}{\cvar} } \mspace{1.5mu} \Theta_2^{\width, \ndata, \lr} (0), \allowbreak \ldots, \allowbreak
    \sqrt{ \nicefrac{1}{\cvar} } \mspace{1.5mu} \Theta_{\width \indim}^{\width, \ndata, \lr} (0), \allowbreak
    \sqrt{ \nicefrac{\width}{\Cvar} } \mspace{1.5mu} \Theta_{\width \indim + \width + 1}^{\width, \ndata, \lr} (0), \allowbreak
    \sqrt{ \nicefrac{\width}{\Cvar} } \mspace{1.5mu} \Theta_{\width \indim + \width + 2 }^{\width, \ndata, \lr} (0), \allowbreak \ldots, \allowbreak
    \sqrt{ \nicefrac{\width}{\Cvar} } \mspace{1.5mu} \Theta_{\width \indim + \width + \outdim \width}^{\width, \ndata, \lr} (0) $
  are independent and standard normal,
  let $ Z \colon \Omega \to \R^{\indim} $ be standard normal, and
  for every $ \ndata \in \N $ let $ \lambda_{\ndata} \in \R $ satisfy
  \begin{equation}\label{cor:vectorized:lambda}
    \lambda_{\ndata} = \inf\limits_{ z = ( z_1, \ldots, z_{\ndata} ) \in \R^{\ndata}, \, \eucnorm{ z } = 1 }
      \biggl( \smallsum\limits_{i,j=1}^{\ndata}
      \bigl( 1 + \scalprod{ \indata_i^{\ndata} }{ \indata_j^{\ndata} } \bigr) \,
      \E \biggl[ \smallprod\limits_{ k \in \{ i, j \} } z_k
        \grad \bigl( \sqrt{ \nicefrac{\width}{\Cvar} } \mspace{1mu} \scalprod{ Z }{ \indata_k^{\ndata} } \bigr) \biggr] \biggr).
  \end{equation}
  Then
  \begin{enumerate}[(i)]
    \item \label{cor:vectorized:item1} it holds for all $ \ndata \in \N $ that $ \lambda_{\ndata} > 0 $ and
    \item \label{cor:vectorized:item2} there exists $ \Lambda \in (0, \nicefrac{1}{\Cvar}) $ such that for all $ \ndata \in \N $, $ \poe \in (0, 1) $,
      $ \lr \in (0, \Lambda \poe^2 \ndata^{-1} \min \{ \lambda_{\ndata}, ( \lambda_{\ndata} )^{-1} \} ] $,
      $ \width \in \N \cap [ \Lambda^{-1} \poe^{-4} \ndata^7 \max\{ ( \lambda_{\ndata} )^{-1}, ( \lambda_{\ndata} )^{-5} \}, \infty) $
      it holds that
  \begin{equation}
  \label{eq:cor:vectorized}
    \P \Bigl( \Forall \iteration \in \N_0 \colon \loss^{\width, \ndata, 0} ( \Theta^{\width,\ndata,\lr} (\iteration) )
      \le \bigl( 1 - \tfrac{ \lr \Cvar \lambda_{\ndata} }{ \ndata } \bigr)^{\iteration} \loss^{\width, \ndata, 0}
        ( \Theta^{\width,\ndata,\lr} (0) ) \Bigr)
    \ge 1 - \varepsilon.
  \end{equation}
  \end{enumerate}
\end{cor}
\end{samepage}

\renewcommand{\cmax}{C}
\renewcommand{\indata}{x}
\renewcommand{\outdata}{y}
\begin{cproof}{cor:vectorized}
  Throughout this proof
  let $ \cmin, \cmax, \cout, \constvarwidth, \cderiv \in (0,\infty) $ satisfy
  \begin{equation}
    \cmin \le \tfrac{1}{\const}, \quad
    \cmax \ge \const, \quad
    \cout \ge \const, \quad
    \constvarwidth = \Cvar, \qandq
    \sup\nolimits_{ v \in \R } \abs{ \grad( v ) } \le \cderiv.
  \end{equation}
  First, \nobs that the assumption that $ \varphi_0 $ is piecewise affine and not affine \proves that there exist
  $ \nbp \in \N $, $ \bp_0, \bp_1, \dots, \bp_{\nbp+1} \in [-\infty,\infty] $,
  $ \slope = ( \slope_1, \dots, \slope_{\nbp+1} ) \in \R^{\nbp+1} \backslash \{ 0 \} $,
  $ \yinter = ( \yinter_1, \dots, \yinter_{\nbp+1} ) \in \R^{\nbp+1} $
  which satisfy for all $ i \in \{ 1, 2, \dots, \nbp+1 \} $, $ v \in ( \bp_{i-1}, \bp_i ) $ that
  \begin{equation}\label{cor:vectorized:st}
    -\infty = \bp_0 < \bp_1 < \dots < \bp_{\nbp+1} = \infty, \qquad
    \varphi_0 ( v ) = \slope_i v + \yinter_i, \qqandqq
    \# \{ \slope_1, \slope_2, \ldots, \slope_{\nbp+1} \} \ge 2.
  \end{equation}
  \Moreover \cref{cor:vectorized:st} and
  the assumption that for all $ v \in \{ u \in \R \colon \varphi_0 \text{ is differentiable at } u \} $ it holds that $ (\varphi_0)'(v) = \grad(v) $
  \prove that for all $ i \in \{ 1, 2, \dots, \nbp+1 \} $, $ v \in ( \bp_{i-1}, \bp_i ) $ it holds that $ \grad(v) = \slope_i $.
  Next let $ \Lambda \in (0, \nicefrac{1}{\Cvar}) $ satisfy
  \begin{equation}\label{eq:cor_vectorized_lambda}
    \Lambda = \frac{ \minone{ \cvar^2 } \min\{ \constvarwidth^{-1}, \constvarwidth^2 \} \cmin^{\nicefrac{17}{8}} }
      { 2^{36} \maxone{ \cmax^{11} } \maxone{ \cout^3 } \maxone{ \cderiv^7 } \maxone{ \eucnorm{\slope}^3 }
      \maxone{ \eucnorm{\yinter}^3 } \nbp^3 \outdim^7 },
  \end{equation}
  let $ \ndata \in \N $, $ \poe \in (0,1) $, $ \lr \in (0, \Lambda \poe^2 \ndata^{-1} \min \{ \lambda_{\ndata}, \abs{ \lambda_{\ndata} }^{-1} \} ] $,
  $ \width \in \N \cap [ \Lambda^{-1} \poe^{-4} \ndata^7 \max\{ \abs{ \lambda_{\ndata} }^{-1}, \abs{ \lambda_{\ndata} }^{-5} \}, \infty) $,
  let
  $ \indata = ( ( \indata_i^p )_{ p \in \{ 1, 2, \ldots, \indim \} } )_{ i \in \{ 1, 2, \ldots, \ndata \} } \in ( \R^{\indim} )^{\ndata} $,
  $ \outdata = ( ( \outdata_i^p )_{ p \in \{ 1, 2, \ldots, \outdim \} } )_{ i \in \{ 1, 2, \ldots, \ndata \} } \in ( \R^{\outdim} )^{\ndata} $
  satisfy for all $ i \in \{ 1, 2, \ldots, \ndata \} $ that
  $ \indata_i = \ms x_i^{\ndata} $ and $ \outdata_i = \ms y_i^{\ndata} $,
  let
  $ W = ( W_1, \ldots, W_{\width} ) = ( W_{i,j} )_{ (i,j) \in \{ 1, 2, \ldots, \width \} \times \{ 1, 2, \ldots, \indim \} }
    \colon \N_0 \times \Omega \to \R^{\width \times \indim} $,
  $ B = ( B_1, \ldots, B_{\width} ) \colon \N_0 \times \Omega \to \R^{\width} $,
  $ \mf W = ( \mf W_1, \ldots, \mf W_{\outdim} ) = ( \mf W_{i,j} )_{ (i,j) \in \{ 1, 2, \ldots, \outdim \} \times \{ 1, 2, \ldots, \width \} }
    \colon \Omega \to \R^{\outdim \times \width} $, and
  $ \mf B = ( \mf B_1, \ldots, \mf B_{\outdim} ) \colon \N_0 \times \Omega \to \R^{\outdim} $
  satisfy for all $ j \in \{ 1, 2, \ldots, \indim \} $, $ k \in \{ 1, 2, \ldots, \width \} $, $ \ell \in \{ 1, 2, \ldots, \outdim \} $,
  $ \iteration \in \N_0 $, $ \omega \in \Omega $ that
  \begin{equation}\label{eq:cor_vectorized_WB}
  \begin{gathered}
    W_{k,j} ( \iteration, \omega ) = \Theta^{\width, \ndata, \lr}_{ (k-1) \indim + j } ( \iteration, \omega ), \qquad
    B_k ( \iteration, \omega ) = \Theta^{\width, \ndata, \lr}_{ \width \indim + k } ( \iteration, \omega ), \\
    \mf W_{\ell,k} ( \omega ) = \Theta^{\width, \ndata, \lr}_{ \width \indim + \width + (\ell-1) \width + k }, ( 0, \omega ) \qqandqq
    \mf B_{\ell} ( \iteration, \omega ) = \Theta^{\width, \ndata, \lr}_{ \width \indim + \width + \outdim \width + \ell } ( \iteration, \omega ),
  \end{gathered}
  \end{equation}
  let $ \Phi \colon \N_0 \times \Omega \to \network_{ \indim, \width, \outdim } $ and
  $ f = ( ( f_i^p )_{ p \in \{ 1, 2, \ldots, \outdim \} } )_{ i \in \{ 1, 2, \ldots, \ndata \} } \colon \N_0 \times \Omega \to (\R^{\outdim})^{\ndata} $
  satisfy for all $ i \in \{ 1, 2, \ldots, \ndata \} $, $ \iteration \in \N_0 $, $ \omega \in \Omega $ that
  \begin{equation}
    \Phi ( \iteration, \omega ) = \bigl( ( W( \iteration, \omega ), B( \iteration, \omega ) ), ( \mf W( \omega ), \mf B( \iteration, \omega ) ) \bigr)
    \qqandqq
    f_i ( \iteration, \omega ) = \bigl( \functionANN{\varphi_0} ( \Phi( \iteration, \omega ) ) \bigr) ( \indata_i ),
  \end{equation}
  let $ \mf E \colon \network_{ \indim, \width, \outdim } \to \R $ satisfy for all $ \Psi \in \network_{ \indim, \width, \outdim } $ that
  $ \mf E ( \Psi ) = \tfrac{1}{\ndata} \sum_{i=1}^{\ndata} \eucnorm{ ( \functionANN{\varphi_0} (\Psi) ) (\indata_i) - \outdata_i }^2 $, and
  let
  $ \gramGinf = ( \gramGinf_{i,j} )_{ (i,j) \in \{ 1, 2, \ldots, \outdim \ndata \}^2 } \in \R^{(\outdim \ndata) \times (\outdim \ndata)} $,
  $ G = ( G_{i,j} )_{ (i,j) \in \{ 1, 2, \ldots, \ndata \}^2 } \in \R^{ \ndata \times \ndata } $,
  $ \lambdazero \in \R $
  satisfy for all $ i, j \in \{ 1, 2, \ldots, \allowbreak \ndata \} $, $ p, q \in \{ 1, 2, \ldots, \outdim \} $ that
  \begin{equation}\label{eq:cor_vectorized_gramGinf}
    \gramGinf_{ (i-1) \outdim + p, (j-1) \outdim + q } = 2 \bigl( 1 + \scalprod{ \indata_i }{ \indata_j } \bigr)
      \E \bigl[ \grad \bigl( \scalprod{ W_1(0) }{ \indata_i } \bigr)
      \grad \bigl( \scalprod{ W_1(0) }{ \indata_j } \bigr)\bigr] \ind_{ \{ p \} } (q),
  \end{equation}
  \begin{equation}\label{eq:cor_vectorized_G}
    G_{i,j} = 2 \bigl( 1 + \scalprod{ \indata_i }{ \indata_j } \bigr)
      \E \bigl[ \grad \bigl( \scalprod{ W_1(0) }{ \indata_i } \bigr)
      \grad \bigl( \scalprod{ W_1(0) }{ \indata_j } \bigr)\bigr],
  \end{equation}
  and $ \lambdazero = \lambdamin( \tfrac{1}{2} \gramGinf ) $
  \cfload.
  \Nobs that \cref{eq:cor_vectorized_WB} and
  the assumption that
  $ \sqrt{\nicefrac{\indim}{2}} \mspace{2mu} \Theta_1^{\width, \ndata, \lr} (0), \allowbreak
    \sqrt{\nicefrac{\indim}{2}} \mspace{2mu} \Theta_2^{\width, \ndata, \lr} (0), \allowbreak \ldots, \allowbreak
    \sqrt{\nicefrac{\indim}{2}} \mspace{2mu} \Theta_{\width \indim}^{\width, \ndata, \lr} (0), \allowbreak
    \sqrt{\nicefrac{\width}{2}} \mspace{2mu} \Theta_{\width \indim + \width + 1}^{\width, \ndata, \lr} (0), \allowbreak
    \sqrt{\nicefrac{\width}{2}} \mspace{2mu} \Theta_{\width \indim + \width + 2 }^{\width, \ndata, \lr} (0), \allowbreak \ldots, \allowbreak
    \sqrt{\nicefrac{\width}{2}} \mspace{2mu} \Theta_{\width \indim + \width + \outdim \width}^{\width, \ndata, \lr} (0) $
  are independent and standard normal
  \prove that
  \begin{equation}
    \textstyle
    \sqrt{\nicefrac{1}{\cvar}} W_1(0),
    \sqrt{\nicefrac{1}{\cvar}} W_2(0), \ldots,
    \sqrt{\nicefrac{1}{\cvar}} W_{\width}(0),
    \sqrt{\nicefrac{1}{\Cvar}} \mf W_1,
    \sqrt{\nicefrac{1}{\Cvar}} \mf W_2, \ldots,
    \sqrt{\nicefrac{1}{\Cvar}} \mf W_{\outdim}
  \end{equation}
  are independent and standard normal.
  \Moreover the assumption that
  $ \smallsum_{k=1}^{\width} \abs{ \Theta_{ \width \indim + k }^{\width, \ndata, \lr} (0) }
    + \smallsum_{j=1}^{\outdim} \abs{ \Theta_{ \width \indim + \width + \outdim \width + j }^{\width, \ndata, \lr} (0) } = 0 $
  and \cref{eq:cor_vectorized_WB}
  \prove that
  $ \eucnorm{ B(0) } = \eucnorm{ \mf B(0) } = 0 $.
  Next \nobs that for all $ q \in \{ 1, 2, \ldots, \allowbreak \mc P( \width ) \} $, $ \iteration \in \N_0 $, $ s \in \N $ it holds that
  \begin{equation}
  \label{eq:cor_vectorized_error}
  \begin{split}
    \tfrac{\partial}{\partial \theta_q} \loss^{\width, \ndata, s} ( \Theta^{\width, \ndata, \lr} (\iteration) )
    &= \tfrac{\partial}{\partial \theta_q} \biggl( \tfrac{1}{\ndata} \smallsum\limits_{i=1}^{\ndata}
      \beucnorm{ \mc N_{\Theta^{\width, \ndata, \lr} (\iteration)}^{\width, s} ( \indata_i ) - \outdata_i }^2 \biggr)
    = \tfrac{\partial}{\partial \theta_q} \biggl( \tfrac{1}{\ndata} \smallsum\limits_{i=1}^{\ndata} \smallsum\limits_{p=1}^{\outdim}
      \babs{ \mc N_{\Theta^{\width, \ndata, \lr} (\iteration)}^{\width, s, p} ( \indata_i ) - \outdata_i^p }^2 \biggr) \\
    &= \tfrac{2}{\ndata} \smallsum\limits_{i=1}^{\ndata} \smallsum\limits_{p=1}^{\outdim} 
      \bigl( \mc N_{\Theta^{\width, \ndata, \lr} (\iteration)}^{\width, s, p} ( \indata_i ) - \outdata_i^p \bigr)
      \bigl( \tfrac{\partial}{\partial \theta_q} \mc N_{\Theta^{\width, \ndata, \lr} (\iteration)}^{\width, s, p} ( \indata_i ) \bigr).
  \end{split}
  \end{equation}
  \Moreover
  the fact that for all $ v \in \R $ it holds that $ \lim_{s\to\infty} \varphi_s (v) = \varphi_0 (v) $
  and the fact that for all
  $ k \in \{ 1, 2, \ldots, \width \} $, $ \ell \in \{ 1, 2, \ldots, \outdim \} $, $ \iteration \in \N_0 $ it holds that
  $ \Theta^{\width, \ndata, \lr}_{ \width \indim + \width + (\ell-1) \width + k } (\iteration)
    = \Theta^{\width, \ndata, \lr}_{ \width \indim + \width + (\ell-1) \width + k } (0) = \mf W_{\ell,k} $
  \proves that for all $ p \in \{ 1, 2, \ldots, \outdim \} $, $ n \in \N_0 $, $ v = ( v_1, \ldots, v_{\indim} ) \in \R^{\indim} $ it holds that
  \begin{equation}
  \label{eq:cor_vectorized_realization}
  \begin{split}
    &\lim_{s \to \infty} \mc N_{\Theta^{\width, \ndata, \lr} (\iteration)}^{\width, s, p} ( v ) \\
    &= \lim_{s \to \infty} \biggl[ \Theta_{ \width \indim + \width + \outdim \width + p }^{\width, \ndata, \lr} (\iteration)
      + \smallsum\limits_{k=1}^{\width} \Theta_{ \width \indim + \width + (p-1) \width + k }^{\width, \ndata, \lr} (\iteration) \,
      \varphi_s \Bigl( \Theta_{ \width \indim + k }^{\width, \ndata, \lr} (\iteration)
      + \smallsum\limits_{j=1}^{\indim} \Theta_{ (k-1) \indim + j }^{\width, \ndata, \lr} (\iteration) v_{j} \Bigr) \biggr] \\
    &= \Theta_{ \width \indim + \width + \outdim \width + p }^{\width, \ndata, \lr} (\iteration)
      + \smallsum\limits_{k=1}^{\width} \Theta_{ \width \indim + \width + (p-1) \width + k }^{\width, \ndata, \lr} (\iteration) \,
      \varphi_0 \Bigl( \Theta_{ \width \indim + k }^{\width, \ndata, \lr} (\iteration)
      + \smallsum\limits_{j=1}^{\indim} \Theta_{ (k-1) \indim + j }^{\width, \ndata, \lr} (\iteration) v_{j} \Bigr) \\
    &= \mf B_{p} (\iteration) + \smallsum\limits_{k=1}^{\width} \mf W_{p,k}
      \varphi_0 \bigl( \scalprod{ W_k(\iteration) }{ v } + B_k(\iteration) \bigr)
    = \bigl( \functionANN{\varphi_0}^{p} ( \Phi( \iteration ) ) \bigr) ( v ).
  \end{split}
  \end{equation}
  \Moreover the assumption that for all $ s \in \N $ it holds that $ \varphi_s $ is differentiable
  \proves that for all
  $ s \in \N $, $ v = ( v_1, \ldots, v_{\indim} ) \in \R^{\indim} $,
  $ j \in \{ 1, 2, \ldots, \indim \} $, $ k \in \{ 1, 2, \ldots, \width \} $, $ \ell, p \in \{ 1, 2, \ldots, \outdim \} $,
  $ q, r, t \in \N $ with $ q = (k-1) \indim + j $, $ r = \width \indim + k $, and $ t = \width \indim + \width + \outdim \width + \ell $
  it holds that
  \begin{equation}
  \begin{split}
    \tfrac{\partial}{\partial \theta_q} \mc N_{\Theta^{\width, \ndata, \lr} (\iteration)}^{\width, s, p} ( v )
    &= \tfrac{\partial}{\partial \theta_q} \biggl( \Theta_{ \width \indim + \width + \outdim \width + p }^{\width, \ndata, \lr} (\iteration)
      + \smallsum\limits_{u=1}^{\width} \Theta_{ \width \indim + \width + (p-1) \width + u }^{\width, \ndata, \lr} (\iteration) \,
      \varphi_s \Bigl( \Theta_{ \width \indim + u }^{\width, \ndata, \lr} (\iteration)
      + \smallsum\limits_{b=1}^{\indim} \Theta_{ (u-1) \indim + b }^{\width, \ndata, \lr} (\iteration) v_{b} \Bigr) \biggr) \\
    &= \Theta_{\width \indim + \width + (p-1) \width + k}^{\width, \ndata, \lr} (\iteration) (\varphi_s)' \Bigl(
      \Theta_{ \width \indim + u }^{\width, \ndata, \lr} (\iteration)
      + \smallsum\limits_{b=1}^{\indim} \Theta_{ (u-1) \indim + b }^{\width, \ndata, \lr} (\iteration) v_{b} \Bigr)
      v_j \\
    &= \mf W_{p,k} (\varphi_s)' \bigl( \scalprod{ W_k(\iteration) }{ v } + B_k(\iteration) \bigr) v_j,
  \end{split}
  \end{equation}
  \begin{equation}
  \begin{split}
    \tfrac{\partial}{\partial \theta_r} \mc N_{\Theta^{\width, \ndata, \lr} (\iteration)}^{\width, s, p} ( v )
    &= \tfrac{\partial}{\partial \theta_r} \biggl( \Theta_{ \width \indim + \width + \outdim \width + p }^{\width, \ndata, \lr} (\iteration)
      + \smallsum\limits_{u=1}^{\width} \Theta_{ \width \indim + \width + (p-1) \width + u }^{\width, \ndata, \lr} (\iteration) \,
      \varphi_s \Bigl( \Theta_{ \width \indim + u }^{\width, \ndata, \lr} (\iteration)
      + \smallsum\limits_{b=1}^{\indim} \Theta_{ (u-1) \indim + b }^{\width, \ndata, \lr} (\iteration) v_{b} \Bigr) \biggr) \\
    &= \Theta_{\width \indim + \width + (p-1) \width + k}^{\width, \ndata, \lr} (\iteration) (\varphi_s)' \Bigl(
      \Theta_{ \width \indim + u }^{\width, \ndata, \lr} (\iteration)
      + \smallsum\limits_{b=1}^{\indim} \Theta_{ (u-1) \indim + b }^{\width, \ndata, \lr} (\iteration) v_{b} \Bigr) \\
    &= \mf W_{p,k} (\varphi_s)' \bigl( \scalprod{ W_k(\iteration) }{ v } + B_k(\iteration) \bigr),
  \end{split}
  \end{equation}
  and
  \begin{equation}
  \begin{split}
    \tfrac{\partial}{\partial \theta_t} \mc N_{\Theta^{\width, \ndata, \lr} (\iteration)}^{\width, s, p} ( v )
    &= \tfrac{\partial}{\partial \theta_t} \biggl( \Theta_{ \width \indim + \width + \outdim \width + p }^{\width, \ndata, \lr} (\iteration)
      + \smallsum\limits_{u=1}^{\width} \Theta_{ \width \indim + \width + (p-1) \width + u }^{\width, \ndata, \lr} (\iteration) \,
      \varphi_s \Bigl( \Theta_{ \width \indim + u }^{\width, \ndata, \lr} (\iteration)
      + \smallsum\limits_{b=1}^{\indim} \Theta_{ (u-1) \indim + b }^{\width, \ndata, \lr} (\iteration) v_{b} \Bigr) \biggr) \\
    &= \ind_{ \{ \ell \} } (p).
  \end{split}
  \end{equation}
  Combining this with \cref{eq:cor_vectorized_error}, \cref{eq:cor_vectorized_realization}, and
  the fact that for all $ v \in \R $ it holds that $ \lim_{s\to\infty} (\varphi_s)' (v) = \grad(v) $
  \proves that for all
  $ j \in \{ 1, 2, \ldots, \indim \} $, $ k \in \{ 1, 2, \ldots, \width \} $, $ \ell \in \{ 1, 2, \ldots, \outdim \} $,
  $ q, r, t \in \N $ with $ q = (k-1) \indim + j $, $ r = \width \indim + k $, and $ t = \width \indim + \width + \outdim \width + \ell $
  it holds that
  \begin{equation}
  \begin{split}
    \lim_{s \to \infty} \tfrac{\partial}{\partial \theta_q} \loss^{\width, \ndata, s} ( \Theta^{\width, \ndata, \lr} (\iteration) )
    &= \lim_{s \to \infty} \biggl[ \tfrac{2}{\ndata} \smallsum\limits_{i=1}^{\ndata} \smallsum\limits_{p=1}^{\outdim} \bigl(
      \mc N_{\Theta^{\width, \ndata, \lr} (\iteration)}^{\width, s, p} ( \indata_i ) - \outdata_i^p \bigr)
       \bigl( \tfrac{\partial}{\partial \theta_q} \mc N_{\Theta^{\width, \ndata, \lr} (\iteration)}^{\width, s, p} ( \indata_i ) \bigr) \biggr] \\
    &= \tfrac{2}{\ndata} \smallsum\limits_{i=1}^{\ndata} \smallsum\limits_{p=1}^{\outdim}
      \bigl( ( \functionANN{\act}^p ( \Phi (\iteration) ) ) ( \indata_i ) - \outdata_i^p \bigr)
      \mf W_{p,k} \grad \bigl( \scalprod{ W_k(\iteration) }{ \indata_i } + B_k(\iteration) \bigr) \indata_i^j \\
    &= \tfrac{2}{\ndata} \smallsum\limits_{i=1}^{\ndata} \smallsum\limits_{p=1}^{\outdim} ( f_i^p (\iteration) - \outdata_i^p )
      \mf W_{p,k} \grad \bigl( \scalprod{ W_k(\iteration) }{ \indata_i } + B_k(\iteration) \bigr) \indata_i^j,
  \end{split}
  \end{equation}
  \begin{equation}
  \begin{split}
    \lim_{s \to \infty} \tfrac{\partial}{\partial \theta_r} \loss^{\width, \ndata, s} ( \Theta^{\width, \ndata, \lr} (\iteration) )
    &= \lim_{s \to \infty} \biggl[ \tfrac{2}{\ndata} \smallsum\limits_{i=1}^{\ndata} \smallsum\limits_{p=1}^{\outdim} \bigl(
      \mc N_{\Theta^{\width, \ndata, \lr} (\iteration)}^{\width, s, p} ( \indata_i ) - \outdata_i^p \bigr)
       \bigl( \tfrac{\partial}{\partial \theta_r} \mc N_{\Theta^{\width, \ndata, \lr} (\iteration)}^{\width, s, p} ( \indata_i ) \bigr) \biggr] \\
    &= \tfrac{2}{\ndata} \smallsum\limits_{i=1}^{\ndata} \smallsum\limits_{p=1}^{\outdim}
      \bigl( ( \functionANN{\act}^p ( \Phi (\iteration) ) ) ( \indata_i ) - \outdata_i^p \bigr)
      \mf W_{p,k} \grad \bigl( \scalprod{ W_k(\iteration) }{ \indata_i } + B_k(\iteration) \bigr) \\
    &= \tfrac{2}{\ndata} \smallsum\limits_{i=1}^{\ndata} \smallsum\limits_{p=1}^{\outdim} ( f_i^p (\iteration) - \outdata_i^p )
      \mf W_{p,k} \grad \bigl( \scalprod{ W_k(\iteration) }{ \indata_i } + B_k(\iteration) \bigr),
  \end{split}
  \end{equation}
  and
  \begin{equation}
  \begin{split}
    \lim_{s \to \infty} \tfrac{\partial}{\partial \theta_t} \loss^{\width, \ndata, s} ( \Theta^{\width, \ndata, \lr} (\iteration) )
    &= \lim_{s \to \infty} \biggl[ \tfrac{2}{\ndata} \smallsum\limits_{i=1}^{\ndata} \smallsum\limits_{p=1}^{\outdim} \bigl(
      \mc N_{\Theta^{\width, \ndata, \lr} (\iteration)}^{\width, s, p} ( \indata_i ) - \outdata_i^p \bigr)
       \bigl( \tfrac{\partial}{\partial \theta_t} \mc N_{\Theta^{\width, \ndata, \lr} (\iteration)}^{\width, s, p} ( \indata_i ) \bigr) \biggr] \\
    &= \tfrac{2}{\ndata} \smallsum\limits_{i=1}^{\ndata} \smallsum\limits_{p=1}^{\outdim}
      \bigl( ( \functionANN{\act}^p ( \Phi (\iteration) ) ) ( \indata_i ) - \outdata_i^p \bigr)
      \ind_{ \{ \ell \} } (p) \\
    &= \tfrac{2}{\ndata} \smallsum\limits_{i=1}^{\ndata} ( f_i^{\ell} (\iteration) - \outdata_i^{\ell} ).
  \end{split}
  \end{equation}
  This and the assumption that for all $ i \in \{ 1, 2, \ldots, \mc P( \width ) \} $, $ \iteration \in \N $ it holds that
  $ \Theta_i^{\width, \ndata, \lr} ( \iteration ) = \Theta_i^{\width, \ndata, \lr} ( \iteration - 1 )
    - \lr \bigl[ \lim\nolimits_{s \to \infty} ( \tfrac{\partial}{\partial \theta_i} \loss^{\width, \ndata, s} )
    ( \Theta^{\width, \ndata, \lr} ( \iteration - 1 ) ) \bigr] \ind_{ \R \backslash (\width \indim + \width, \mc P( \width ) - \outdim ] } ( i ) $
  \prove that for all
  $ j \in \{ 1, 2, \ldots, \indim \} $, $ k \in \{ 1, 2, \ldots, \width \} $, $ \ell \in \{ 1, 2, \ldots, \outdim \} $,
  $ \iteration \in \N_0 $, $ \omega \in \Omega $ it holds that
  \begin{equation}
    W_{k,j} (\iteration+1,\omega) = W_{k,j} (\iteration,\omega) - \frac{2 \lr}{\ndata} \biggl(
      \smallsum\limits_{i=1}^{\ndata} \smallsum\limits_{p=1}^{\outdim} ( \netvalue_i^p (\iteration,\omega) - \outdata_i^p )
      \mf W_{p,k}(\omega)
      \grad \bigl( \scalprod{ W_k(\iteration,\omega) }{ \indata_i } + B_k (\iteration,\omega) \bigr) \, \indata_i^j \biggr),
  \end{equation}
  \begin{equation}
    B_k(\iteration+1,\omega) = B_k(\iteration,\omega) - \frac{2 \lr}{\ndata} \biggl(
      \smallsum\limits_{i=1}^{\ndata} \smallsum\limits_{p=1}^{\outdim} ( \netvalue_i^p (\iteration,\omega) - \outdata_i^p )
      \mf W_{p,k}(\omega)
      \grad \bigl( \scalprod{ W_k(\iteration,\omega) }{ \indata_i } + B_k (\iteration,\omega) \bigr) \biggr),
  \end{equation}
  and
  \begin{equation}
    \mf B_{\ell} (\iteration+1,\omega) = \mf B_{\ell} (\iteration,\omega) - \frac{2 \lr}{\ndata} \biggl(
      \smallsum\limits_{i=1}^{\ndata} ( \netvalue_i^{\ell} (\iteration,\omega) - \outdata_i^{\ell} ) \biggr).
  \end{equation}
  \Moreover \cref{cor:vectorized:lambda}, \cref{eq:cor_vectorized_G},
  the fact that $ \sqrt{\nicefrac{2}{\indim}} Z $ and $ W_1 (0) $ are identically distributed, and
  \cref{item:lambdamin_2} in \cref{lem:lambdamin_eucnorm} (applied with
  $ n \is \ndata $, $ A \is G $ 
  in the notation of \cref{lem:lambdamin_eucnorm})
  \prove that
  \begin{equation}
    \lambda_{\ndata}
    = \min\nolimits_{ v \in \R^{\ndata}, \, \eucnorm{ v } = 1 } \bscalprod{ v }{ \tfrac{1}{2} G v }
    = \lambdamin( \tfrac{1}{2} G )
    = \tfrac{1}{2} \lambdamin( G ).
  \end{equation}
  Combining this,
  \cref{lem:eigenvalues_block_matrix} (applied with
  $ \outdim \is \outdim $, $ \ndata \is \ndata $, $ A \is \gramGinf $, $ B \is G $
  in the notation of \cref{lem:eigenvalues_block_matrix}), and
  the assumption that $ \lambdazero = \lambdamin( \tfrac{1}{2} \gramGinf ) $
  \prove that
  \begin{equation}
    \lambda_{\ndata}
    = \tfrac{1}{2} \lambdamin( G )
    = \tfrac{1}{2} \lambdamin( \gramGinf )
    = \lambdazero. 
  \end{equation}
  \cref{cor:assumption_pairwise_distinct_const} (applied with
  $ \deriv \is \grad $, $ \act \is \varphi_0 $, $ \mf C \is \cmax $, $ \Cvar \is \tfrac{2}{\width} $,
  $ \ms x \is \indata $, $ \ms y \is \outdata $, $ \Lambda \is \Lambda $, $ \poe \is \poe $, $ \constvarwidth \is 2 $, $ \loss \is \mf E $
  in the notation of \cref{cor:assumption_pairwise_distinct_const})
  and the fact that $ \# \{ \indata_1, \indata_2, \ldots, \indata_{\ndata} \} = \ndata $
  \hence \prove that
  $ \lambda_{\ndata} > 0 $ and
  \begin{equation}\label{eq:cor_vectorized_probability}
  \begin{split}
    &\P \Bigl( \Forall \iteration \in \N_0 \colon \loss^{\width, \ndata, 0} ( \Theta^{\width,\ndata,\lr} (\iteration) )
      \le \bigl( 1 - \tfrac{ 2 \lr \lambda_{\ndata} }{ \ndata } \bigr)^{\iteration} \loss^{\width, \ndata, 0}
        ( \Theta^{\width,\ndata,\lr} (0) ) \Bigr) \\
    &= \P \Bigl( \Forall \iteration \in \N_0 \colon \mf E(\Phi(\iteration))
      \le \bigl( 1 - \tfrac{ 2 \lr \lambdazero }{ \ndata } \bigr)^{\iteration} \mf E(\Phi(0)) \Bigr)
    \ge 1 - \varepsilon.
  \end{split}
  \end{equation}
  This \proves \cref{cor:vectorized:item1} and \cref{cor:vectorized:item2}.
\end{cproof}

\begin{lemma}
  \label{lem:convolution}
  Let $ \nbp \in \N $, $ \bp_0, \bp_1, \ldots, \bp_{\nbp+1} \in [-\infty,\infty] $,
  $ \slope_1, \slope_2, \ldots, \slope_{\nbp+1} $, $ \yinter_1, \yinter_2, \ldots, \yinter_{\nbp+1} \in \R $
  satisfy $ - \infty = \bp_0 < \bp_1 < \dots < \bp_{\nbp+1} = \infty $,
  let $ \act \in C( \R, \R ) $ satisfy for all $ i \in \{ 1, 2, \ldots, \nbp+1 \} $, $ v \in ( \bp_{i-1}, \bp_i ) $ that
  $ \act( v ) = \slope_i v + \yinter_i $,
  and let $ \varphi_k \colon \R \to \R $, $ k \in \N $, satisfy for all $ k \in \N $, $ v \in \R $ that
  \begin{equation}
    \varphi_k (v) = k \int_{v-1/k}^{v} \act(u) \, \diff u.
  \end{equation}
  Then
  \begin{enumerate}[(i)]
    \item \label{item:convolution_1} it holds for all $ v \in \R $ that $ \lim_{k\to\infty} \varphi_k (v) = \act(v) $,
    \item \label{item:convolution_2} it holds for all $ k \in \N $ that $ \varphi_k $ is differentiable, and
    \item \label{item:convolution_3} it holds for all $ i \in \{ 1, 2, \ldots, \nbp+1 \} $, $ v \in \R \cap ( \bp_{i-1}, \bp_i ] $ that
      $ \lim_{k\to\infty} ( \varphi_k )'(v) = \slope_i = ( \nablam \act )( v ) $.
  \end{enumerate}
\end{lemma}

\begin{cproof}{lem:convolution}
  Throughout this proof let $ \cderiv \in (0,\infty) $ satisfy $ \max_{ i \in \{ 1, 2, \ldots, \nbp+1 \} } \abs{\slope_i} \le \cderiv $.
  \Nobs that the fact that $ \act $ is $ \cderiv $-Lipschitz continuous
  \proves that for all $ k \in \N $, $ v \in \R $ it holds that
  \begin{equation}
  \begin{split}
    \babs{ \varphi_k (v) - \act(v) }
    &= \bbbabs{ k \int_{v-1/k}^{v} \bigl( \act(u) - \act(v) \bigr) \, \diff u }
    \le k \int_{v-1/k}^{v} \abs{ \act(u) - \act(v) } \, \diff u \\
    &\le \cderiv \mspace{1.5mu} k \int_{v-1/k}^{v} \abs{ u - v } \, \diff u
    \le \cderiv \mspace{1.5mu} k \int_{v-1/k}^{v} \frac{1}{k} \, \diff u
    = \frac{\cderiv}{k}.
  \end{split}
  \end{equation}
  \Hence that for all $ v \in \R $ it holds that $ \lim_{k\to\infty} \varphi_k (v) = \act(v) $.
  This \proves \cref{item:convolution_1}.
  \Moreover for all $ k \in \N $, $ v \in \R $ there exists $ z \in \R $ such that
  $ v - \tfrac{1}{k} < z < v $ and
  \begin{equation}\label{lem:convolution:ftc}
    \varphi_k (v)
    = k \int_{v-1/k}^{v} \act(u) \, \diff u
    = k \int_{v-1/k}^{z} \act(u) \, \diff u + \int_{z}^{v} \act(u) \, \diff u
    = k \int_{z+1/k}^{v} - \act \bigl( u - \tfrac{1}{k} \bigr) \, \diff u + k \int_{z}^{v} \act(u) \, \diff u.
  \end{equation}
  The fundamental theorem of calculus and
  the assumption that $ \act $ is continuous \hence \prove that for all $ k \in \N $, $ v \in \R $
  it holds that $ \varphi_k $ is differentiable at $ v $.
  This \proves \cref{item:convolution_2}.
  \Moreover \cref{lem:convolution:ftc} and
  the fundamental theorem of calculus
  \prove that for all $ i \in \{ 1, 2, \ldots, \nbp+1 \} $, $ v \in \R \cap ( \bp_{i-1}, \bp_i ] $, $ k \in \N $
  with $ v - \tfrac{1}{k} > \bp_{i-1} $ it holds that
  \begin{equation}
    ( \varphi_k )' (v)
    = k \Bigl[ \act (v) - \act \bigl( v - \tfrac{1}{k} \bigr) \Bigr]
    = k \Bigl[ \bigl( \slope_i v + \yinter_i \bigr) - \bigl( \slope_i \bigl( v - \tfrac{1}{k} \bigr) + \yinter_i \bigr) \Bigr]
    = \slope_i
    = ( \nablam \act )( v ).
  \end{equation}
  \Hence that for all $ i \in \{ 1, 2, \ldots, \nbp+1 \} $, $ v \in \R \cap ( \bp_{i-1}, \bp_i ] $ it holds that
  $ \lim_{k\to\infty} ( \varphi_k )'(v) = \slope_i = ( \nablam \act )( v ) $.
  This \proves \cref{item:convolution_3}.
\end{cproof}

\cfclear
\begin{lemma}
  \label{lem:left_derivative}
  Let $ M \subseteq C( \R, \R ) $ satisfy
  \begin{equation}
    M = \Bigl\{ f \in C( \R, \R ) \colon \Bigl[ \Forall v \in \R \colon \Bigl( \Exists u \in \R \colon
        \limsup\nolimits_{\varepsilon \nearrow 0} \bbabs{ \tfrac{ f(v+\varepsilon) - f(v) }{ \varepsilon } - u } = 0 \Bigr) \Bigr] \Bigr\}.
  \end{equation}
  Then
  \begin{enumerate}[(i)]
    \item \label{lem:left_derivative:item1} it holds for all $ f, g \in M $, $ \lambda \in \R $ that
      $ \nablam ( f + \lambda g ) = ( \nablam f ) + \lambda ( \nablam g ) $,
    \item \label{lem:left_derivative:item2} it holds for all $ f \in C^1( \R, \R ) $, $ g \in M $ that
      $ \nablam ( f \circ g ) = ( f' \circ g ) \times ( \nablam g )  $, and
    \item \label{lem:left_derivative:item3} it holds for all $ f, g \in M $ with $ g $ non-decreasing that
      $ \nablam ( f \circ g ) = ( ( \nablam f ) \circ g ) \times ( \nablam g ) $.
  \end{enumerate}
\end{lemma}

\begin{cproof}{lem:left_derivative}
  \Nobs that for all $ f, g \in M $, $ \lambda, v \in \R $ it holds that
  \begin{equation}
  \begin{split}
    \nablam \bigl( f(v) + \lambda g(v) \bigr)
    &= \lim\nolimits_{\varepsilon \nearrow 0}
      \frac{ [ f(v+\varepsilon) + \lambda g(v+\varepsilon) ] - [ f(v) + \lambda g(v) ] }{\varepsilon} \\
    &= \biggl[ \lim\nolimits_{\varepsilon \nearrow 0} \frac{ f(v+\varepsilon) - f(v) }{\varepsilon} \biggr]
      + \lambda \biggl[ \lim\nolimits_{\varepsilon \nearrow 0} \frac{ g(v+\varepsilon) - g(v) }{\varepsilon} \biggr] \\
    &= \bigl( \nablam f(v) \bigr) + \lambda \bigl( \nablam g(v) \bigr).
  \end{split}
  \end{equation}
  This \proves \cref{lem:left_derivative:item1}.
  \Nobs that for all $ f \in C^1( \R, \R ) $, $ g \in M $, $ v \in \R $ it holds that
  \begin{equation}
  \begin{split}
    \nablam \bigl( f( g(v) ) \bigr)
    &= \lim\nolimits_{\varepsilon \nearrow 0} \frac{ f( g(v+\varepsilon) ) - f( g(v) ) }{\varepsilon} \\
    &= \biggl[ \lim\nolimits_{\varepsilon \nearrow 0} \frac{ f( g(v+\varepsilon) ) - f( g(v) ) }{ g(v+\varepsilon) - g(v) } \biggr]
      \biggl[ \lim\nolimits_{\varepsilon \nearrow 0} \frac{ g(v+\varepsilon) - g(v) }{\varepsilon} \biggr] \\
    &= f'( g(v) ) ( \nablam g (v) ).
  \end{split}
  \end{equation}
  This \proves \cref{lem:left_derivative:item2}.
  \Nobs that for all $ f, g \in M $, $ v \in \R $ with $ g $ non-decreasing it holds that
  \begin{equation}
  \begin{split}
    \nablam \bigl( f( g(v) ) \bigr)
    &= \lim\nolimits_{\varepsilon \nearrow 0} \frac{ f( g(v+\varepsilon) ) - f( g(v) ) }{\varepsilon} \\
    &= \biggl[ \lim\nolimits_{\varepsilon \nearrow 0} \frac{ f( g(v+\varepsilon) ) - f( g(v) ) }{ g(v+\varepsilon) - g(v) } \biggr]
      \biggl[ \lim\nolimits_{\varepsilon \nearrow 0} \frac{ g(v+\varepsilon) - g(v) }{\varepsilon} \biggr] \\
    &= ( \nablam f(g(v)) ) ( \nablam g (v) ).
  \end{split}
  \end{equation}
  This \proves \cref{lem:left_derivative:item3}.
\end{cproof}

\subsection*{Acknowledgements}
This work has been partially funded by the European Union (ERC, MONTECARLO, 101045811). The views and the opinions expressed in this work are however those of the authors only and do not necessarily reflect those of the European Union or the European Research Council (ERC). Neither the European Union nor the granting authority can be held responsible for them.
Moreover, this work has been partially funded by the Deutsche Forschungsgemeinschaft (DFG, German Research Foundation)
under Germany's Excellence Strategy EXC 2044/2--390685587,
Mathematics M\"unster: Dynamics--Geometry--Structure.

\bibliographystyle{acm}
\bibliography{references}

\end{document}